\newtheorem{theorem}{Theorem}[section]
\newtheorem{lemma}[theorem]{Lemma}
\theoremstyle{definition}
\newtheorem{definition}[theorem]{Definition}
\newenvironment{repeatthm}[1]
{\innerrepeatthm}
{\endinnerrepeatthm}
\renewcommand{\algorithmiccomment}[1]{\bgroup\hfill\scriptsize#1\egroup}
\newcommand{\x}{\mathbf{x}}
\newcommand{\y}{\mathbf{y}}
\newcommand{\w}{\mathbf{w}}
\newcommand{\vb}{\mathbf{v}}
\newcommand{\tin}{\text{in}}
\newcommand{\tout}{\text{out}}
\newcommand{\norm}[2]{\left\lVert #1 - #2 \right\rVert}
\newcommand{\eps}{\varepsilon}
\newcommand{\expt}{\mathbb{E}}
\newcommand{\fbargp}{\bar{f}_\mathrm{GP}}
\newcommand{\rbargp}{\bar{r}_\mathrm{GP}}
\newcommand{\rbargpf}{\bar{r}_{f\mathrm{GP}}}
\newcommand{\hbargp}{\bar{h}_\mathrm{GP}}
\newcommand{\hbarnn}{\bar{h}_\mathrm{NN}}
\newcommand{\hbargpnn}{\bar{h}_\mathrm{GP+NN}}
\newcommand{\Egph}{E_\mathrm{GP}^h}
\newcommand{\Egpf}{E_\mathrm{GP}^f}
\newcommand{\Egpr}{E_\mathrm{GP}^{r_f}}
\newcommand{\Ennh}{E_\mathrm{NN}^h}
\newcommand{\Egpnnh}{E_\mathrm{GP+NN}^h}
\title{Quantifying Point-Prediction Uncertainty in Neural Networks via Residual Estimation with an I/O Kernel}
\author{%
	Xin Qiu \\
	Cognizant\\
	\texttt{qiuxin.nju@gmail.com} \\
	\And
	Elliot Meyerson \\
	Cognizant \\
	\texttt{elliot.meyerson@cognizant.com}\\
	\AND
	Risto Miikkulainen \\
	Cognizant \\
	The University of Texas at Austin \\
	\texttt{risto@cognizant.com}\\
}
\begin{document}

	\maketitle
	
	\begin{abstract}
		Neural Networks (NNs) have been extensively used for a wide spectrum of real-world regression tasks, where the goal is to predict a numerical outcome such as revenue, effectiveness, or a quantitative result. In many such tasks, the point prediction is not enough: the uncertainty (i.e. risk or confidence) of that prediction must also be estimated. Standard NNs, which are most often
		used in such tasks, do not provide uncertainty information. Existing approaches address this issue by combining Bayesian models with NNs, but these models are hard to implement, more expensive to train, and usually do not predict as accurately as standard NNs. In this paper, a new framework (RIO) is developed that makes it possible to estimate uncertainty in any pretrained standard NN. 
		The behavior of the NN is captured by modeling its prediction residuals with a Gaussian Process, whose kernel includes both the NN's input and its output.
		The framework is evaluated in twelve real-world datasets, where it is found to (1) provide reliable estimates of uncertainty, (2) reduce the error of the point predictions, and (3) scale well to large datasets. Given that RIO can be applied to any standard NN without modifications to model architecture or training pipeline, it provides an important ingredient for building real-world NN applications.
	\end{abstract}
	
	\section{Introduction}
	Nowadays, Neural Networks (NNs) are arguably the most popular machine learning tool among Artificial Intelligence (AI) community. Researchers and practitioners have applied NNs to a wide variety of fields, including manufacturing \citep{Bergmann2014}, bioinformatics \citep{LeCun2015}, physics \citep{Baldi2014}, finance \citep{Niaki2013}, chemistry \citep{Anjos2015}, healthcare \citep{Shahid2019}, etc. Although standard NNs are good at making a point prediction (a single outcome) for supervised learning tasks, they are unable to provide uncertainty information about predictions. For real-world decision makers, representing prediction uncertainty is of crucial importance \citep{Krzywinski2013,Ghahramani2015}. For example, in the case of regression, providing a $95\%$ confidence interval around the prediction allows the decision maker to anticipate the possible outcomes with explicit probability. In contrast, simply returning a single point prediction imposes increased risks on decision making, e.g., a predictively good but actually risky medical treatment may be overconfidently interpreted without uncertainty information.
	
	Conventional Bayesian models such as Gaussian Processes (GP) \citep{Rasmussen2006} offer a mathematically grounded approach to reason about the predictive uncertainty, but often come with a prohibitive computational cost and lower prediction performance compared to NNs \citep{Gal2016}. As a potential solution, considerable research has been devoted to the combination of Bayesian models and NNs (see Section~\ref{sec:relatedwork} for a detailed review of such approaches), aiming to overcome the downsides of each. However, from the classical Bayesian Neural Network \citep{Neal1996}, in which a distribution of weights is learned, to the recent Neural Processes \citep{Garnelo2018,Marta2018,Kim2019}, in which a distribution over functions is defined, all such methods require significant modifications to the model infrastructure and training pipeline. Compared to standard (non-Bayesian) NNs, these new models are often computationally slower to train and harder to implement \citep{Gal2016,Balaji2017}, creating tremendous difficulty for practical uses. \citet{Gal2016} derived a theoretical tool to extract uncertainty information from dropout training, however, the method can only be applied to dropout models, and requires changes to their internal inference pipeline. Quantifying point-prediction uncertainty in standard NNs, which are overwhelmingly popular in practical applications, still remains a challenging problem with significant potential impact.
	
	To circumvent the above issues, this paper presents a new framework that can quantitatively estimate the point-prediction uncertainty of standard NNs without any modifications to the model structure or training pipeline. The proposed approach works as a supporting tool that can augment any pretrained NN without retraining them. The idea is to capture the behavior of the NN by estimating its prediction residuals using a modified GP, which uses a new composite (I/O) kernel that makes use of both inputs and outputs of the NNs. The framework is referred to as RIO (for Residual estimation with an I/O kernel).
	In addition to providing valuable uncertainty estimation, RIO has an interesting side-effect: It provides a way to reduce the error of the NN predictions. Moreover, with the help of recent sparse GP models, RIO scales well to large datasets. Since classification problems can be treated as regression on class labels \citep{lee2018}, this paper focuses on regression tasks. In this setting, empirical studies are conducted with twelve real-world datasets, and an initial theoretical investigation is presented that characterizes the benefits of RIO.
The results show that RIO exhibits reliable uncertainty estimations, more accurate point predictions, and better scalability compared to alternative approaches.

	\section{Related Work}
	\label{sec:relatedwork}
	
	There has been significant interest in combining NNs with probabilistic Bayesian models.
	An early approach was Bayesian Neural Networks, in which a prior distribution is defined on the weights and biases of a NN, and a posterior distribution is then inferred from the training data \citep{MacKay1992,Neal1996}.
	Notice that unlike these methods, RIO is concerned only with uncertainty over NN predictions, not over NN weights.
	Traditional variational inference techniques have been applied to the learning procedure of Bayesian NN, but with limited success \citep{Hinton1993,Barber1998,Graves2011}.
	By using a more advanced variational inference method, new approximations for Bayesian NNs were achieved that provided similar prediction performance as dropout NNs \citep{Blundell2015}.
	However, the main drawbacks of Bayesian NNs remain: prohibitive computational cost and difficult implementation procedure compared to standard NNs.
	
	Alternatives to Bayesian NNs have been developed recently.
	One such approach introduces a training pipeline that incorporates ensembles of NNs and adversarial training \citep{Balaji2017}.
	Another approach, NNGP, considers a theoretical connection between NNs and GP to develop a model approximating the Bayesian inference process of wide deep neural networks \citep{lee2018}.
	Deep kernel learning (DKL) combines NNs with GP by using a deep NN embedding as the input to the GP kernel \citep{Andrew2016}. In \citet{Iwata2017}, NNs are used for the mean functions of GPs, and parameters of both NNs and GP kernels are simultaneously estimated by stochastic gradient descent
	methods.
	Conditional Neural Processes (CNPs) combine the benefits of NNs and GP, by defining conditional distributions over functions given data, and parameterizing this dependence with a NN \citep{Garnelo2018}.
	Neural Processes (NPs) generalize deterministic CNPs by incorporating a latent variable, strengthening the connection to approximate Bayesian and latent variable approaches \citep{Marta2018}.
	Attentive Neural Processes (ANPs) further extends NPs by incorporating attention to overcome underfitting issues \citep{Kim2019}.
	The above models all require significant modifications to the original NN model and training pipeline.
	Compared to standard NNs, they are also less computationally efficient and more difficult for practitioners to implement.
	In the approach that shares the most motivation with RIO, Monte Carlo dropout was used to estimate the predictive uncertainty of dropout NNs \citep{Gal2016}.
	However, this method is restricted to dropout NNs, and also requires modifications to the NN inference process. 
	Different from all above-mentioned methods, which are independent alternatives to standard NNs, RIO is designed as a supporting tool that can be applied on top of any pretrained NN. RIO augments standard NNs without retraining or modifying any component of them. 
	
	\section{The RIO Framework}
	\label{sec:method}
	This section gives the general problem statement, develops the RIO framework, and discusses its scalability.
	For background introductions of NNs, GP, and its more efficient approximation, SVGP \citep{Hensman2013,Hensman2015}, see Appendix~\ref{app:background}.

	\subsection{Problem Statement}
	\label{subsec:problem_statement}

	Consider a training dataset $\mathcal{D}=(\mathcal{X},\mathbf{y})=\{(\mathbf{x}_i,y_i)\}_{i=1}^n$, and a pretrained standard NN that outputs a point prediction $\hat{y}_i$ given $\x_i$.
	The problem is two-fold: (1) Quantify the uncertainty in the predictions of the NN, and (2) calibrate the point predictions of the NN (i.e. make them more accurate).

	\subsection{Framework Overview}
	\label{subsec:framework}
	\begin{figure}
	\centering
	\includegraphics[width=0.9\linewidth]{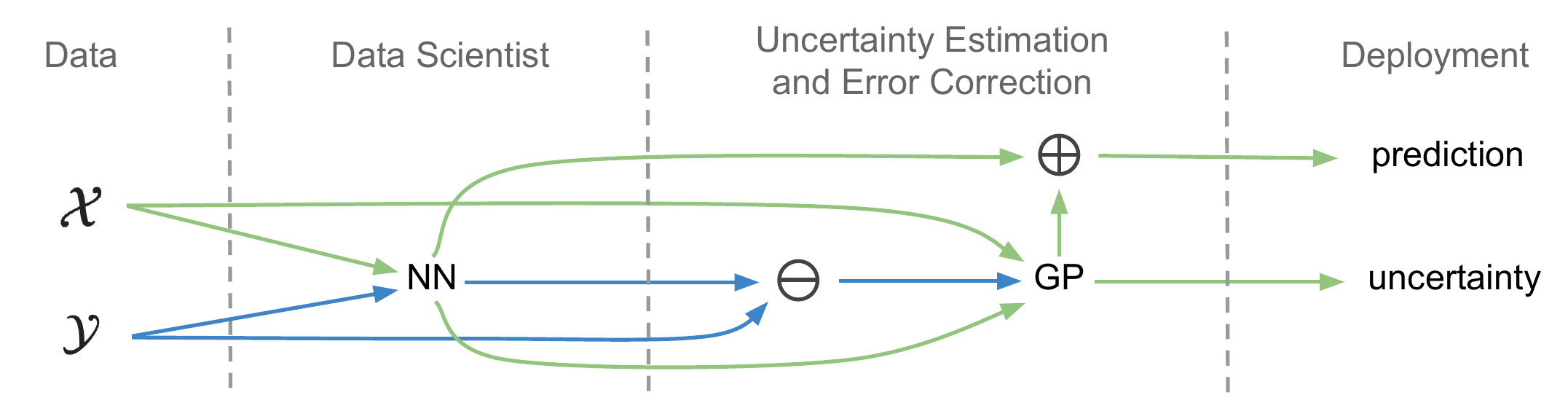}
	\caption{\textbf{Complete model-building process.} \label{fig:framework}
		Given a dataset, first a standard NN model is constructed and trained by a data scientist.
		The RIO method takes this pretrained model and trains a GP to estimate the residuals of the NN using both the output of the NN and the original input.
		Blue pathways are only active during the training phase.
		In the deployment phase, the GP provides uncertainty estimates for the predictions, while calibrating them, i.e., making point predictions more accurate.
		Overall, RIO transforms the standard NN regression model into a more practical probabilistic estimator.
	}
	\end{figure}	
	RIO solves this problem by modeling the residuals between observed outcomes $y$ and NN predictions $\hat{y}$ using GP with a composite kernel.
	The framework can be divided into two phases: the training phase and the deployment phase.
	
	In the training phase, the residuals between observed outcomes and NN predictions on the training dataset are calculated as
	\begin{equation}
	r_i = y_i - \hat{y}_i,\hspace{5pt} \mathrm{for}\hspace{3pt} i=1,2,\ldots,n\ .
	\end{equation}
	Let $\mathbf{r}$ denote the vector of all residuals and $\mathbf{\hat{y}}$ denote the vector of all NN predictions.
	A GP with a composite kernel is trained assuming $\mathbf{r} \sim \mathcal{N}(0, \mathbf{K}_c((\mathcal{X},\mathbf{\hat{y}}), (\mathcal{X}, \mathbf{\hat{y}}))+\sigma_n^2\mathbf{I})$, where $\mathbf{K}_c((\mathcal{X},\mathbf{\hat{y}}), (\mathcal{X}, \mathbf{\hat{y}}))$ denotes an $n\times n$ covariance matrix at all pairs of training points based on a composite kernel
	\begin{equation}
	\label{eq:compositekernel}
	k_c((\mathbf{x}_i,\hat{y}_i), (\mathbf{x}_j,\hat{y}_j)) = k_\mathrm{in}(\mathbf{x}_i, \mathbf{x}_j) + k_\mathrm{out}(\hat{y}_i, \hat{y}_j),\hspace{5pt} \mathrm{for}\hspace{3pt} i,j=1,2,\ldots,n\ .
	\end{equation}
	Suppose a radial basis function (RBF) kernel is used for both $k_\mathrm{in}$ and $k_\mathrm{out}$.
	Then,
	\begin{equation}
	\label{eq:rbf}
	k_c((\mathbf{x}_i,\hat{y}_i), (\mathbf{x}_j,\hat{y}_j)) = \sigma_{\mathrm{in}}^2\mathrm{exp}(-\frac{1}{2l_{\mathrm{in}}^2}\norm{\x_i}{\x_j}^2) + \sigma_{\mathrm{out}}^2\mathrm{exp}(-\frac{1}{2l_{\mathrm{out}}^2}\norm{\hat{y}_i}{\hat{y}_j}^2)\ .
	\end{equation}
	The training process of GP learns the hyperparameters $\sigma_{\mathrm{in}}^2$, $l_{\mathrm{in}}$, $\sigma_{\mathrm{out}}^2$, $l_{\mathrm{out}}$, and $\sigma_n^2$ by maximizing the log marginal likelihood $\log p(\mathbf{r}|\mathcal{X},\mathbf{\hat{y}})$ given by
	\begin{equation}
	\log p(\mathbf{r}|\mathcal{X},\mathbf{\hat{y}}) = -\frac{1}{2}\mathbf{r}^\top(\mathbf{K}_c((\mathcal{X},\mathbf{\hat{y}}), (\mathcal{X}, \mathbf{\hat{y}}))+\sigma_n^2\mathbf{I})^{-1}\mathbf{r}-\frac{1}{2}\log|\mathbf{K}_c((\mathcal{X},\mathbf{\hat{y}}), (\mathcal{X}, \mathbf{\hat{y}}))+\sigma_n^2\mathbf{I}|-\frac{n}{2}\log 2\pi\ .
	\end{equation}
	
	In the deployment phase, a test point $\mathbf{x}_*$ is input to the NN to get an output $\hat{y}_*$.
	The trained GP predicts the distribution of the residual as $\hat{r}_*|\mathcal{X},\mathbf{\hat{y}},\mathbf{r},\mathbf{x}_*,\hat{y}_* \sim \mathcal{N}(\bar{\hat{r}}_*, \mathrm{var}(\hat{r}_*))$, where
	\begin{align}
	\bar{\hat{r}}_* &=\mathbf{k}_*^\top(\mathbf{K}_c((\mathcal{X},\mathbf{\hat{y}}), (\mathcal{X}, \mathbf{\hat{y}}))+\sigma_n^2\mathbf{I})^{-1}\mathbf{r}\ , \\
	\mathrm{var}(\hat{r}_*) &=k_c((\mathbf{x}_*,\hat{y}_*), (\mathbf{x}_*,\hat{y}_*))-\mathbf{k}_*^\top(\mathbf{K}_c((\mathcal{X},\mathbf{\hat{y}}), (\mathcal{X}, \mathbf{\hat{y}}))+\sigma_n^2\mathbf{I})^{-1}\mathbf{k}_*\ ,
	\end{align}
	where $\mathbf{k}_*$ denotes the vector of kernel-based covariances (i.e., $k_c((\mathbf{x}_*,\hat{y}_*), (\mathbf{x}_i,\hat{y}_i))$) between $(\mathbf{x}_*,\hat{y}_*)$ and the training points.

	Interestingly, note that the predicted residuals can also be used to
	calibrate the point predictions of the NN, so that the final calibrated prediction with uncertainty information is given by
	\begin{equation}
	\hat{y}^{\prime}_*\sim \mathcal{N}(\hat{y}_*+\bar{\hat{r}}_*, \mathrm{var}(\hat{r}_*))\ .
	\end{equation}
	In other words, RIO not only adds uncertainty estimation to a standard NN---it also provides a way to calibrate NN predictions, without any modification to its architecture or training.
	Figure~\ref{fig:framework} shows the overall procedure when applying the proposed framework in real-world applications. 
	Figure~\ref{fig:goodbad} shows example behavior of RIO that illustrates the intuition of the approach.
	An algorithmic description of RIO is provided in Appendix~\ref{app:proc_RIO}.	
	\begin{figure}[t]
		\centering
		\includegraphics[width=.43\linewidth]{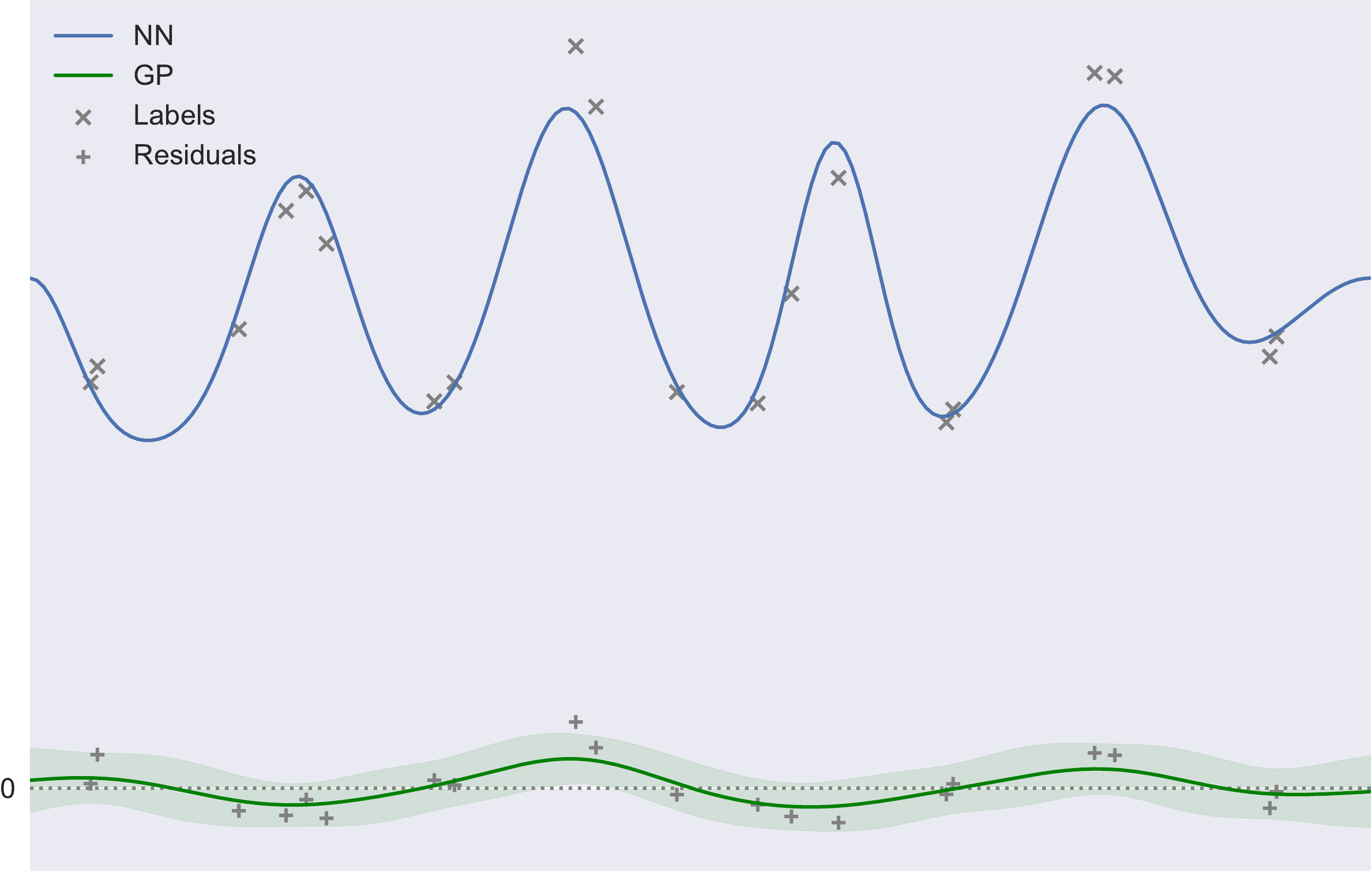}
		\hspace{10pt}
		\includegraphics[width=.43\linewidth]{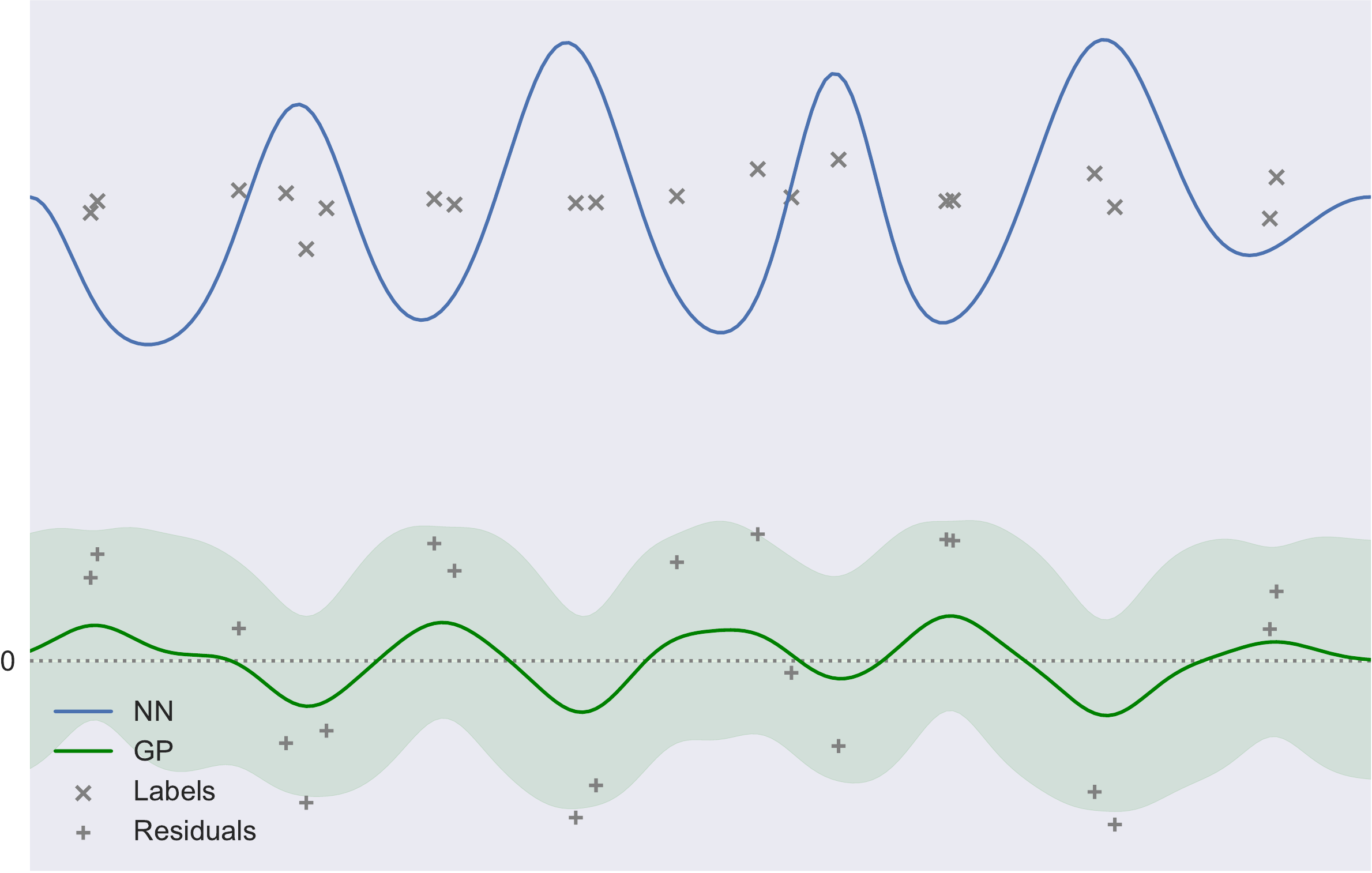}
		\caption{\textbf{Capturing uncertainty of more and less accurate NNs.} \label{fig:goodbad}
			These figures illustrate the behavior of RIO in two cases:
			(left) The neural network has discovered true complex structure in the labels, so the residuals have low variance and are easy for the GP to fit with high confidence;
			(right) The ineffective neural network has introduced unnecessary complexity, so the residuals are modeled with high uncertainty. 
			In both cases, RIO matches the intuition for how uncertain the NN really is.
		}
	\end{figure}
	
	\subsection{Scalability}
	RIO is scalable to large datasets (No. of data points$\times$No. of features$>$1M) by applying sparse GP methods, e.g., SVGP \citep{Hensman2013,Hensman2015}. All the conclusions in previous sections still remain valid since sparse GP is simply an approximation of the original GP. In the case of applying SVGP with a traditional optimizer, e.g., L-BFGS-B \citep{Byrd1995,Zhu1997}, the computational complexity is $\mathcal{O}(nm^2)$, and space complexity is $\mathcal{O}(nm)$, where $n$ is the number of data points and $m$ is the number of inducing variables, compared to $\mathcal{O}(n^3)$ and $\mathcal{O}(n^2)$ for traditional GP.
	Experiments verify that the computational cost of RIO with SVGP is significantly cheaper than other state-of-the-art approaches.
	
	\section{Empirical Evaluation}
	\label{sec:experiments}
	Experiments in this section compare nine algorithms on 12 real-world datasets.
	The algorithms include standard NN, the proposed RIO framework, four ablated variants of RIO, and three state-of-the-art models that provide predictive uncertainty: SVGP \citep{Hensman2013}, Neural Network Gaussian Process (NNGP) \citep{lee2018}, and Attentive Neural Processes (ANP) \citep{Kim2019}.
	In naming the RIO variants,``R'' means estimating NN residuals then correcting NN outputs,``Y'' means directly estimating outcomes, ``I'' means only using input kernel, ``O'' means only using output kernel, and ``IO'' means using I/O kernel.
	For all RIO variants (including full RIO), SVGP is used as the GP component, but using the appropriate kernel and prediction target.
	Therefore, ``Y+I'' amounts to original SVGP, and it is denoted as ``SVGP'' in all the experimental results.
	All 12 datasets are real-world regression problems \citep{Dua2017}, and cover a wide variety of dataset sizes and feature dimensionalities.
	Except for the ``MSD'' dataset, all other datasets are tested for 100 independent runs.
	During each run, the dataset is randomly split into training set, validation set, and test set, and all algorithms are trained on the same split.
	All RIO variants that involve an output kernel or residual estimation are based on the trained NN in the same run.
	For ``MSD'', since the dataset split is strictly predefined by the provider, only 10 independent runs are conducted.
	NNGP and ANP are only tested on the four smallest dataset (based on the product of dataset size and feature dimensionality) because they do not scale well to larger datasets.
	It is notable that for all the RIO variants, no extensive hyperparameter tuning is conducted; the same default setup is used for all experiments, i.e., standard RBF kernel and 50 inducing points. 
	See Appendix~\ref{app:exp_setup} for additional details of experimental setups and a link to downloadable source code. 
	Table~\ref{exp:original} summarizes the numerical results from these experiments.
	The main conclusions in terms of point-prediction error, uncertainty estimation, computational requirements, and ablation studies are summarized below.
	\begin{table}[t]
		\tiny
		\setlength{\tabcolsep}{3.6pt}
		\centering
		\caption{\label{exp:original}Summary of experimental results}
		\begin{tabular}{cccccc|cccccc}
			\hline
			\hline
			Dataset                  & \multirow{2}{*}{Method} & RMSE                                                & NLPD                             & Noise    & Time   & Dataset                  & \multirow{2}{*}{Method} & RMSE                                                       & NLPD                             & Noise    & Time   \\
			n $\times$ d             &                         & mean$\pm$std                                        & mean$\pm$std                     & Variance & (sec)  & n $\times$ d             &                         & mean$\pm$std                                               & mean$\pm$std                     & Variance & (sec)  \\
			\hline
			\multirow{2}{*}{yacht}   & NN                      & 2.30$\pm$0.93$\dagger\ddagger$                      & -                                & -        & 4.02   & \multirow{2}{*}{ENB/h}   & NN                      & 1.03$\pm$0.51$\dagger\ddagger$                             & -                                & -        & 6.65   \\
			& RIO                     &\textbf{1.46$\pm$0.49}             & 2.039$\pm$0.762                  & 0.82     & 7.16   &                          & RIO                     & \textbf{0.70$\pm$0.38}                    & \textbf{1.038$\pm$0.355}                  & 0.46     & 8.18   \\
			& R+I                     & 2.03$\pm$0.73$\dagger\ddagger$                      & 2.341$\pm$0.516$\dagger\ddagger$ & 2.54     & 4.30    &                          & R+I                     & 0.79$\pm$0.46$\dagger\ddagger$                                    & 1.147$\pm$0.405$\dagger\ddagger$ & 0.63     & 7.52   \\
			308                      & R+O                     & 1.88$\pm$0.66$\dagger\ddagger$                      & 2.305$\pm$0.614$\dagger\ddagger$ & 1.60      & 6.27   & 768                      & R+O                     & 0.80$\pm$0.43$\dagger\ddagger$                                    & 1.169$\pm$0.388$\dagger\ddagger$ & 0.59     & 7.61   \\
			$\times$                 & Y+O                     & 1.86$\pm$0.64$\dagger\ddagger$                      & 2.305$\pm$0.639$\dagger\ddagger$ & 1.89     & 9.93   & $\times$                 & Y+O                     & 0.88$\pm$0.48$\dagger\ddagger$                             & 1.248$\pm$0.405$\dagger\ddagger$ & 0.75     & 11.06  \\
			6                        & Y+IO                    & 1.58$\pm$0.52$\dagger\ddagger$                             & 2.160$\pm$0.773$\dagger\ddagger$        & 1.18     & 9.44   & 8                        & Y+IO                    & 0.76$\pm$0.41$\dagger\ddagger$                                    & 1.124$\pm$0.368$\dagger\ddagger$        & 0.56     & 10.64  \\
			& SVGP                    & 4.42$\pm$0.62$\dagger\ddagger$                      & 2.888$\pm$0.102$\dagger\ddagger$ & 18.56    & 8.96   &                          & SVGP                    & 2.13$\pm$0.18$\dagger\ddagger$                             & 2.200$\pm$0.074$\dagger\ddagger$ & 4.70      & 10.16  \\
			& NNGP                    & 12.40$\pm$1.45$\dagger\ddagger$                     & 35.18$\pm$0.534$\dagger\ddagger$ & -        & 7347   &                          & NNGP                    & 4.97$\pm$0.29$\dagger\ddagger$                             & 32.40$\pm$0.638$\dagger\ddagger$ & -        & 7374   \\
			& ANP                     & 7.59$\pm$3.20$\dagger\ddagger$                      & \textbf{1.793$\pm$0.887$\dagger\ddagger$} & -        & 40.82  &                          & ANP                     & 4.08$\pm$2.27$\dagger\ddagger$                             & 2.475$\pm$0.559$\dagger\ddagger$ & -        & 102.3  \\
			\hline
			\multirow{2}{*}{ENB/c}   & NN                      & 1.88$\pm$0.44$\dagger\ddagger$                      & -                                & -        & 6.45   & \multirow{2}{*}{airfoil} & NN                      & 4.82$\pm$0.43$\dagger\ddagger$                             & -                                & -        & 6.48   \\
			& RIO                     & \textbf{1.48$\pm$0.33}             & \textbf{1.816$\pm$0.191}                  & 1.58     & 8.07   &                          & RIO                     & \textbf{3.07$\pm$0.18}                    & \textbf{2.554$\pm$0.053}                  & 9.48     & 17.63  \\
			& R+I                     & 1.71$\pm$0.44$\dagger\ddagger$                      & 1.969$\pm$0.236$\dagger\ddagger$ & 2.22     & 5.02   &                          & R+I                     & 3.16$\pm$0.18$\dagger\ddagger$                             & 2.583$\pm$0.051$\dagger\ddagger$ & 10.07    & 15.90   \\
			768                      & R+O                     & 1.75$\pm$0.43$\dagger\ddagger$                      & 2.000$\pm$0.229$\dagger\ddagger$ & 2.25     & 4.57   & 1505                     & R+O                     & 4.17$\pm$0.26$\dagger\ddagger$                             & 2.849$\pm$0.066$\dagger\ddagger$ & 16.64    & 9.97   \\
			$\times$                 & Y+O                     & 1.76$\pm$0.43$\dagger\ddagger$                      & 2.000$\pm$0.231$\dagger\ddagger$ & 2.32     & 10.99  & $\times$                 & Y+O                     & 4.24$\pm$0.28$\dagger\ddagger$                             & 2.869$\pm$0.075$\dagger\ddagger$ & 17.81    & 22.72  \\
			8                        & Y+IO                    & 1.64$\pm$0.36$\dagger\ddagger$                      & 1.936$\pm$0.210$\dagger\ddagger$ & 1.96     & 10.56  & 5                        & Y+IO                    & 3.64$\pm$0.53$\dagger\ddagger$                             & 2.712$\pm$0.150$\dagger\ddagger$ & 14.40     & 24.51  \\
			& SVGP                    & 2.63$\pm$0.23$\dagger\ddagger$                      & 2.403$\pm$0.078$\dagger\ddagger$ & 6.81     & 10.28  &                          & SVGP                    & 3.59$\pm$0.20$\dagger\ddagger$                             & 2.699$\pm$0.053$\dagger\ddagger$ & 12.67    & 21.74  \\
			& NNGP                    & 4.91$\pm$0.32$\dagger\ddagger$                      & 30.14$\pm$0.886$\dagger\ddagger$ & -        & 7704   &                          & NNGP                    & 6.54$\pm$0.23$\dagger\ddagger$                             & 33.60$\pm$0.420$\dagger\ddagger$ & -        & 3355   \\
			& ANP                     & 4.81$\pm$2.15$\dagger\ddagger$                      & 2.698$\pm$0.548$\dagger\ddagger$ & -        & 64.11  &                          & ANP                     & 21.17$\pm$30.72$\dagger\ddagger$                           & 5.399$\pm$6.316$\dagger\ddagger$ & -        & 231.7 \\
			\hline
			\multirow{2}{*}{CCS}     & NN                      & 6.23$\pm$0.53$\dagger\ddagger$                      & -                                & -        & 9.46   & \multirow{2}{*}{wine/r}  & NN                      & 0.691$\pm$0.041$\dagger\ddagger$                           & -                                & -        & 3.61   \\
			& RIO                     & \textbf{5.97$\pm$0.48}             & \textbf{3.241$\pm$0.109}                  & 24.74    & 13.71  &                          & RIO                     & 0.672$\pm$0.036                                            & 1.094$\pm$0.100                  & 0.28     & 9.25   \\
			1030                     & R+I                     & 6.01$\pm$0.50$\dagger\ddagger$                             & 3.248$\pm$0.112$\dagger\ddagger$        & 25.40     & 9.52   & 1599                     & R+I                     & 0.669$\pm$0.036$\dagger\ddagger$                                  & 1.085$\pm$0.097$\dagger\ddagger$        & 0.28     & 8.34   \\
			$\times$                 & R+O                     & 6.17$\pm$0.54$\dagger\ddagger$                      & 3.283$\pm$0.120$\dagger\ddagger$ & 26.31    & 9.54   & $\times$                 & R+O                     & 0.676$\pm$0.035$\dagger\ddagger$                                  & 1.099$\pm$0.094$\ddagger$        & 0.29     & 5.02   \\
			8                        & Y+O                     & 6.15$\pm$0.52$\dagger\ddagger$                      & 3.279$\pm$0.117$\dagger\ddagger$ & 26.53    & 21.35  & 11                       & Y+O                     & 0.676$\pm$0.034$\dagger\ddagger$                                  & 1.096$\pm$0.092                  & 0.29     & 12.71  \\
			& Y+IO                    & 6.06$\pm$0.49$\dagger\ddagger$                             & 3.261$\pm$0.110$\dagger\ddagger$        & 25.82    & 23.15  &                          & Y+IO                    & 0.672$\pm$0.036$\dagger\ddagger$                           & 1.094$\pm$0.098                  & 0.28     & 12.48  \\
			& SVGP                    & 6.87$\pm$0.39$\dagger\ddagger$                      & 3.336$\pm$0.048$\dagger\ddagger$ & 44.55    & 19.85  &                          & SVGP                    & \textbf{0.642$\pm$0.028$\dagger\ddagger$} & \textbf{0.974$\pm$0.042$\dagger\ddagger$} & 0.40      & 12.17  \\
			\hline
			\multirow{2}{*}{wine/w}  & NN                      & 0.721$\pm$0.023$\dagger\ddagger$                    & -                                & -        & 7.17   & \multirow{2}{*}{CCPP}    & NN                      & 4.96$\pm$0.53$\dagger\ddagger$                             & -                                & -        & 14.52  \\
			& RIO                     & 0.704$\pm$0.018                                     & 1.090$\pm$0.038                  & 0.37     & 16.74  &                          & RIO                     & \textbf{4.05$\pm$0.128}                   & \textbf{2.818$\pm$0.031}                  & 16.30     & 42.65  \\
			4898                     & R+I                     & \textbf{0.699$\pm$0.018$\dagger\ddagger$} & \textbf{1.081$\pm$0.037$\dagger\ddagger$}        & 0.38     & 13.5   & 9568                     & R+I                     & 4.06$\pm$0.13$\dagger\ddagger$                                    & 2.822$\pm$0.031$\dagger\ddagger$        & 16.39    & 39.88  \\
			$\times$                 & R+O                     & 0.710$\pm$0.019$\dagger\ddagger$                    & 1.098$\pm$0.038$\dagger\ddagger$        & 0.39     & 6.19   & $\times$                 & R+O                     & 4.32$\pm$0.15$\dagger\ddagger$                             & 2.883$\pm$0.035$\dagger\ddagger$ & 18.50     & 18.48  \\
			11                       & Y+O                     & 0.710$\pm$0.019$\dagger\ddagger$                    & 1.096$\pm$0.038$\dagger\ddagger$        & 0.39     & 18.39  & 4                        & Y+O                     & 4.37$\pm$0.20$\dagger\ddagger$                             & 2.914$\pm$0.122$\dagger\ddagger$ & 23.98    & 48.27  \\
			& Y+IO                    & 0.705$\pm$0.019$\dagger\ddagger$                           & 1.090$\pm$0.038                  & 0.38     & 20.06  &                          & Y+IO                    & 4.56$\pm$1.00$\dagger\ddagger$                             & 2.958$\pm$0.216$\dagger\ddagger$ & 31.06    & 46.8   \\
			& SVGP                    & 0.719$\pm$0.018$\dagger\ddagger$                    & \textbf{1.081$\pm$0.022$\dagger\ddagger$} & 0.50      & 18.18  &                          & SVGP                    & 4.36$\pm$0.13$\dagger\ddagger$                             & 2.893$\pm$0.031$\dagger\ddagger$ & 19.04    & 46.43  \\
			\hline
			\multirow{2}{*}{protein} & NN                      & 4.21$\pm$0.07$\dagger\ddagger$                      & -                                & -        & 151.8  & \multirow{2}{*}{SC}      & NN                      & 12.23$\pm$0.77$\dagger\ddagger$                            & -                                & -        & 51.9   \\
			& RIO                     & \textbf{4.08$\pm$0.06}             & \textbf{2.826$\pm$0.014}                  & 15.71    & 149.4  &                          & RIO                     & \textbf{11.28$\pm$0.46}  & \textbf{3.853$\pm$0.042}                  & 105.83   & 53.39  \\
			45730                    & R+I                     & 4.11$\pm$0.06$\dagger\ddagger$                      & 2.834$\pm$0.037$\dagger\ddagger$ & 15.99    & 141.2  & 21263                    & R+I                     & 11.33$\pm$0.45$\dagger\ddagger$                                   & 3.858$\pm$0.041$\dagger\ddagger$        & 107.35   & 47.72   \\
			$\times$                 & R+O                     & 4.14$\pm$0.06$\dagger\ddagger$                      & 2.840$\pm$0.015$\dagger\ddagger$ & 16.18    & 115.1  & $\times$                 & R+O                     & 11.63$\pm$0.52$\dagger\ddagger$                            & 3.881$\pm$0.046$\dagger\ddagger$ & 112.91   & 30.47  \\
			9                        & Y+O                     & 4.14$\pm$0.06$\dagger\ddagger$                      & 2.840$\pm$0.015$\dagger\ddagger$ & 16.17    & 138.4  & 81                       & Y+O                     & 11.64$\pm$0.53$\dagger\ddagger$                            & 3.882$\pm$0.046$\dagger\ddagger$ & 113.61   & 45.35  \\
			& Y+IO                    & \textbf{4.08$\pm$0.06}             & \textbf{2.826$\pm$0.014}                  & 15.72    & 155.5  &                          & Y+IO                    & 11.32$\pm$0.45$\dagger\ddagger$                                   & 3.856$\pm$0.041$\dagger\ddagger$        & 106.93   & 57.74  \\
			& SVGP                    & 4.68$\pm$0.04$\dagger\ddagger$                      & 2.963$\pm$0.007$\dagger\ddagger$ & 22.54    & 149.5  &                          & SVGP                    & 14.66$\pm$0.25$\dagger\ddagger$                            & 4.136$\pm$0.014$\dagger\ddagger$ & 239.28   & 50.89  \\
			\hline
			\multirow{2}{*}{CT}      & NN                      & 1.17$\pm$0.34$\dagger\ddagger$                      & -                                & -        & 194.5  & \multirow{2}{*}{MSD}     & NN                      & 12.42$\pm$2.97$\dagger\ddagger$                             & -                                & -        & 1136   \\
			& RIO                     & \textbf{0.88$\pm$0.15}             & 1.284$\pm$0.219                  & 1.02     & 516.4 &                          & RIO                     & \textbf{9.26$\pm$0.21}                             & \textbf{3.639$\pm$0.022}   & 84.28    & 1993   \\
			53500                    & R+I                     & 1.17$\pm$0.34$\dagger\ddagger$                      & 1.538$\pm$0.289$\dagger\ddagger$ & 1.71     & 19.80   & 515345                   & R+I                     & 10.92$\pm$1.30$\dagger\ddagger$                             & 3.811$\pm$0.128$\dagger\ddagger$   & 135.34     & 282.0   \\
			$\times$                 & R+O                     & \textbf{0.88$\pm$0.15}             & 1.283$\pm$0.219$\dagger\ddagger$        & 1.02     & 159.4 & $\times$                 & R+O                     & \textbf{9.25$\pm$0.20}                      & \textbf{3.638$\pm$0.021}   & 84.05     & 1518   \\
			384                      & Y+O                     & 0.99$\pm$0.42$\dagger\ddagger$                      & 1.365$\pm$0.385$\dagger\ddagger$        & 2.45     & 168.2 & 90                       & Y+O                     & 10.00$\pm$0.86$\dagger\ddagger$                             & 3.768$\pm$0.148$\dagger\ddagger$   & 169.90     & 1080   \\
			& Y+IO                    & 0.91$\pm$0.16$\dagger\ddagger$                             & \textbf{1.280$\pm$0.177$\ddagger$}        & 0.62     & 578.6 &                          & Y+IO                    & 9.43$\pm$0.52$\ddagger$                             & 3.644$\pm$0.025$\dagger\ddagger$   & 85.66     & 2605   \\
			& SVGP                    & 52.07$\pm$0.19$\dagger\ddagger$                     & 5.372$\pm$0.004$\dagger\ddagger$ & 2712     & 27.56  &                          & SVGP                    & 9.57$\pm$0.00$\dagger\ddagger$                             & 3.677$\pm$0.000$\dagger\ddagger$   & 92.21     & 2276  \\
			\hline
			\hline
		\end{tabular}	
		The symbols $\dagger$ and $\ddagger$ indicate that the difference between the marked entry and RIO is statistically significant at the 5\% significance level using paired $t$-test and Wilcoxon test, respectively. The best entries that are significantly better than all the others under at least one statistical test are marked in boldface (ties are allowed).
		\vspace{-10pt}
	\end{table}
	
	\textbf{Point-Prediction Error}\hspace{6pt} The errors between point predictions of models and true outcomes of test points are measured using Root Mean Square Error (RMSE); the mean and standard deviation of RMSEs over multiple experimental runs are shown in Table~\ref{exp:original}. For models that return a probabilistic distribution, the mean of the distribution is the point prediction. Although the main motivation of RIO is to enhance pretrained NN rather than construct a new state-of-the-art prediction model from scratch, RIO performs the best or equals the best method (based on statistical tests) in 10 out of 12 datasets. RIO significantly outperforms original NN in all 12 datasets, while original SVGP performs significantly worse than NN in 7 datasets. For the CT dataset, which has 386 input features, SVGP fails severely since the input kernel cannot capture the implicit correlation information. ANP is unstable on the airfoil dataset because it scales poorly with dataset size. Figure~\ref{fig:RMSE} compares NN, RIO and SVGP in terms of prediction RMSE. RIO is able to improve the NN predictions consistently regardless of how the dataset is split and how well the NN is trained. Even in the situations where original NN is much worse than SVGP, RIO still successfully improves the NN performance into a level that is comparable or better than SVGP. Appendix~\ref{app:correction} provides additional investigation into the behavior of RIO output corrections. RIO exhibits diverse behaviors that generally move the original NN predictions closer to the ground truth. To conclude, applying RIO to NNs not only provides additional uncertainty information, but also reliably reduces the point-prediction error. 
	
	\textbf{Uncertainty Estimation}\hspace{6pt} Average negative log predictive density (NLPD) is used to measure the quality of uncertainty estimation, which favours conservative models, but also effectively penalizes both over- and under-confident predictions \citep{Joaquin_2005}.
	The mean and standard deviation of NLPDs over multiple experimental runs are shown in Table~\ref{exp:original} (lower is better). RIO performs the best or equals the best method (based on statistical tests) in 8 out of 12 datasets. NNGP always yields a high NLPD; it returns over-confident predictions, because it does not include noise estimation in its original implementation. For the yacht dataset, ANP achieves the best NLPD, but with high RMSE. This is because ANP is able to correctly return high predictive variance when its prediction error is high. For all other tested datasets, RIO variants consistently outperform ANP. Among all RIO variants, the full RIO provides the most reliable overall predictive uncertainty. The conclusion is that RIO successfully extracts useful uncertainty information from NN predictions. Appendix~\ref{app:CI} evaluates uncertainty estimates on an additional metric: the true coverage probabilities of estimated confidence intervals. Although RIO also provides reasonable estimates for this metric in most cases, comparison among algorithms using this metric is discouraged due to its unreliability (see Appendix~\ref{app:CI} for examples).
	
	\textbf{Computation Time}\hspace{6pt} Table~\ref{exp:original} shows the average wall clock time of each algorithm. All algorithms are implemented using Tensorflow under the same running environment (see Appendix~\ref{app:exp_setup} for implementation details). The RIO variants scale well to increasing dataset sizes and feature dimensionalities. \mbox{L-BFGS-B} converges especially quickly for R+I on the three highest dimensional datasets, presumably because the residuals are very well-behaved compared to the raw targets or NN output. In contrast, ANP's computation time increases significantly with the scale of the dataset, and NNGP always needs very expensive computational budgets due to its costly grid search of hyperparameters.
	\begin{figure}
		\centering
		\includegraphics[width=0.99\linewidth]{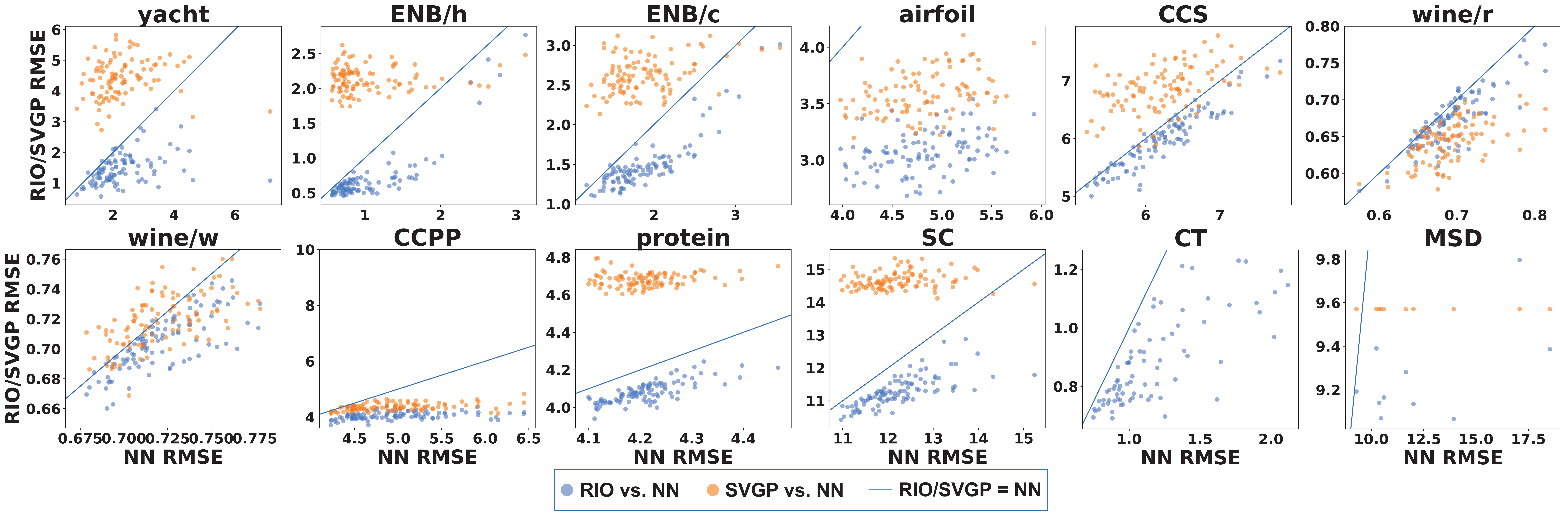}
		\vspace{-5pt}
		\caption{\textbf{Comparison among NN, RIO, and SVGP.} \label{fig:RMSE}
			The horizontal axis denotes the prediction RMSE of the NN, and the vertical axis the prediction RMSE of RIO (blue dots) and SVGP (yellow dots). Each dot represents an independent experimental run. Since the scales are different, the solid blue line indicates where NN and RIO/SVGP have same prediction RMSE. Thus, a dot below the line means that the method (RIO or SVGP) performs better than the NN, and vice versa. Results of SVGP on the CT dataset are not plotted because its prediction RMSE exceeded the visible scale (i.e. they were $>$50). RIO consistently reduces the error of the NN, and outperforms SVGP in most cases.}
		\vspace{-5pt}
	\end{figure}
	
	\textbf{Ablation Study}\hspace{6pt} The RIO variants with residual estimation generally perform better than its counterparts in both point-prediction error and uncertainty estimation. This result confirms the effectiveness of residual estimation, as suggested in Section~\ref{subsec:residualtheory}. Another important result is that Y+IO outperforms both Y+I (SVGP) and Y+O in most cases across all performance metrics, and RIO generally provides better performance than R+I and R+O in all respects. This result, in turn, confirms that the I/O kernel provides additional robustness, as suggested by the analysis in Section~\ref{subsec:iotheory}. In sum, both residual estimation and the I/O kernel contribute substantially to the performance of the framework.
	\begin{table}[]
		\scriptsize
		\centering
		\caption{\label{exp:spearman}Spearman{'}s correlation between RMSE and $\sigma_n^2$ across RIO variants (including SVGP)}
		\begin{tabular}{ccccccccccccc}
			\toprule
			& yacht & ENB/h & ENB/c & airfoil & CCS   & wine/r & wine/w & CCPP & protein & SC & CT    & MSD   \\
			\hline
			correlation & \textbf{0.943} & \textbf{0.943} & \textbf{1.0}     & \textbf{1.0}       & \textbf{0.943} & -0.09  & \textbf{0.886}  & \textbf{1.0}    & \textbf{0.943}   & \textbf{1.0}  & 0.771 & \textbf{0.943} \\
			$p$-value     & \textbf{0.005} & \textbf{0.005} & \textbf{0.0}     & \textbf{0.0}       & \textbf{0.005} & 0.872  & \textbf{0.02}   & \textbf{0.0}    & \textbf{0.005}   & \textbf{0.0}  & 0.072 & \textbf{0.005} \\
			\bottomrule
		\end{tabular}
		Entries that are considered to indicate very strong positive monotonic correlation are marked in boldface.
		\vspace{-5pt}
	\end{table}

\textbf{Correlation between error and noise variance}\hspace{6pt} Table~\ref{exp:spearman} shows the results of Spearman{'}s Rank Correlation between RMSE and noise variance $\sigma_n^2$. For each dataset, there are six pairs of data points, each of which contains the mean of RMSE and noise variance of a RIO variant. A correlation value larger than 0.8 and $p$-value less than 0.05 indicate a very strong positive monotonic correlation. For 10 out of 12 datasets, very strong positive monotonic correlation between RMSE and noise variance was observed. This empirical result is in accordance with the theoretical prediction in Section~\ref{subsec:residualtheory}.

\textbf{Application to a large-scale vision problem}
To show RIO's off-the-shelf applicability to modern deep convolutional architectures, it was applied to a recent pretrained NN for age estimation \citep{Yang:2018}.
The pretrained NN and all data preprocessing were taken exactly from the official code release.
The model is a variant of DenseNet-121 \citep{Huang:2017}, and uses the IMDB age estimation dataset, which contains $\approx\text{172K}$ RGB images \citep{Rothe:2016}.
The goal is to predict the age of the individual in each image.
The features for the GP input kernel were simply a global max pool of the CNN's first stage output.
From Table~\ref{tab:densenet}, RIO substantially improves upon the mean absolute error (MAE) of the pretrained NN, outperforms SVGP in terms of both MAE and NLPD, and yields realistic confidence intervals (CIs).
SVGP effectively learns nothing, so it estimates almost all variance as noise, while RIO effectively augments the pretrained model.
\begin{table}[h]
	\scriptsize
	\centering
	\caption{\label{tab:densenet} IMDB Age Estimation Results with DenseNet}
	\begin{tabular}{l c c c c c}
		\toprule
		Method & MAE & NLPD & 95\% CI Coverage & 90\% CI Coverage & 68\% CI Coverage \\ \hline
		Pretrained DenseNet \citep{Yang:2018} & 7.43 & - & - & - & - \\
		SVGP & 36.45 & 5.06 & 0.99 & 0.96 & 0.62 \\
		RIO & \textbf{6.35} & \textbf{3.59} & 0.94 & 0.91 & 0.75 \\
		\bottomrule
	\end{tabular}

CI coverage means the percentage of testing outcomes that are within the estimated CI.
\end{table}

\section{Analysis of RIO}
This section presents an initial theoretical investigation into what makes RIO work, including why it is useful to fit residuals, and why it is beneficial to use an I/O kernel.
	
\subsection{Benefits of fitting neural network residuals}
	\label{subsec:residualtheory}
	
	Given a pretrained NN, this section provides a theoretical perspective on how fitting its residuals with a GP can yield useful uncertainty information, while leading to prediction error lower than the NN or GP alone.
	The idea is that, due to its high expressivity, a pretrained NN may have learned complex structure that a GP would model poorly, when selecting a kernel that would capture this complexity is infeasible.
	Fitting a GP to the residuals of the NN is easier, since this complex structure has been removed, and takes advantage of the predictive power of the NN, while providing useful uncertainty estimates.
	
	Given a pretrained NN, when is fitting its residuals with a GP a good idea?
	To produce a model with uncertainty information, one could simply discard the NN and train a GP directly on $\mathcal{D}$.
	However, for problems with enough complexity, GP model selection (i.e., specifying an appropriate class of kernels) is challenging.
	Fortunately, the pretrained NN may have learned a useful representation of this complexity, which can then be exploited.
	Suppose instead that a GP tries to model this complexity, and this attempt leads to worse generalization.
	This poor performance could come from many dimensions of the kernel selection process.
	As an initial theoretical investigation, an ideal case is considered first: suppose the GP optimizer optimally avoids modeling this complexity incorrectly by estimating the variance of such complexity as noise.
	Such an estimate leads to the following decomposition:
	\begin{equation}
	\label{eq:noise_decomposition}
	y_i = h(\x_i) + \xi_i = f(\x_i) + g(\x_i) + \xi_i,
	\end{equation}
	where $h(\cdot)$ is the true signal, $f(\cdot)$ is the apparent signal of the GP, $g(\cdot)$ is its apparent (spurious) noise, and $\xi_i \sim \mathcal{N}(0, \sigma_n^2)$ is real noise.
	Intuitively, due to its high expressivity, it is possible for a pretrained NN $\bar{h}_\text{NN}$ to correctly model part of $g$, along with part of $f$, resulting in the residuals
	\begin{equation}
	\label{eq:residuals_decomposition}
	y_i - \bar{h}_\text{NN}(\x_i) = r(\x_i) + \xi_i = r_f(\x_i) + r_g(\x_i) + \xi_i ,
	\end{equation}
	where $r_f(\cdot)$ is now the apparent signal of the GP, and $r_g(\cdot)$ is now its apparent noise.
	In such a case, it will be easier for the GP to learn the the residuals than to learn $h$ directly.
	The uncertainty estimates of the GP will also be more precise, since the NN removes spurious noise. 
	Thus, Eq.~\ref{eq:noise_decomposition} immediately predicts that the noise estimates for GP will be higher when fitting $h$ directly than when fitting NN residuals. 
	This prediction is confirmed in experiments (Table~\ref{exp:original}).
	
	To analyze these behaviors, the above decomposition is used to identify a class of scenarios for which fitting residuals leads to improvements (see Appendix~\ref{app:proofs} for additional details and proofs of following theorems).
	For simplicity, suppose GP kernels are parameterized only by their signal variance $\beta$, i.e., kernels are of the form $\beta k (\cdot,\cdot)$ for some kernel $k$.
	Then, the following two assumptions make Eq.~\ref{eq:noise_decomposition} concrete: (1) GP is well-suited for $f$, i.e., $f(\cdot) \sim \mathcal{GP}(0, k(\cdot,\cdot))$; (2) GP is ill-suited for $g$, i.e., $g$ is independent of $f$ and $\eps$-indistinguishable for $\mathrm{GP}$ on $\mathcal{D}$ (see Lemma~\ref{lem:construction} for an example of such a function).
	Intuitively, this means the predictions of GP change by no more than $\eps$ in the presence of $g$.
	
	%
	
	Consider the residuals $y_i - \bar{h}_\text{NN}(\x_i) = r(\x_i) + \xi_i = r_f(\x_i) + r_g(\x_i) + \xi_i$, where $r_f$ is the remaining GP component, and $r_g$ is the remainder of $g$.
	The following assumption ensures that $r_g$ captures any spurious complexity added by an imperfectly trained NN: $r_f \sim \mathcal{GP}(0, \alpha k(\cdot,\cdot))$, for some $\alpha \in (0, 1]$.
	

	Let $\bar{h}_\mathrm{GP}$ be the GP predictor trained directly on $\mathcal{D}$, and $\bar{h}_\mathrm{GP+NN} = \bar{h}_\mathrm{NN} + \bar{r}_\mathrm{GP}$ be the final predictor after fitting residuals; let $E^{h}_\mathrm{GP}$, $E^{h}_\mathrm{NN}$, and $E^{h}_\mathrm{GP+NN}$ be the expected generalization errors of $\bar{h}_\mathrm{GP}$, $\bar{h}_\mathrm{NN}$, and $\bar{h}_\mathrm{GP+NN}$, resp.
	The advantage of $\bar{h}_\mathrm{GP+NN}$ becomes clear as $g$ becomes complex:
	
	\begin{theorem}[Advantage of fitting residuals]
		\label{thm:advantage}
		$$\lim_{\eps\to0} \big(E^{h}_\mathrm{GP} - E^{h}_\mathrm{GP+NN}\big) \geq \mathbb{E}[g^2(\x)] - \mathbb{E}[r_g^2(\x)] \ \ \ \mathrm{and} \ \ \ \lim_{\eps\to0} \big(E^{h}_\mathrm{NN} - E^{h}_\mathrm{GP+NN} \big) > 0.$$
	\end{theorem}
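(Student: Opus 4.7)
The plan is to evaluate the $\eps\to 0$ limit of each generalization error separately, then subtract. The key lever is that $\eps$-indistinguishability of $g$ on $\mathcal{D}$ forces $\hbargp$ to agree pointwise, up to $\eps$, with the hypothetical predictor $\fbargp$ trained only on $(\mathcal{X}, f+\xi)$; in the limit, $\hbargp \to \fbargp$. Expanding the squared error and using independence of $g$ from $f$ together with $\expt[f(\x) - \fbargp(\x)] = 0$ (the bracketed quantity is a function of the zero-mean variables $f$ and $\xi$ only), the cross term $\expt[(f(\x)-\fbargp(\x))\, g(\x)]$ factors and vanishes, giving $\lim_{\eps\to 0} \Egph = \Egpf + \expt[g^2(\x)]$.

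Applying the same machinery to the residual GP---assuming $r_g$ inherits $\eps$-indistinguishability, since it is by construction the spurious complexity the NN failed to absorb---yields $\rbargp \to \rbargpf$, the GP trained on $(\mathcal{X}, r_f + \xi)$. Since $\hbargpnn = \hbarnn + \rbargp$ and $h - \hbarnn = r_f + r_g$, an analogous cross-term-vanishing expansion gives $\lim_{\eps\to 0} \Egpnnh = \Egpr + \expt[r_g^2(\x)]$. Subtracting, the first inequality reduces to showing $\Egpf \geq \Egpr$ for $\alpha \in (0,1]$. Using the closed-form posterior variances $\Egpf = k(\x,\x) - k_*^\top (K+\sigma_n^2 I)^{-1} k_*$ and $\Egpr = \alpha k(\x,\x) - \alpha^2 k_*^\top (\alpha K+\sigma_n^2 I)^{-1} k_*$, this reduces to an elementary algebraic inequality exploiting the fact that any valid PSD kernel satisfies $k_*^\top k_* \leq k(\x,\x)\,\mathrm{tr}(K)$, which bounds how much a shrunken-signal GP can possibly improve upon its prior.

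For the second inequality, $\Ennh = \expt[(r_f(\x)+r_g(\x))^2] = \expt[r_f^2(\x)] + \expt[r_g^2(\x)]$ (the cross term vanishes by independence of $r_f, r_g$ and zero-mean of $r_f$). Subtracting the limit of $\Egpnnh$ leaves $\expt[r_f^2(\x)] - \Egpr = \alpha k(\x,\x) - \Egpr = \alpha^2 k_*^\top(\alpha K + \sigma_n^2 I)^{-1} k_* > 0$, which is the strictly positive variance reduction a GP enjoys from observing informative training data. The main obstacle is justifying that $r_f$ and $r_g$ behave as genuinely independent signal-plus-spurious-noise components: both are defined as residuals of the same NN trained on the combined $y$, so a fully rigorous argument must either invoke the construction behind Lemma~\ref{lem:construction} or impose an explicit independence hypothesis, and must track how the NN's imperfect fit perturbs $\eps$ into some $\eps'(\eps)\to 0$ controlling $r_g$'s indistinguishability.
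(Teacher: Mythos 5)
Your overall route is the same as the paper's: take the limits $\lim_{\eps\to0}\Egph = \Egpf + \expt[g^2(\x)]$, $\lim_{\eps\to0}\Egpnnh = \Egpr + \expt[r_g^2(\x)]$, and $\lim_{\eps\to0}\Ennh = \alpha\expt[f^2(\x)] + \expt[r_g^2(\x)]$ (these are exactly the paper's Lemmas~\ref{lem:gen_gph}, \ref{lem:gen_gpnn}, and \ref{lem:gen_nn}, with the cross terms killed by independence and by the $\eps$-bound on $\Delta\fbargp$), and then reduce both inequalities to comparing $\Egpf$, $\Egpr$, and $\alpha\expt[f^2(\x)]$. Your second inequality is handled exactly as in the paper, and the caveats you flag at the end---that $r_g$ must be \emph{assumed} to inherit $\eps$-indistinguishability and that $r_f$, $r_g$ must be treated as independent components---are precisely the hypotheses the paper imposes rather than derives.

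The one step that does not go through as written is your justification of $\Egpf \ge \Egpr$. You propose to get it from $\mathbf{k}_*^\top\mathbf{k}_* \le k(\x,\x)\,\mathrm{tr}(K)$; that inequality is true (sum the $2\times2$ principal minors of the joint kernel matrix), but it is a scalar consequence of positive semidefiniteness that is far too weak to control the difference $\bigl(k(\x,\x) - \mathbf{k}_*^\top(K+\sigma_n^2I)^{-1}\mathbf{k}_*\bigr) - \bigl(\alpha k(\x,\x) - \alpha^2\mathbf{k}_*^\top(\alpha K+\sigma_n^2I)^{-1}\mathbf{k}_*\bigr)$, which depends on the full PSD structure. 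A correct pointwise argument is monotonicity in $\alpha$ of the posterior variance $V(\alpha) = \alpha k(\x,\x) - \alpha^2\mathbf{k}_*^\top(\alpha K+\sigma_n^2I)^{-1}\mathbf{k}_*$: with $A=(\alpha K+\sigma_n^2 I)^{-1}$ one computes $V'(\alpha) = k(\x,\x) - 2\alpha\mathbf{k}_*^\top A\mathbf{k}_* + \alpha^2\mathbf{k}_*^\top AKA\mathbf{k}_*$, which is the quadratic form of the PSD matrix $\left(\begin{smallmatrix}k(\x,\x)&\mathbf{k}_*^\top\\ \mathbf{k}_*&K\end{smallmatrix}\right)$ evaluated at $(1,-\alpha A\mathbf{k}_*)$, hence nonnegative. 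The paper sidesteps this entirely by quoting the learning-curve formulas $\Egpf = \mathrm{tr}(\Lambda^{-1}+\sigma_n^{-2}\Phi\Phi^\top)^{-1}$ and $\Egpr = \mathrm{tr}(\alpha^{-1}\Lambda^{-1}+\sigma_n^{-2}\Phi\Phi^\top)^{-1}$ (Lemma~\ref{lem:gen_gpfr}), where the comparison is immediate from Loewner monotonicity of the matrix inverse since $\alpha^{-1}\Lambda^{-1}\succeq\Lambda^{-1}$. Either repair works; the lemma you actually cite does not.
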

	
	The experimental results in Section~\ref{sec:experiments} support this result (Table~\ref{exp:original}).
	First, in all experiments, GP+NN does indeed improve over the underlying NN.
	Second, notice that the improvement of GP+NN over GP is only guaranteed when $\mathbb{E}[g^2(\x)] - \mathbb{E}[r_g^2(\x)]$ is positive, i.e., when the NN successfully captures some underlying complexity.
	In experiments, when the NN is deficient, GP+NN often outperforms GP, but not always.
	For example, if the NN is overfit to zero training loss, and uses the same training data as the GP, the GP cannot provide benefit.
	However, when the NN is well-trained, using proper overfitting prevention, the improvements are consistent and significant.
	
	The prediction of Theorem~\ref{thm:advantage} that lower apparent noise corresponds to improved error is confirmed in experiments (Table~\ref{exp:spearman}).
	Note that an optimal model would learn $h$ exactly, and have a predictive variance of $\sigma_n^2$.
	So, the improvement in error of $\bar{h}_\mathrm{GP+NN}$ over $\bar{h}_\mathrm{GP}$ also corresponds to a predictive variance that is closer to that of the optimal model, since some spurious noise has been removed by $\bar{h}_\mathrm{NN}$.
	Complementing this theoretical improvement in uncertainty estimation, the above setting also leads to a key practical property, which is illustrated in Figure~\ref{fig:goodbad}:
	\begin{theorem}
		\label{thm:pos_corr}
		The uncertainty of $\bar{r}_\mathrm{GP}$ is positively correlated with the variance of NN residuals.
	\end{theorem}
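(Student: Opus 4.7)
The plan is to exploit the scale-equivariance of GP marginal-likelihood training. Adopting the decomposition of Section~\ref{subsec:residualtheory}, write the NN residual vector as $\mathbf{r}$ and the GP signal component as $r_f \sim \mathcal{GP}(0, \alpha k(\cdot,\cdot))$ with learned signal variance $\alpha$ and noise variance $\sigma_n^2$. Both are trained by maximizing
$$\mathcal{L}(\alpha, \sigma_n^2; \mathbf{r}) = -\tfrac{1}{2}\mathbf{r}^\top(\alpha K + \sigma_n^2 I)^{-1}\mathbf{r} - \tfrac{1}{2}\log|\alpha K + \sigma_n^2 I| - \tfrac{n}{2}\log 2\pi,$$
where $K$ is the fixed kernel matrix on the training inputs. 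I would formalize a change in the variance of NN residuals by a multiplicative perturbation $\mathbf{r} \mapsto c\mathbf{r}$ for $c > 0$, which multiplies the empirical residual variance by $c^2$.

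First I would show that if $(\hat\alpha, \hat\sigma_n^2)$ maximizes $\mathcal{L}(\cdot\,;\mathbf{r})$, then $(c^2\hat\alpha, c^2\hat\sigma_n^2)$ maximizes $\mathcal{L}(\cdot\,;c\mathbf{r})$. This is a direct substitution: the quadratic form $\mathbf{r}^\top(\alpha K + \sigma_n^2 I)^{-1}\mathbf{r}$ is invariant under the joint rescaling $\mathbf{r}\mapsto c\mathbf{r}$, $\alpha\mapsto c^2\alpha$, $\sigma_n^2\mapsto c^2\sigma_n^2$, while the log-determinant term shifts only by the constant $n\log c^2$, which is independent of $\mathbf{r}$ and does not affect the argmax. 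Next I would plug these rescaled hyperparameters into the GP posterior variance at a test point $x_*$,
$$\mathrm{var}(\hat{r}_*) = \alpha\, k(x_*, x_*) - (\alpha \mathbf{k}_*)^\top(\alpha K + \sigma_n^2 I)^{-1}(\alpha \mathbf{k}_*),$$
and verify that it scales by exactly $c^2$ under the rescaling, so the GP's posterior uncertainty at every test point is a monotonically increasing function of residual variance along this family. Since the relation is deterministic and strictly monotonic in $c$, it is in particular positively correlated in any statistical sense of the word. The extension to the full I/O kernel of Eq.~\ref{eq:compositekernel} is immediate: applying the same rescaling to $\sigma_\mathrm{in}^2$, $\sigma_\mathrm{out}^2$, and $\sigma_n^2$ jointly preserves the optimality argument.

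The main obstacle is that the statement is informal across a population of problems which may differ from one another in more than pure scale. To lift the scale-equivariance argument to this more general setting, I would further argue that the MLE noise parameter $\sigma_n^2$ is lower-bounded by the portion of residual variance that the chosen kernel class cannot explain, and that this quantity grows with the empirical variance of $\mathbf{r}$; since $\sigma_n^2$ enters the observational predictive variance additively, it furnishes a lower bound on uncertainty that tracks residual variance even outside a pure-scaling family. This bound suffices to formalize the positive-correlation intuition illustrated in Figure~\ref{fig:goodbad}, where the high-residual-variance case leads to a visibly wider posterior band, and matches the strong empirical rank correlation reported in Table~\ref{exp:spearman}.
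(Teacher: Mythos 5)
Your proof is sound but takes a genuinely different route from the paper's. The paper argues through the decomposition $r = r_f + r_g$ of Section~\ref{subsec:residualtheory}: an increase in $\mathbb{E}[r_f^2(\x)]$ is, by the modeling assumption $r_f\sim\mathcal{GP}(0,\alpha k)$, an increase in the learned signal variance $\alpha$, while an increase in $\mathbb{E}[r_g^2(\x)]$ (a decrease in $\delta$) is absorbed into the estimated noise $\hat{\sigma}_n^2$; either change is then asserted to increase the predictive variance $\alpha k_{**}-\alpha\mathbf{k}_*^\top(\alpha\mathbf{K}+\hat{\sigma}_n^2\mathbf{I})^{-1}\alpha\mathbf{k}_*$. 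You instead exploit the exact symmetry of the marginal likelihood under $\mathbf{r}\mapsto c\mathbf{r}$, $(\alpha,\sigma_n^2)\mapsto(c^2\alpha,c^2\sigma_n^2)$, concluding that the posterior variance scales by exactly $c^2$ along this family. Your argument buys rigor and independence from the $\eps$-indistinguishability machinery: it is a pure reparameterization identity that needs no assumption about how the optimizer splits variance between signal and noise, and it sidesteps the one step the paper leaves unjustified, namely that the predictive variance is monotone in $\alpha$ with $\sigma_n^2$ held fixed (this is true, but requires a short eigendecomposition argument using positive semi-definiteness of the extended kernel matrix). What the paper's route buys is coverage of the two mechanisms separately --- residual variance can grow through added structure (raising $\alpha$) or through added spurious noise (raising $\hat{\sigma}_n^2$) without the two growing proportionally, which is exactly the contrast illustrated in Figure~\ref{fig:goodbad} --- whereas your symmetry argument only pins down the one-parameter family where all residual components scale together, and your proposed extension beyond that family (a lower bound on $\hat{\sigma}_n^2$ by unexplained variance) is a plausible sketch rather than a derivation. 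Given that the theorem statement and the paper's own three-line proof are both informal, your level of rigor is comparable, and the scaling identity is arguably the cleaner core argument.
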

	
	This property matches the intuition that the GP's variance should generally be higher for unstable NNs, i.e., NNs with high residual variance, than for stable NNs.
	Such a property is crucial to measuring the confidence of NN predictions in practice.	
	
	Overall, the initial theoretical results in this section (and the appendix) show that if a problem is well-suited to be learned by an NN, and a practitioner has access to a pretrained NN for that problem, then training a GP to fit the residuals of the NN can provide potential benefits in uncertainty estimates, without sacrificing prediction performance. Further theoretical investigation regarding more complicated situations, e.g., misspecification of GP kernels, is considered as one interesting direction for future work.
	
	\subsection{Robustness of the I/O Kernel}
	\label{subsec:iotheory}
	This section provides a justification for why a GP using the proposed I/O kernel is more robust than the standard GP, i.e., using the input kernel alone.
	The key assumption is that the output of an NN can contain valuable information about its behavior, and, consequently, the structure of the target function.
	Consider the setup in Section~\ref{subsec:residualtheory}, but now with
	$y_i = f_\tin(\x_i) + f_\tout(\x_i) + \xi_i,$
	where $f_\tin \sim GP(0, k_\tin)$ and $f_\tout \sim GP(0, k_\tout)$, with non-trivial RBF kernels $k_\tin$, $k_\tout$ (as in Equation~\ref{eq:rbf}).
	Let $\bar{h}_\mathrm{NN}$ be an NN of sufficient complexity to be nontrivially non-affine, in that there exists a positive-measure set of triples $(\x, \x', \x'')$ s.t. $\lVert \x - \x' \rVert = \lVert \x - \x'' \rVert$, but $\lVert \bar{h}_\mathrm{NN}(\x) - \bar{h}_\mathrm{NN}(\x') \rVert \neq \lVert \bar{h}_\mathrm{NN}(\x) - \bar{h}_\mathrm{NN}(\x'') \rVert$.
	Denote the generalization errors of the standard GP, GP with output kernel only, and GP with I/O kernel by $E^h_\mathrm{I}$, $E^h_\mathrm{O}$, and $E^h_\mathrm{I/O}$, respectively.
	The expected result follows (proof in Appendix~\ref{app:proofs}).   
	\begin{theorem}[Advantage of I/O kernel]
		\label{thm:io}
		$E^{h}_\mathrm{I/O} < E^{h}_\mathrm{I}$ and $E^{h}_\mathrm{I/O} < E^{h}_\mathrm{O}.$
	\end{theorem}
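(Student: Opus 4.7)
The plan is to exploit the additive structure of the data-generating process. Under the assumed decomposition $y_i = f_\tin(\x_i) + f_\tout(\x_i) + \xi_i$ with $f_\tin$ and $f_\tout$ independent, the sum of these independent GP components is itself a GP with kernel $k_\tin + k_\tout$, so the I/O-kernel GP is well-specified in distribution. I would make this the foundation: the I/O GP's prior matches the true generative prior, and by standard arguments (marginal likelihood optimization recovers the correct variance components in expectation), its apparent noise collapses to $\sigma_n^2$, so its generalization error is that of the Bayes-optimal predictor under the true model.

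Next I would argue each of the two inequalities as a mismatch result. For $E^h_\mathrm{I/O} < E^h_\mathrm{I}$: the input-only GP's kernel $k_\tin$ is a function of input distances alone. The non-affinity hypothesis supplies a positive-measure set of triples $(\x,\x',\x'')$ with $\lVert \x - \x'\rVert = \lVert \x - \x''\rVert$ but $\lVert\bar{h}_\mathrm{NN}(\x) - \bar{h}_\mathrm{NN}(\x')\rVert \neq \lVert\bar{h}_\mathrm{NN}(\x) - \bar{h}_\mathrm{NN}(\x'')\rVert$, so $k_\tout$ composed with $\bar{h}_\mathrm{NN}$ assigns different covariances where $k_\tin$ assigns equal covariances. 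Hence the $f_\tout$ component has structure that $k_\tin$ provably cannot represent. Appealing to the same mechanism used in Section~\ref{subsec:residualtheory} (a well-behaved GP optimizer absorbing inexpressible structure into noise), $f_\tout$ behaves as an $\eps$-indistinguishable ``spurious noise'' component relative to $k_\tin$, forcing the input-only GP into the role of $\bar{h}_\mathrm{GP}$ from Theorem~\ref{thm:advantage} with a strictly positive residual signal variance. For $E^h_\mathrm{I/O} < E^h_\mathrm{O}$ I would invert the roles: the non-affinity condition (and the existence of positive measure on the input side) equally yields triples with matched output distances but unmatched input distances, so $k_\tout\circ\bar{h}_\mathrm{NN}$ cannot represent $f_\tin$.

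Finally I would combine these mismatches with a monotonicity statement for expected GP generalization error: given two candidate kernels $k$ and $k' = k + k^{\Delta}$ such that a positive-measure, independent component of the target lies in the RKHS gap induced by $k^{\Delta}$, the Bayes predictor under $k'$ strictly dominates the one under $k$ in mean-squared error. Applying this with $k = k_\tin$, $k^{\Delta}=k_\tout$ gives the first inequality; applying it with $k = k_\tout$, $k^{\Delta}=k_\tin$ gives the second.

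The hard part will be step two: cleanly translating the pointwise non-affinity condition into a statement that the structured variance of $f_\tout$ is strictly inexpressible by $k_\tin$ and therefore contributes a strictly positive excess MSE. The natural route is to construct, on the positive-measure triple set, a test witness showing that the conditional means under $k_c$ and $k_\tin$ differ on a set of positive measure, and invoke the orthogonality of independent GP components so that $f_\tout$ cannot cancel against the $f_\tin$ contribution to rescue $k_\tin$. The symmetric argument for the $E^h_\mathrm{O}$ bound additionally requires guaranteeing that the non-affinity hypothesis has a two-sided consequence, which I would obtain either by strengthening the hypothesis or by observing that any non-isometric $\bar{h}_\mathrm{NN}$ on a set of positive measure must violate distance preservation in both directions.
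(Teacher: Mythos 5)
Your overall skeleton matches the paper's: the data-generating process is a sum of independent GPs, so the correct kernel is $k_\tin + k_\tout$ and the I/O GP is well-specified; the non-affinity hypothesis shows that $k_\tin$ assigns equal covariance to pairs where $k_\tin + k_\tout$ does not, so no hyperparameter setting of the input kernel alone can reproduce the correct kernel on the positive-measure triple set; and the conclusion should follow because a misspecified kernel cannot attain the Bayes error. Where you diverge is in how you close that last step, and that is where the gap lies. The paper closes it in one line by citing the known result that a GP whose kernel is misspecified on a set of positive measure has generalization error strictly greater than the Bayes error, which is attained by the correct kernel \citep{Sollich2002}. Your substitute---recasting $f_\tout$ as an $\eps$-indistinguishable spurious-noise component and invoking the machinery of Theorem~\ref{thm:advantage}---does not directly apply: that analysis compares a GP fit to $h$ against a GP fit to NN residuals (two different regression targets), not two GPs with different kernels fit to the same targets, and nothing in the present hypotheses guarantees that $f_\tout$ is $\eps$-indistinguishable for the input-only GP (it is a draw from a nontrivial RBF-kernel GP on the NN outputs, not the adversarial construction of Lemma~\ref{lem:construction}). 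Your fallback ``monotonicity statement'' about RKHS gaps is essentially a restatement of the theorem to be proven; you correctly flag it as the hard part but do not supply it.

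Two further remarks. First, your concern about the second inequality is legitimate and in fact sharper than the paper's own treatment: establishing that no hyperparameter setting of $k_\tout$ reproduces $k_\tin + k_\tout$ requires pairs with equal output distances but unequal input distances, which is not literally what the nontrivially-non-affine hypothesis supplies; the paper's ``Similarly'' elides this, and your suggestion to strengthen the hypothesis (or to argue that non-isometry on a positive-measure set has a two-sided consequence) is the right fix. Second, your opening claim that marginal-likelihood optimization makes the I/O GP's apparent noise ``collapse to $\sigma_n^2$'' is stronger than needed and not justified at finite sample size; all that is required is that the correctly specified kernel attains the Bayes error, which is the form in which the cited result is used.
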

	The optimizer associated with the GP simultaneously optimizes the hyperparameters of both kernels, so the less useful kernel usually receives a smaller signal variance.
	As a result, the I/O kernel is resilient to failures of either kernel.
	In particular, the GP using I/O kernel improves performance even in the case where the problem is so complex that Euclidean distance in the input space provides no useful correlation information or when the input space contains some noisy features. 
	Conversely, when the NN is a bad predictor, and $h_\mathrm{NN}$ is no better noise, the standard GP with input kernel alone is recovered.
	In other words, the I/O kernel is never worse than using the input kernel alone, and in practice it is often better.
	This conclusion is confirmed in the experiments in Section~\ref{sec:experiments}.

\section{Discussion and Future Directions}
	
	In addition to the reliable uncertainty estimation, accurate point prediction, and good scalability, demonstrated in Section~\ref{sec:experiments}, RIO provides other important benefits.
	
	RIO can be directly applied to any standard NN without modification to the model architecture or training pipeline. Moreover, retraining of the NN or change of inference process are not required.
	The framework simply requires the outputs of an NN; it does not need to access any internal structure. This feature makes the framework more accessible to practitioners in real-world applications, e.g., data scientists can train NNs using traditional pipelines, then directly apply RIO to the trained NNs.
	
	RIO also provides robustness to a type of adversarial attack.
	Consider a worst-case scenario, in which an adversary can arbitrarily alter the output of the NN with minuscule changes to the input.
	It is well-known that there are NNs for which this is possible \citep{Goodfellow2015}.
	In this case, with the help of the I/O kernel, the model becomes highly uncertain with respect to the output kernel.
	A confident prediction then requires both input and output to be reasonable.
	In the real world, a high degree of uncertainty may meet a threshold for disqualifying the prediction as outside the scope of the model's ability.

	There are several promising future directions for extending RIO: First, applying RIO to reinforcement learning (RL) algorithms, which usually use standard NNs for reward predictions, would allow uncertainty estimation of the future rewards. Agents can then directly employ efficient exploration strategies, e.g., bandit algorithms \citep{Thompson1933}, rather than traditional stochastic approaches like  $\eps$-greedy. Second, RIO applied to Bayesian optimization (BO) \citep{Mockus1975} would make it possible to use standard NNs
	in surrogate modeling. This approach can potentially improve the expressivity of the surrogate model and the scalability of BO. Third, since RIO only requires access to the inputs and outputs of NNs, it could be directly applied to any existing prediction models, including hybrid and ensemble models. For example, experiments in Appendix~\ref{app:RF} show that RIO achieves good results when directly applied to random forests. This general applicability makes RIO a more general tool for real-world practitioners.
	
	\section{Conclusion}
	The RIO framework both provides estimates of predictive uncertainty of neural networks, and reduces their point-prediction errors. The approach captures NN behavior by estimating their residuals with an I/O kernel.
	RIO is theoretically-grounded, performs well on several real-world problems, and, by using a sparse GP approximation, scales well to large datasets. Remarkably, it can be applied directly to any pretrained NNs without modifications to model architecture or
	training pipeline. Thus, RIO can be used to make NN regression practical and powerful in many real-world applications.

\bibliography{ref}
\bibliographystyle{iclr2020_conference}

\appendix
\section{Additional Details and Proofs for Sections~\ref{subsec:residualtheory} and~\ref{subsec:iotheory}}
\label{app:proofs}

Section~\ref{subsec:residualtheory} provides a theoretical perspective on how using GP to fit NN residuals can provide advantages over GP or NN alone.
The idea is to decompose the target function into two independent parts: one that is well-suited for a GP to learn, and the other that is difficult for it to capture.
The following definition provides a formalization of this notion of difficulty.

\begin{definition}[$\eps$-indistinguishable]
	Let $\bar{f}_A$ be the predictor of a learner $A$ trained on $\{(\mathbf{x}_i,y_i)\}_{i=1}^n$, and $\bar{h}_A$ that of $A$ trained on $\mathcal{D} = \{(\x_i,y_i + g(\x_i))\}_{i=1}^n$.
	Then, if
	$\lvert \bar{f}_A(\x) - \bar{h}_A(\x) \rvert < \eps \ \forall \ \x,$
	we say \emph{$g$ is $\eps$-indistinguishable for A on $\mathcal{D}$}.\footnote{Note that a related information theoretic notion of indistinguishability has recently been defined for analyzing the limits of deep learning \citep{Abbe2019}.}
\end{definition}

In other words, the predictions of $A$ change by no more than $\eps$ in the presence of $g$.
If $\eps$ is small compared to $\mathbb{E}[g^2(\x)]$, then $g$ contains substantial complexity that $A$ cannot capture.

For notation, let $\bar{\psi}_\mathrm{A}$ denote the predictor of a learner $A$ trained on $\{(\mathbf{x}_i,\psi(\x_i) + \xi_i)\}_{i=1}^n$, with $\xi_i \sim \mathcal{N}(0, \sigma_n^2)$. Let $E^\psi_A = \mathbb{E}[(\psi(\x) - \bar{\psi}_\mathrm{A}(\x))^2]$ denote the expected generalization error of $A$ on $\psi$ given training locations $\{\x_i\}_{i=1}^n$.
Now, consider a dataset $\mathcal{D} = \{(\x_i,y_i)\}_{i=1}^n$ with
$$y_i = h(\x_i) + \xi_i = f(\x_i) + g(\x_i) + \xi_i,$$
where $f(\cdot) \sim \mathcal{GP}(0, k(\cdot,\cdot))$, $\xi_i \sim \mathcal{N}(0, \sigma_n^2)$, and $\mathbb{E}[g^2(\x)] = \sigma_g^2$.
For simplicity, assume the GP learner selects optimal hyperparameters for $k$ and $\sigma_n^2$.
Suppose $g$ is independent of $f$ and $\eps$-indistinguishable for $\mathrm{GP}$ on $\mathcal{D}$, but a neural network $\bar{h}_\text{NN}$ may have successfully learned some of $g$'s structure.
These assumptions induce a concrete class of examples of the decomposition in Eq.~\ref{eq:noise_decomposition}. 
To demonstrate the existence of such decompositions, Lemma~\ref{lem:construction} provides a constructive example of such a function $g$. 

\begin{lemma}[Construction of independent $\eps$-indistinguishable functions for GP]
	\label{lem:construction}
	Given a dataset $\{(\mathbf{x}_i, f(\x_i) + \xi_i)\}_{i=1}^n$, for any $\sigma_g > 0$ and $\eps > 0$, there exists $g$ with $\mathbb{E}[g^2(\x)] = \sigma_g^2$, and $\{\x_i\}_{i=n+1}^{2n}$, s.t. $g$ is $\eps$-indistinguishable on $\{(\mathbf{x}_i, f(\x_i) + g(\x_i) + \xi_i)\}_{i=1}^{2n}$ for GP with a continuous kernel $k$.
\end{lemma}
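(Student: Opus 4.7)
My plan is to exploit the fact that a GP predictor depends on the training data only through the labels at the training inputs: if I construct $g$ so that $g(\x_i)=0$ at every one of the $2n$ training inputs, then the two label vectors $(f(\x_i)+\xi_i)_{i=1}^{2n}$ and $(f(\x_i)+g(\x_i)+\xi_i)_{i=1}^{2n}$ coincide exactly, so the GP marginal likelihood and any hyperparameters selected by maximizing it are identical across the two datasets, and therefore $\lvert \bar{f}_\mathrm{GP}(\x)-\bar{h}_\mathrm{GP}(\x)\rvert = 0 < \eps$ for every $\x$. The construction then requires two ingredients: a function $g$ that vanishes at all the original training inputs yet has prescribed variance $\sigma_g^2$, and additional training inputs $\x_{n+1},\ldots,\x_{2n}$ chosen to lie in the zero set of $g$ as well.

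For the function, I would fix a radius $\delta>0$ small enough that the balls $B_\delta(\x_i)$ are pairwise disjoint, and let $g_0:\mathbb{R}^d\to\mathbb{R}$ be a continuous bounded function that equals $0$ on $\bigcup_{i=1}^n B_\delta(\x_i)$ and $1$ outside $\bigcup_{i=1}^n B_{2\delta}(\x_i)$, with any continuous interpolation in the annular regions (for example, a piecewise-linear function of $\min_i\lVert\x-\x_i\rVert$). Under a mild regularity assumption on the input distribution---namely, that it places positive mass outside $\bigcup_{i=1}^n B_{2\delta}(\x_i)$, which is generic---we have $\mathbb{E}[g_0^2(\x)]>0$, and rescaling $g := (\sigma_g/\sqrt{\mathbb{E}[g_0^2(\x)]})\cdot g_0$ yields a continuous function satisfying $\mathbb{E}[g^2(\x)]=\sigma_g^2$ together with $g(\x_i)=0$ for $i=1,\ldots,n$.

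Finally, because the zero set of $g$ contains each open ball $B_\delta(\x_i)$, I have ample freedom to select $n$ additional inputs $\x_{n+1},\ldots,\x_{2n}$ inside this zero set, distinct from one another and from $\x_1,\ldots,\x_n$. With $g(\x_i)=0$ for every $i\in\{1,\ldots,2n\}$, the verification sketched in the opening paragraph goes through immediately. The main obstacle is purely the explicit construction of a continuous function that vanishes on a neighborhood of a given finite set of points while retaining nontrivial $L^2$ mass, handled by the bump-like construction above. The continuity of $k$ hypothesized in the lemma does not actually enter this argument; it would only be required for an alternative proof that instead placed the extra inputs far from the original data and invoked decay of $k$ at infinity to argue approximate (rather than exact) indistinguishability.
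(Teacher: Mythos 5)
Your proof is correct as a proof of the literal existence statement, but it takes a genuinely different route from the paper's. The paper does \emph{not} make $g$ vanish on the training set: it places the new inputs at $\x_{n+i} = \x_i + \vb$ for a small shift $\vb$, assigns $g(\x_i) = +\sigma_g$ on the original points and $g(\x_{n+i}) = -\sigma_g$ on the shifted copies, and then uses the linear-smoother view $\fbargp(\x_*) = \w(\x_*)^\top \mathbf{y}$ together with continuity of $k$ (hence of the weight function $\w$) to show that the paired contributions cancel to within $\eps$. Your construction instead forces $g(\x_i) = 0$ at all $2n$ inputs via a bump function supported away from the data, so the two label vectors coincide and the predictors are literally identical. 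What each buys: your argument is more elementary, needs no continuity of $k$, and sidesteps a subtlety the paper glosses over (the paper reuses the same weight vector $\w(\x_*)$ for both predictors, which implicitly assumes the hyperparameters are not re-fit on the perturbed labels; with identical datasets you get this for free, and you are also more careful about normalizing $\mathbb{E}[g^2(\x)] = \sigma_g^2$, at the cost of a mild positive-mass assumption on the input distribution that you correctly flag). The paper's construction, on the other hand, produces a $g$ that is genuinely \emph{present} in the observed labels (taking values $\pm\sigma_g$ at the training inputs) yet still invisible to the GP, which is the phenomenon the surrounding theory actually needs: in Section 5.1 the NN is supposed to be able to learn part of $g$ from the training data, which is vacuous for your $g$ since it contributes nothing to any training label. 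So your proof establishes the lemma as stated but with a degenerate witness that would not support the intended interpretation; if you want a witness consistent with the rest of the argument, the paired-cancellation construction is the one to use.
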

\begin{proof}
	Let $\x_{n+i} = \x_i + \mathbf{v} \ \forall \ i = 1, \ldots, n$, for a fixed vector $\vb$.
	Consider the linear smoother view of GP prediction \citep{Rasmussen2006}: $\fbargp(\x_*) = \mathbf{w}(\x_*)^\top \mathbf{y}$, where ${\mathbf{w}(\x_*) = (K + \sigma_n^2 I)^{-1}\mathbf{k}(\x_*)}$ is the \emph{weight function} \citep{Silverman1984}, and $\mathbf{y}$ is the target vector.
	Since $k$ and $\textbf{w}(\x_*)$ are continuous around each $\x_i$, we can choose $\textbf{v}$ such that $\forall \ \x_*$
	$$\lvert \w(\x_*)_i - \w(\x_*)_{n+i} \rvert < \frac{\eps}{n\sigma_g} \ \forall \ i = 1, \ldots, n.$$
	Let $\fbargp$ be the predictor of GP trained on $\{(\mathbf{x}_i, f(\x_i) + \xi_i)\}_{i=1}^{2n}$, and $\hbargp$ the predictor trained on $\{(\x_i, f(\x_i) + g(\x_i) + \xi_i)\}_{i=1}^{2n} = \mathcal{D}$.
	Now, choose $g$ so that
	$$
	g(\x_i) = 
	\begin{cases} 
	\sigma_g & \text{if  } i \leq n, \\
	-\sigma_g & \text{if  } i > n. 
	\end{cases}
	$$
	Then,
	\begin{align*}
	&\hbargp(\x_*) = \mathbf{w}(\x_*)^\top (f(\x_i) + g(\x_i) + \xi_i)_{i=1}^{2n} = \mathbf{w}(\x_*)^\top (f(\x_i) + \xi_i)_{i=1}^{2n} + \mathbf{w}(\x_*)^\top (g(\x_i))_{i=1}^{2n} \\
	&= \fbargp(\x_*) + \sum_{i=1}^n  \sigma_g\w(\x_*)_i - \sum_{i=1}^n \sigma_g\w(\x_*)_{n+i} = \fbargp(\x_*) + \sigma_g \sum_{i=1}^n \big(\w(\x_*)_i - \w(\x_*)_{n+i}\big).
	\end{align*}
	$$
	\bigg\lvert \sigma_g \sum_{i=1}^n \big(\w(\x_*)_i - \w(\x_*)_{n+i}\big) \bigg\rvert < \sigma_g n (\frac{\eps}{n\sigma_g}) = \eps \implies \lvert \fbargp(\x_*) - \hbargp(\x_*) \rvert < \eps.
	$$
	
\end{proof}

Consider the residuals $y_i - \bar{h}_\text{NN}(\x_i) = r(\x_i) + \xi_i = r_f(\x_i) + r_g(\x_i) + \xi_i$.
Here, $r_f$ is the remaining GP component, i.e., $r_f \sim \mathcal{GP}(0, \alpha k(\cdot,\cdot))$, for $0 < \alpha \leq 1$.
Similarly, $r_g$ is the remainder of $g$ $\eps$-indistinguishable for $\mathrm{GP}$ on $\{(\x_i, r(\x_i) + \xi_i)\}_{i=1}^n$, i.e., $\sigma_g^2 - \mathbb{E}[r_g^2(\x)] = \delta$.
These two assumptions about $r_f$ and $r_g$ ensure that any new spurious complexity added by the NN is collected in $r_g$, since the biggest advantage of the NN (its expressivity) is also its greatest source of risk.
The final predictor after fitting residuals is then $\bar{h}_\mathrm{GP+NN} = \bar{h}_\mathrm{NN} + \bar{r}_\mathrm{GP}$.
The following sequence of results captures the improvement due to residual estimation.

\begin{lemma}[Generalization of GP on $h$]
	\label{lem:gen_gph}
	$$E^{f}_\mathrm{GP} + \sigma_g^2 - 2\eps(E^{f}_\mathrm{GP})^{\frac{1}{2}} - 2\eps\sigma_g < E^{h}_\mathrm{GP} < E^{f}_\mathrm{GP} + \sigma_g^2 + 2\eps(E^{f}_\mathrm{GP})^{\frac{1}{2}} + 2\eps\sigma_g + \eps^2.$$
\end{lemma}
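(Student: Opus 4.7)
The idea is to view $\hbargp$ as a perturbation of $\fbargp$, expand the squared error, and then (i) kill the cross term involving $f$ and $g$ using the independence assumption and (ii) control the remaining $\Delta$ terms with the $\eps$-indistinguishability bound plus Cauchy--Schwarz.

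\textbf{Step 1 (Decomposition).} Define the pointwise perturbation $\Delta(\x) := \hbargp(\x) - \fbargp(\x)$. By the assumption that $g$ is $\eps$-indistinguishable for GP on $\mathcal{D}$, we have $|\Delta(\x)| < \eps$ uniformly in $\x$, so in particular $\mathbb{E}[\Delta^2(\x)] < \eps^2$. Writing $h - \hbargp = (f - \fbargp) + (g - \Delta)$ and expanding the square yields
\begin{equation*}
E^{h}_\mathrm{GP} \;=\; E^{f}_\mathrm{GP} \;+\; \mathbb{E}[g^2] \;+\; \mathbb{E}[\Delta^2] \;+\; 2\,\mathbb{E}[(f-\fbargp)\,g] \;-\; 2\,\mathbb{E}[(f-\fbargp)\,\Delta] \;-\; 2\,\mathbb{E}[g\,\Delta].
\end{equation*}
The first two terms give the ``ideal'' $E^{f}_\mathrm{GP} + \sigma_g^2$ baseline and the remaining four are what I need to bound.

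\textbf{Step 2 (Killing the $(f-\fbargp)\cdot g$ cross term).} This is the step I expect to be the conceptual crux, and it is where the independence hypothesis is used. Since $\fbargp$ is trained on $\{(\x_i,f(\x_i)+\xi_i)\}_{i=1}^n$, the residual function $f-\fbargp$ is a functional of $f$ and the noise vector $\xi$ only, while $g$ is assumed independent of $f$ and zero-mean (so that $\mathbb{E}[g^2(\x)]=\sigma_g^2$ is a variance). Taking the outer expectation over $(\x,f,g,\xi)$ and conditioning on $\x$ and on the $f$-side randomness, the inner expectation of $g(\x)$ is zero, giving $\mathbb{E}[(f-\fbargp)\,g]=0$. (If the reader prefers to treat $g$ as deterministic, the same conclusion follows after a very mild zero-mean requirement on $g$ w.r.t.\ the test distribution, together with independence of $f-\fbargp$ and $g(\x)$ pointwise; either way, this is the only place the ``$f \perp g$'' hypothesis is invoked.)

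\textbf{Step 3 (Cauchy--Schwarz on the $\Delta$ cross terms).} Apply Cauchy--Schwarz to the two remaining cross terms:
\begin{align*}
|\mathbb{E}[(f-\fbargp)\Delta]| &\leq \sqrt{E^{f}_\mathrm{GP}}\cdot\sqrt{\mathbb{E}[\Delta^2]} \;<\; \eps\,(E^{f}_\mathrm{GP})^{1/2},\\
|\mathbb{E}[g\,\Delta]| &\leq \sqrt{\mathbb{E}[g^2]}\cdot\sqrt{\mathbb{E}[\Delta^2]} \;<\; \eps\,\sigma_g,
\end{align*}
both using $\mathbb{E}[\Delta^2]<\eps^2$ from Step~1.

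\textbf{Step 4 (Assemble the two-sided bound).} For the upper bound, drop the cross term of Step~2, and use the Cauchy--Schwarz estimates together with $\mathbb{E}[\Delta^2]<\eps^2$:
\begin{equation*}
E^{h}_\mathrm{GP} \;<\; E^{f}_\mathrm{GP} + \sigma_g^2 + \eps^2 + 2\eps(E^{f}_\mathrm{GP})^{1/2} + 2\eps\sigma_g.
\end{equation*}
For the lower bound, drop the nonnegative term $\mathbb{E}[\Delta^2]\geq 0$ and keep the same Cauchy--Schwarz estimates with the opposite sign to get
\begin{equation*}
E^{h}_\mathrm{GP} \;>\; E^{f}_\mathrm{GP} + \sigma_g^2 - 2\eps(E^{f}_\mathrm{GP})^{1/2} - 2\eps\sigma_g,
\end{equation*}
which is exactly the claim. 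The only subtle point, and the one I would be most careful about in a formal write-up, is Step~2: the independence assumption must be articulated so that $\mathbb{E}[(f-\fbargp)g]$ really does vanish rather than merely being small. Everything else is bookkeeping around the uniform bound $|\Delta|<\eps$.
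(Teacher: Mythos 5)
Your proposal is correct and follows essentially the same route as the paper's proof: the same decomposition $h-\hbargp=(f-\fbargp)+(g-\Delta)$, the same use of the independence of $f$ and $g$ to annihilate the $\expt[(f-\fbargp)g]$ cross term, and the same bounds $\lvert\expt[(f-\fbargp)\Delta]\rvert<\eps(E^{f}_\mathrm{GP})^{1/2}$, $\lvert\expt[g\Delta]\rvert<\eps\sigma_g$, and $0\le\expt[\Delta^2]<\eps^2$. Your remark that the vanishing of the $(f-\fbargp)\cdot g$ term needs a little more than bare independence (e.g.\ a zero-mean condition on $g$) is a fair observation about a point the paper leaves implicit, but it does not change the argument.
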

\begin{proof}
	Let $\Delta\fbargp(\x) = \hbargp(\x) - \fbargp(\x)$ denote the change in GP prediction due to $g$. Then,
	\begin{align*}
	\Egph &= \expt[(h(\x) - \hbargp(\x))^2] = \expt[((f(\x) + g(\x)) - (\fbargp(\x) + \Delta\fbargp(\x)))^2] \\
	&= \expt[((f(\x) - \fbargp(\x)) + (g(\x) - \Delta\fbargp(\x)))^2] \\
	&= \Egpf + 2\expt[(f(\x) - \fbargp(\x))(g(\x) - \Delta\fbargp(\x))] + \expt[(g(\x) - \Delta\fbargp(\x))^2] \\
	&= \Egpf - 2\expt[(f(\x) - \fbargp(\x))\Delta\fbargp(\x))] + \expt[(g(\x) - \Delta\fbargp(\x))^2],
	\end{align*}
	where the last line makes use of the fact that $f$ and $g$ are independent.
	Now,
	$$\big\lvert 2\expt[(f(\x) - \fbargp(\x))\Delta\fbargp(\x))] \big\rvert < 2\eps(\Egpf)^\frac{1}{2},$$
	and
	$$\expt[(g(\x) - \Delta\fbargp(\x))^2] = \sigma_g^2 - 2\expt[g(\x)\Delta\fbargp(\x)] + \expt[\Delta\fbargp(\x)^2],$$
	where
	$$\big\lvert 2\expt[g(\x)\Delta\fbargp(\x)] \big\rvert< 2\eps\sigma_g \ \ \ \text{and} \ \ \ 0 \leq \expt[\Delta\fbargp(\x)^2] < \eps^2.$$
	So, $E^{f}_\mathrm{GP} + \sigma_g^2 - 2\eps(E^{f}_\mathrm{GP})^{\frac{1}{2}} - 2\eps\sigma_g < E^{h}_\mathrm{GP} < E^{f}_\mathrm{GP} + \sigma_g^2 + 2\eps(E^{f}_\mathrm{GP})^{\frac{1}{2}} + 2\eps\sigma_g + \eps^2.$
\end{proof}

\begin{lemma}[Generalization of NN]
	\label{lem:gen_nn}
	$E^{h}_\mathrm{NN} = \alpha \mathbb{E}[f^2(\x)] + \sigma_g^2 - \delta.$
\end{lemma}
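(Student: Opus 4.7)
The plan is to reduce $E^{h}_\mathrm{NN}$ to two terms, one contributed by $r_f$ and one by $r_g$, by exploiting the functional decomposition that was already set up just before the lemma. Specifically, subtracting the NN's output from both sides of $y_i = f(\x_i)+g(\x_i)+\xi_i$ and using the assumed decomposition $y_i - \bar{h}_\mathrm{NN}(\x_i) = r_f(\x_i)+r_g(\x_i)+\xi_i$ gives $h(\x)-\bar{h}_\mathrm{NN}(\x) = r_f(\x)+r_g(\x)$ (interpreting $r_f$ and $r_g$ as functions, not just values at training points). This reduces the generalization error to computing $\mathbb{E}[(r_f(\x)+r_g(\x))^2]$.

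The next step is to expand the square and argue that the cross term vanishes. Since $r_f \sim \mathcal{GP}(0,\alpha k)$ is independent of $r_g$, conditioning on $\x$ and using $\mathbb{E}[r_f(\x)\mid \x]=0$ kills the middle term, leaving $\mathbb{E}[r_f^2(\x)] + \mathbb{E}[r_g^2(\x)]$. For the first summand, I would use the tower property: $\mathbb{E}[r_f^2(\x)] = \mathbb{E}_\x[\mathbb{E}_{r_f}[r_f^2(\x)\mid\x]] = \mathbb{E}_\x[\alpha\, k(\x,\x)] = \alpha\,\mathbb{E}[f^2(\x)]$, where the last equality uses $f\sim\mathcal{GP}(0,k)$ so that $\mathbb{E}[f^2(\x)\mid\x] = k(\x,\x)$. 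The second summand is just the definition $\mathbb{E}[r_g^2(\x)] = \sigma_g^2 - \delta$. Combining these two pieces yields the claimed identity.

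The only delicate step, and the main obstacle, is justifying the pointwise functional decomposition $h(\x)-\bar{h}_\mathrm{NN}(\x) = r_f(\x)+r_g(\x)$ for arbitrary test $\x$, together with the independence of $r_f$ and $r_g$. The setup preceding the lemma introduces this decomposition only as an assumption about the residuals, so I would make clear at the outset that, just as for the analogous decomposition $h=f+g$ on the signal side, we treat $r=r_f+r_g$ as holding as a decomposition of functions (with $r_f$ the GP-expressible component and $r_g$ the $\eps$-indistinguishable remainder), mirroring the construction in Lemma~\ref{lem:construction}. Under this reading, the two terms inherit the same independence structure as $f$ and $g$, and the computation above goes through without further work.
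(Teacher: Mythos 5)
Your proposal is correct and follows essentially the same route as the paper's proof: write the NN's error on $h$ as $\mathbb{E}[(r_f(\x)+r_g(\x))^2]$, kill the cross term via the zero mean of $r_f$ and its independence from $r_g$, and evaluate the two remaining terms using $r_f\sim\mathcal{GP}(0,\alpha k)$ and the definition $\mathbb{E}[r_g^2(\x)]=\sigma_g^2-\delta$. Your version merely makes explicit (via the tower property and the functional reading of the residual decomposition) steps the paper leaves implicit.
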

\begin{proof}
	Making use of the fact that $\mathbb{E}[r_f] = 0$,
	$$\Ennh = \mathbb{E}[r^2(x)] = \mathbb{E}[(r_f(x) + r_g(x))^2] = \mathbb{E}[r_f^2(x)] +  \mathbb{E}[r_g^2(x)].$$
	Now, $r_f \sim \text{GP}(0, \alpha k) \implies \mathbb{E}[r_f^2(x)] = \alpha \mathbb{E}[f^2(\x)] $, and $\sigma_g^2 - \mathbb{E}[r_g^2(\x)] = \delta \implies \mathbb{E}[r_g^2(\x)] = \sigma_g^2 - \delta$.
	So, $\Ennh = \alpha \mathbb{E}[f^2(\x)] + \sigma_g^2 - \delta.$
\end{proof}

\begin{lemma}[Generalization of GP fitting residuals]
	\label{lem:gen_gpnn}
	$$\Egpr + \sigma_g^2 - \delta - 2\eps(\Egpr)^{\frac{1}{2}} - 2\eps(\sigma_g^2 - \delta)^{\frac{1}{2}} < E^{h}_\mathrm{GP+NN} < \Egpr + \sigma_g^2 - \delta + 2\eps(\Egpr)^{\frac{1}{2}} + 2\eps(\sigma_g^2 - \delta)^{\frac{1}{2}} + \eps^2.$$
\end{lemma}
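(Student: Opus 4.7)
The plan is to mirror the proof of Lemma~\ref{lem:gen_gph} essentially verbatim, with the substitutions $f \rightsquigarrow r_f$ and $g \rightsquigarrow r_g$, and to exploit the fact that the residual dataset $\{(\x_i, r(\x_i) + \xi_i)\}_{i=1}^n$ stands in exactly the same relationship to the GP as $\mathcal{D}$ did in Lemma~\ref{lem:gen_gph}. The key preliminary observation is that
\begin{equation*}
h(\x) - \hbargpnn(\x) \;=\; \big(\hbarnn(\x) + r(\x)\big) - \big(\hbarnn(\x) + \rbargp(\x)\big) \;=\; r(\x) - \rbargp(\x),
\end{equation*}
so that $\Egpnnh = \mathbb{E}[(r(\x) - \rbargp(\x))^2]$. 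This reduces the lemma to a statement about how well the GP learns $r = r_f + r_g$ from noisy observations of $r$.

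Next, I would introduce the auxiliary predictor $\rbargpf$ obtained by training GP on $\{(\x_i, r_f(\x_i) + \xi_i)\}_{i=1}^n$, and define the perturbation $\Delta\rbargpf(\x) = \rbargp(\x) - \rbargpf(\x)$. By the assumption that $r_g$ is $\eps$-indistinguishable for GP on $\{(\x_i, r(\x_i) + \xi_i)\}_{i=1}^n$, we have $|\Delta\rbargpf(\x)| < \eps$ for all $\x$. Expanding the square exactly as in Lemma~\ref{lem:gen_gph} gives
\begin{equation*}
\Egpnnh = \Egpr - 2\,\expt\!\big[(r_f(\x) - \rbargpf(\x))\Delta\rbargpf(\x)\big] + \expt\!\big[(r_g(\x) - \Delta\rbargpf(\x))^2\big],
\end{equation*}
where the cross term $2\,\expt[(r_f(\x) - \rbargpf(\x))r_g(\x)]$ vanishes because $r_f$ and $r_g$ are independent (inherited from the independence of $f$ and $g$ together with the fact that $\rbargpf$ depends only on $r_f$ and the noise).

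From here the bounds are routine applications of Cauchy--Schwarz together with $|\Delta\rbargpf(\x)| < \eps$ and $\mathbb{E}[r_g^2(\x)] = \sigma_g^2 - \delta$: the first cross term is bounded in absolute value by $2\eps(\Egpr)^{1/2}$, and expanding the second expectation gives $(\sigma_g^2 - \delta) - 2\,\expt[r_g(\x)\Delta\rbargpf(\x)] + \expt[\Delta\rbargpf(\x)^2]$, whose middle term is bounded by $2\eps(\sigma_g^2 - \delta)^{1/2}$ and whose last term lies in $[0,\eps^2)$. Assembling the upper and lower bounds yields the inequalities in the statement.

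The main obstacle is not any single estimate but a bookkeeping one: making sure the independence of $r_f$ and $r_g$ is justified rigorously enough to kill the cross term, and that the $\eps$-indistinguishability assumption is being applied to the right pair of datasets (the residual dataset with and without $r_g$ present), rather than to the original $\mathcal{D}$. Once these two points are pinned down, the rest of the argument is mechanically parallel to Lemma~\ref{lem:gen_gph} and introduces no new ideas.
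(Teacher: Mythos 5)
Your proposal follows the paper's own proof essentially verbatim: the same reduction $h - \hbargpnn = r - \rbargp$, the same auxiliary predictor $\rbargpf$ and perturbation $\Delta\rbargpf$ bounded by $\eps$ via indistinguishability of $r_g$ on the residual dataset, the same use of independence to kill the $r_g$ cross term, and the same Cauchy--Schwarz estimates yielding the stated bounds. No substantive difference from the paper's argument.
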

\begin{proof}
	Let $\Delta\rbargpf(\x) = \rbargp(\x) - \rbargpf(\x)$ denote the change in GP prediction due to $r_g$. Then,
	\begin{align*}
	\Egpnnh &= \expt[(h(\x) - \hbargpnn(\x))^2] = \expt[(h(\x) - \hbarnn(\x) - \rbargp(\x))^2] \\
	&= \expt[(r_f(\x) + r_g(\x)) - (\rbargpf(\x) + \Delta\rbargpf(\x)))^2] \\
	&= \expt[(r_f(\x) - \rbargpf(\x)) + (r_g(\x) - \Delta\rbargpf(\x)))^2] \\
	&= \Egpr + 2\expt[(r_f(\x) - \rbargp(\x))(r_g(\x) -  \Delta\rbargpf(\x))] + \expt[(r_g(\x) - \Delta\rbargpf(\x))^2]\\
	&= \Egpr - 2\expt[(r_f(\x) - \rbargp(\x)\Delta\rbargpf(\x)] + \expt[(r_g(\x) - \Delta\rbargpf(\x))^2].
	\end{align*}
	Similar to the case of Lemma~\ref{lem:gen_gph}, 
	$$\big\lvert 2\expt[(r_f(\x) - \rbargp(\x))(r_g(\x) -  \Delta\rbargpf(\x))] \big\rvert < 2\eps(\Egpr)^\frac{1}{2},$$
	and
	$$ \expt[(r_g(\x) - \Delta\rbargpf(\x))^2] = \sigma_g^2 - \delta - 2\expt[r_g(\x)\Delta\rbargpf(\x)] + \expt[\Delta\rbargpf(\x)^2],$$
	where
	$$\big\lvert 2\expt[r_g(\x)\Delta\rbargpf(\x)] \big\rvert< 2\eps(\sigma_g^2 - \delta)^\frac{1}{2} \ \ \ \text{and} \ \ \ 0 \leq \expt[\Delta\rbargpf(\x)^2] < \eps^2.$$
	So,
	$$\Egpr + \sigma_g^2 - \delta - 2\eps(\Egpr)^{\frac{1}{2}} - 2\eps(\sigma_g^2 - \delta)^{\frac{1}{2}} < E^{h}_\mathrm{GP+NN} < \Egpr + \sigma_g^2 - \delta + 2\eps(\Egpr)^{\frac{1}{2}} + 2\eps(\sigma_g^2 - \delta)^{\frac{1}{2}} + \eps^2.$$
\end{proof}

\begin{lemma}[Generalization of GP on $f$ and $r_f$]
	\label{lem:gen_gpfr}
	From the classic result \citep{Opper1999,Sollich1999,Rasmussen2006}: Consider the eigenfunction expansion $k(\x,\x') = \sum_j \lambda_j \phi_j(\x)\phi_j(\x')$ and $\int k(\x, \x')\phi_i(\x)p(x) d\x = \lambda_i \phi_i(\x')$.
	Let $\Lambda$ be the diagonal matrix of the eigenvalues $\lambda_j$, and $\Phi$ be the design matrix, i.e., $\Phi_{ji} = \phi_j(\x_i)$. Then,
	$$\Egpf = \mathrm{tr}(\Lambda^{-1} + \sigma_n^{-2}\Phi\Phi^\top)^{-1} \ \ \ \mathrm{and} \ \ \ \Egpr = \mathrm{tr}(\alpha^{-1}\Lambda^{-1} + \sigma_n^{-2}\Phi\Phi^\top)^{-1}.$$ \hspace{20pt}
\end{lemma}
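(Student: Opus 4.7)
The plan is to reduce both identities to the classical eigenfunction-based expression for GP generalization error due to \citet{Opper1999} and \citet{Sollich1999} (see also \citealt{Rasmussen2006}, Sec.~7.3), and then observe that the second identity for $\Egpr$ follows from the first by a trivial rescaling of the kernel. Since the lemma itself is essentially a restatement of that classical result applied to our setting, the work is in (a) invoking the result cleanly in the $f$ case, and (b) tracking how the scaling factor $\alpha$ propagates when the prior is $\alpha k$ instead of $k$.

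First, for the $f$ case: under the assumption $f \sim \mathcal{GP}(0, k)$ with i.i.d.\ Gaussian observation noise of variance $\sigma_n^2$ at fixed training locations $\{\x_i\}_{i=1}^n$, the Bayesian generalization error $\expt_\x[(f(\x) - \fbargp(\x))^2]$ equals the expected posterior variance. Using Mercer's expansion $k(\x,\x') = \sum_j \lambda_j \phi_j(\x)\phi_j(\x')$ with $\{\phi_j\}$ orthonormal w.r.t.\ the input measure $p(\x)$, the GP prior can be written as $f(\x) = \sum_j \sqrt{\lambda_j}\, w_j\, \phi_j(\x)$ with $w_j \sim \mathcal{N}(0,1)$ independent. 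Inference reduces to a Bayesian linear regression problem in the feature basis $\phi_j$, whose posterior covariance on the weights $w$ is $(\Lambda^{-1} + \sigma_n^{-2}\Phi\Phi^\top)^{-1}$ for $\Phi_{ji} = \phi_j(\x_i)$ and $\Lambda = \mathrm{diag}(\lambda_j)$. Pushing this covariance through the feature map and integrating against $p(\x)$ using the orthonormality $\expt_\x[\phi_i(\x)\phi_j(\x)] = \delta_{ij}$ yields $\Egpf = \mathrm{tr}(\Lambda^{-1} + \sigma_n^{-2}\Phi\Phi^\top)^{-1}$.

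Second, for $\Egpr$: by assumption $r_f \sim \mathcal{GP}(0, \alpha k)$. Applying the identical argument with the scaled kernel $\alpha k$, Mercer's expansion gives eigenvalues $\alpha \lambda_j$ with the same eigenfunctions $\phi_j$. So the diagonal eigenvalue matrix becomes $\alpha \Lambda$, its inverse becomes $\alpha^{-1}\Lambda^{-1}$, and $\Phi$ is unchanged. Substituting yields $\Egpr = \mathrm{tr}(\alpha^{-1}\Lambda^{-1} + \sigma_n^{-2}\Phi\Phi^\top)^{-1}$. (We implicitly use, as in the cited references, that the GP optimizer recovers the correct signal variance on each component, which is consistent with the optimal-hyperparameter assumption stated in Appendix~\ref{app:proofs}.)

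The main obstacle is not technical novelty but bookkeeping: ensuring that the orthonormality condition on $\phi_j$ w.r.t.\ the true input distribution $p(\x)$ is properly invoked when passing from the posterior covariance of the weights to the expected squared prediction error, and that the training locations $\{\x_i\}$ are treated as fixed conditioning rather than averaged over (so that $\Phi$ is a deterministic matrix inside the trace). Once these are handled, both equations follow directly from the classical formula and its invariance under kernel rescaling.
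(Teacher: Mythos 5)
Your proposal is correct and is essentially what the paper intends: the paper gives no proof of this lemma at all, merely citing the classical eigenfunction result of \citet{Opper1999,Sollich1999} (Rasmussen \& Williams, Sec.~7.3), and your sketch is the standard derivation of that result (Mercer expansion, Bayesian linear regression in the eigenbasis, orthonormality of $\phi_j$ under $p(\x)$ to turn the posterior covariance into a trace) together with the correct observation that replacing $k$ by $\alpha k$ rescales the eigenvalues to $\alpha\lambda_j$ and hence $\Lambda^{-1}$ to $\alpha^{-1}\Lambda^{-1}$ while leaving $\Phi$ unchanged. The only nitpick is that with the parameterization $f=\sum_j\sqrt{\lambda_j}w_j\phi_j$, $w_j\sim\mathcal{N}(0,1)$, the posterior covariance of $w$ is $(I+\sigma_n^{-2}\Lambda^{1/2}\Phi\Phi^\top\Lambda^{1/2})^{-1}$ rather than $(\Lambda^{-1}+\sigma_n^{-2}\Phi\Phi^\top)^{-1}$ (the latter is the posterior covariance of the coefficients $f_j=\sqrt{\lambda_j}w_j$), but this washes out after pushing through the feature map and does not affect the conclusion.
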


\begin{repeatthm}{\ref{thm:advantage}}
	$\lim_{\eps\to0} \big(E^{h}_\mathrm{GP} - E^{h}_\mathrm{GP+NN}\big) \geq \delta \ \ \ \mathrm{and} \ \ \ \lim_{\eps\to0} \big(E^{h}_\mathrm{NN} - E^{h}_\mathrm{GP+NN} \big) > 0.$
\end{repeatthm}
\begin{proof}
	From Lemmas~\ref{lem:gen_gph}, \ref{lem:gen_nn}, and \ref{lem:gen_gpnn} we have, resp.:
	$$\lim_{\eps\to0} \Egph = E^{f}_\mathrm{GP} + \sigma_g^2, \ \ \ \lim_{\eps\to0} \Ennh = \alpha \mathbb{E}[f^2(\x)] + \sigma_g^2 - \delta, \ \ \ \text{and} \ \ \ \lim_{\eps\to0} \Egpnnh = \Egpr + \sigma_g^2 - \delta.$$
	
	From Lemma~\ref{lem:gen_gpfr}, we have
	$$\Egpf - \Egpr =  \mathrm{tr}(\Lambda^{-1} + \sigma_n^{-2}\Phi\Phi^\top)^{-1}  - \mathrm{tr}(\alpha^{-1}\Lambda^{-1} + \sigma_n^{-2}\Phi\Phi^\top)^{-1} \geq 0,$$
	and
	$$\alpha \mathbb{E}[f^2(\x)] - \Egpr = \alpha \mathbb{E}[f^2(\x)] - \mathrm{tr}(\alpha^{-1}\Lambda^{-1} + \sigma_n^{-2}\Phi\Phi^\top)^{-1} > 0.$$
	So,
	$$\lim_{\eps\to0} \big(E^{h}_\mathrm{GP} - E^{h}_\mathrm{GP+NN}\big) \geq \delta \ \ \ \mathrm{and} \ \ \ \lim_{\eps\to0} \big(E^{h}_\mathrm{NN} - E^{h}_\mathrm{GP+NN} \big) > 0.$$
\end{proof}

%

\begin{repeatthm}{\ref{thm:pos_corr}}
	The variance of NN residuals is positively correlated with the uncertainty of $r_\mathrm{GP}$.
\end{repeatthm}
\begin{proof}
	Increases in $\mathbb{E}[r_f^2(x)]$ lead to increases in $\alpha$; increases in $\mathbb{E}[r_g^2(x)]$ lead to decreases in $\delta$, and thus increases in the estimated noise level $\hat{\sigma}_n^2$.
	So, an increase in either $\mathbb{E}[r_f^2(x)]$ or $\mathbb{E}[r_g^2(x)]$ leads to an increase in $\alpha k((\mathbf{x}_*,\hat{y}_*), (\mathbf{x}_*,\hat{y}_*))-\alpha\mathbf{k}_*^\top(\alpha\mathbf{K}((\mathcal{X},\mathbf{\hat{y}}), (\mathcal{X}, \mathbf{\hat{y}}))+\hat{\sigma}_n^2\mathbf{I})^{-1}\alpha\mathbf{k}_*,$ which is the predictive variance of $r_\mathrm{GP}$.
\end{proof}

\begin{repeatthm}{\ref{thm:io}}
	$E^{h}_\mathrm{I/O} < E^{h}_\mathrm{I}$ and $E^{h}_\mathrm{I/O} < E^{h}_\mathrm{O}.$
\end{repeatthm}
\begin{proof}
	$$\lVert \x - \x' \rVert = \lVert \x - \x'' \rVert \implies k_\tin(\x, \x') = k_\tin(\x, \x'').$$
	$$\lVert \bar{h}_\mathrm{NN}(\x) - \bar{h}_\mathrm{NN}(\x') \rVert \neq \lVert \bar{h}_\mathrm{NN}(\x) - \bar{h}_\mathrm{NN}(\x'') \rVert \implies k_\tout(\x, \x') \neq k_\tout(\x, \x'').$$
	These are true for all hyperparameter settings of $k_\tin$ or $k_\tout$, since both are RBF kernels.
	So, there is no hyperparameter setting of $k_\tin$ that yields $k'_\tin(\x_1, \x_2) = k_\tin(\x_1, \x_2) + k_\tout(\x_1, \x_2) \ \forall \ \x_1, \x_2$.
	Similarly, there is no hyperparameter setting of $k_\tout$ that yields $k'_\tout(\x_1, \x_2) = k_\tin(\x_1, \x_2) + k_\tout(\x_1, \x_2) \ \forall \ \x_1, \x_2$.
	Since, neither the input nor output kernel alone can correctly specify the kernel over all sets of positive measure, their generalization error is greater than the Bayes error, which is achieved by the correct kernel \citep{Sollich2002}, i.e., $k_\tin(\x_1, \x_2) + k_\tout(\x_1, \x_2)$.
\end{proof}

\section{Background}
\label{app:background}
This section reviews notation for Neural Networks, Gaussian Process, and its more efficient approximation, SVGP.
The RIO method, introduced in Section~\ref{sec:method} of the main paper, uses Gaussian Processes to estimate the uncertainty in neural network predictions and reduces their point-prediction errors.

\subsection{Neural Networks}
Neural Networks (NNs) learn a nonlinear transformation from input to output space based on a number of training examples.
Let $\mathcal{D} \subseteq \mathbb{R}^{d_{\mathrm{in}}} \times \mathbb{R}^{d_{\mathrm{out}}}$ denote the training dataset with size $n$, and $\mathcal{X} = \{ \mathbf{x}_i : (\mathbf{x}_i, \mathbf{y}_i) \in \mathcal{D}, \mathbf{x}_i = [x^1_i, x^2_i, \ldots, x^{d_{\mathrm{in}}}_i]\ |\ i = 1, 2, \ldots, n \}$ and $\mathcal{Y} = \{ \mathbf{y}_i : (\mathbf{x}_i, \mathbf{y}_i) \in \mathcal{D}, \mathbf{y}_i = [y^1_i, y^2_i, \ldots, y^{d_{\mathrm{out}}}_i]\ |\ i = 1, 2, \ldots, n \}$ denote the inputs and outputs (i.e., targets).
A fully-connected feed-forward neural network with $L$ hidden layers of width $N_l$ (for layer $l = 1, 2, \ldots, L$) performs the following computations: 
Let $z^j_l$ denote the output value of $j$th node in $l$th hidden layer given input $\mathbf{x}_i$, then $z^j_l=\phi(\sum_{k=1}^{d_{\mathrm{in}}} w^{j,k}_l x^k_{i} + b_l^j), \mathrm{for}\ l = 1$ and $z^j_l=\phi(\sum_{k=1}^{N_{l-1}} w^{j,k}_l z^k_{l-1} + b_l^j), \mathrm{for}\ l = 2, \ldots, L$, where $w^{j,k}_l$ denotes the weight on the connection from $k$th node in previous layer to $j$th node in $l$th hidden layer, $b_l^j$ denotes the bias of $j$th node in $l$th hidden layer, and $\phi$ is a nonlinear activation function. The output value of $j$th node in output layer is then given by $\hat{y}^j_i = \sum_{k=1}^{N_L} w^{j,k}_{\mathrm{out}} z^k_{L} + b_{\mathrm{out}}^j$, where $w^{j,k}_{\mathrm{out}}$ denotes the weight on the connection from $k$th node in last hidden layer to $j$th node in output layer, and $b_{\mathrm{out}}^j$ denotes the bias of $j$th node in output layer.

A gradient-based optimizer is usually used to learn the weights and bias given a pre-defined loss function, e.g., a squared loss function $\mathcal{L} = \frac{1}{n} \sum_{i=1}^n (\mathbf{y}_i - \mathbf{\hat{y}}_i)^2$. For a standard NN, the learned parameters are fixed, so the NN output $\mathbf{\hat{y}}_i$ is also a fixed point. For a Bayesian NN, a distribution of the parameters is learned, so the NN output is a distribution of $\mathbf{\hat{y}}_i$.
However, a pretrained standard NN needs to be augmented, e.g., with a Gaussian Process, to achieve the same result.

\subsection{Gaussian Process}
\label{app:gp}
A Gaussian Process (GP) is a collection of random variables, such that any finite collection of these variables follows a joint multivariate Gaussian distribution \citep{Rasmussen2006}.
Given a training dataset $\mathcal{X}=\{\mathbf{x}_i\ |\ i=1,2,\ldots,n\}$ and $\mathbf{y}=\{y_i = f(\mathbf{x}_i)+\epsilon\ |\ i=1,2,\ldots,n\}$, where $\epsilon$ denotes additive independent identically distributed Gaussian noise, the first step for GP is to fit itself to these training data assuming $\mathbf{y} \sim \mathcal{N}(0, \mathbf{K}(\mathcal{X}, \mathcal{X})+\sigma_n^2\mathbf{I})$, where $\mathcal{N}$ denotes a multivariate Gaussian distribution with mean 0 and covariance matrix $\mathbf{K}(\mathcal{X}, \mathcal{X})+\sigma_n^2\mathbf{I}$. $\mathbf{K(\mathcal{X}, \mathcal{X})}$ denotes the kernel-based covariance matrix at all pairs of training points with each entry $k_{i,j}=k(\mathbf{x}_i, \mathbf{x}_j)$, and $\sigma_n^2$ denotes the noise variance of observations. One commonly used kernel is the radial basis function (RBF) kernel, which is defined as $k(\mathbf{x}_i, \mathbf{x}_j)=\sigma_f^2\mathrm{exp}(-\frac{1}{2l_f^2}\norm{\mathbf{x}_i}{\mathbf{x}_j}^2)$. The signal variance $\sigma_f^2$, length scale $l_f$ and noise variance $\sigma_n^2$ are trainable hyperparameters. The hyperparameters of the covariance function are optimized during the learning process to maximize the log marginal likelihood $\log p(\mathbf{y}|\mathcal{X})$.

After fitting phase, the GP is utilized to predict the distribution of label $y_*$ given a test point $\mathbf{x}_*$.
This prediction is given by $y_*|\mathcal{X},\mathbf{y},\mathbf{x}_* \sim \mathcal{N}(\bar{y}_*, \mathrm{var}(y_*))$ with $\bar{y}_*=\mathbf{k}_*^\top(\mathbf{K(\mathcal{X}, \mathcal{X})}+\sigma_n^2\mathbf{I})^{-1}\mathbf{y}$ and $\mathrm{var}(y_*)=k(\mathbf{x}_*, \mathbf{x}_*)-\mathbf{k}_*^\top(\mathbf{K(\mathcal{X}, \mathcal{X})}+\sigma_n^2\mathbf{I})^{-1}\mathbf{k}_*$, where $\mathbf{k}_*$ denotes the vector of kernel-based covariances (i.e., $k(\mathbf{x}_*,\mathbf{x}_i)$) between $\mathbf{x}_*$ and all the training points, and $\mathbf{y}$ denotes the vector of all training labels.
Unlike with NN, the uncertainty of the prediction of a GP is therefore explicitly quantified.

\subsection{SVGP}
The main limitation of the standard GP, as defined above, is that it is excessively expensive in both computational and storage cost.
For a dataset with $n$ data points, the inference of standard GP has time complexity $\mathcal{O}(n^3)$ and space complexity $\mathcal{O}(n^2)$. To circumvent this issue, sparse GP methods were developed to approximate the original GP by introducing inducing variables \citep{Csato2002,Seeger2003,Quinonero2005,Titsias2009}. These approximation approaches lead to a computational complexity of $\mathcal{O}(nm^2)$ and space complexity of $\mathcal{O}(nm)$, where $m$ is the number of inducing variables. Following this line of work, SVGP \citep{Hensman2013,Hensman2015} further improves the scalability of the approach by applying Stochastic Variational Inference (SVI) technique, as follows:

Consider the same training dataset and GP as in section~\ref{app:gp}, and assume a set of inducing variables as $\mathcal{Z}=\{\mathbf{z}_i\ |\ i=1,2,\ldots,m\}$ and $\mathcal{U}=\{u_i = f(\mathbf{z}_i)+\epsilon\ |\ i=1,2,\ldots,m\}$ ($f(\cdot)$ and $\epsilon$ are unknown). SVGP learns a variational distribution $q({\mathcal{U}})$ by maximizing a lower bound of $\log p(\mathbf{y}|\mathcal{X})$, where $\log p(\mathbf{y}|\mathcal{X}) = \log \int p(\mathbf{y}|\mathcal{U},\mathcal{X})p(\mathcal{U})\mathrm{d}\mathcal{U}$ and $p(\cdot)$ denotes the probability density under original GP. Trainable hyperparameters during the learning process include values of $\mathbf{z}_i$ and hyperparameters of the covariance function of original GP. Given a test point $\mathbf{x}_*$, the predictive distribution is then given by $p(y_*|\mathbf{x}_*) = \int p(y_*|\mathcal{U},\mathbf{x}_*)q(\mathcal{U})\mathrm{d}\mathcal{U}$, which still follows a Gaussian distribution. One advantage of SVGP is that minibatch training methods \citep{Le2011} can be applied in case of very large dataset. Suppose the minibatch size is $m^{\prime}$ and $m\ll m{\prime}$, then for each training step/iteration, the computational complexity is $\mathcal{O}(m^{\prime}m^2)$, and the space complexity is $\mathcal{O}(m^{\prime}m)$. For full details about SVGP, see \citet{Hensman2013}. Since NNs typically are based on training with relatively large datasets, SVGP makes it practical to implement uncertainty estimates on NNs.

\section{Procedure of RIO}
\label{app:proc_RIO}
This section provides an algorithmic description of RIO (see Algorithm~\ref{app:pseudo_RIO}).
\begin{algorithm}[tb]
	\caption{Procedure of RIO}
	\label{app:pseudo_RIO}
	\begin{algorithmic}[1]
		\REQUIRE ${}$
		\\ $(\mathcal{X},\mathbf{y})=\{(\mathbf{x}_i,y_i)\}_{i=1}^n$: training data
		\\ $\mathbf{\hat{y}}=\{\hat{y}_i\}_{i=1}^n$: NN predictions on training data
		\\ $\mathbf{x}_*$: data to be predicted
		\\ $\hat{y}_*$: NN prediction on $\mathbf{x}_*$
		\ENSURE ${}$
		\\ $\hat{y}^{\prime}_*\sim \mathcal{N}(\hat{y}_*+\bar{\hat{r}}_*, \mathrm{var}(\hat{r}_*))$: a distribution of calibrated prediction\\
		{\bf \hspace{-17pt}Training Phase:}
		\STATE calculate residuals $\mathbf{r}=\{r_i = y_i - \hat{y}_i\}_{i=1}^n$
		\FOR{each optimizer step}
		\STATE calculate covariance matrix $\mathbf{K}_c((\mathcal{X},\mathbf{\hat{y}}), (\mathcal{X}, \mathbf{\hat{y}}))$, where each entry is given by $k_c((\mathbf{x}_i,\hat{y}_i), (\mathbf{x}_j,\hat{y}_j)) = k_\mathrm{in}(\mathbf{x}_i, \mathbf{x}_j) + k_\mathrm{out}(\hat{y}_i, \hat{y}_j),\hspace{5pt} \mathrm{for}\hspace{3pt} i,j=1,2,\ldots,n$
		\STATE optimize GP hyperparameters by maximizing log marginal likelihood $\log p(\mathbf{r}|\mathcal{X},\mathbf{\hat{y}}) = -\frac{1}{2}\mathbf{r}^\top(\mathbf{K}_c((\mathcal{X},\mathbf{\hat{y}}), (\mathcal{X}, \mathbf{\hat{y}}))+\sigma_n^2\mathbf{I})^{-1}\mathbf{r}-\frac{1}{2}\log|\mathbf{K}_c((\mathcal{X},\mathbf{\hat{y}}), (\mathcal{X}, \mathbf{\hat{y}}))+\sigma_n^2\mathbf{I}|-\frac{n}{2}\log 2\pi$
		\ENDFOR
		{\bf \hspace{-17pt}Deployment Phase:}
		\STATE calculate residual mean $\bar{\hat{r}}_* =\mathbf{k}_*^\top(\mathbf{K}_c((\mathcal{X},\mathbf{\hat{y}}), (\mathcal{X}, \mathbf{\hat{y}}))+\sigma_n^2\mathbf{I})^{-1}\mathbf{r}$ and residual variance $\mathrm{var}(\hat{r}_*) =k_c((\mathbf{x}_*,\hat{y}_*), (\mathbf{x}_*,\hat{y}_*))-\mathbf{k}_*^\top(\mathbf{K}_c((\mathcal{X},\mathbf{\hat{y}}), (\mathcal{X}, \mathbf{\hat{y}}))+\sigma_n^2\mathbf{I})^{-1}\mathbf{k}_*$
		\STATE return distribution of calibrated prediction $\hat{y}^{\prime}_*\sim \mathcal{N}(\hat{y}_*+\bar{\hat{r}}_*, \mathrm{var}(\hat{r}_*))$
	\end{algorithmic}
\end{algorithm}

\section{Empirical Study}
\subsection{Experimental Setups}
\label{app:exp_setup}
\paragraph{Dataset Description} In total, 12 real-world regression datasets from UCI machine learning repository \citep{Dua2017} are tested. Table~\ref{app:dataset} summarizes the basic information of these datasets. For all the datasets except MSD, 20\% of the whole dataset is used as test dataset and 80\% is used as training dataset, and this split is randomly generated in each independent run. For MSD, the first 463715 samples are used as training dataset and the last 51630 samples are used as testing dataset according to the provider's guideline. During the experiments, all the datasets except for MSD are tested for 100 independent runs, and MSD datasets are tested for 10 independent runs. For each independent run, the dataset is randomly split into training set, validation set, and test set (except for MSD, in which the dataset split is strictly predefined by the provider), and the same random dataset split are used by all the tested algorithms to ensure fair comparisons. All source codes for reproducing the experimental results are provided at: (\href{https://github.com/leaf-ai/rio-paper}{https://github.com/leaf-ai/rio-paper}).
\begin{table}
	\scriptsize
	\setlength{\tabcolsep}{3.6pt}
	\centering
	\caption{\label{app:dataset}Summary of testing dataset}
	\begin{tabular}{c|c|c|c|c}
		\hline
		\hline
		abbreviation& full name in UCI ML repository & dataset size & dimension & note\\
		\hline
		yacht & Yacht Hydrodynamics Data Set & 308 & 6 & -\\
		ENB/h & Energy efficiency &768 & 8 & Heating Load as target \\
		ENB/c & Energy efficiency &768 & 8 & Cooling Load as target \\
		airfoil & Airfoil Self-Noise &1505&5& -\\
		CCS & Concrete Compressive Strength & 1030 & 8 & - \\
		wine/r & Wine Quality & 1599 & 11 & only use winequality-red data \\
		wine/w & Wine Quality & 4898 & 11 & only use winequality-white data \\
		CCPP & Combined Cycle Power Plant & 9568 & 4 & - \\
		CASP & Physicochemical Properties of Protein Tertiary Structure & 54730 & 9 & - \\
		SC & Superconductivty Data & 21263 & 81 & - \\
		CT & Relative location of CT slices on axial axis & 53500 & 384 & - \\
		MSD & YearPredictionMSD & 515345 & 90 & train: first 463715, test: last 51630\\
		\hline
		\hline
	\end{tabular}	
\end{table}
\paragraph{Parametric Setup for Algorithms}
\begin{itemize}
	\item NN: For SC dataset, a fully connected feed-forward NN with 2 hidden layers, each with 128 hidden neurons, is used. For CT dataset, a fully connected feed-forward NN with 2 hidden layers, each with 256 hidden neurons, is used. For MSD dataset, a fully connected feed-forward NN with 4 hidden layers, each with 64 hidden neurons, is used. For all the remaining datasets, a fully connected feed-forward NN with 2 hidden layers, each with 64 hidden neurons, is used. The inputs to the NN are normalized to have mean 0 and standard deviation 1. The activation function is ReLU for all the hidden layers. The maximum number of epochs for training is 1000. 20\% of the training data is used as validation data, and the split is random at each independent run. An early stop is triggered if the loss on validation data has not be improved for 10 epochs. The optimizer is RMSprop with learning rate 0.001, and the loss function is mean squared error (MSE).
	\item RIO, RIO variants and SVGP \citep{Hensman2013}: SVGP is used as an approximator to original GP in RIO and all the RIO variants. For RIO, RIO variants and SVGP, the number of inducing points are 50 for all the experiments. RBF kernel is used for both input and output kernel. For RIO, RIO variants and SVGP, the signal variances and length scales of all the kernels plus the noise variance are the trainable hyperparameters. The optimizer is L-BFGS-B with default parameters as in Scipy.optimize documentation (\href{https://docs.scipy.org/doc/scipy/reference/optimize.minimize-lbfgsb.html}{https://docs.scipy.org/doc/scipy/reference/optimize.minimize-lbfgsb.html}), and the maximum number of iterations is set as 1000. The training process runs until the L-BFGS-B optimizer decides to stop.
	\item NNGP \citep{lee2018}: For NNGP kernel, the depth is 2, and the activation function is ReLU. $n_g = 101$, $n_v=151$, and $n_c=131$. Following the learning process in original paper, a grid search is performed to search for the best values of $\sigma_w^2$ and $\sigma_b^2$. Same as in the original paper, a grid of 30 points evenly spaced from 0.1 to 5.0 (for $\sigma_w^2$) and 30 points evenly spaced from 0 to 2.0 (for $\sigma_b^2$) was evaluated. The noise variance $\sigma_\epsilon^2$ is fixed as 0.01. The grid search process stops when Cholesky decomposition fails or all the 900 points are evaluated. The best values found during the grid search will be used in the experiments. No pre-computed lookup tables are utilized.
	\item ANP \citep{Kim2019}: The parametric setups of ANP are following the recommendations in the original paper. The attention type is multihead, the hidden size is 64, the max number of context points is 50, the context ratio is 0.8, the random kernel hyperparameters option is on. The size of latent encoder is $64\times64\times64\times64$, the number of latents is 64, the size of deterministic encoder is $64\times64\times64\times64$, the size of decoder is $64\times64\times64\times64\times2$, and the deterministic path option is on. Adam optimizer with learning rate $10^{-4}$ is used, and the maximum number of training iterations is 2000.
\end{itemize}
\paragraph{Performance Metrics}
\begin{itemize}
	\item To measure the point-prediction error, the Root Mean Square Error (RMSE) between the method predictions and true outcomes on test datasets are calculated for each independent experimental run. After that, the mean and standard deviations of these RMSEs are used to measure the performance of the algorithms.
	\item To quantitatively measure the quality of uncertainty estimation, average negative log predictive density (NLPD) \citep{Joaquin_2005} is used to measure the quality of uncertainty estimation. NLPD is given by
	\begin{equation}
	L = -\frac{1}{n}\sum_{i=1}^{n}\mathrm{log}\ p(\mathbf{\hat{y}}_i=\y_i|\x_i)
	\end{equation}
	where $\mathbf{\hat{y}}_i$ indicates the prediction results, $\x_i$ is the input with true associated outcome $\y_i$, $p(\cdot)$ is the probability density function (PDF) of the returned distribution based on input $\x_i$.
	\item To investigate the behaviors of RIO variants during learning, the mean of estimated noise variance $\sigma_n^2$ over all the independent runs are calculated.
	\item To compare the computation time of the algorithms, the training time (wall clock time) of NN, RIO, all the ablated RIO variants, SVGP and ANP are averaged over all the independent runs as the computation time. It is notable that the computation time of all RIO variants does not include the training time of associated NN, because the NN is considered to be pretrained. For NNGP, the wall clock time for the grid search is used. In case that the grid search stops due to Cholesky decomposition failures, the computation time of NNGP will be estimated as the average running time of all the successful evaluations $\times$ 900, which is the supposed number of evaluations. All the algorithms are implemented using Tensorflow, and tested in the exactly same python environment. All the experiments are running on a machine with 16 Intel(R) Xeon(R) CPU E5-2623 v4@2.60GHz and 128GB memory.
\end{itemize}
\subsection{Results on Estimated Confidence Intervals}
\label{app:CI}
Confidence interval (CI) is a useful tool to estimate the distribution of outcomes with explicit probabilities in real-world applications. In order to provide more insights for practitioners, the percentages of testing outcomes that are within the 95\%/90\%/68\% CIs as estimated by each algorithm are calculated. Ideally, these percentages should be as close to the estimated confidence levels as possible, e.g., a perfect uncertainty estimator would have exactly 95\% of testing outcomes within its estimated 95\% CIs. In practice, when two models have similar quality of CI estimations, the more conservative one is favoured. Figure~\ref{fig:CI1} and \ref{fig:CI2} show the distribution of the percentages that testing outcomes are within the estimated 95\%/90\%/68\% CIs over all the independent runs for all the datasets and algorithms. Figure~\ref{fig:CI_all1}, \ref{fig:CI_all2} and \ref{fig:CI_all3} compare the distributions of coverage percentages for 1\%-99\% CIs for RIO and SVGP. The experimental data are extracted from the experiments that generate Table~\ref{exp:original} in main paper. In Figure~\ref{fig:CI1}, \ref{fig:CI2}, \ref{fig:CI_all1}, \ref{fig:CI_all2} and \ref{fig:CI_all3}, the box extends from the lower to upper quartile values of the data (each data point represents an independent experimental run), with a line at the median. The whiskers extend from the box to show the range of the data. Flier points are those past the end of the whiskers, indicating the outliers. 

\begin{figure}
	\centering
	\includegraphics[width=0.32\linewidth]{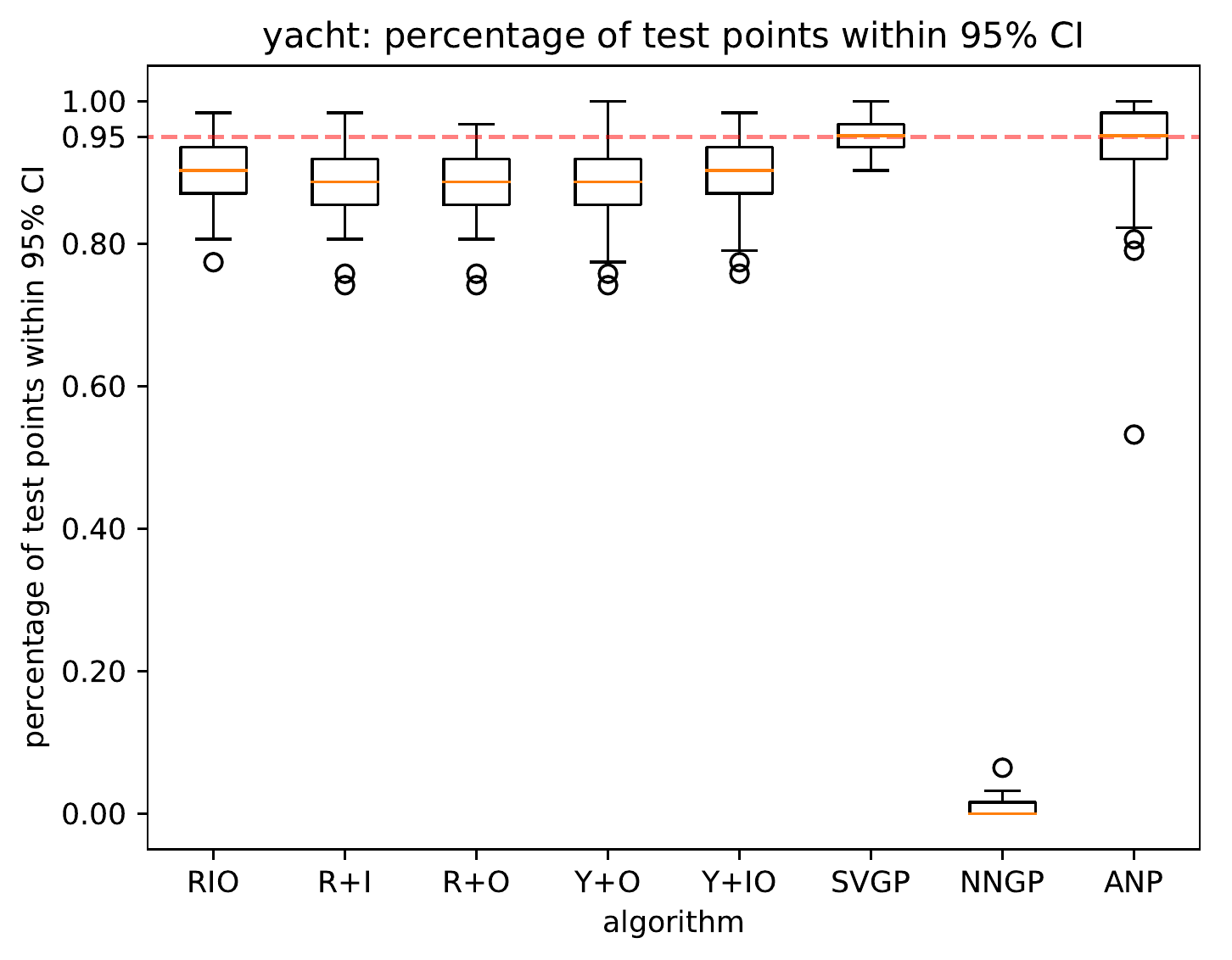}
	\includegraphics[width=0.32\linewidth]{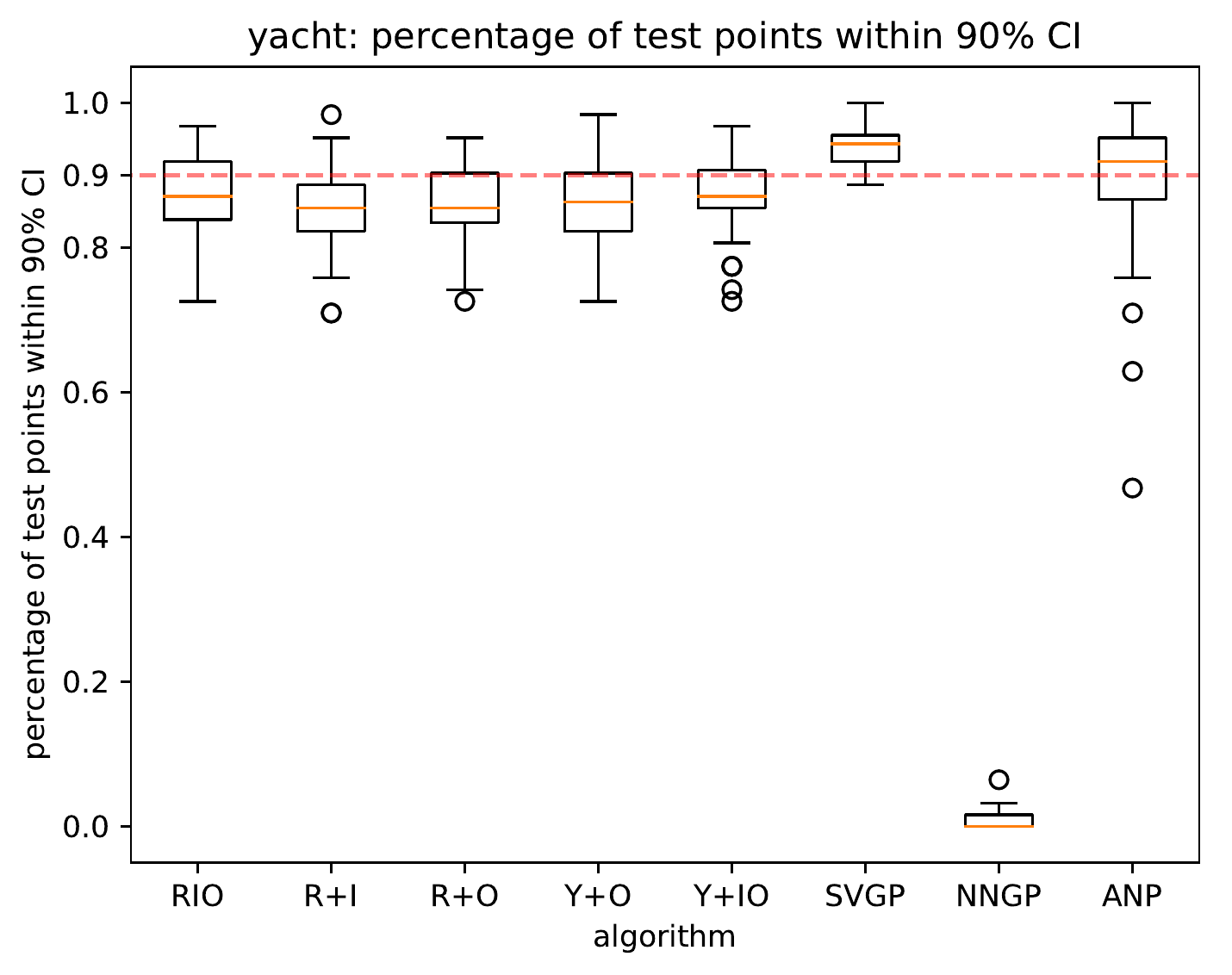}
	\includegraphics[width=0.32\linewidth]{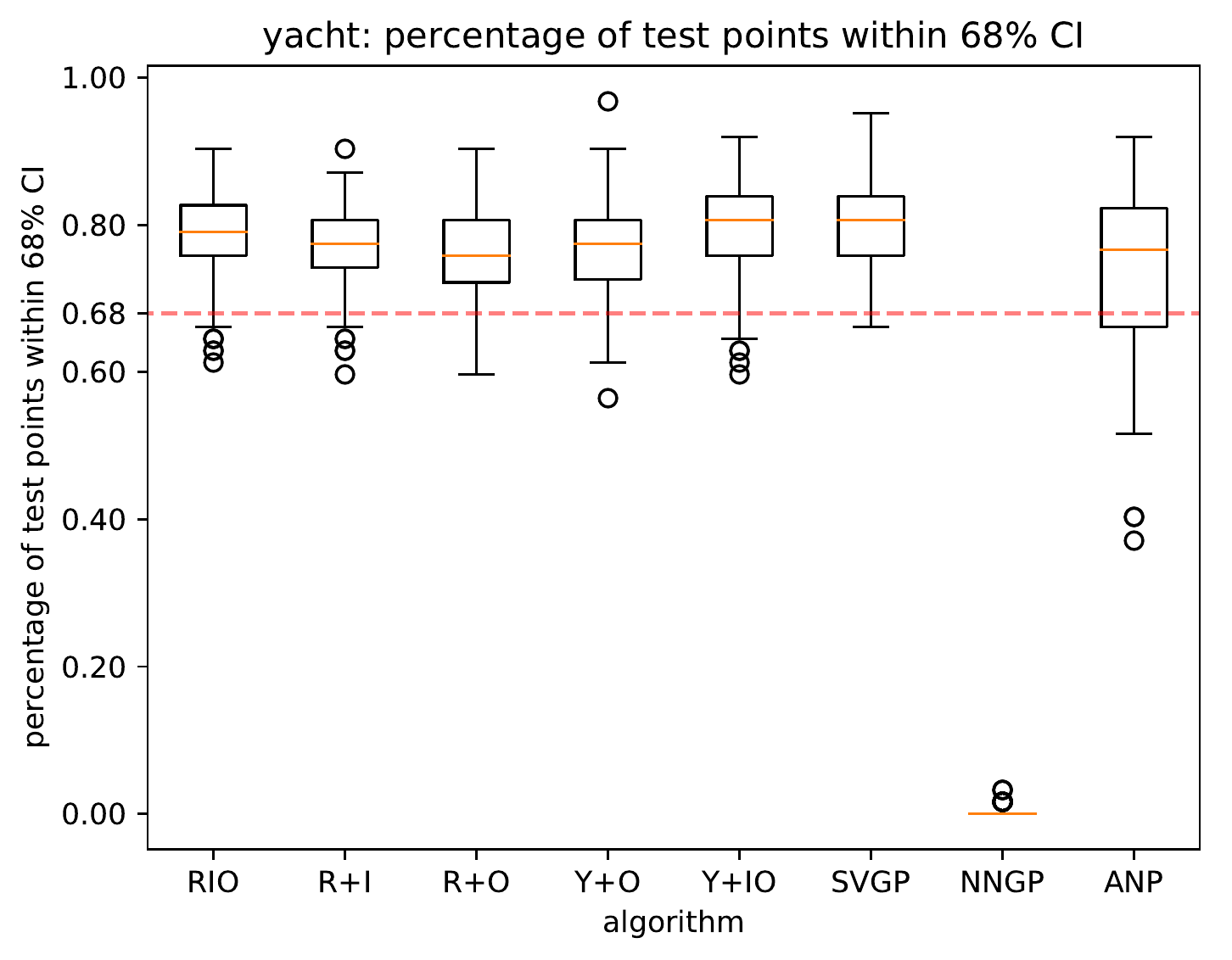}
	\includegraphics[width=0.32\linewidth]{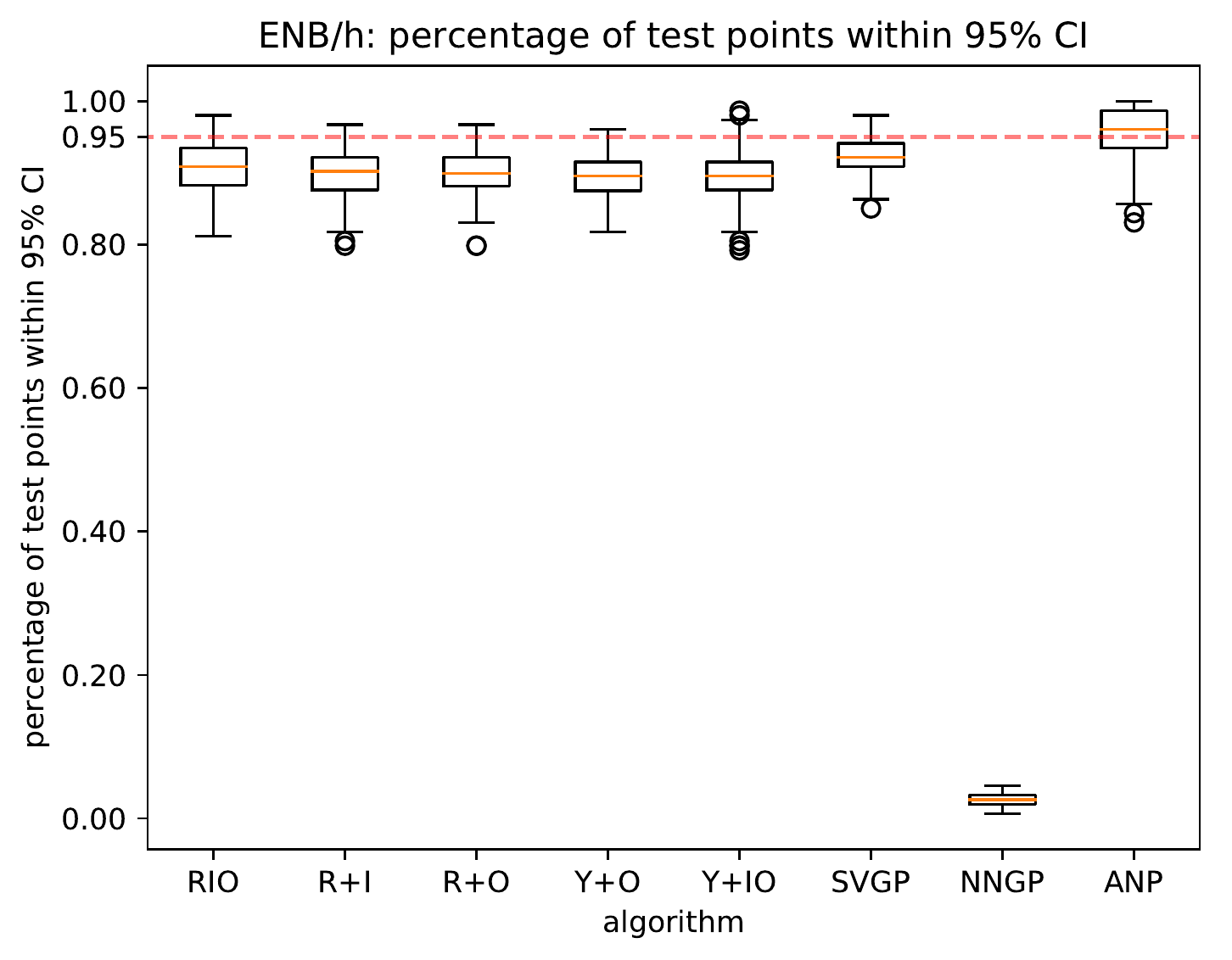}
	\includegraphics[width=0.32\linewidth]{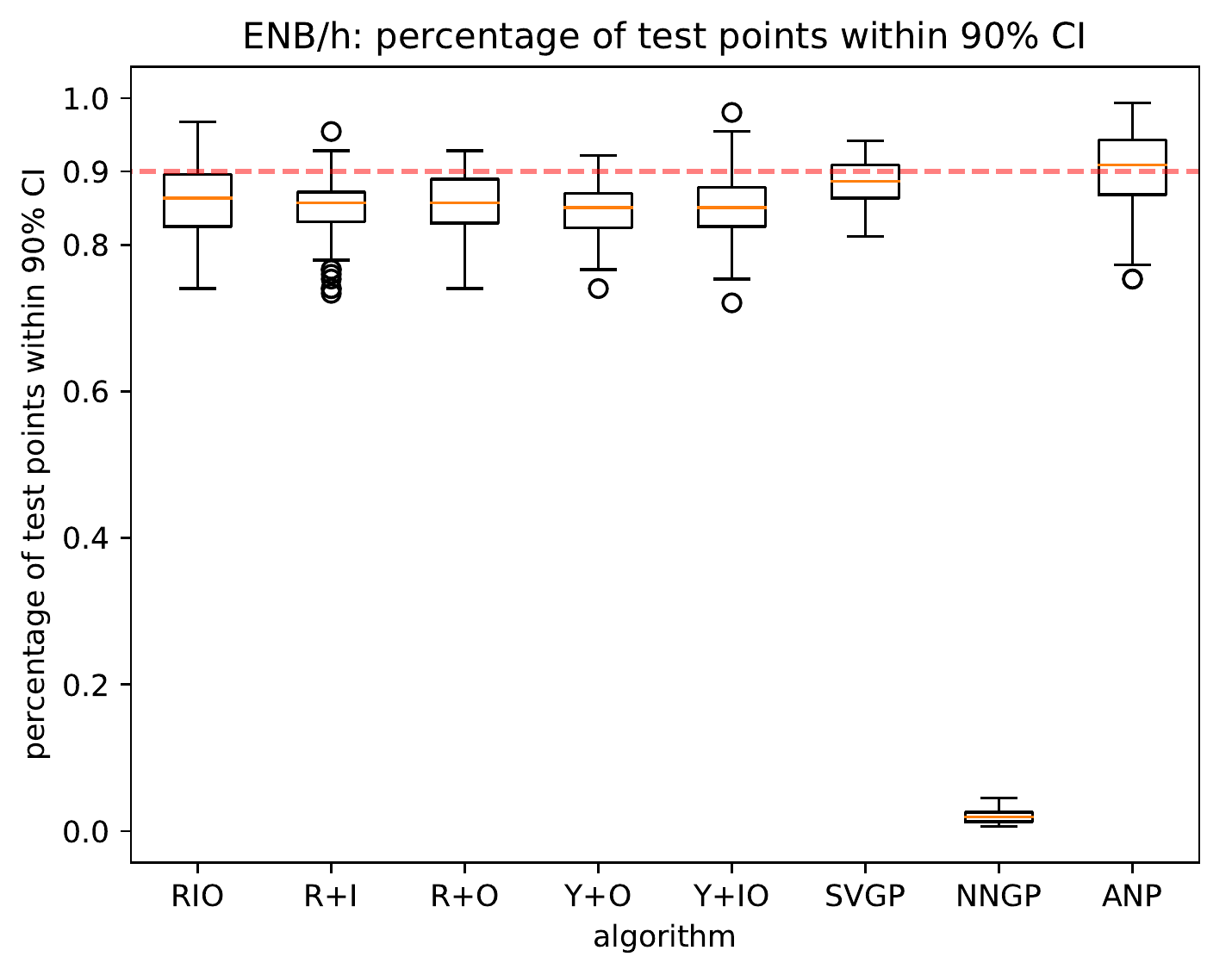}
	\includegraphics[width=0.32\linewidth]{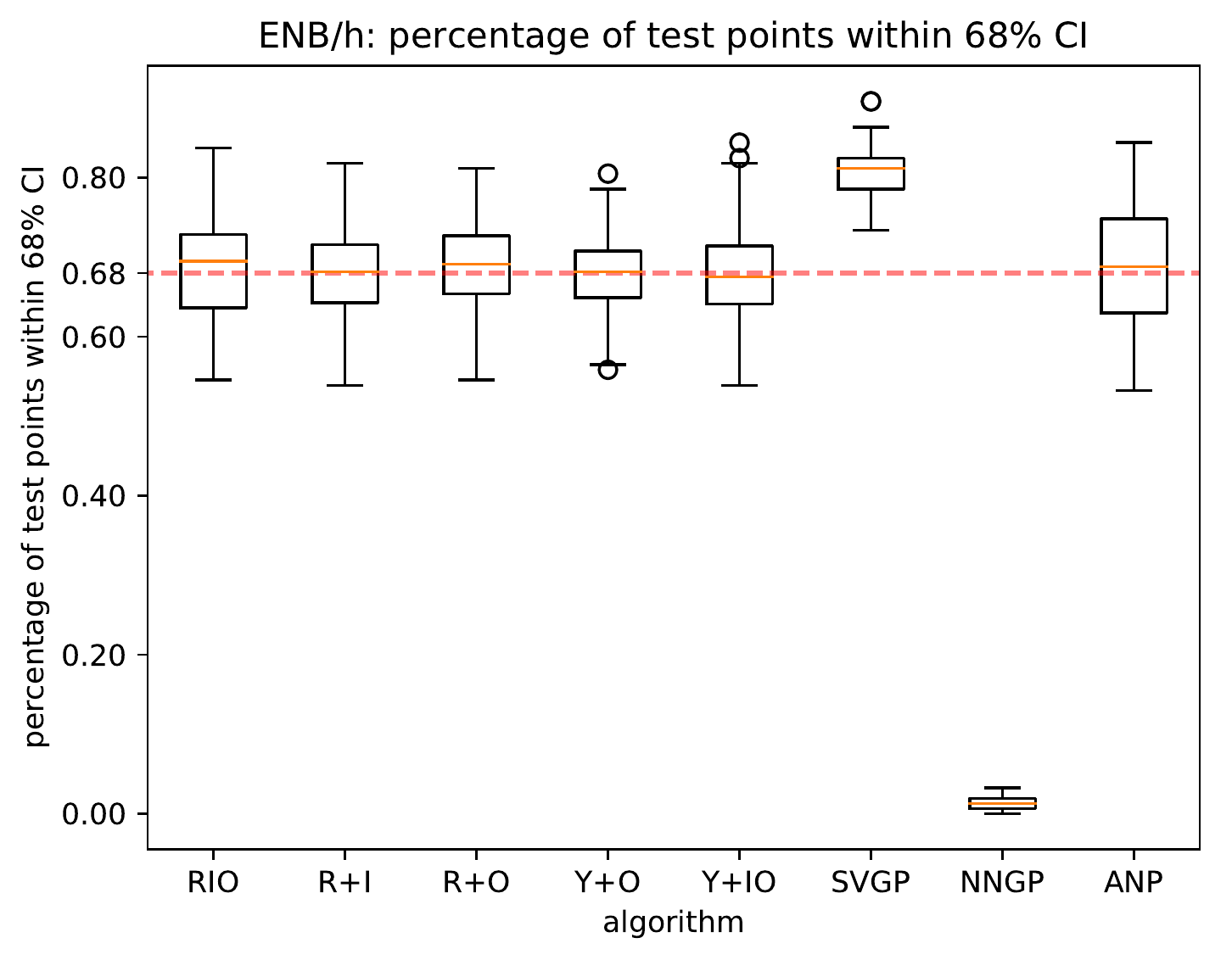}
	\includegraphics[width=0.32\linewidth]{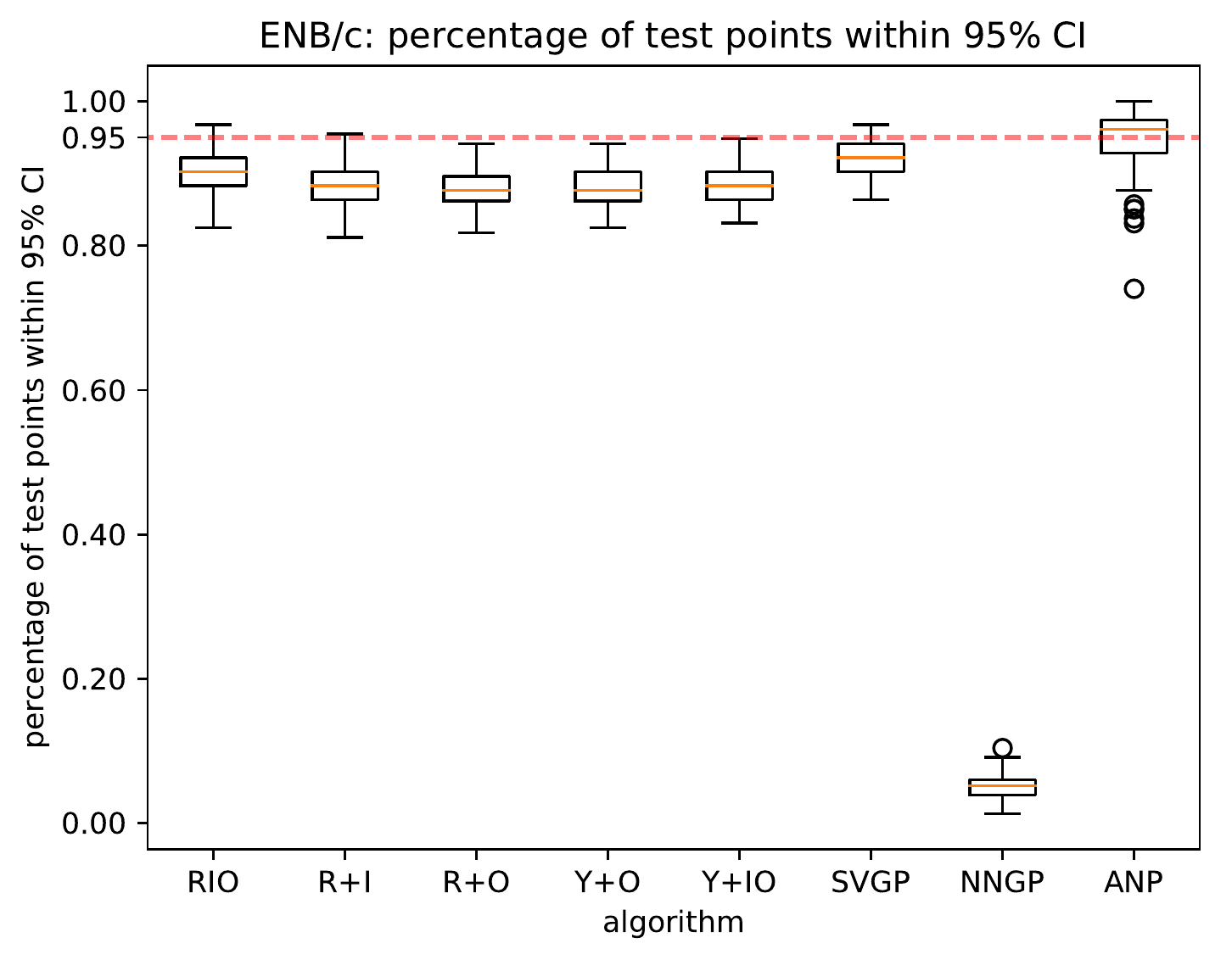}
	\includegraphics[width=0.32\linewidth]{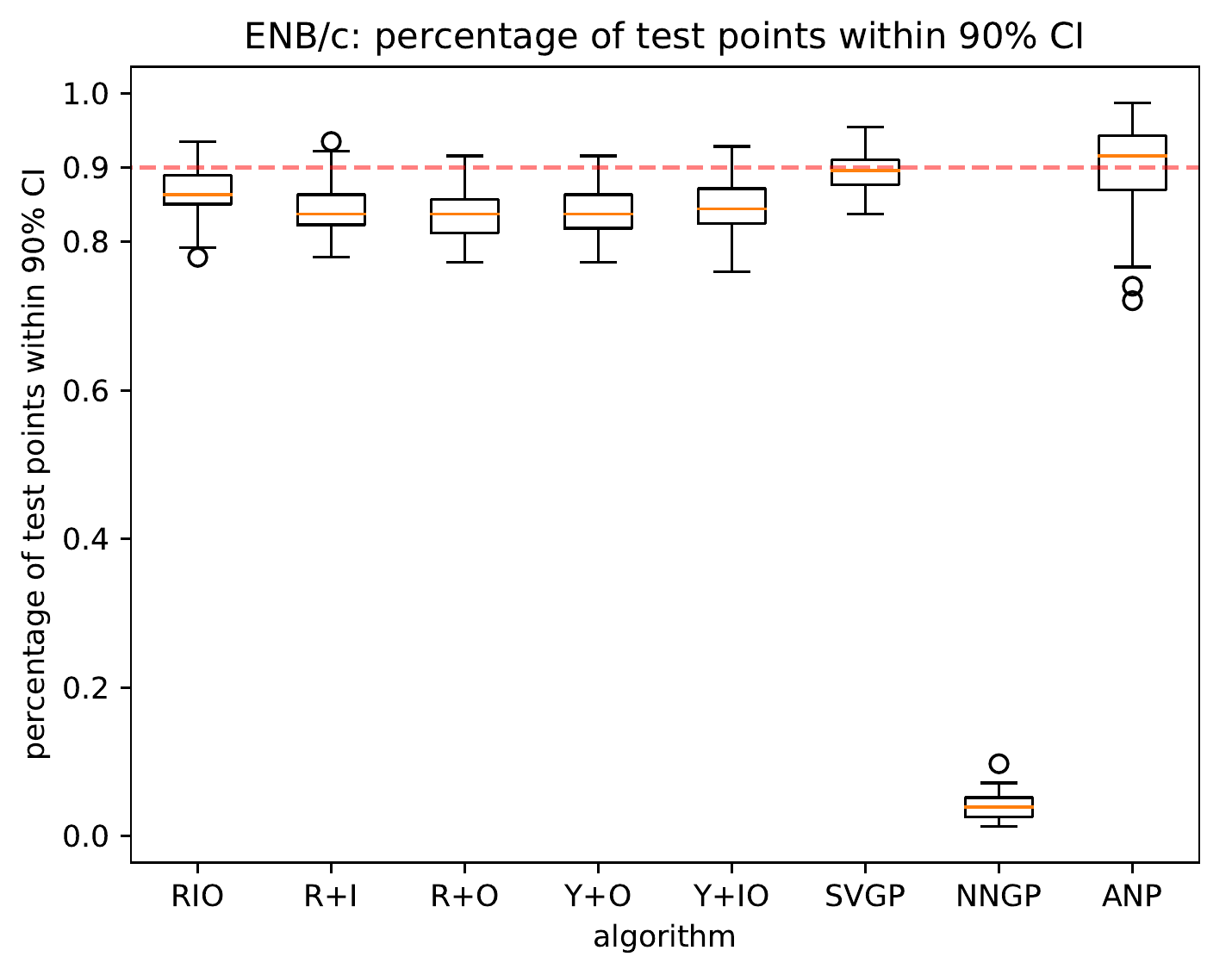}
	\includegraphics[width=0.32\linewidth]{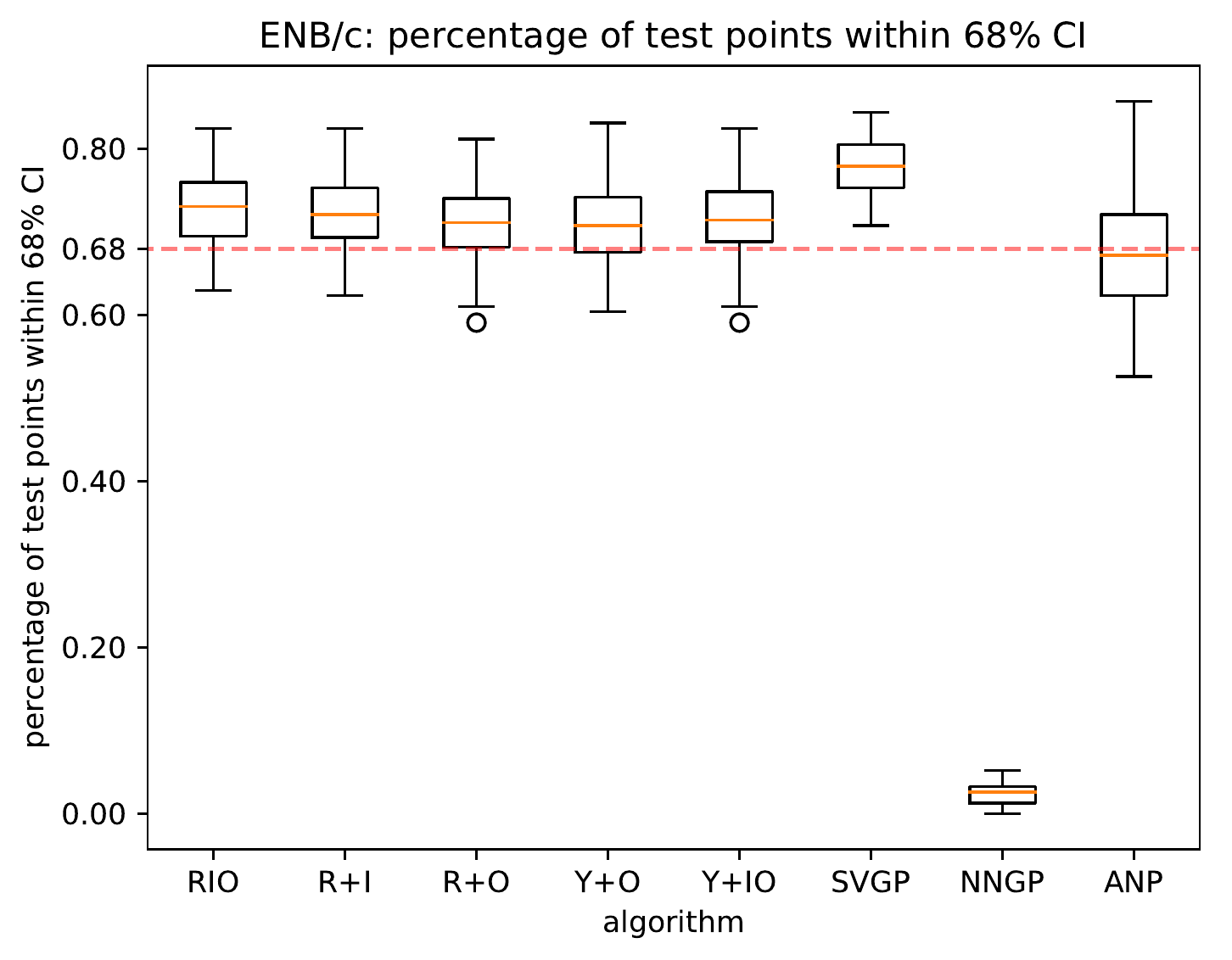}
	\includegraphics[width=0.32\linewidth]{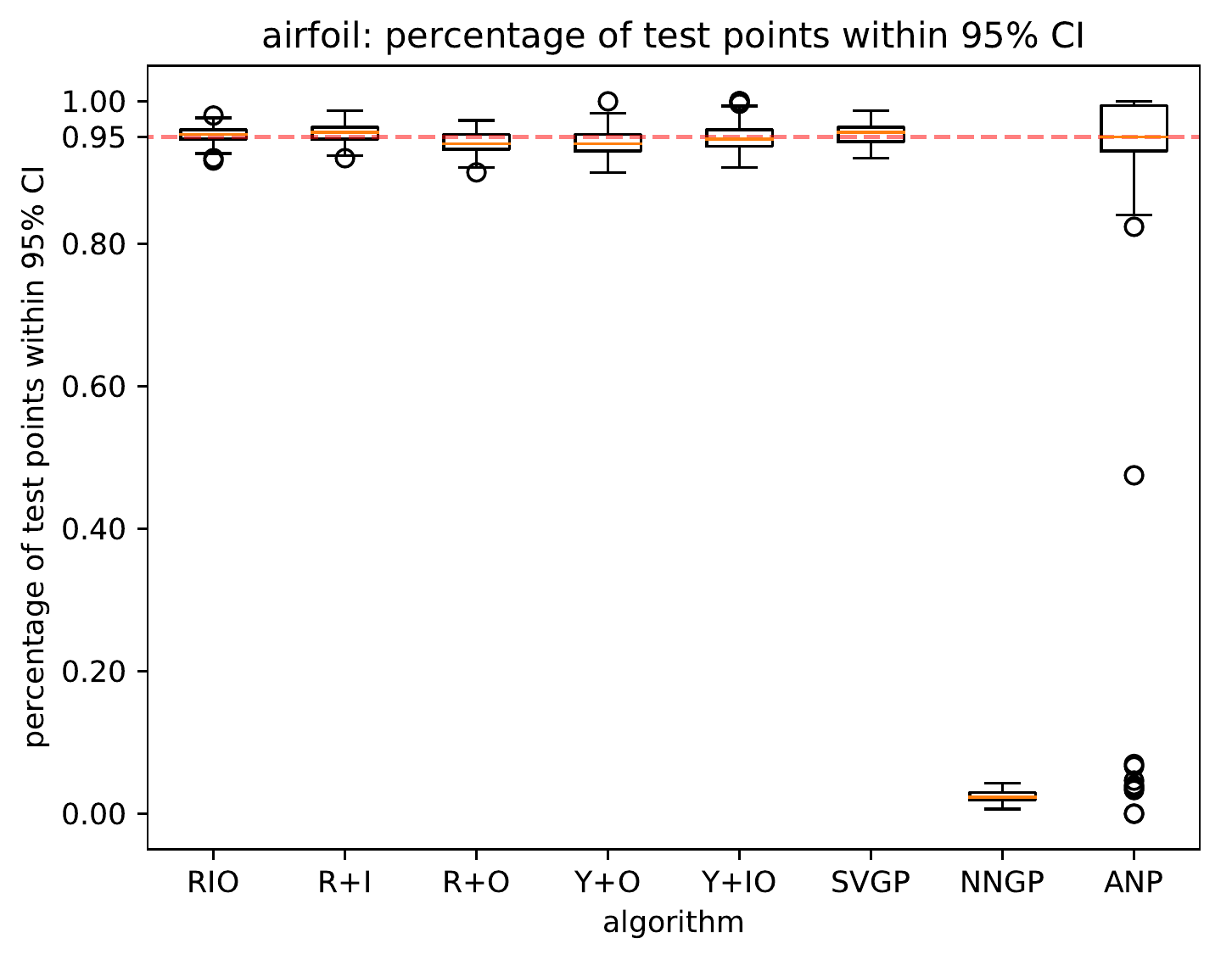}
	\includegraphics[width=0.32\linewidth]{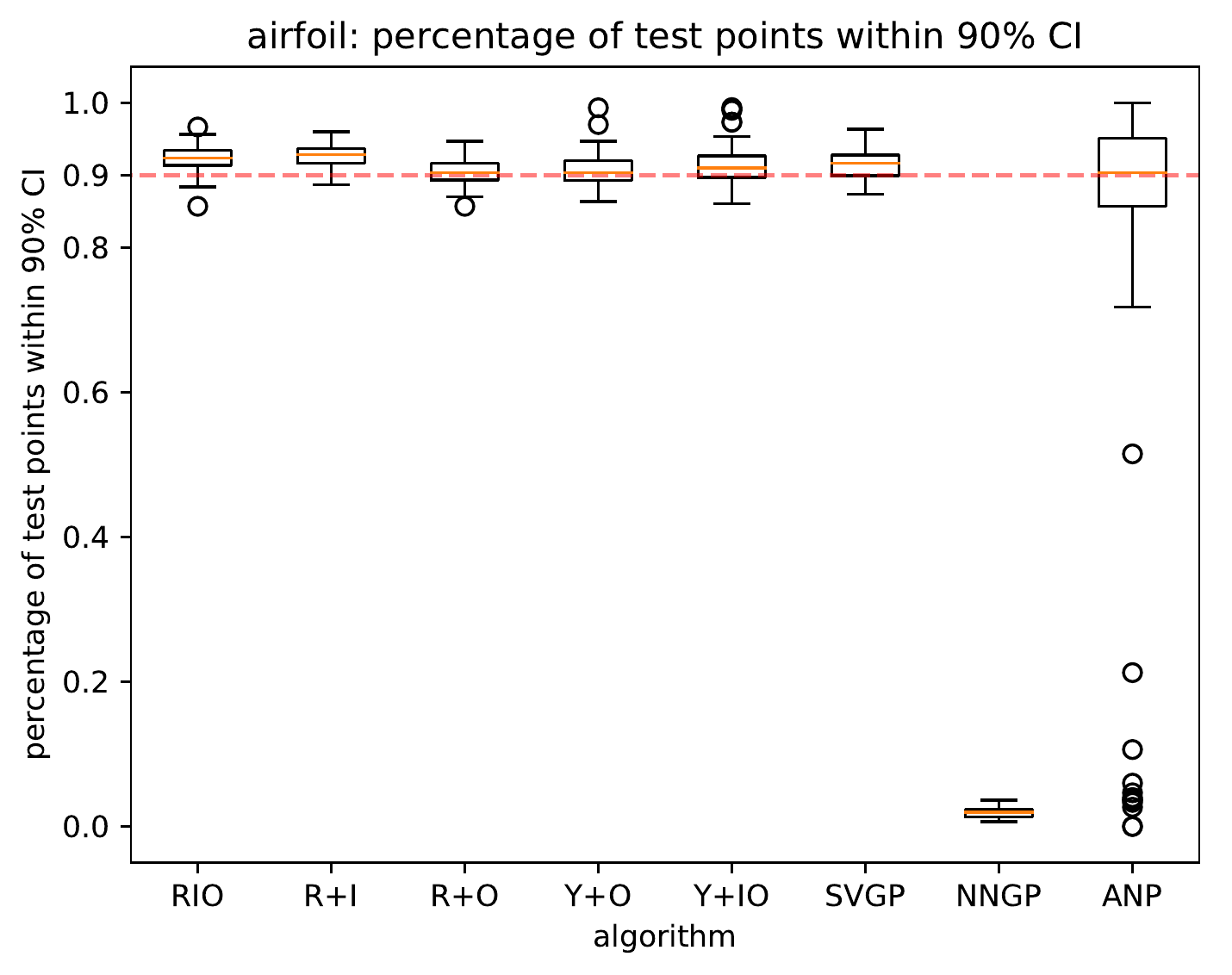}
	\includegraphics[width=0.32\linewidth]{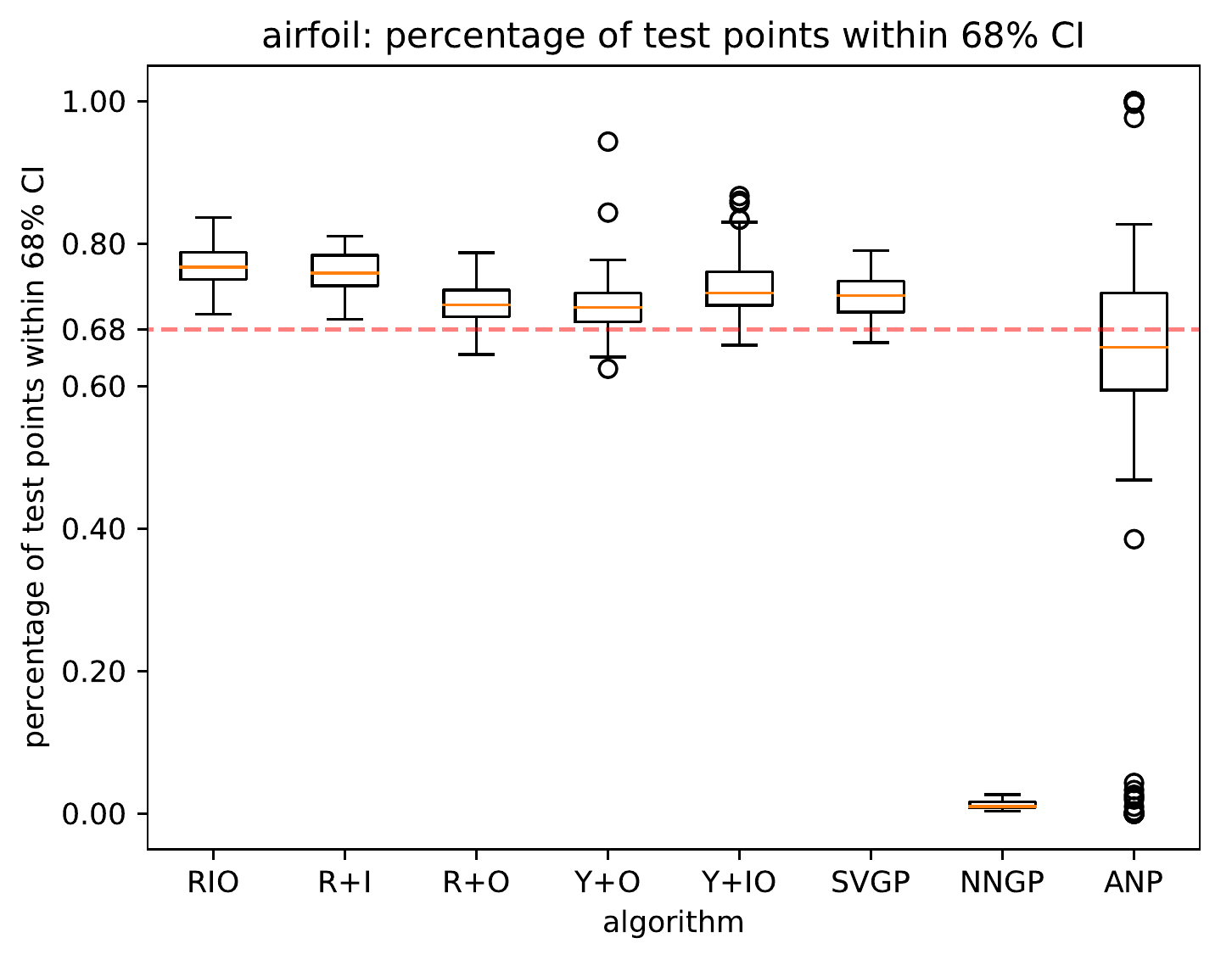}
	\includegraphics[width=0.32\linewidth]{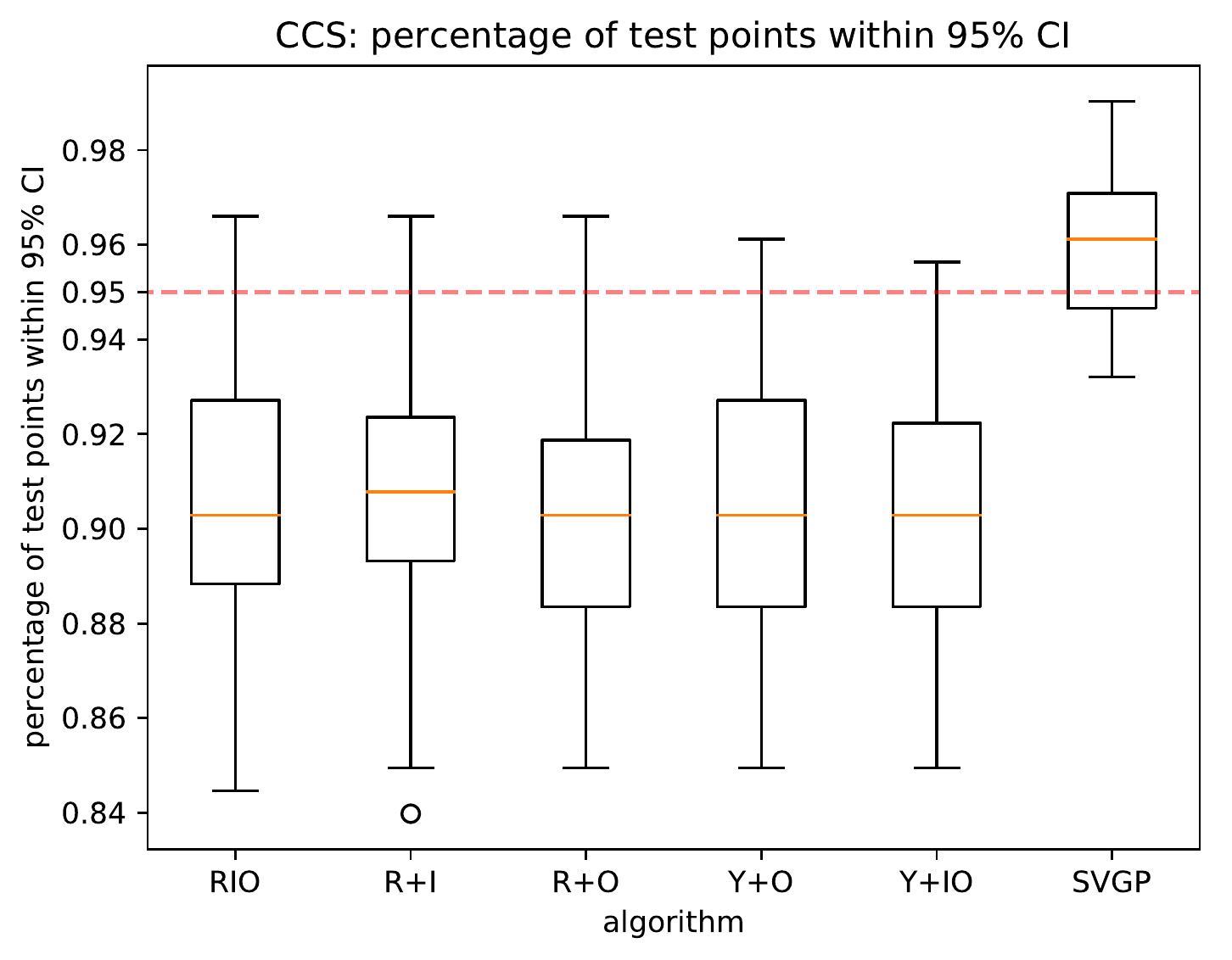}
	\includegraphics[width=0.32\linewidth]{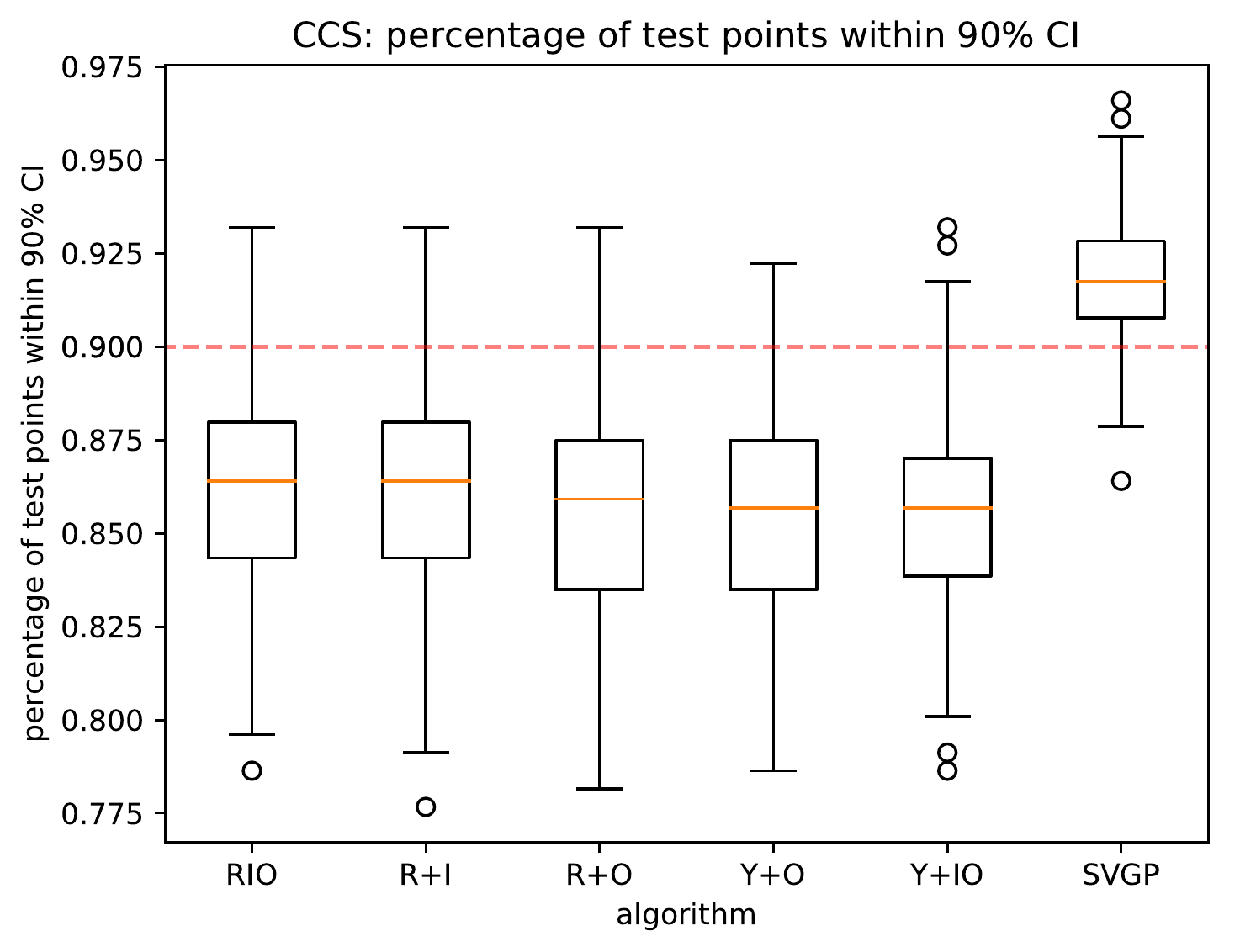}
	\includegraphics[width=0.32\linewidth]{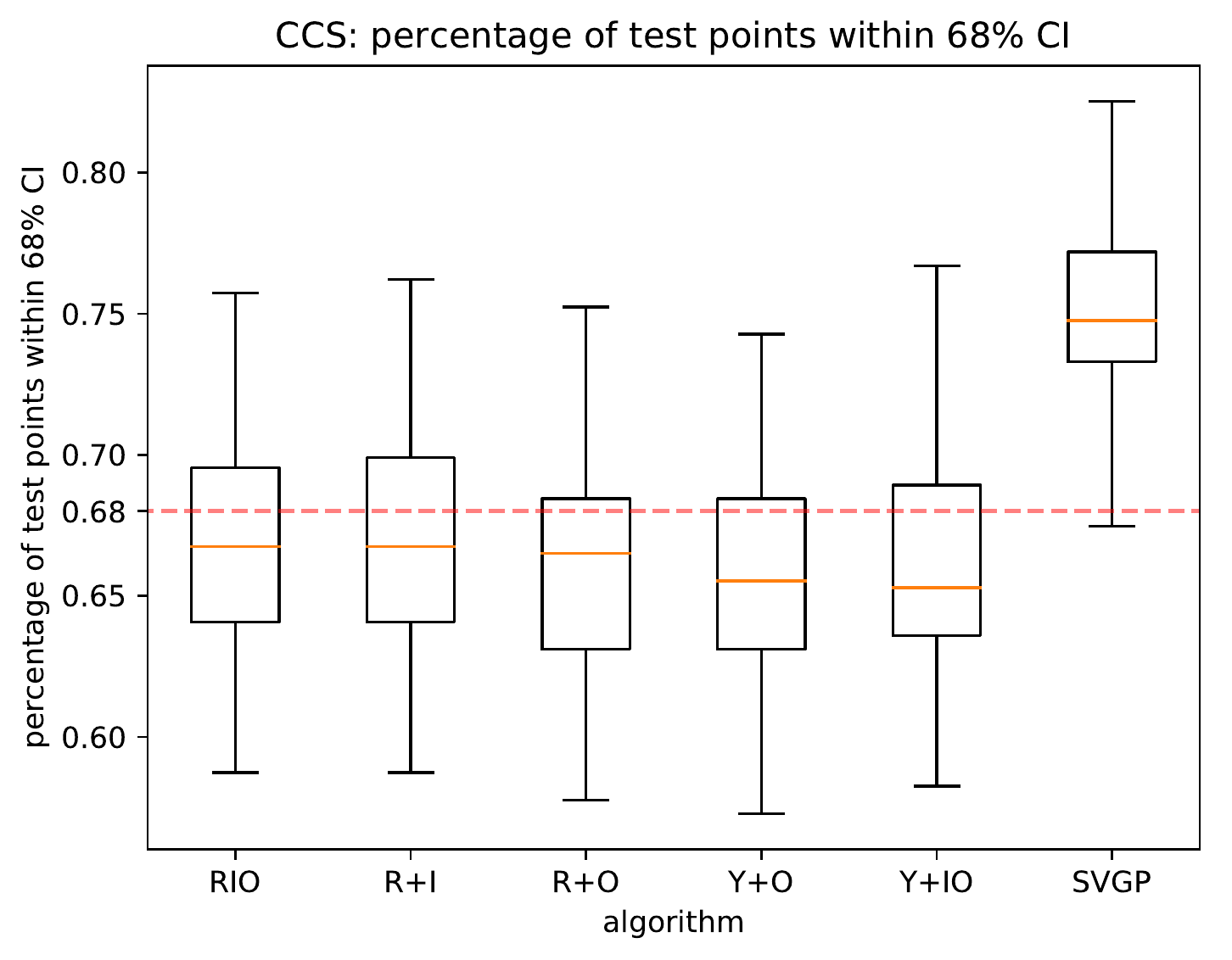}
	\includegraphics[width=0.32\linewidth]{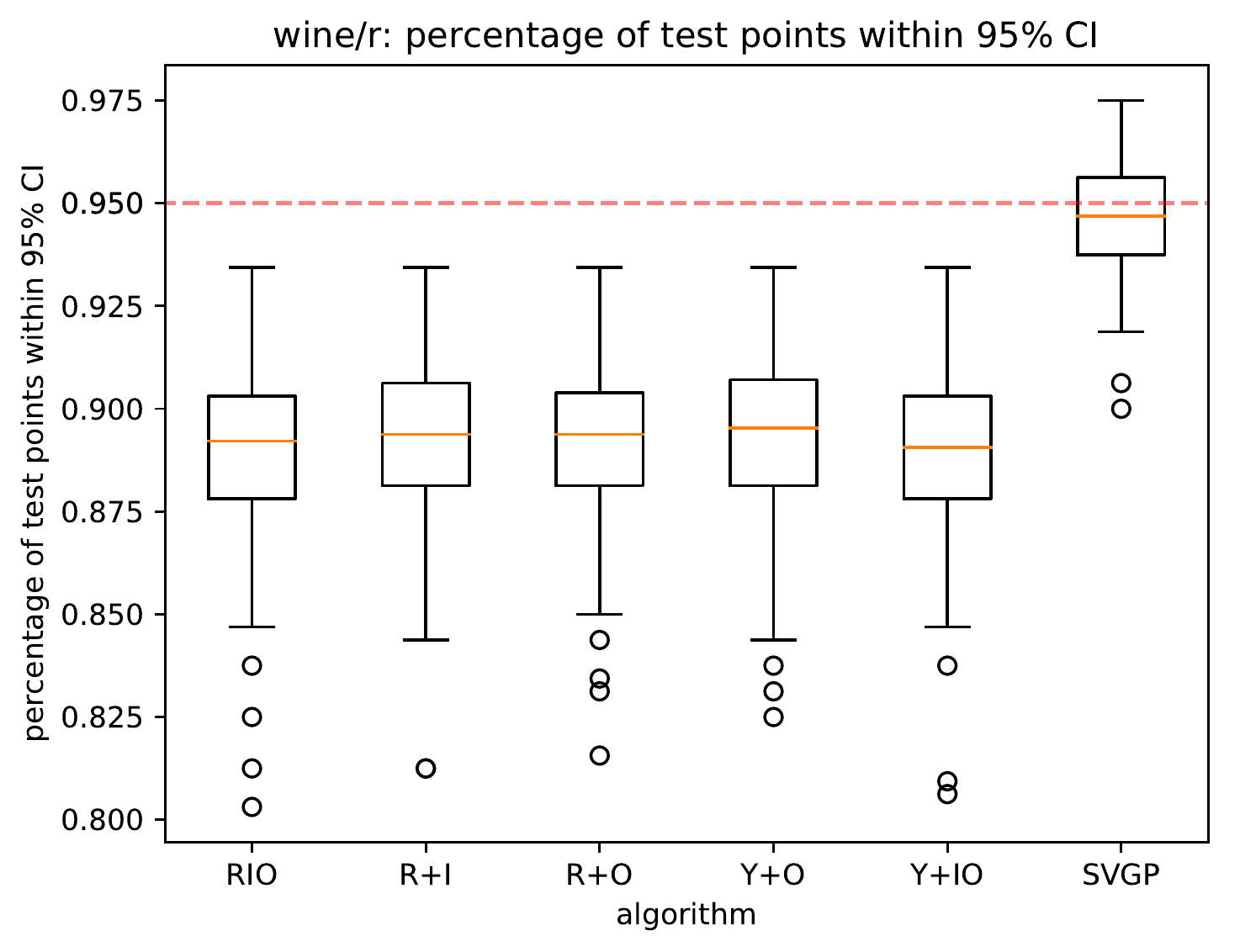}
	\includegraphics[width=0.32\linewidth]{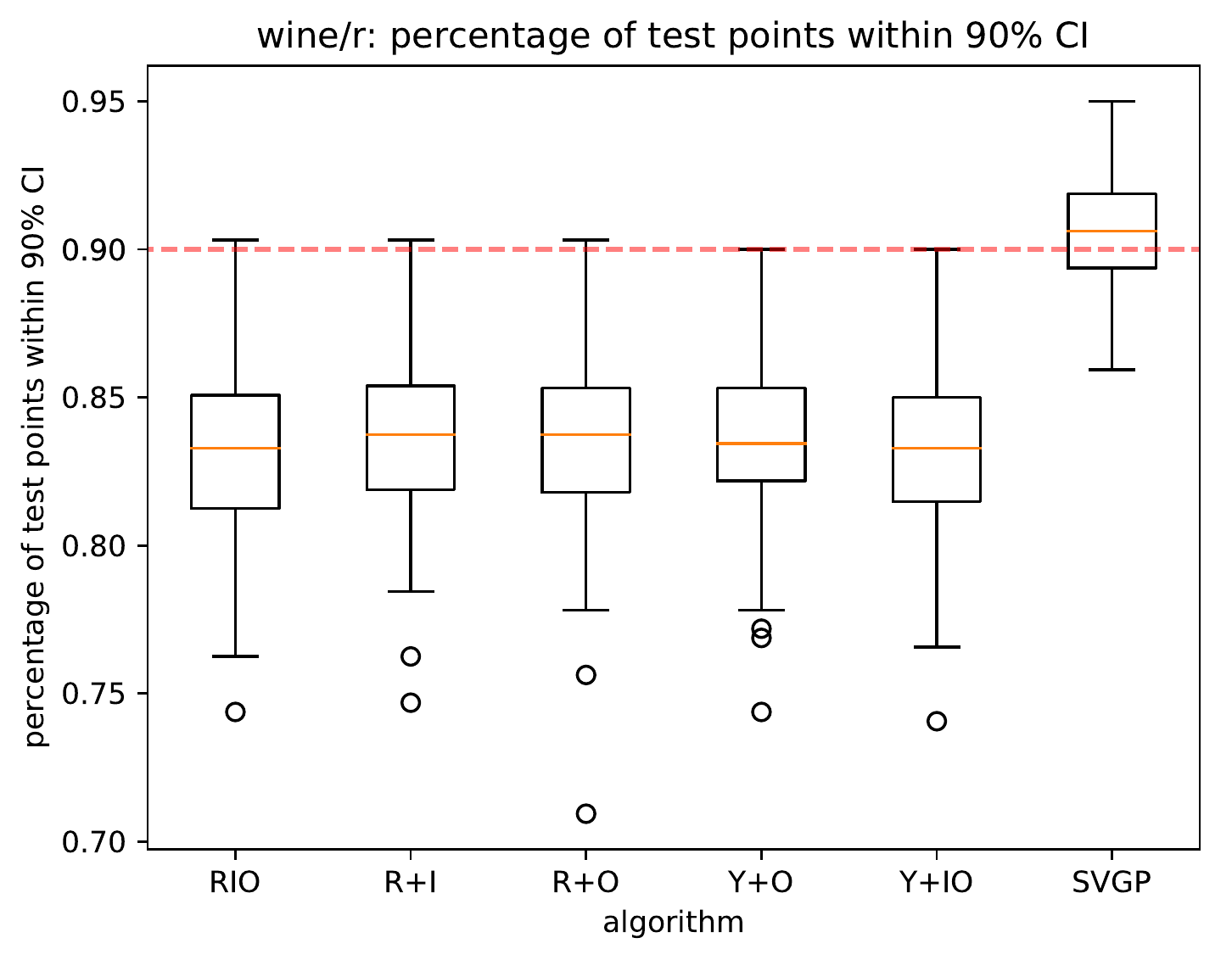}
	\includegraphics[width=0.32\linewidth]{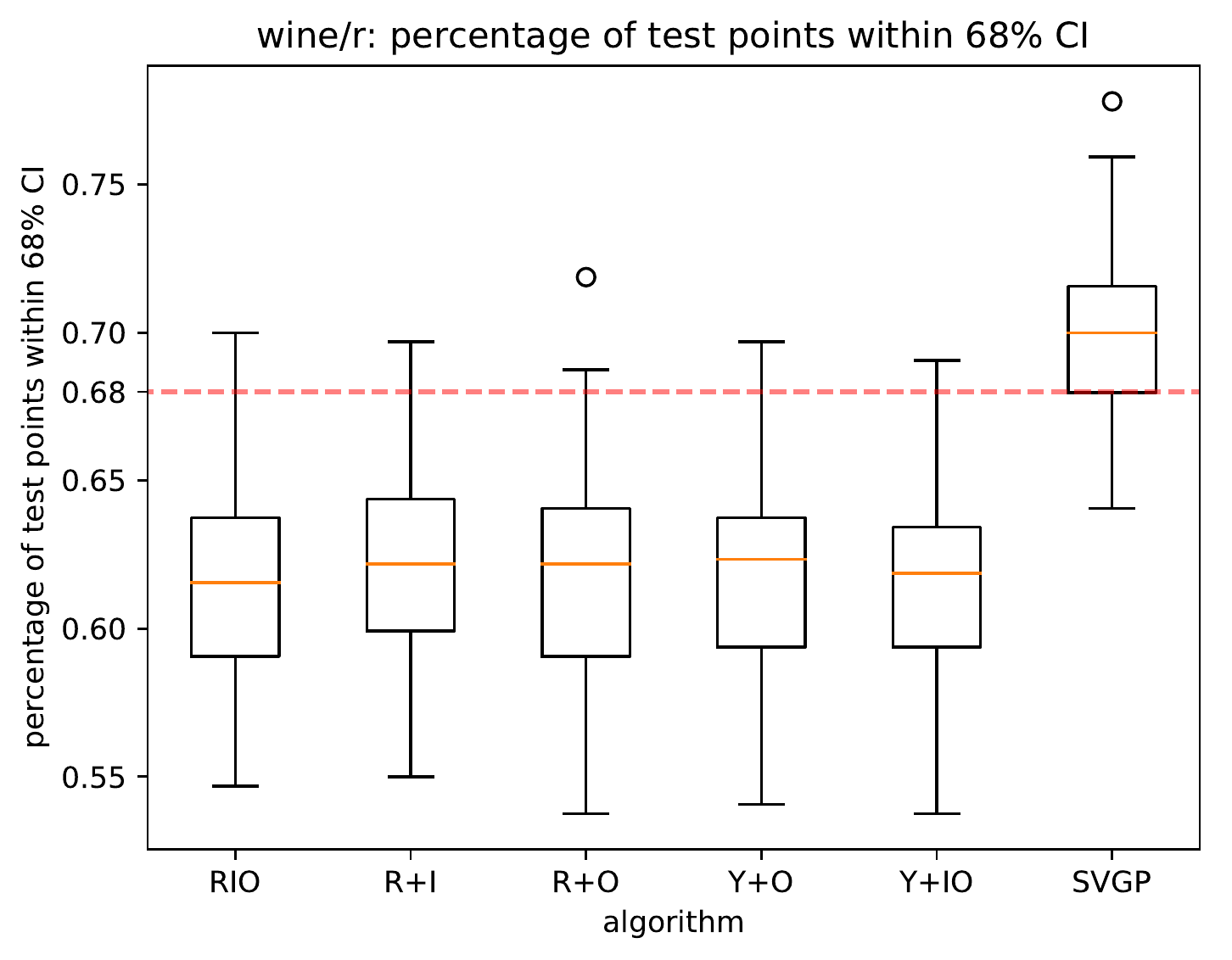}
	\caption{\textbf{Quality of estimated CIs.} \label{fig:CI1} 
		These figures show the distribution of the percentages that testing outcomes are within the estimated 95\%/90\%/68\% CIs over all the independent runs. 
	}
\end{figure}
\begin{figure}
	\centering
	\includegraphics[width=0.32\linewidth]{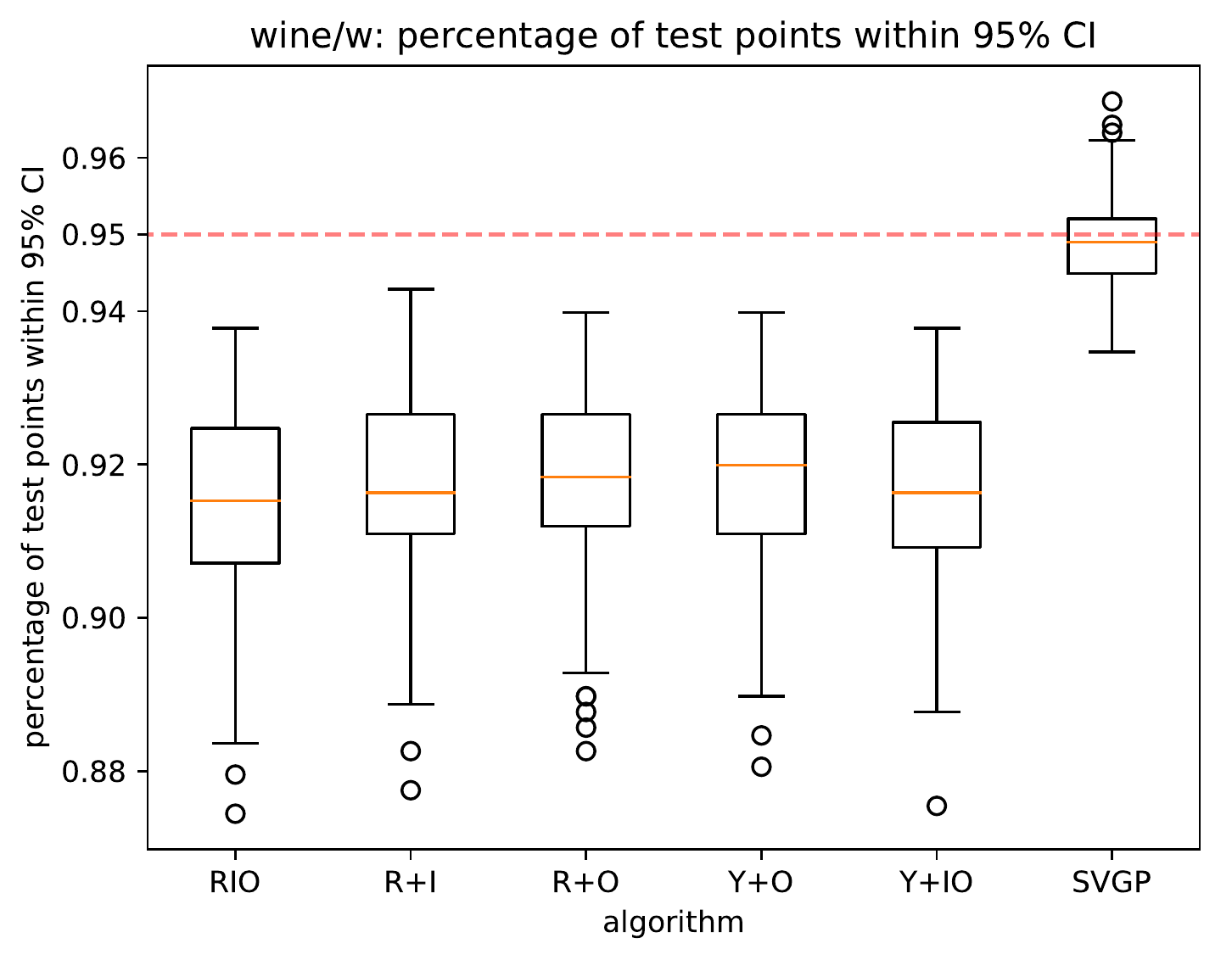}
	\includegraphics[width=0.32\linewidth]{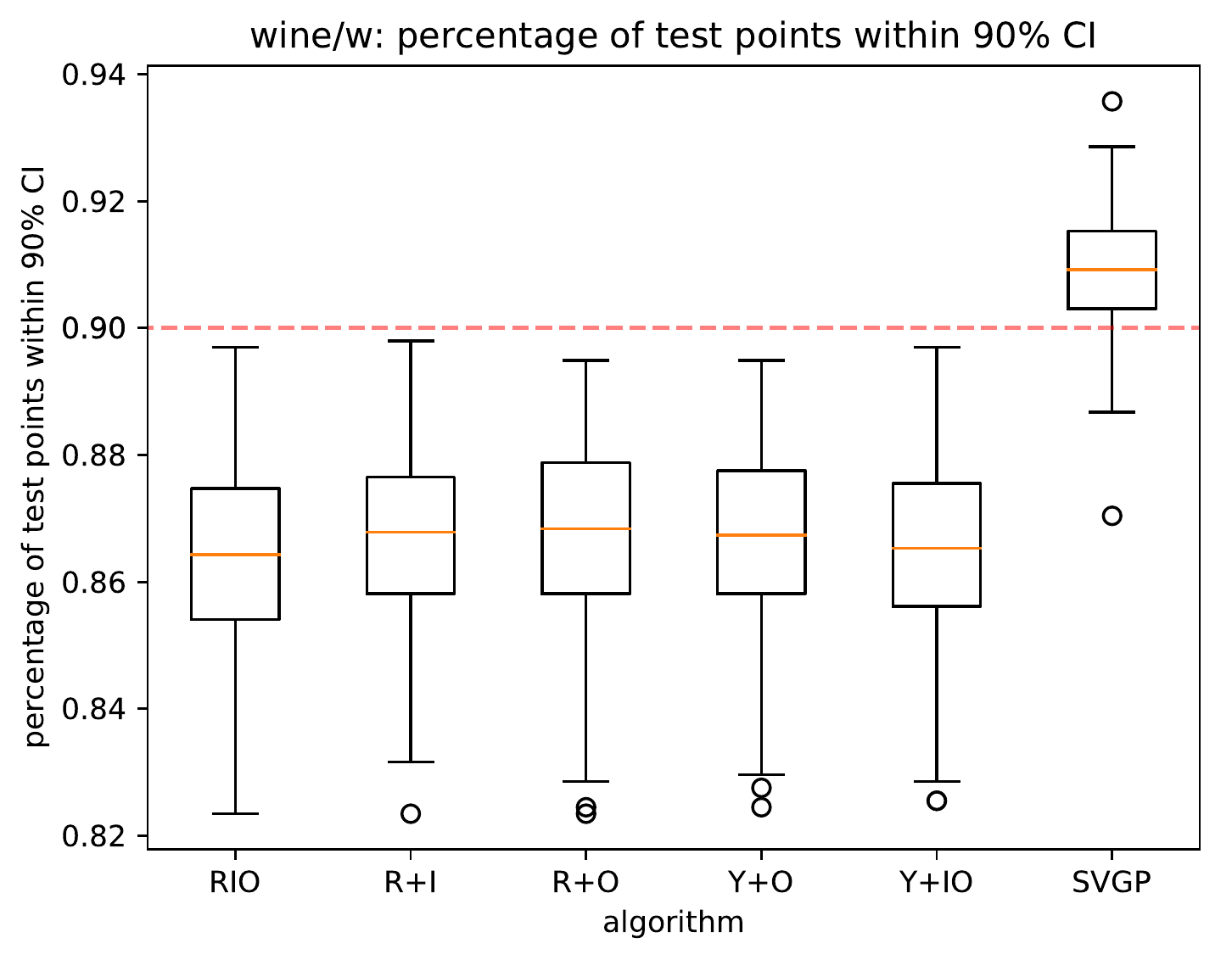}
	\includegraphics[width=0.32\linewidth]{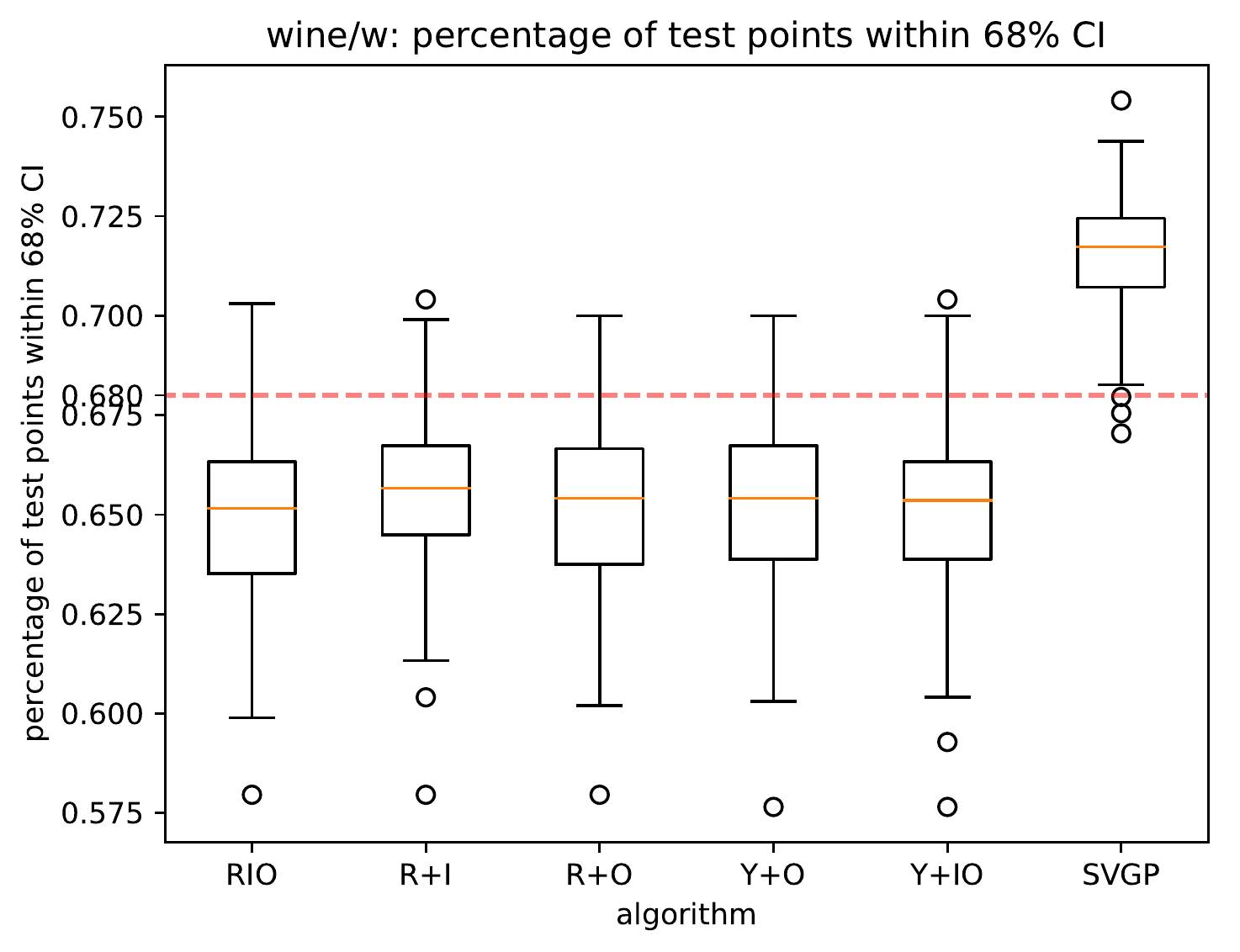}
	\includegraphics[width=0.32\linewidth]{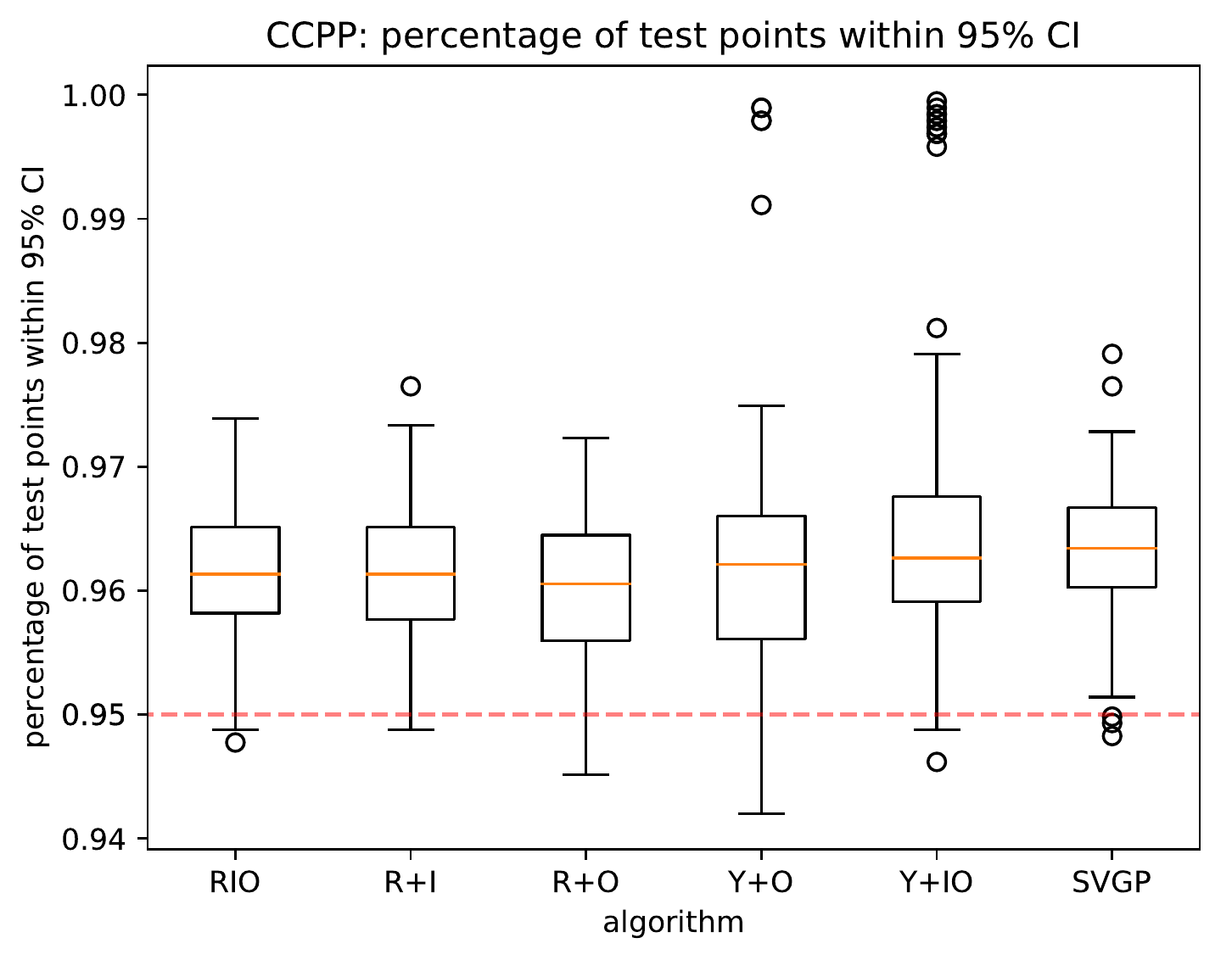}
	\includegraphics[width=0.32\linewidth]{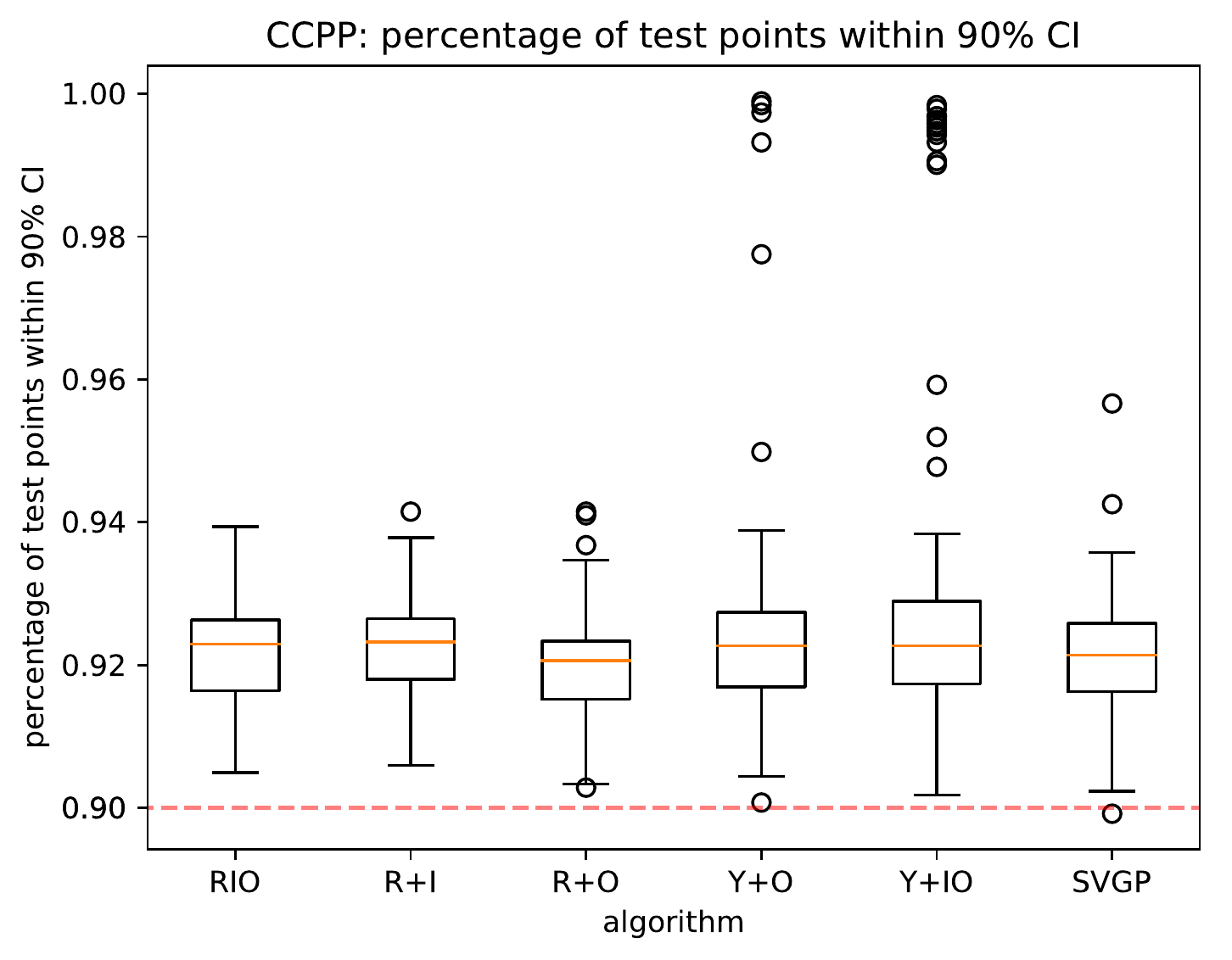}
	\includegraphics[width=0.32\linewidth]{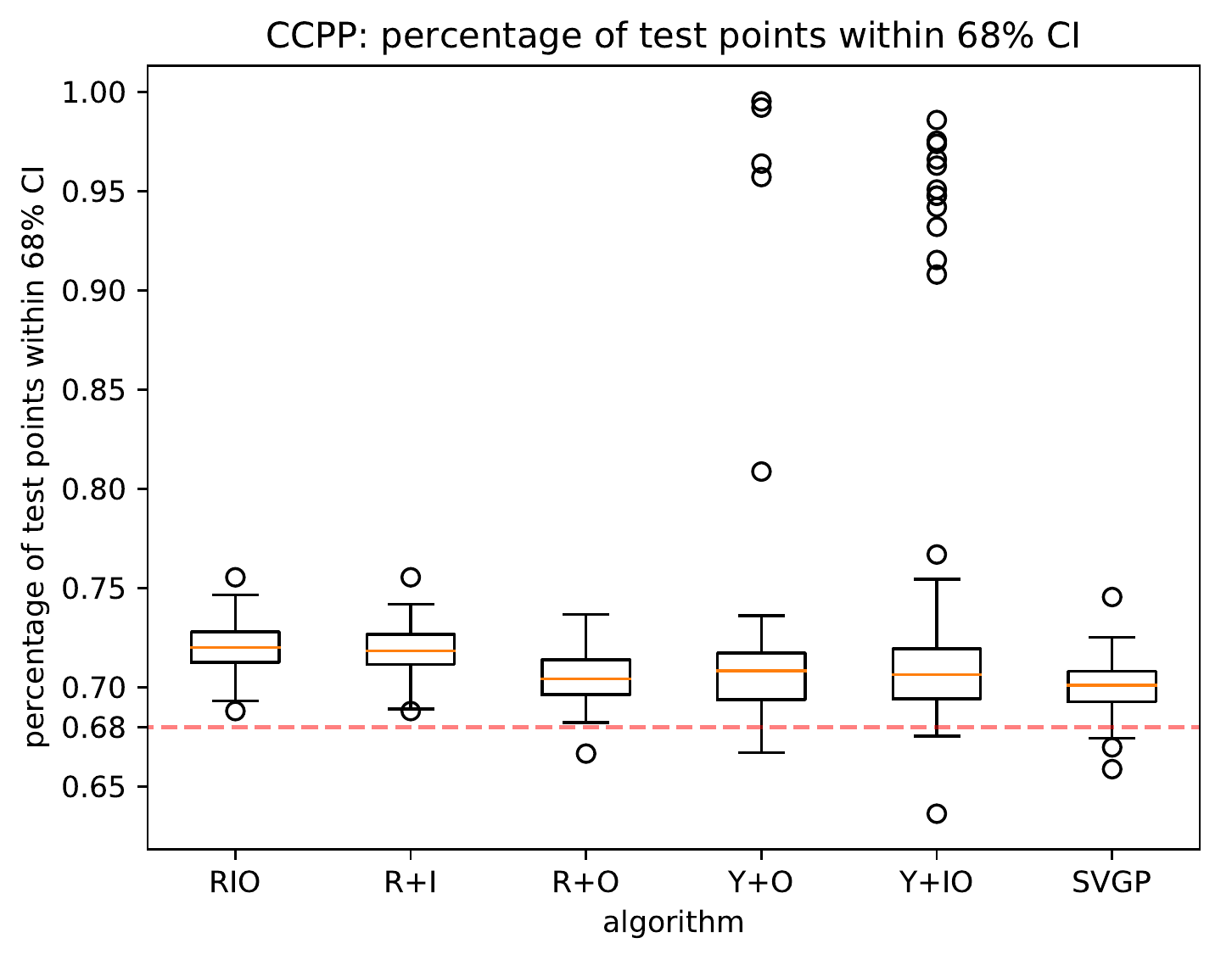}
	\includegraphics[width=0.32\linewidth]{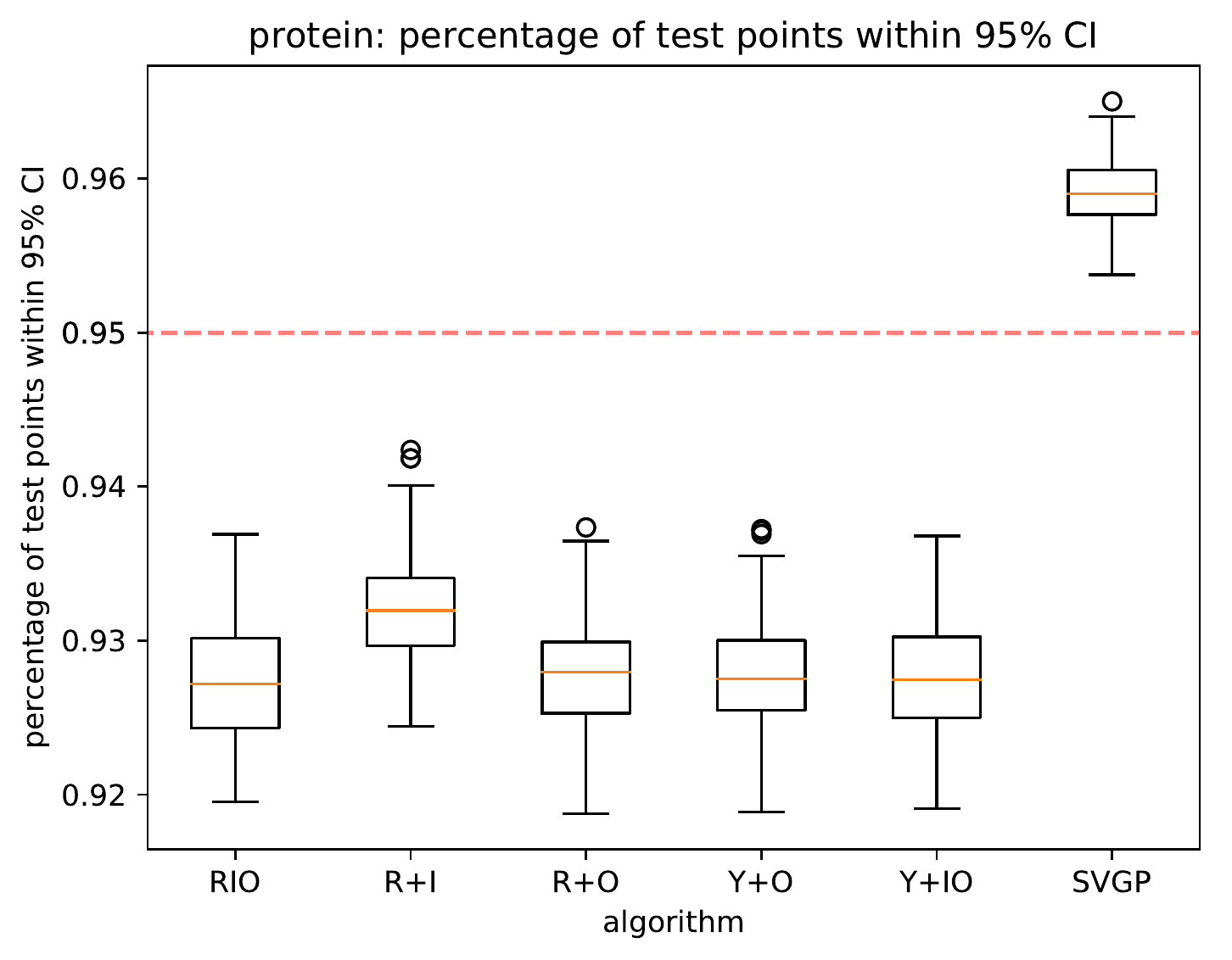}
	\includegraphics[width=0.32\linewidth]{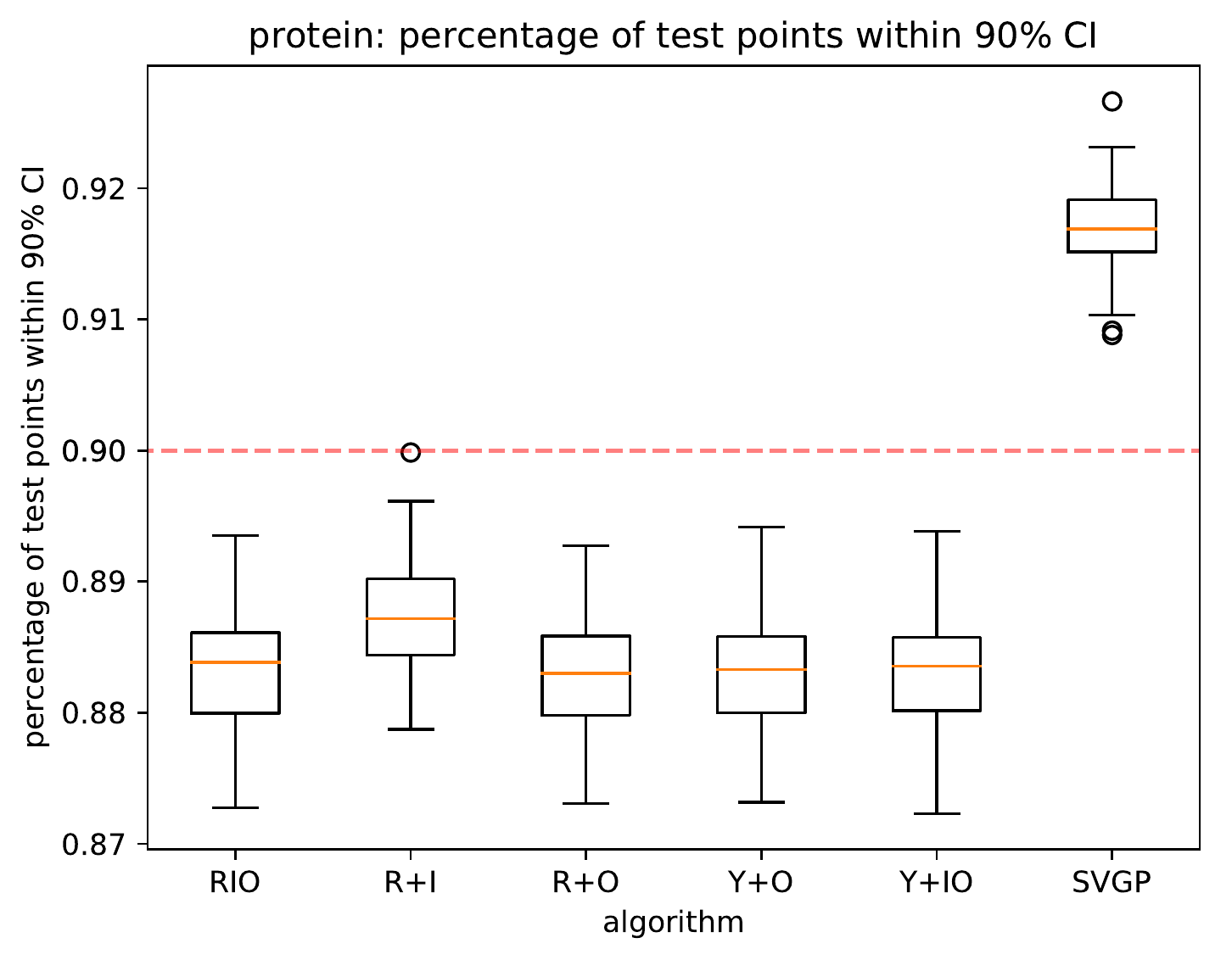}
	\includegraphics[width=0.32\linewidth]{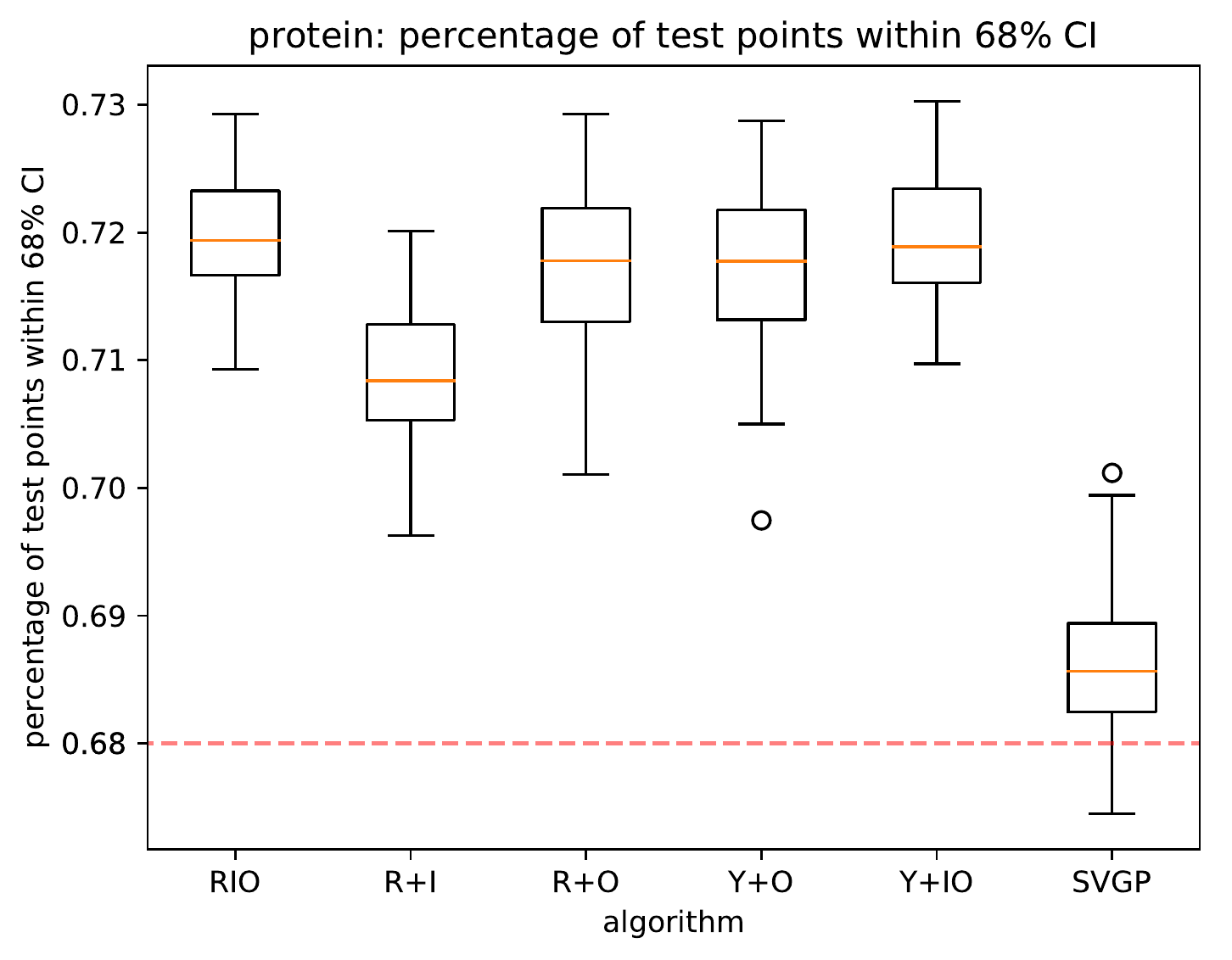}
	\includegraphics[width=0.32\linewidth]{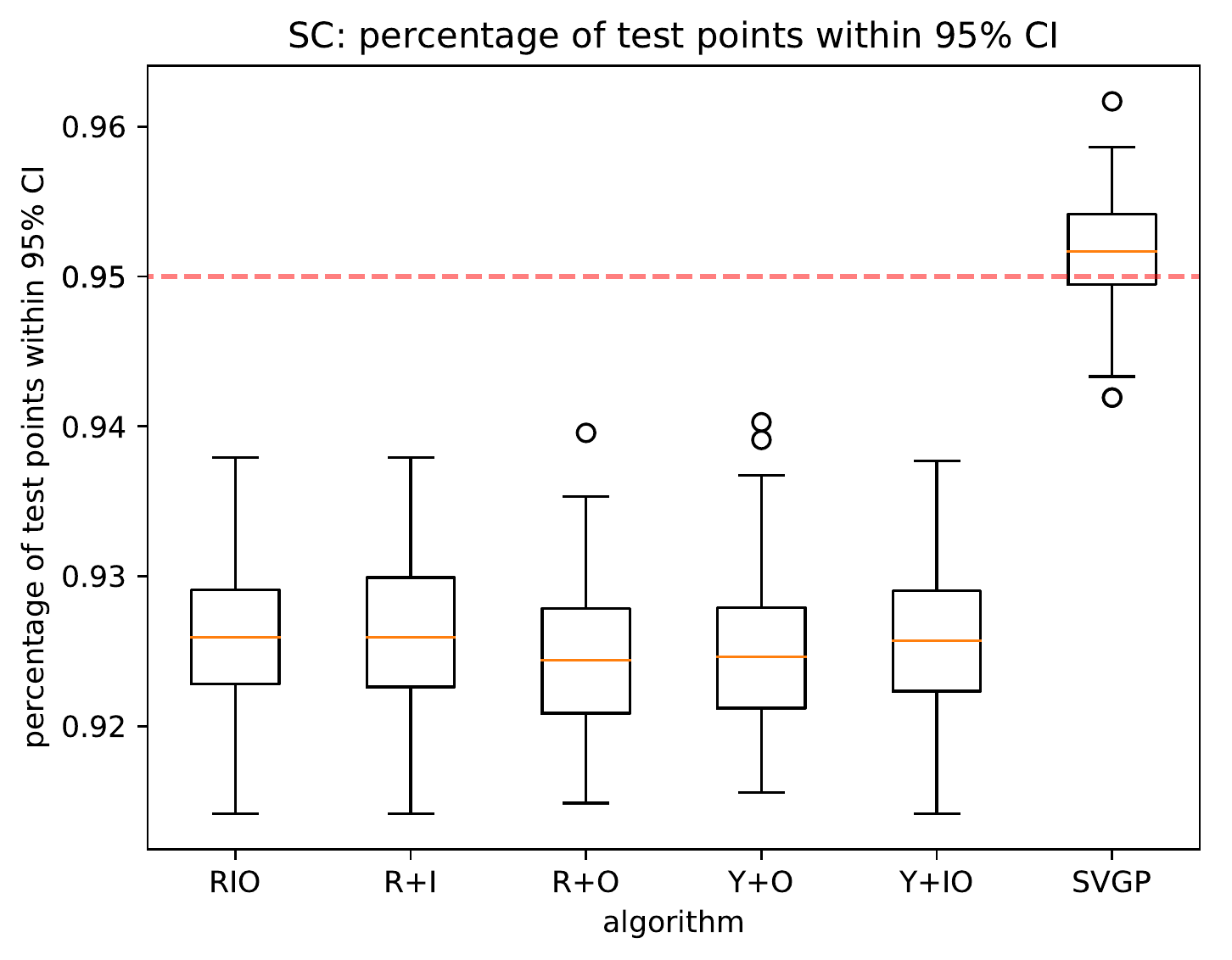}
	\includegraphics[width=0.32\linewidth]{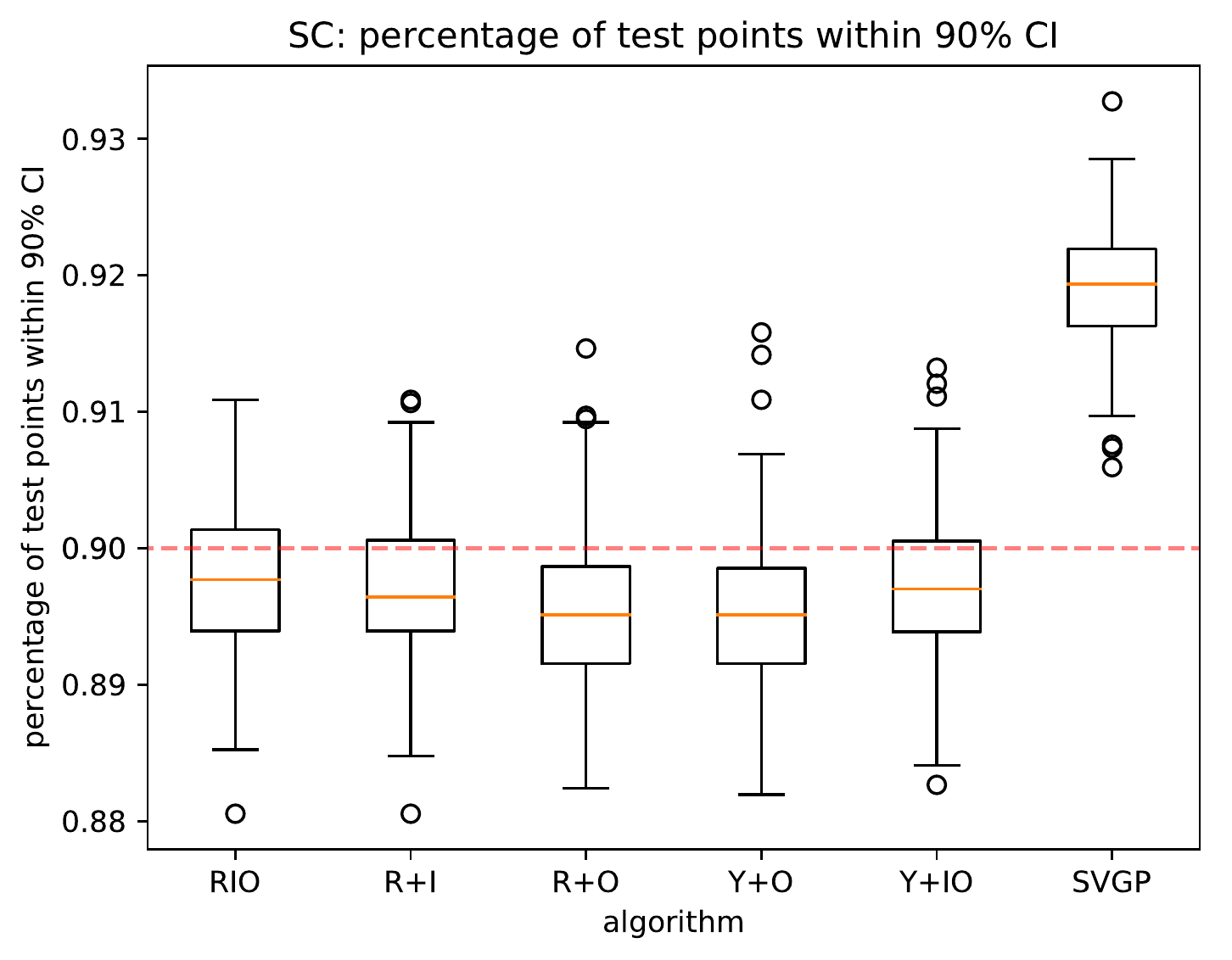}
	\includegraphics[width=0.32\linewidth]{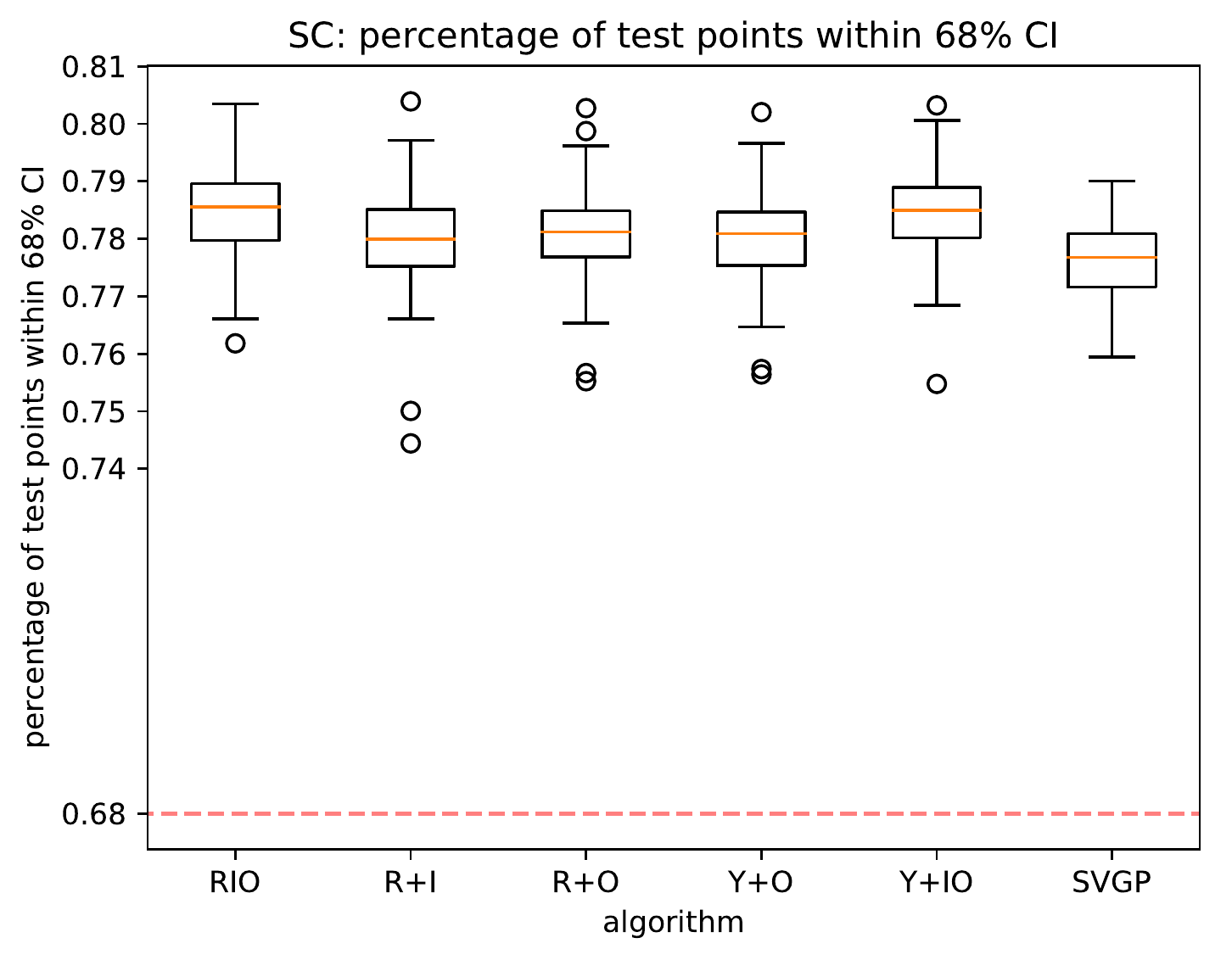}
	\includegraphics[width=0.32\linewidth]{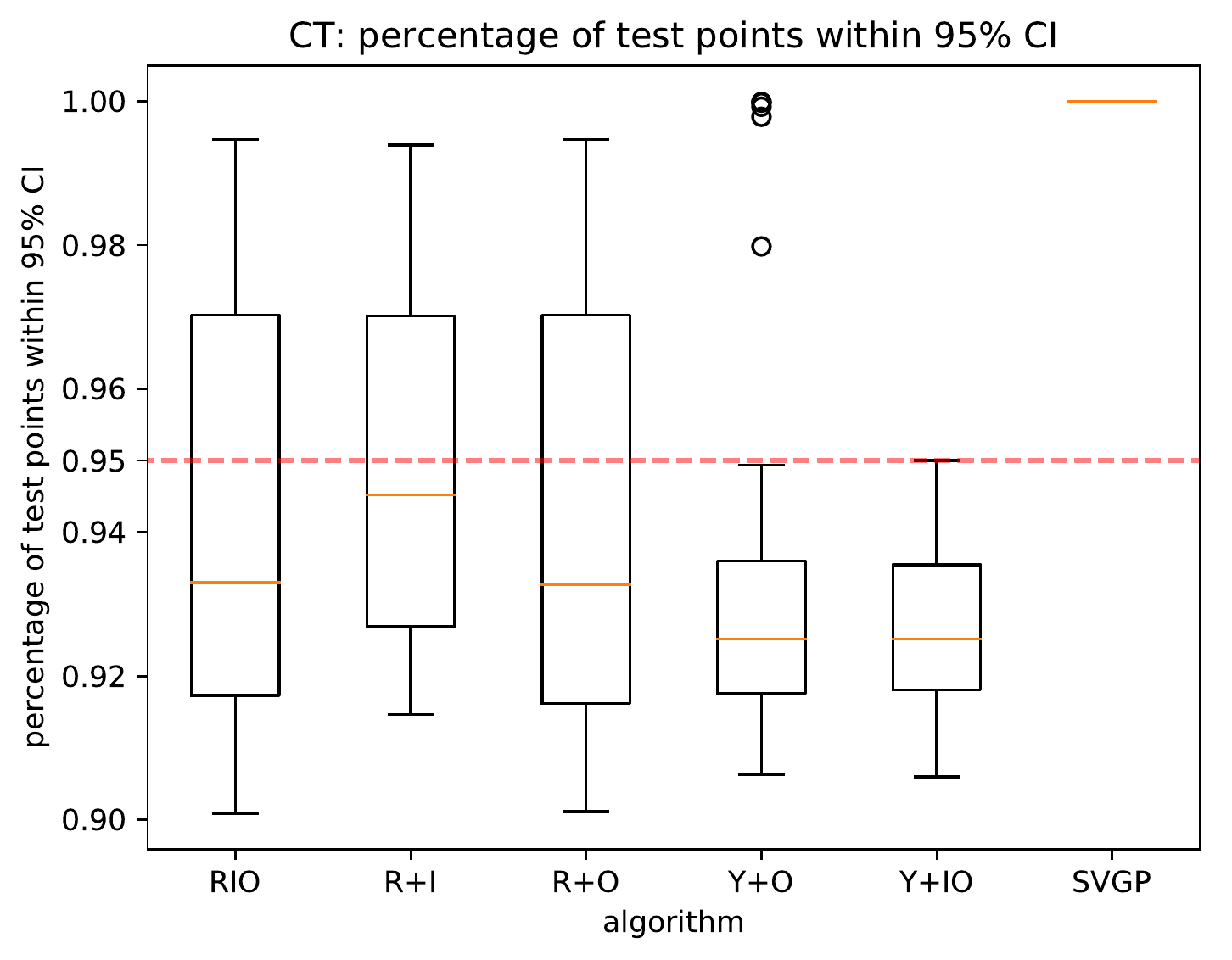}
	\includegraphics[width=0.32\linewidth]{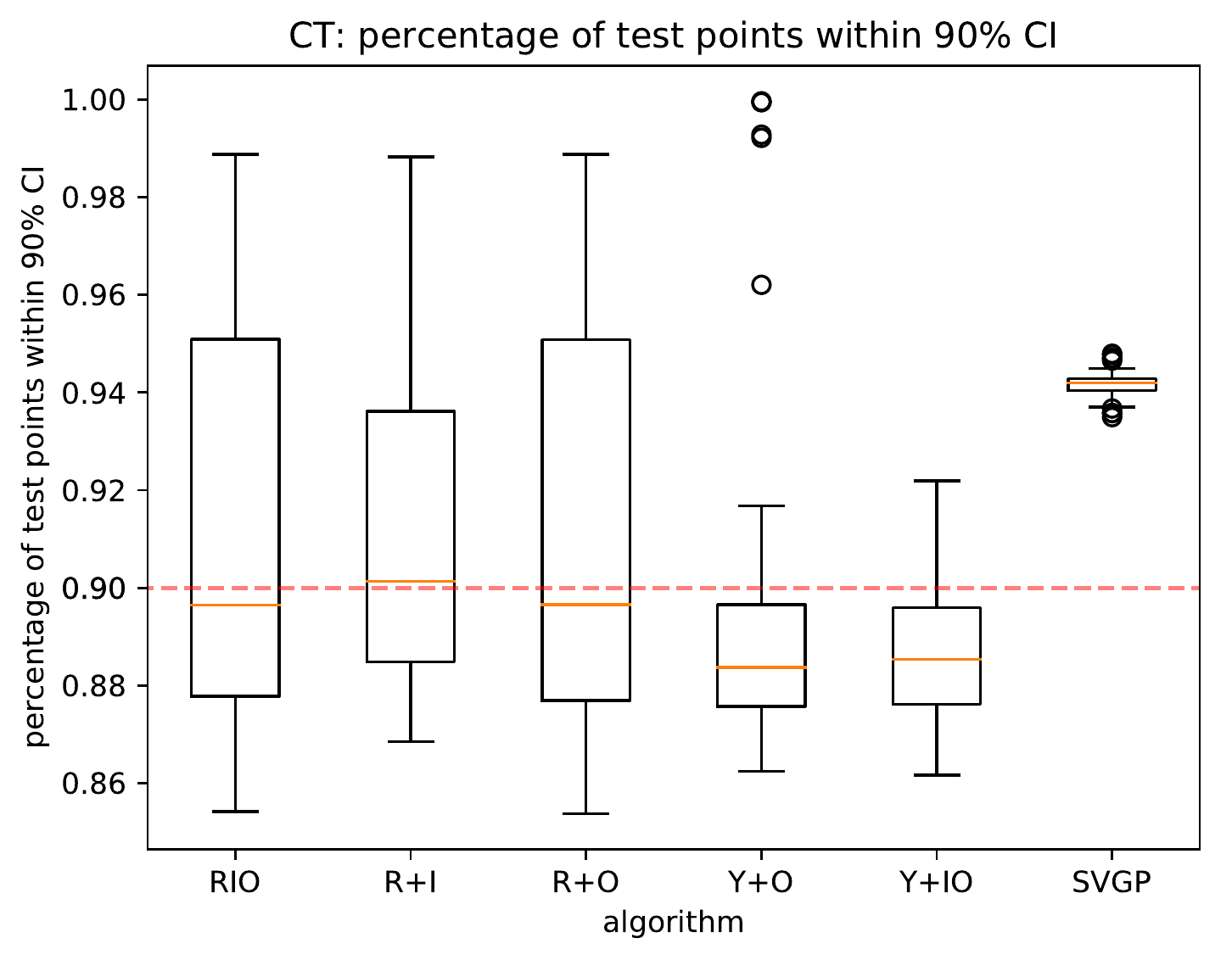}
	\includegraphics[width=0.32\linewidth]{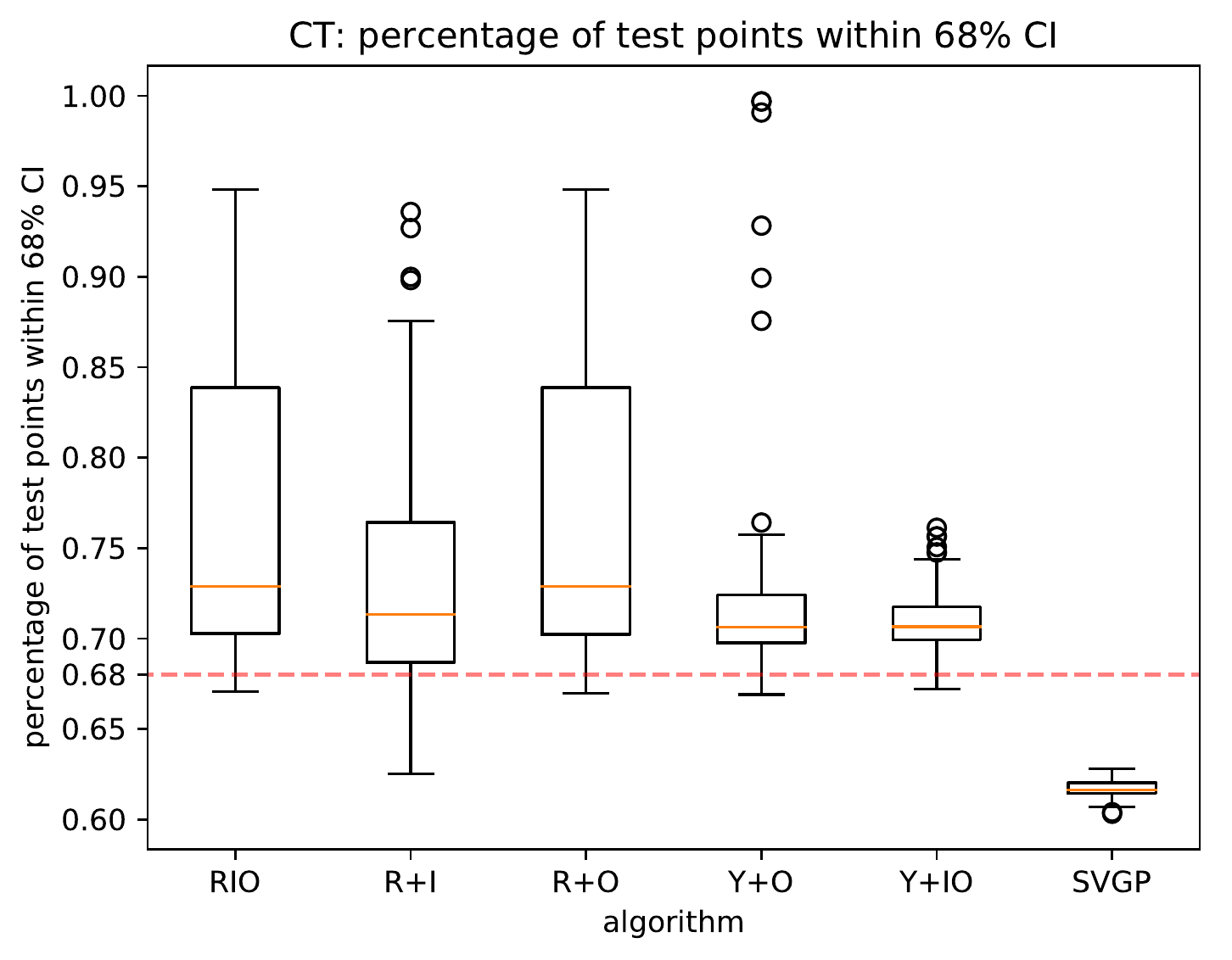}
	\includegraphics[width=0.32\linewidth]{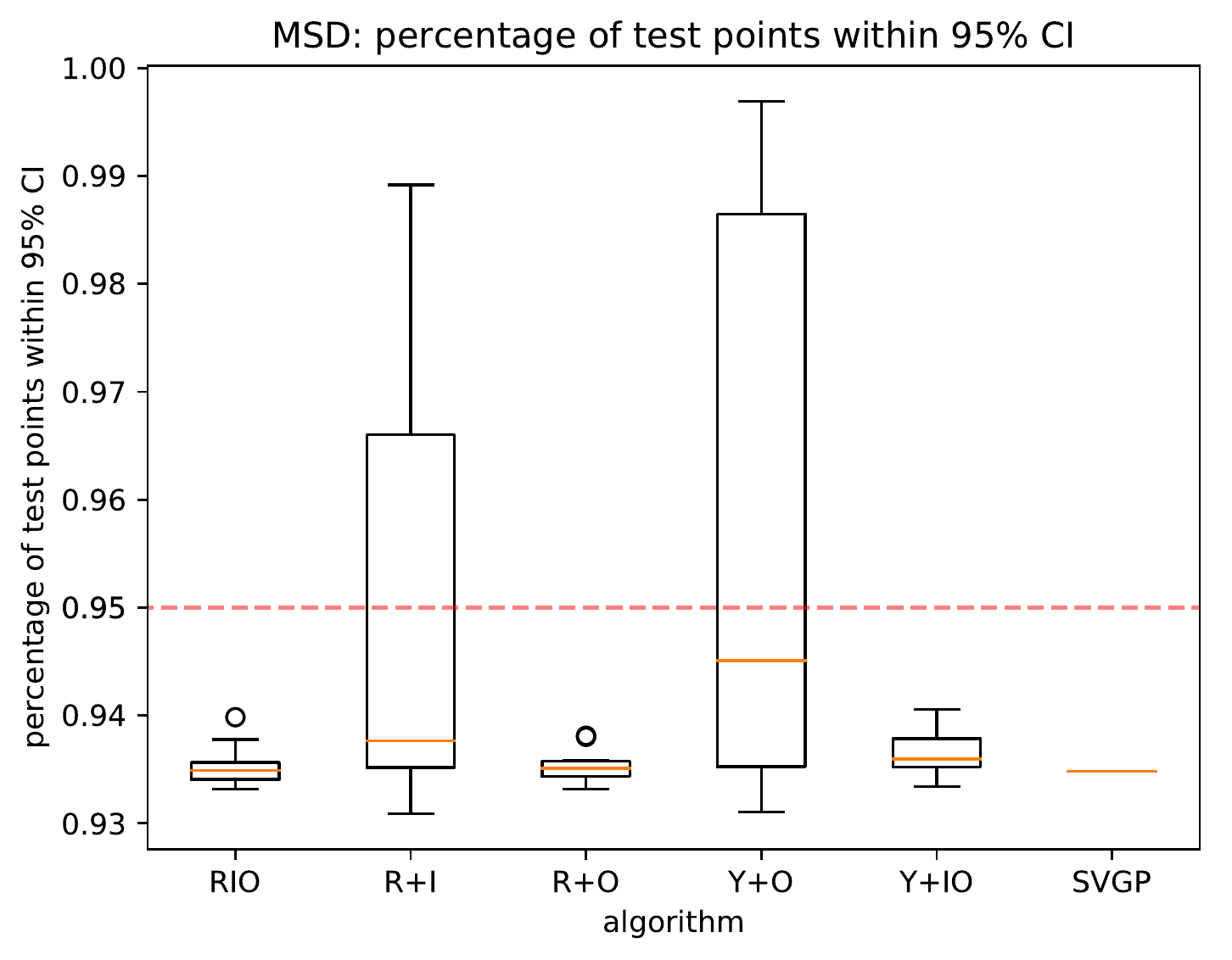}
	\includegraphics[width=0.32\linewidth]{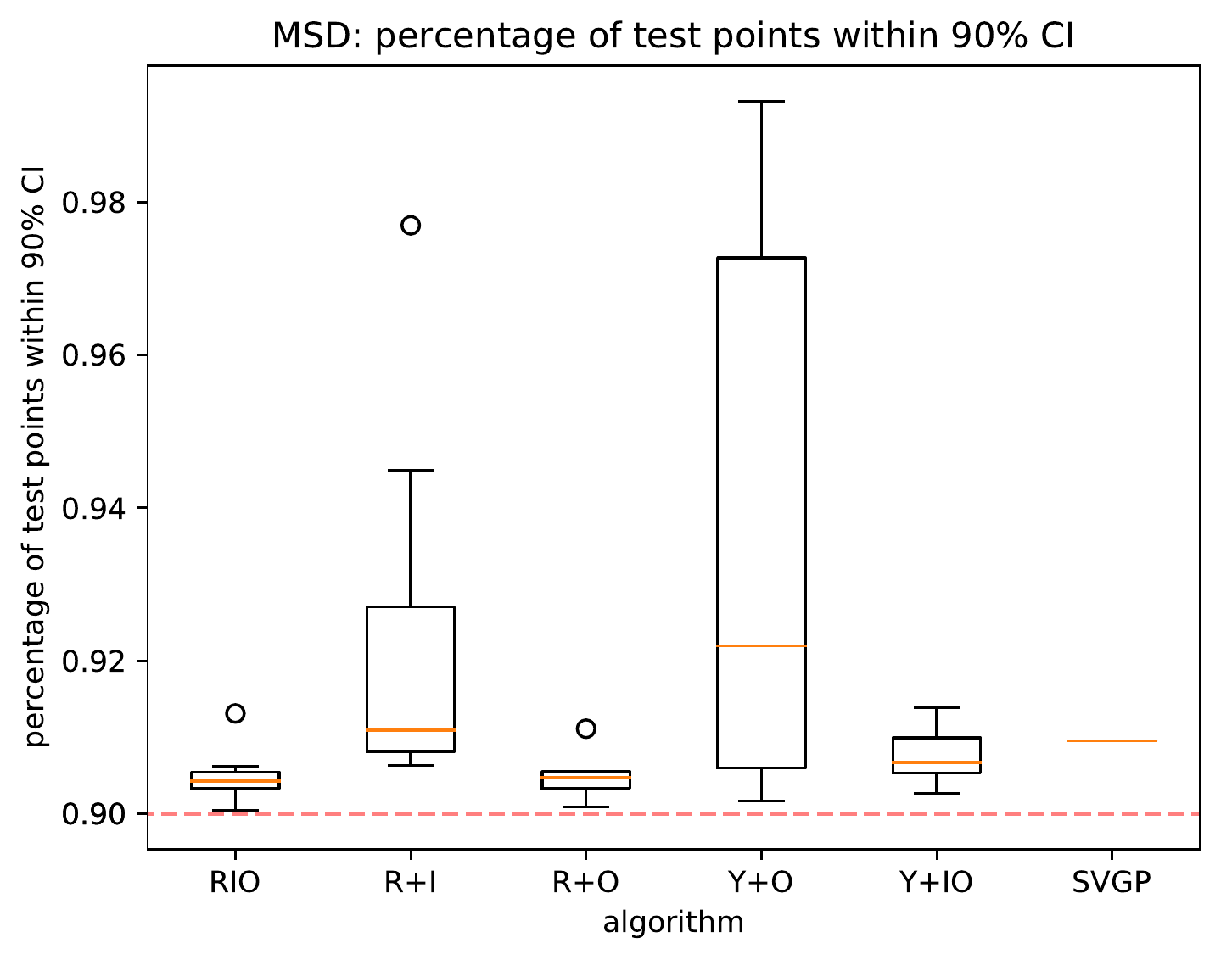}
	\includegraphics[width=0.32\linewidth]{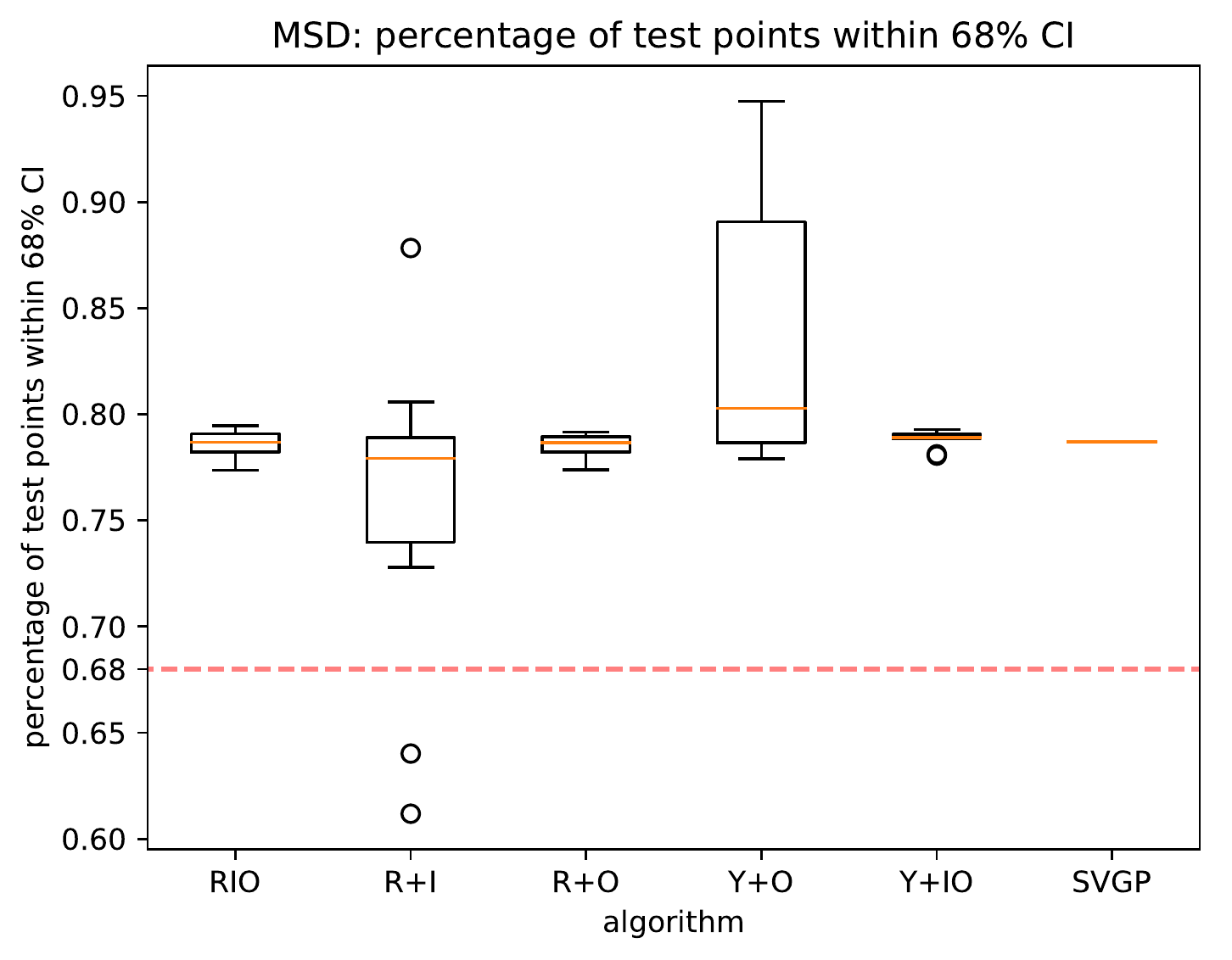}
	\caption{\textbf{Quality of estimated CIs.} \label{fig:CI2}
		These figures show the distribution of the percentages that testing outcomes are within the estimated 95\%/90\%/68\% CIs over all the independent runs. 
	}
\end{figure}
\begin{figure}
	\centering
	\includegraphics[width=0.96\linewidth]{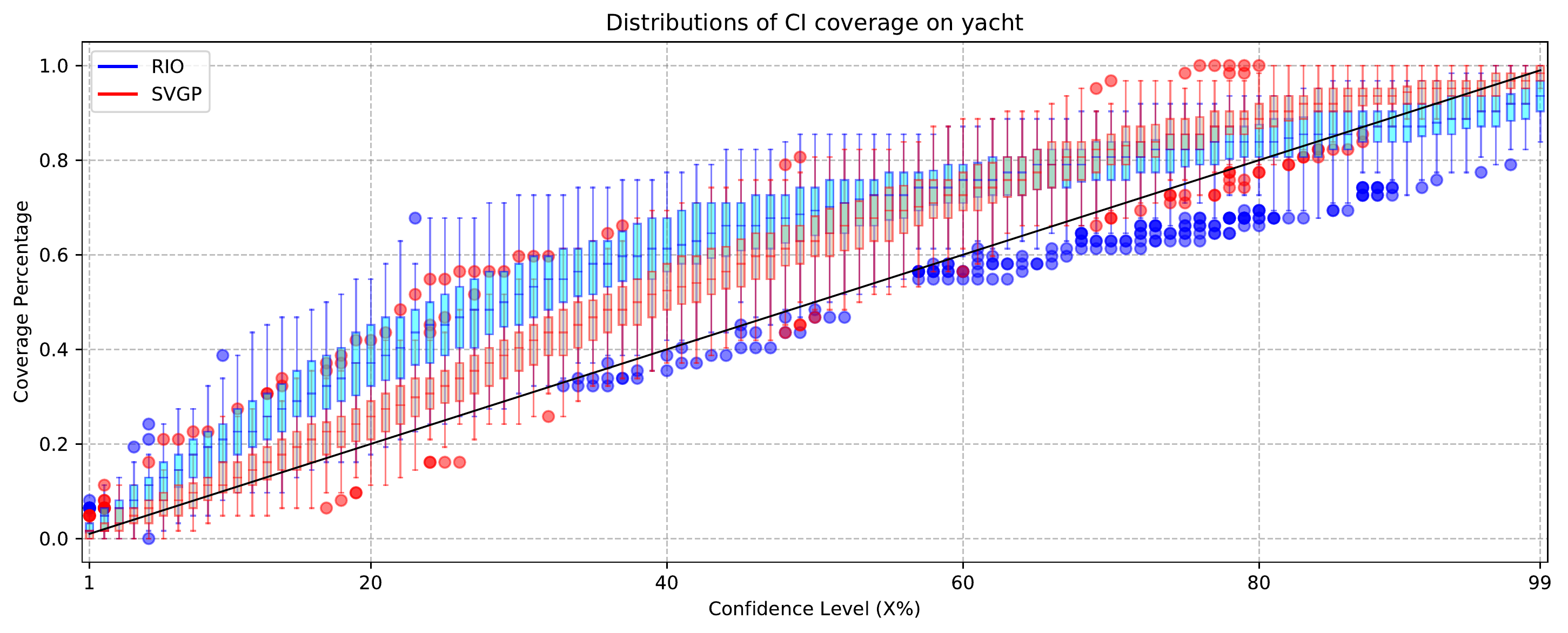}
	\includegraphics[width=0.96\linewidth]{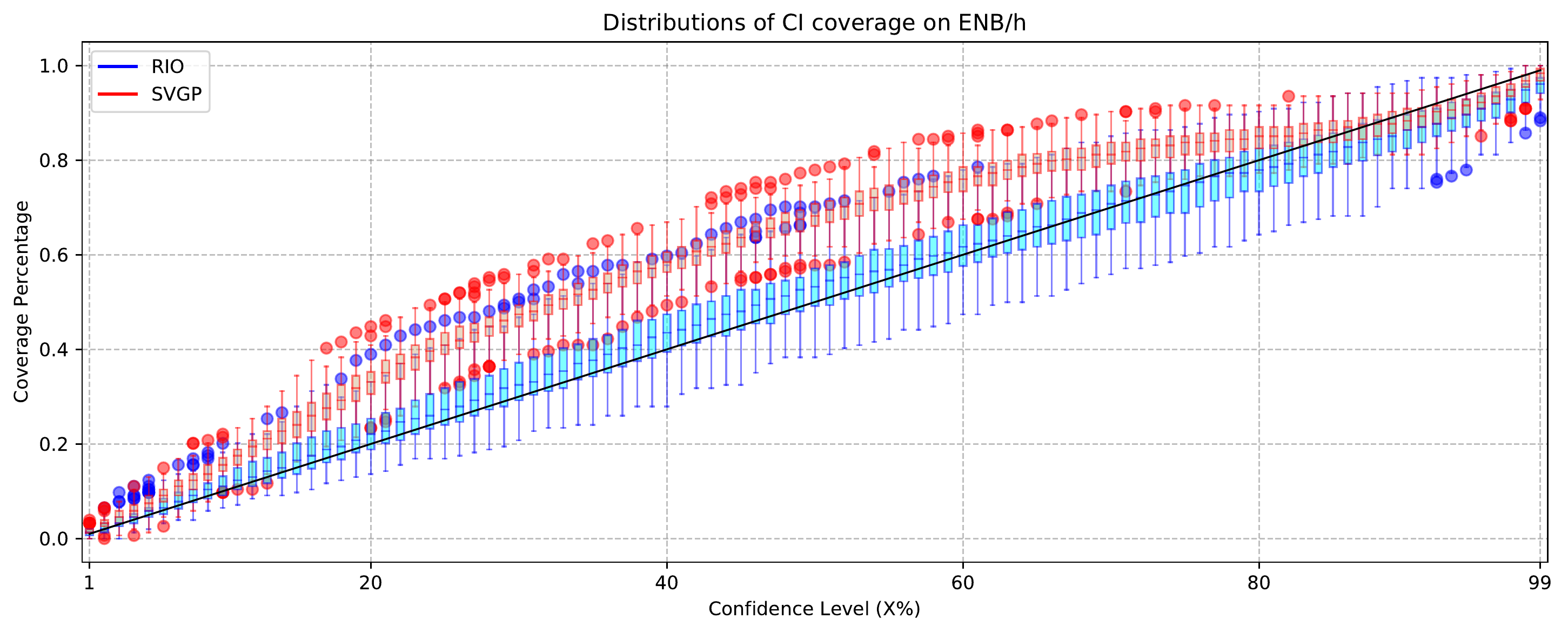}
	\includegraphics[width=0.96\linewidth]{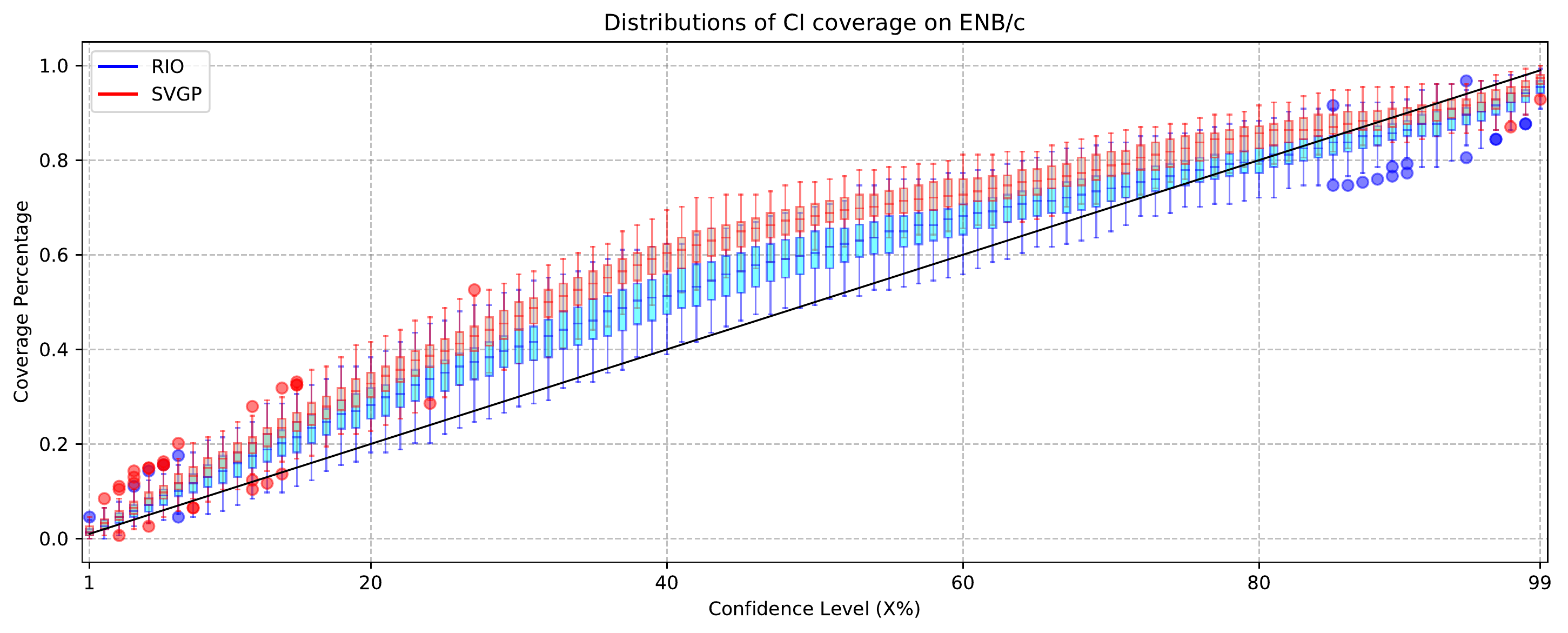}
	\includegraphics[width=0.96\linewidth]{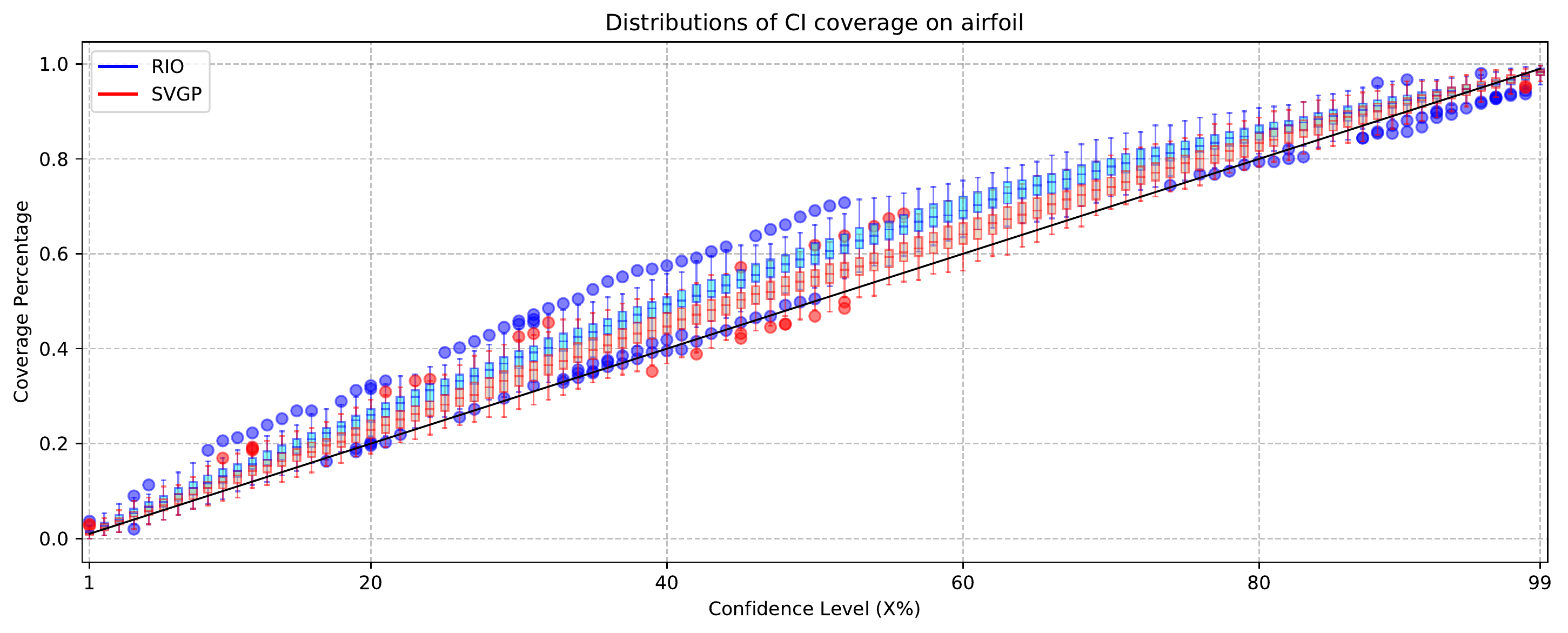}
	\caption{\textbf{Distribution of CI Coverage.} \label{fig:CI_all1}
		These figures compare the distributions of empirical coverage percentages of estimated 1\%-99\% CIs over all the independent runs for RIO and SVGP. 
	}
\end{figure}
\begin{figure}
	\centering
	\includegraphics[width=0.96\linewidth]{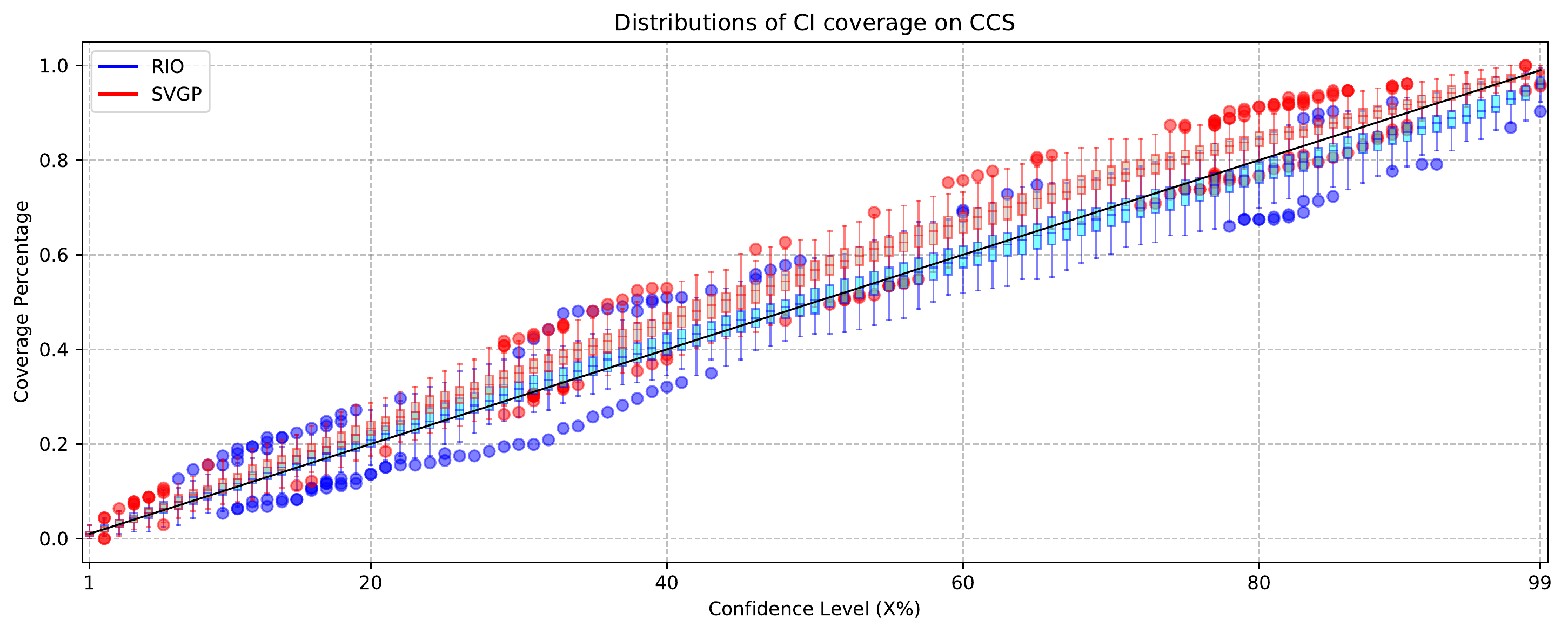}
	\includegraphics[width=0.96\linewidth]{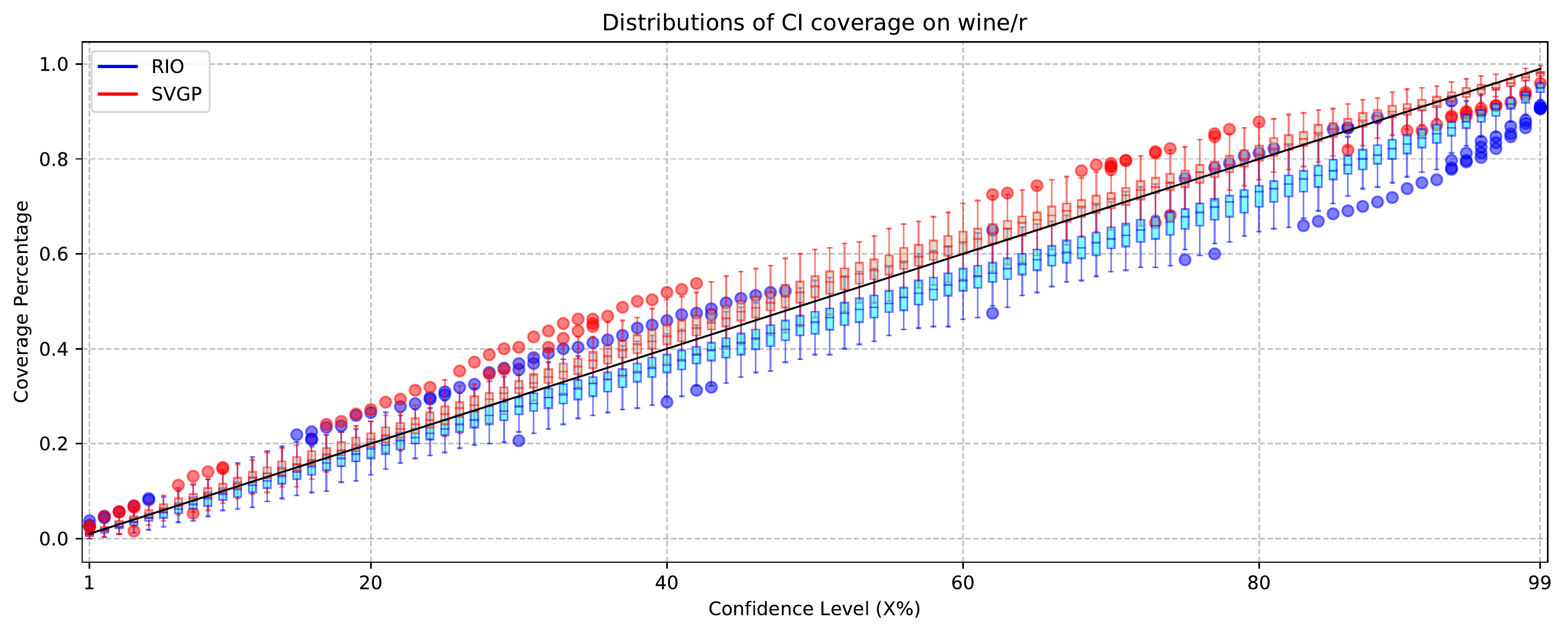}
	\includegraphics[width=0.96\linewidth]{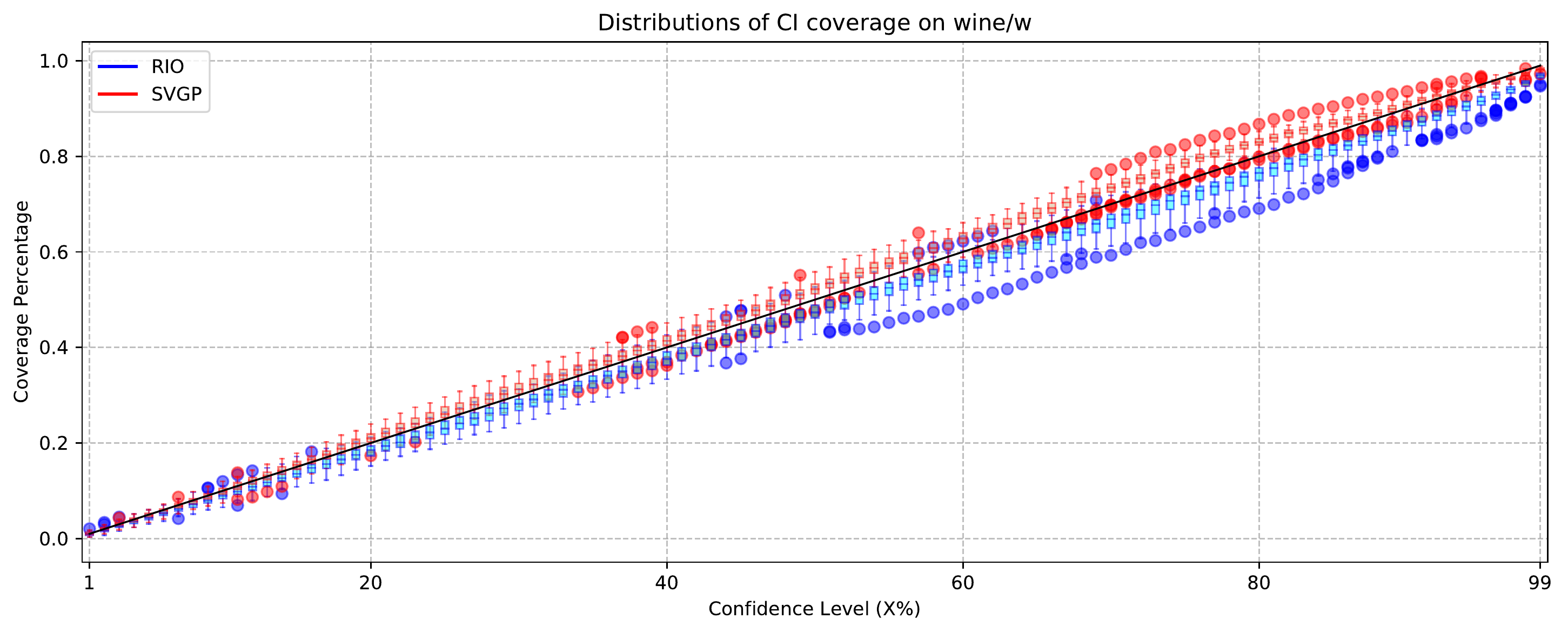}
	\includegraphics[width=0.96\linewidth]{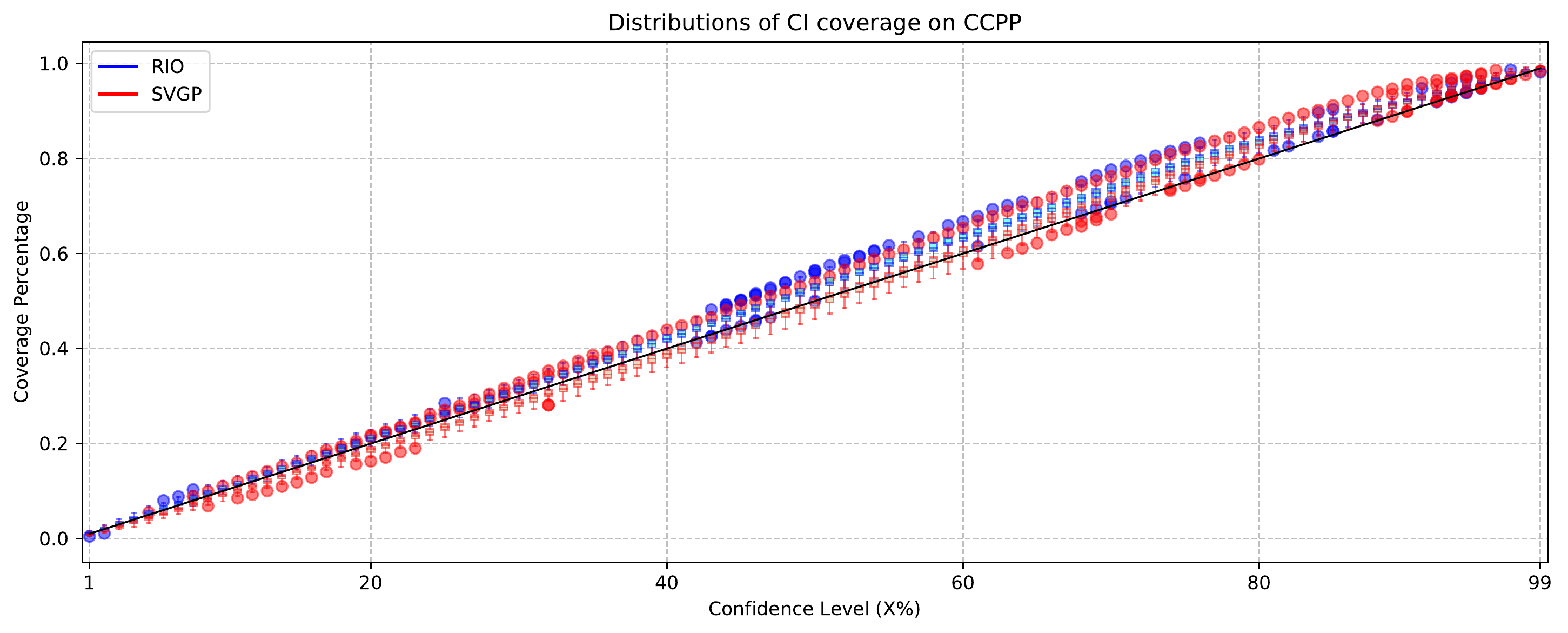}
	\caption{\textbf{Distribution of CI Coverage.} \label{fig:CI_all2}
		These figures compare the distributions of empirical coverage percentages of estimated 1\%-99\% CIs over all the independent runs for RIO and SVGP.
	}
\end{figure}
\begin{figure}
	\centering
	\includegraphics[width=0.96\linewidth]{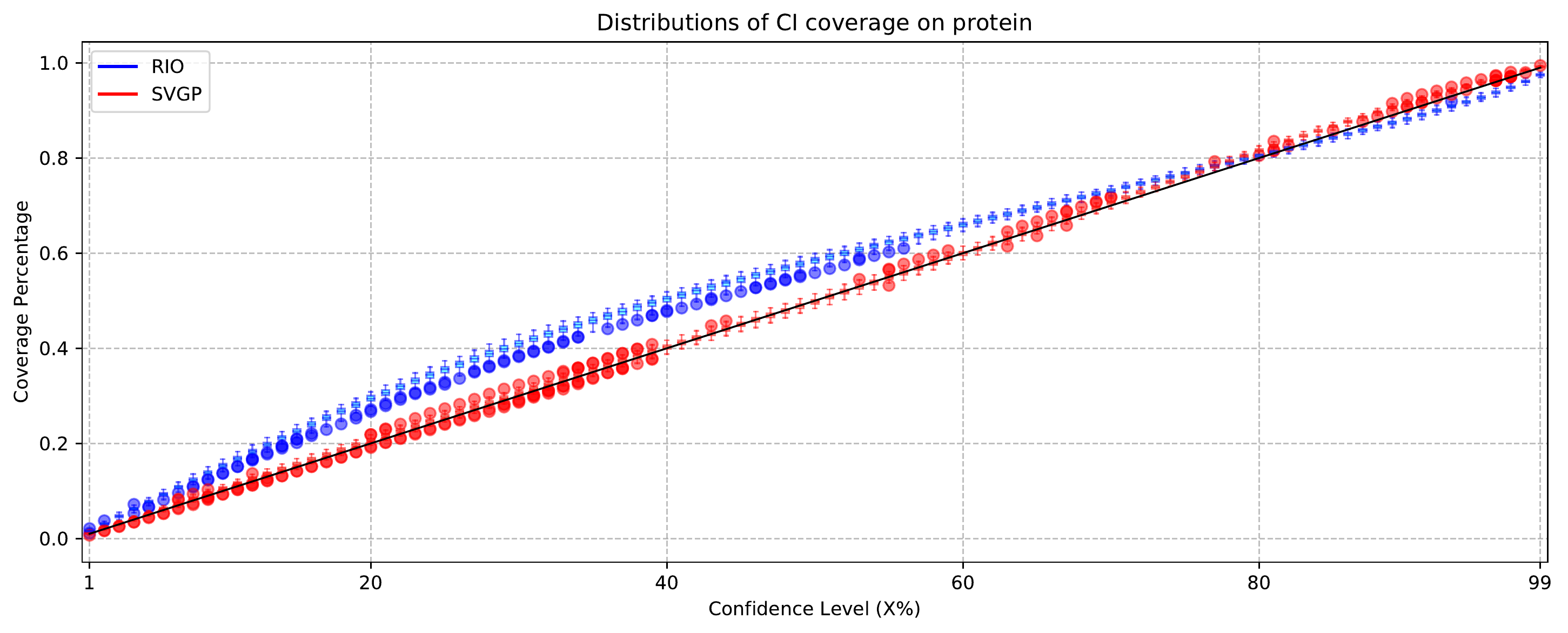}
	\includegraphics[width=0.96\linewidth]{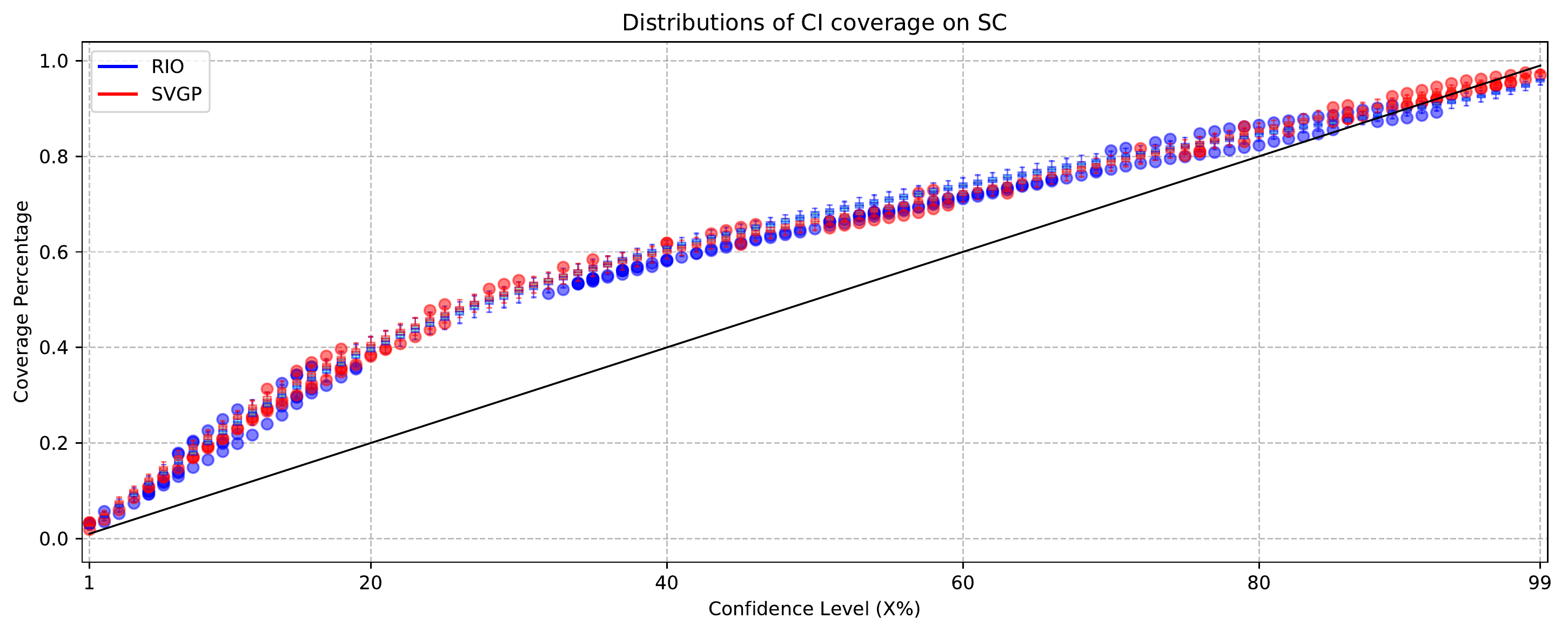}
	\includegraphics[width=0.96\linewidth]{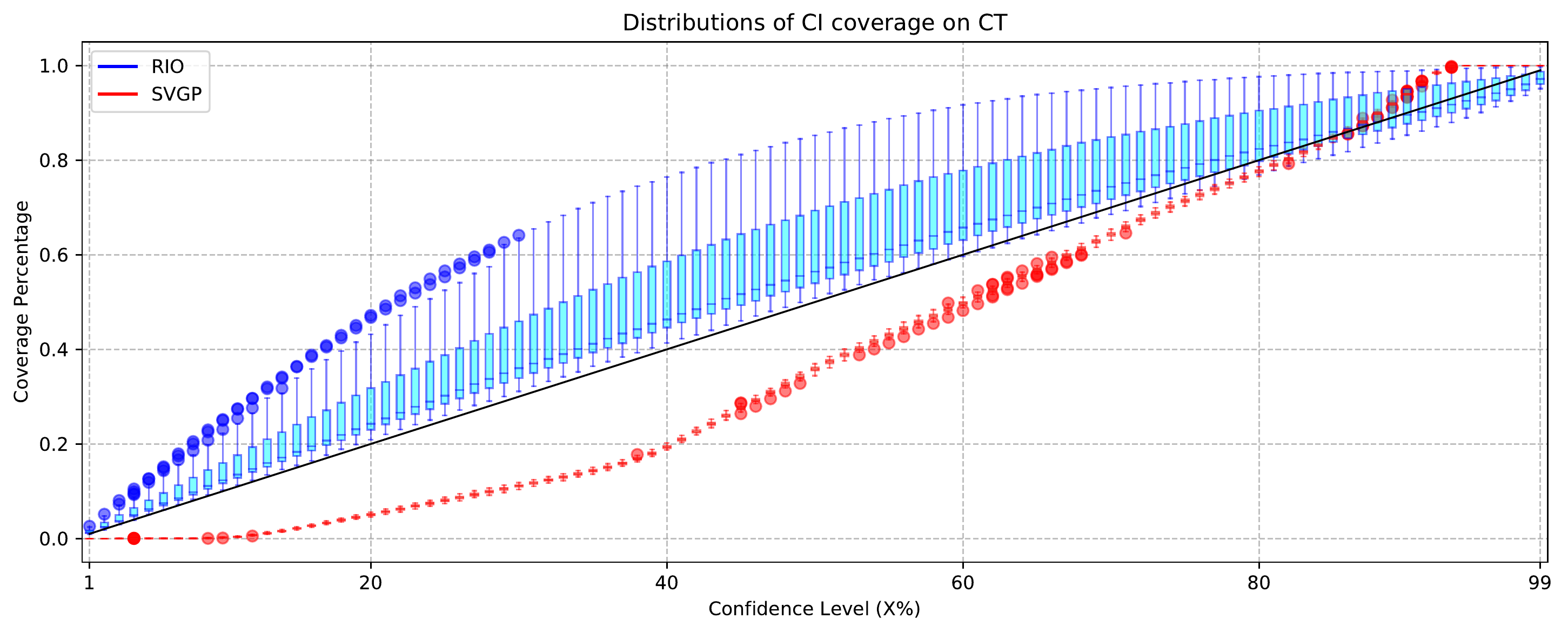}
	\includegraphics[width=0.96\linewidth]{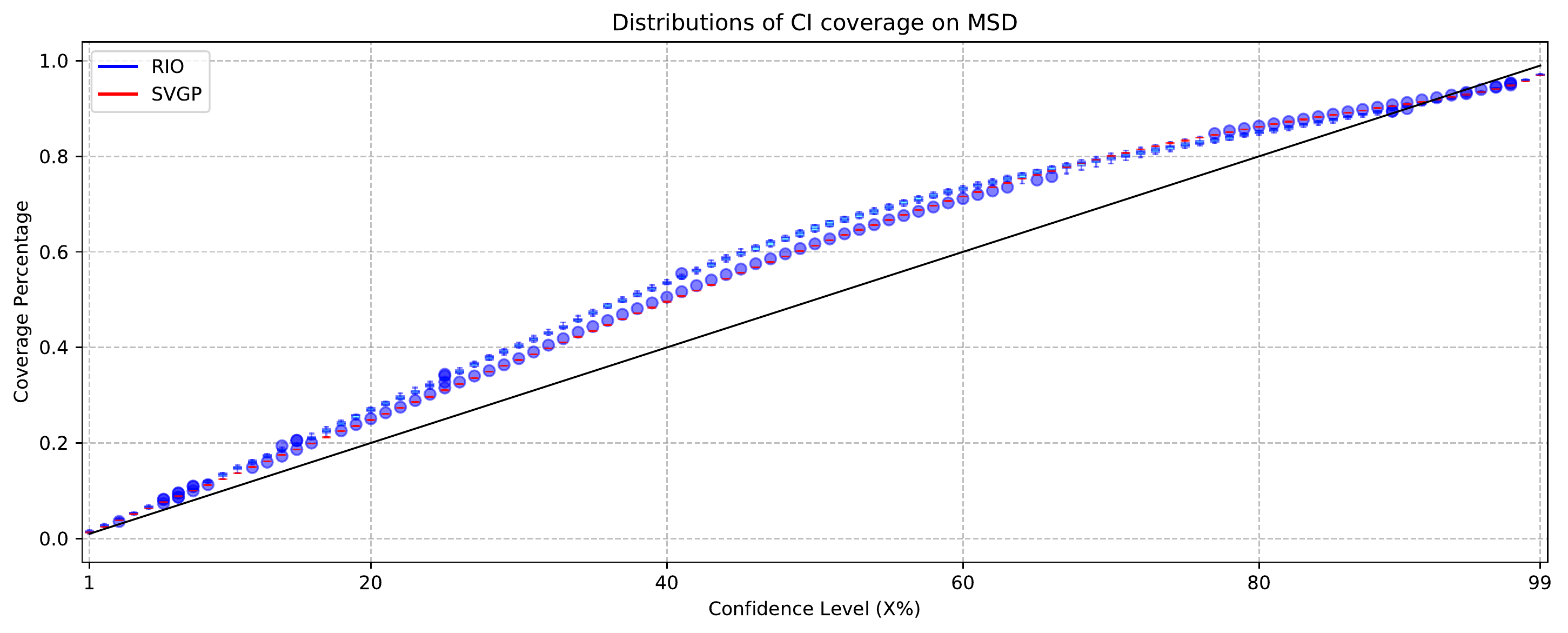}
	\caption{\textbf{Distribution of CI Coverage.} \label{fig:CI_all3}
		These figures compare the distributions of empirical coverage percentages of estimated 1\%-99\% CIs over all the independent runs for RIO and SVGP.
	}
\end{figure}

According to the results, RIO is able to provide reasonable CI estimations in most cases. However, using this metric to compare algorithms regarding uncertainty estimation quality may be noisy and misleading. A method that simply learns the distribution of the labels would perform well in CI coverage metric, but it cannot make any meaningful point-wise prediction. More detailed examples are given as follows.

For ``CT'' dataset, SVGP has an extremely high RMSE of around 52 while RIO variants only have RMSEs of ~1. However, SVGP still shows acceptable 95\% and 90\% CI coverage, and even has over-confident coverage for 68\% CI.  After investigation, it was found that this happened only by chance, and was not due to the accurate CI estimations of SVGP. Since SVGP is not able to extract any useful information from the high-dimensional input space, it simply treated all outcomes as noise. As a result, SVGP shows a very large RMSE compared to other algorithms, and the mean of its predicted outcome distribution is always around 0. Since SVGP treats everything as noise, the estimated noise variance is very high, and the estimated 95\% CI based on this noise variance is overly high and covers all the test outcomes in most cases. When the estimated 90\% CI is evaluated, the large error in mean estimation and large error in noise variance estimation cancel mostly cancel each other out by chance, i.e., the estimated 90\% CI is mistakenly shifted by erroneous mean then the overly wide noise variance fortunately covers slightly more than 90\% test outcomes. The case of the estimated 68\% CI is similar, but now the error in noise variance cannot fully cover the error in mean, so the coverage percentages are now below 68\%, indicating over-confidence.

One interesting observation is that SVGP tends to be more ``conservative'' for high confidence levels ($>$90\%), even in cases where they are ``optimistic'' for low confidence levels. After investigation, this is because SVGP normally has an overly high noise variance estimation (also comes with a higher prediction RMSE in most cases), so it has a higher chance of covering more points when the increase in CI width (influenced by noise variance) surpasses the erroneous shift of mean (depending on prediction errors).

\subsection{Additional Results on Output Correction}
\label{app:correction}
This section shows additional demonstrations of the detailed behaviors of RIO output correction. The experimental data are extracted from the experiments that generate Table~\ref{exp:original} in main paper.

Figure~\ref{fig:dist_pred} plots the distributions of ground truth labels (outcomes), original NN predictions and predictions corrected after RIO for a randomly picked run for each dataset. Figure~\ref{fig:comp_output} shows the point-wise comparisons between NN outputs and RIO-corrected outputs for the same experimental runs as in Figure~\ref{fig:dist_pred}. Based on the results, it is clear that RIO tends to calibrate each NN prediction accordingly. The distribution of outputs after RIO calibration may be a shift, or shrinkage, or expansion, or even more complex modifications of the original NN predictions, depending on how different are NN predictions from ground truths. As a result, the distribution of RIO calibrated outputs are closer to the distribution of the ground truth. One interesting behavior can be observed in Figure~\ref{fig:dist_pred} for ``protein'' dataset: after applying RIO, the range of whiskers shrunk and the outliers disappeared, but the box (indicating 25 to 75 percentile of the data) expanded. This behavior shows that RIO can customize its calibration to each point. Another interesting behavior is that for ``wine/r'' dataset (see both Figure~\ref{fig:dist_pred} and \ref{fig:comp_output}), RIO shifts all the original NN outputs to lower values, which are closer to the distribution of the ground truth.

To better study the behaviors of RIO, a new performance metric is defined, called “improvement ratio” (IR), which is the ratio between number of successful corrections (successfully reducing the prediction error) and total number of data points. For each run on each dataset, this IR value is calculated, and the distribution of IR values over 100 independent runs (random dataset split except for MSD, random NN initialization and training) on each dataset is plotted in Figure~\ref{fig:dist_IR}. According to the results, the IR values for RIO are above 0.5 in most cases. For 7 datasets, IR values are above 0.5 in all 100 independent runs. For some runs in “yacht”, “ENB/h”, “CT”, and “MSD”, the IR values are above 0.8 or even above 0.9. All these observations show that RIO is making meaningful corrections instead of random perturbations. Results in Figure~\ref{fig:dist_IR} also provides useful information for practitioners: Although not all RIO calibrations improve the result, most of them do.

Same empirical analysis is also conducted for two RIO variants, namely R+I (predicting residuals with only input kernel) and R+O (predicting residuals with only output kernel). Figure~\ref{fig:dist_pred_in}, \ref{fig:comp_output_in} and \ref{fig:dist_IR_in} show results for R+I, and Figure~\ref{fig:dist_pred_out}, \ref{fig:comp_output_out} and \ref{fig:dist_IR_out} show results for R+O. From the results, the output kernel is helpful in problems where input kernel does not work well (“CT” and “MSD”), and it also shows more robust performance in terms of improvement ratio (IR) in most datasets. However it is still generally worse than full RIO. More specifically, “R+I” shows an extremely low IR in “CT” dataset (see Figure~\ref{fig:dist_IR_in}). After investigation, this is because the input kernel itself is not able to learn anything from the complex high-dimensional input space, so it treats everything as noise. As a result, it keeps the NN output unchanged during correction in most cases. Applying output kernel instead solves the issue. After comparing Figure~\ref{fig:comp_output}, \ref{fig:comp_output_in} and \ref{fig:comp_output_out}, it can be observed that the behaviors of RIO are either a mixture or selection between R+I and R+O. This means RIO with I/O kernel is able to choose the best kernel among these two or combines both if needed.

\begin{figure}
	\centering
	\includegraphics[width=0.32\linewidth]{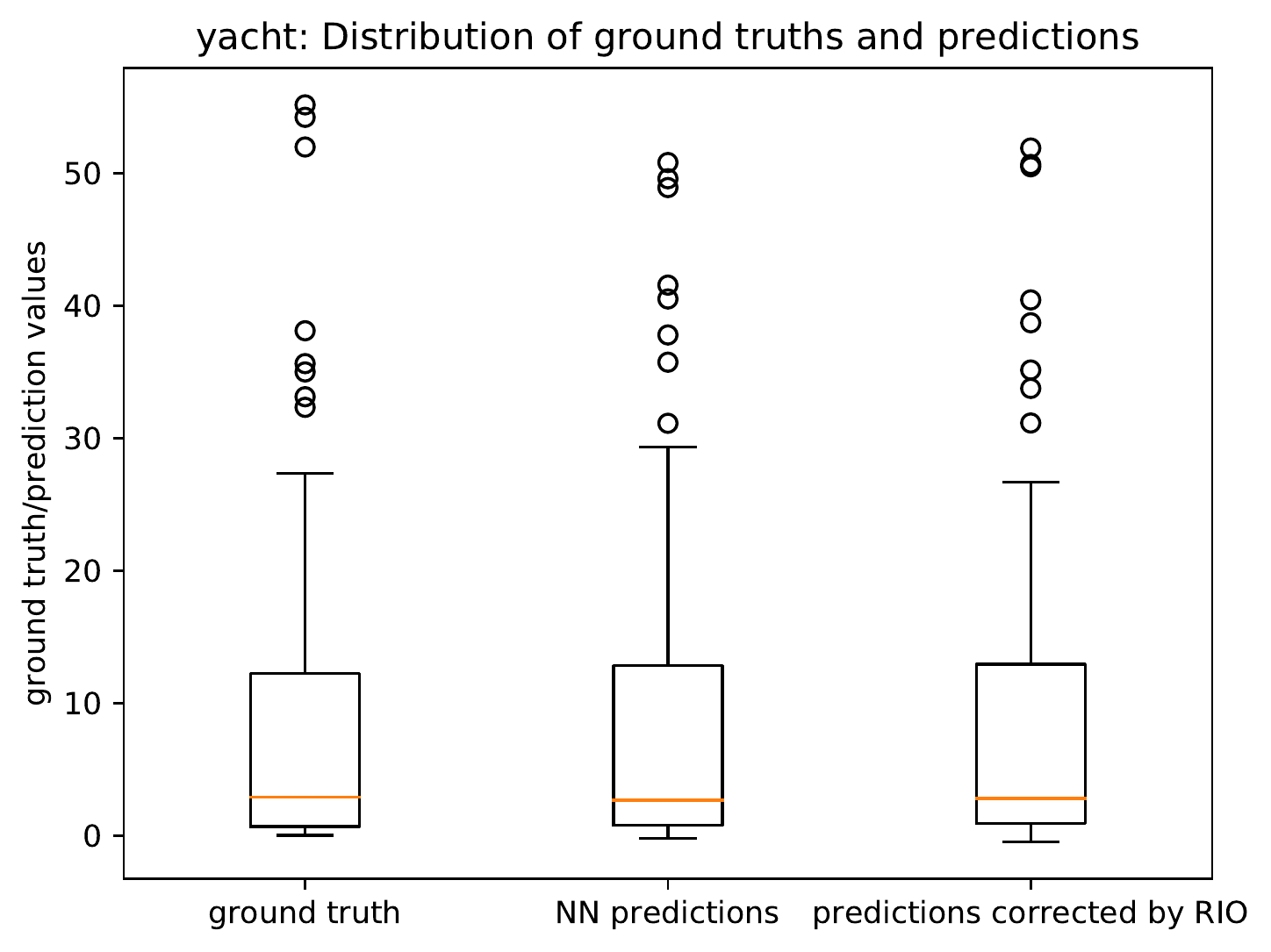}
	\includegraphics[width=0.32\linewidth]{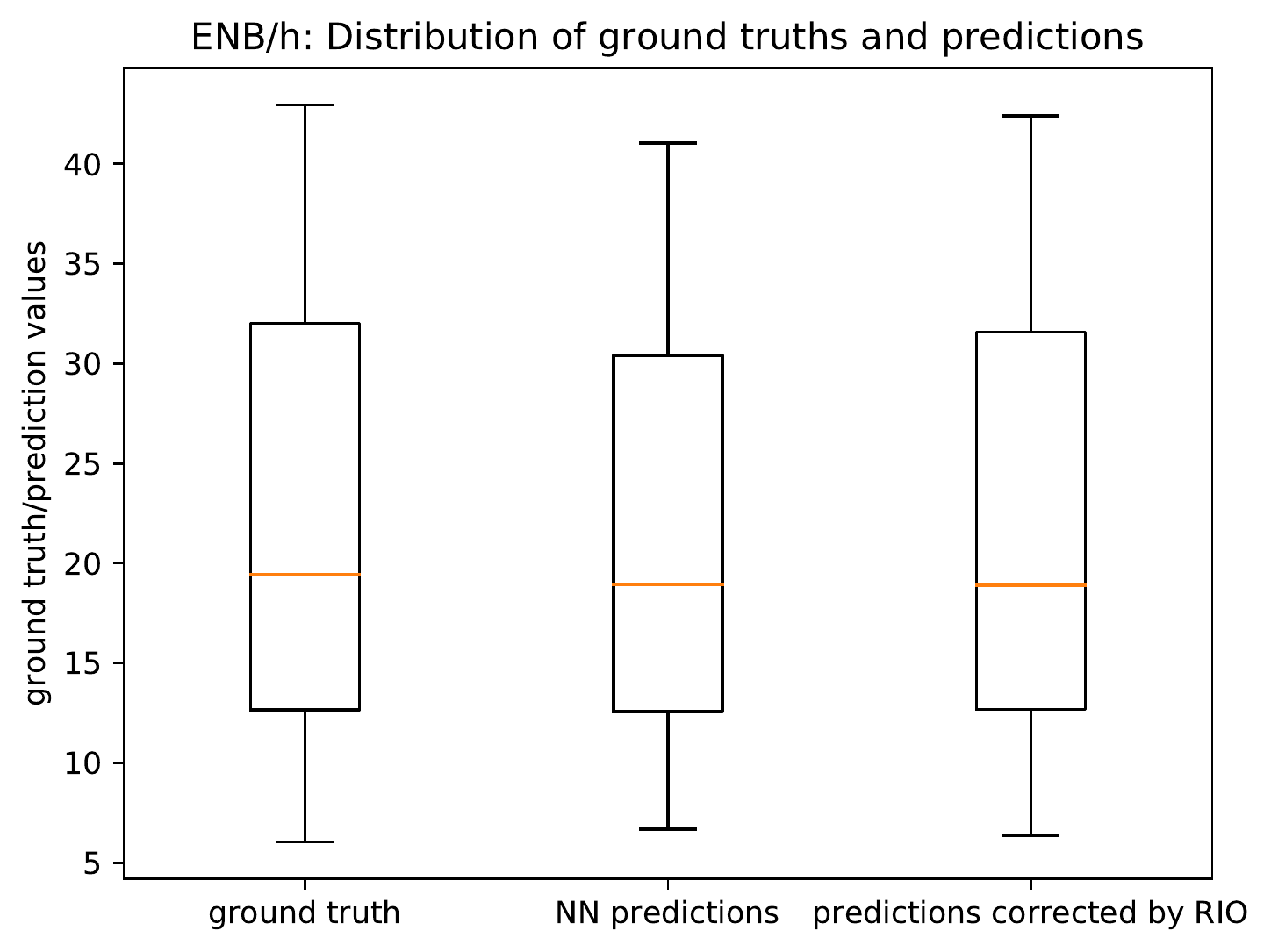}
	\includegraphics[width=0.32\linewidth]{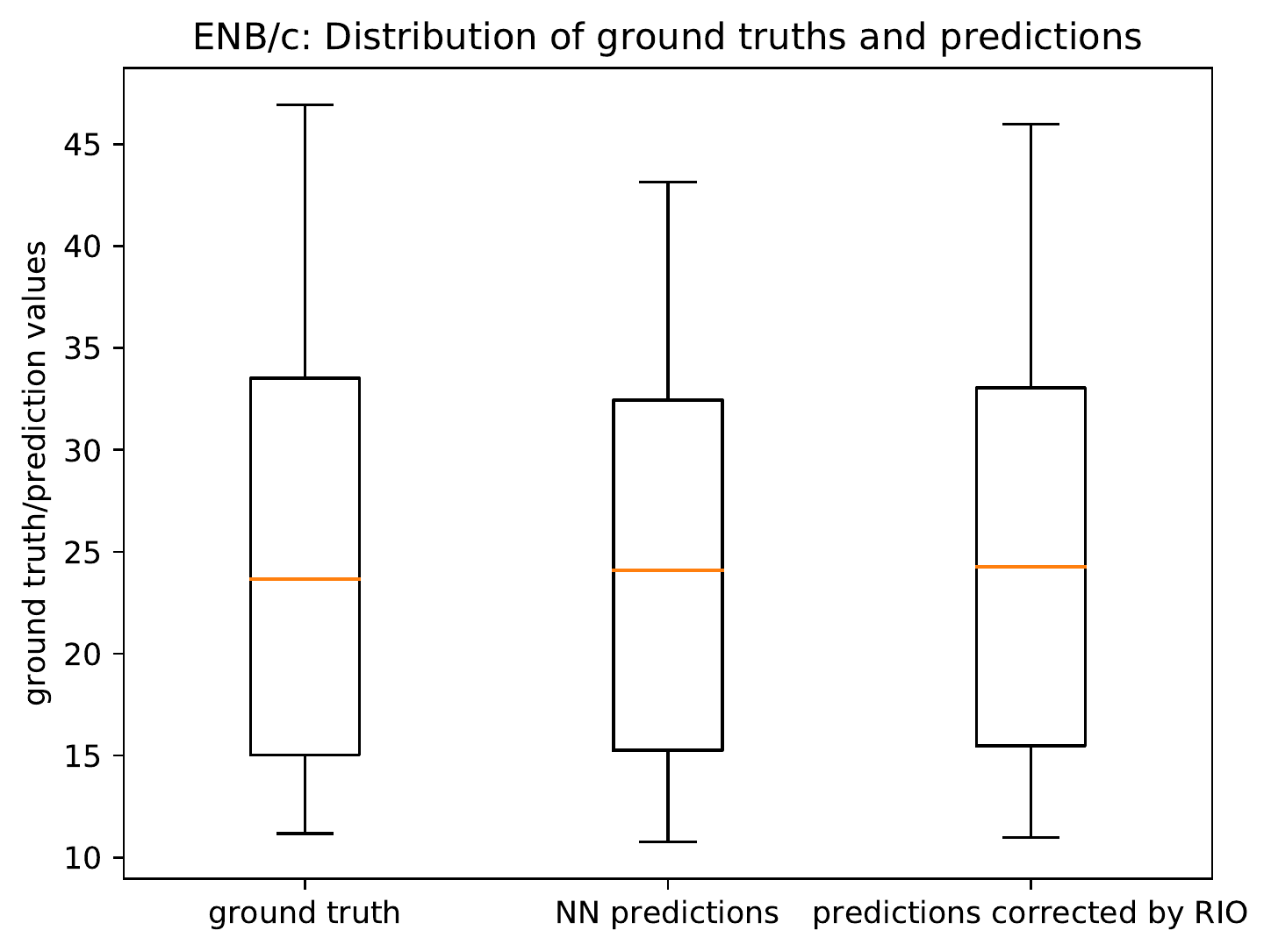}
	\includegraphics[width=0.32\linewidth]{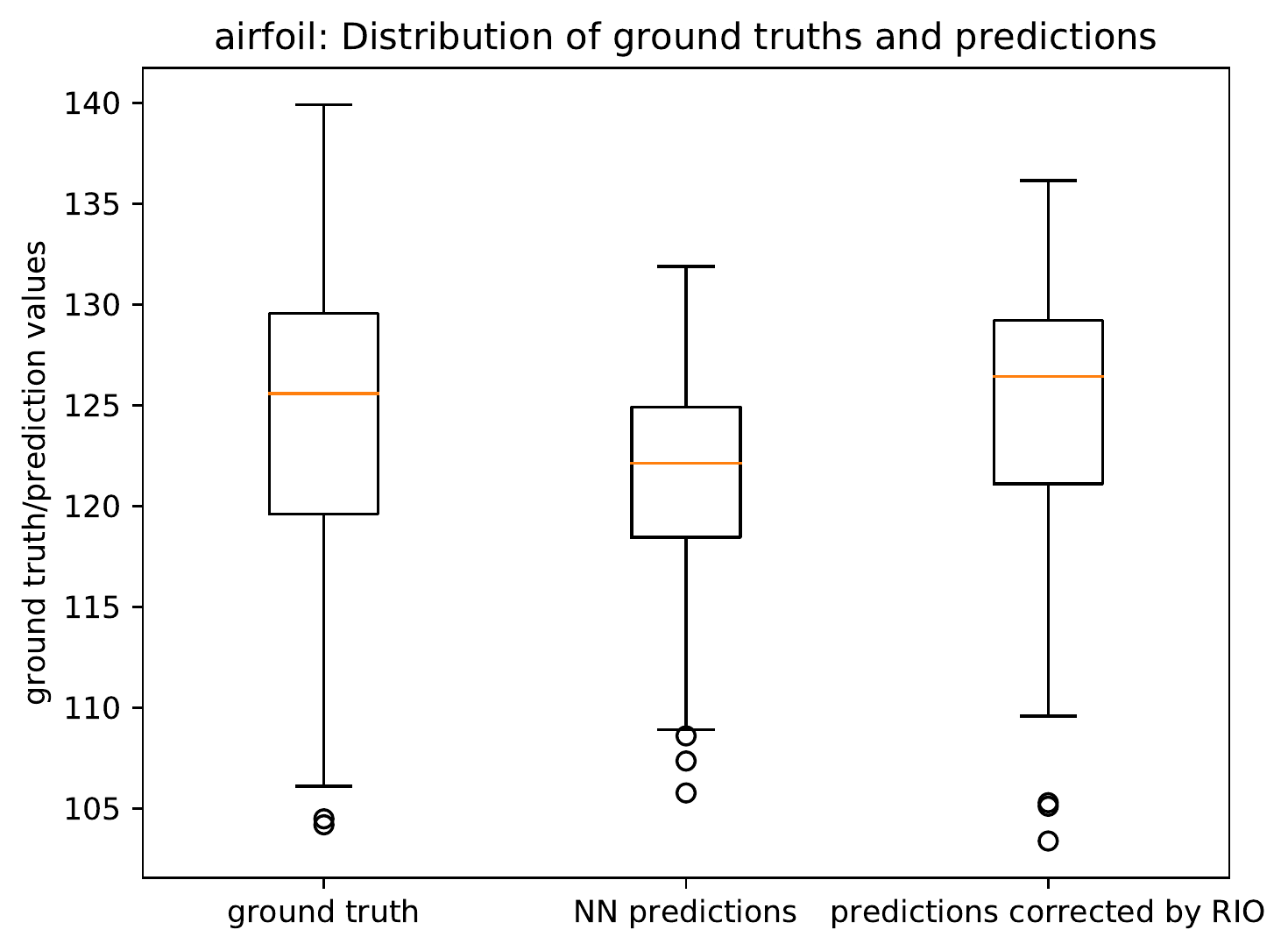}
	\includegraphics[width=0.32\linewidth]{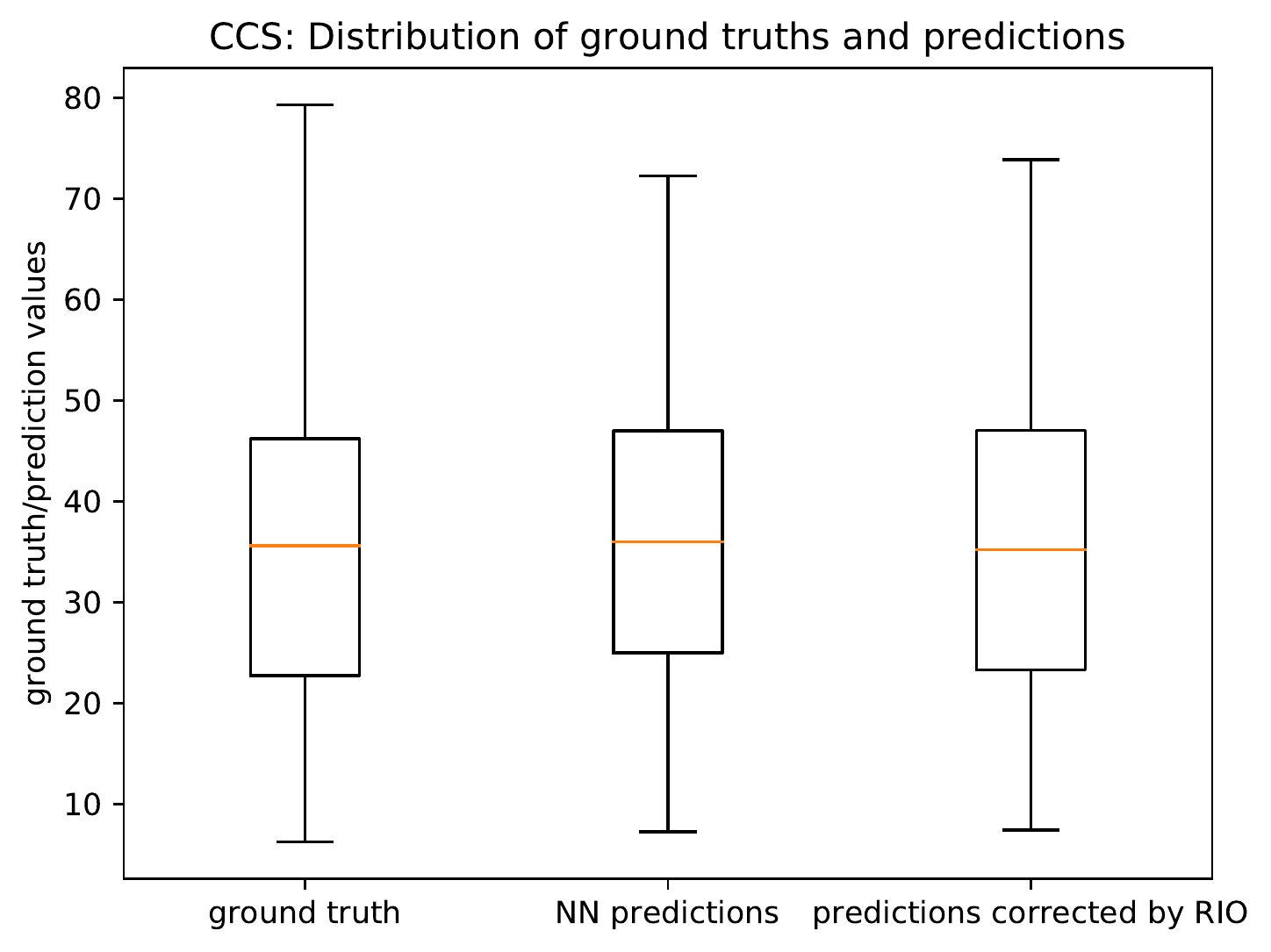}
	\includegraphics[width=0.32\linewidth]{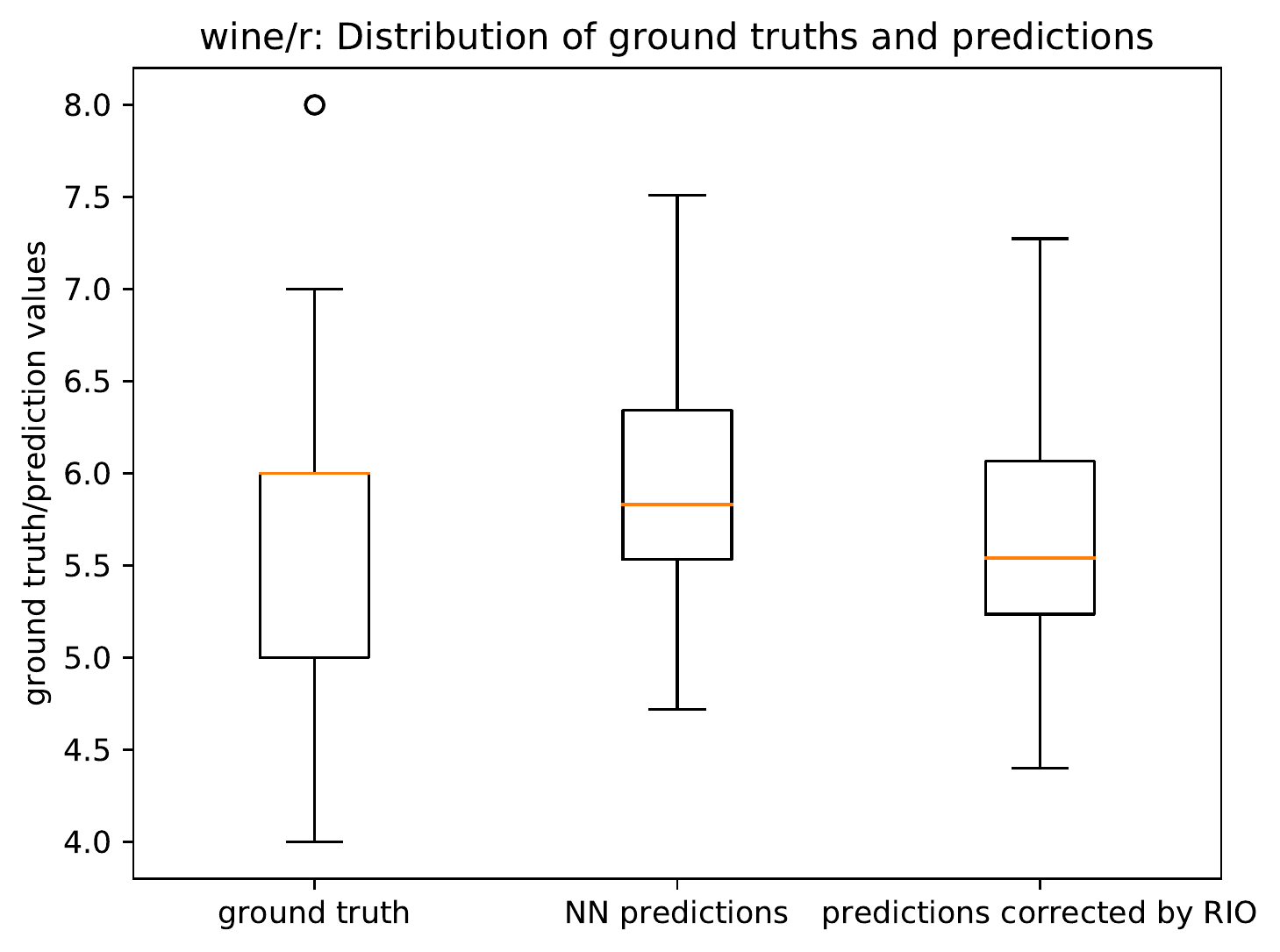}
	\includegraphics[width=0.32\linewidth]{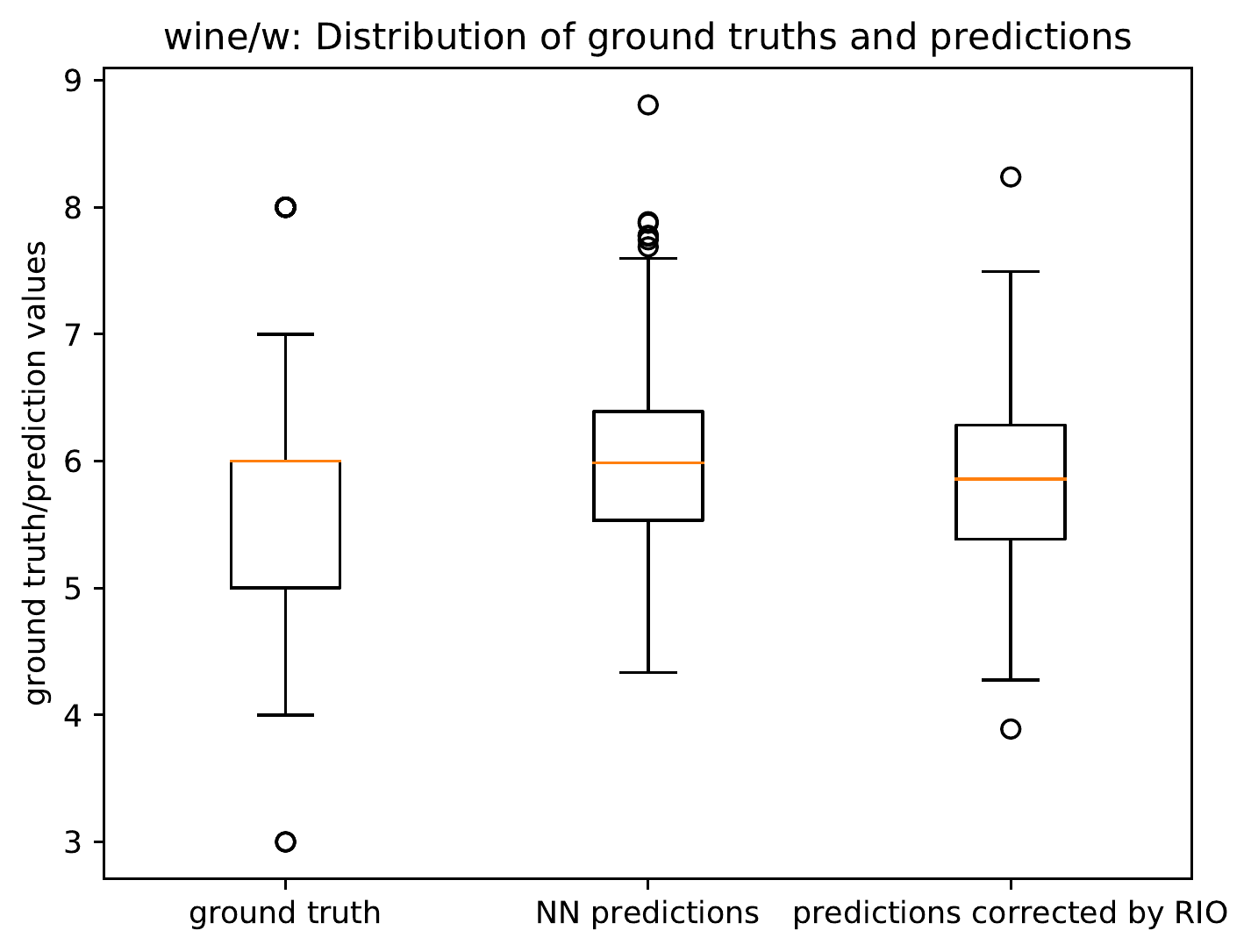}
	\includegraphics[width=0.32\linewidth]{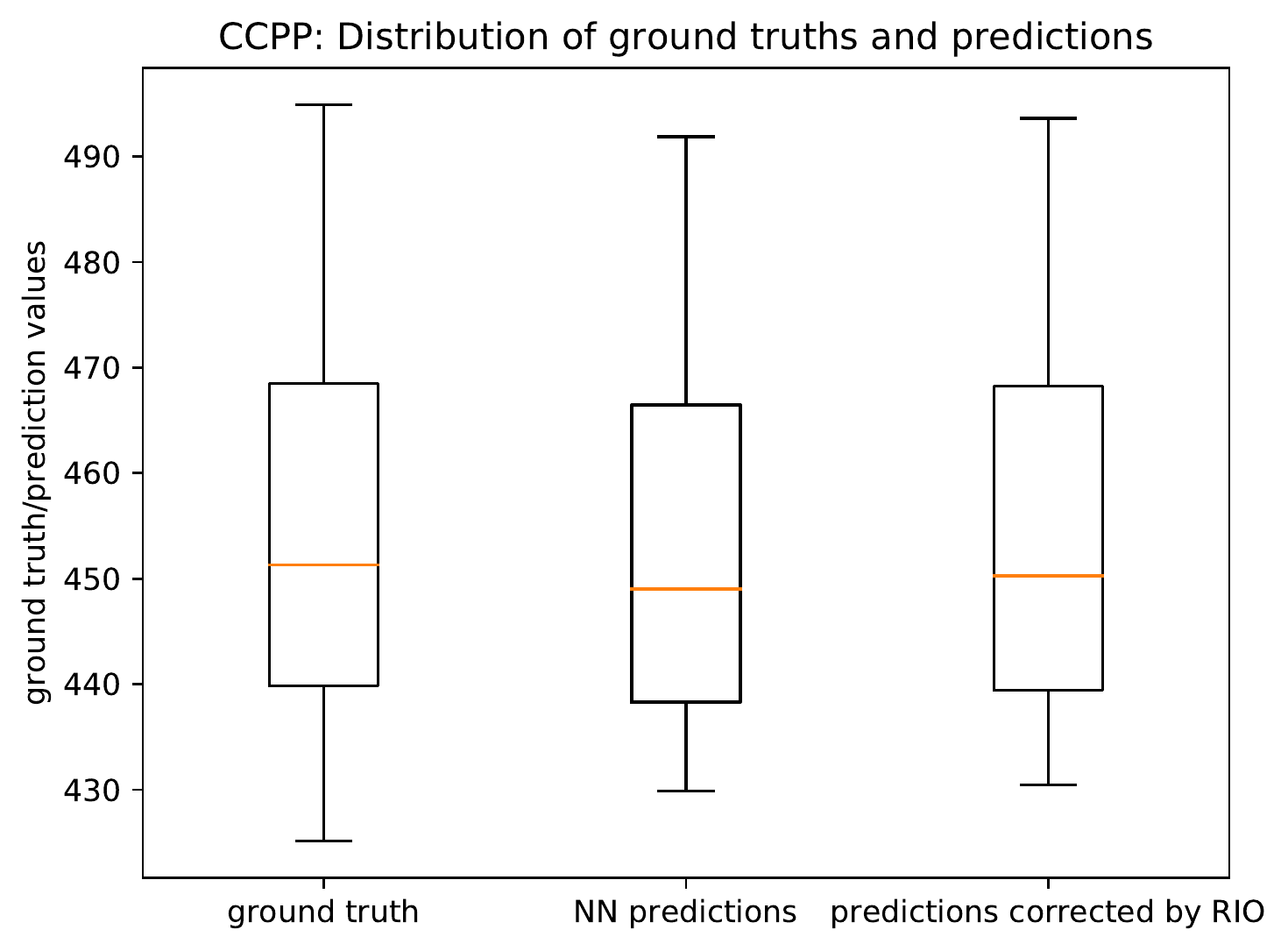}
	\includegraphics[width=0.32\linewidth]{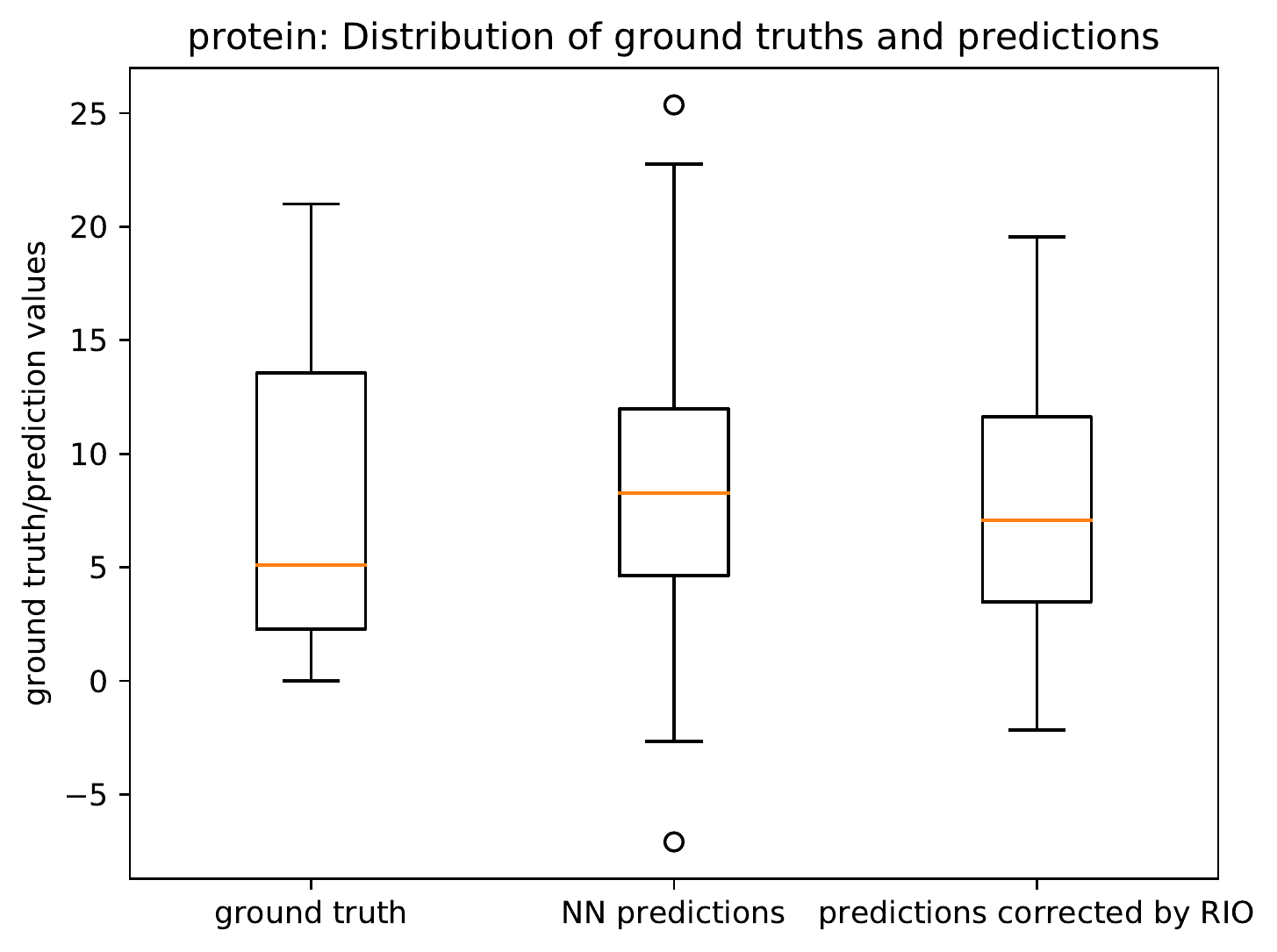}
	\includegraphics[width=0.32\linewidth]{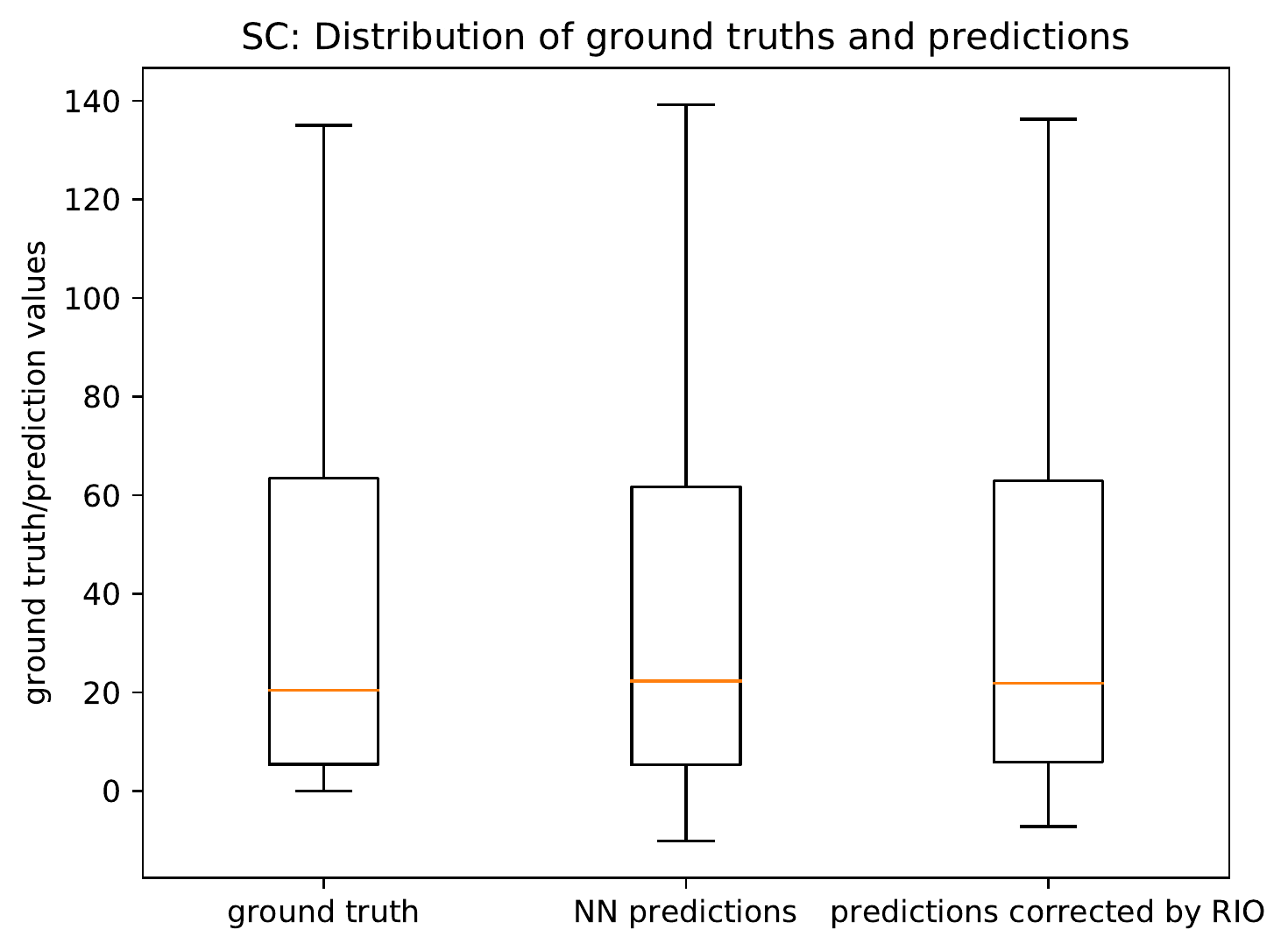}
	\includegraphics[width=0.32\linewidth]{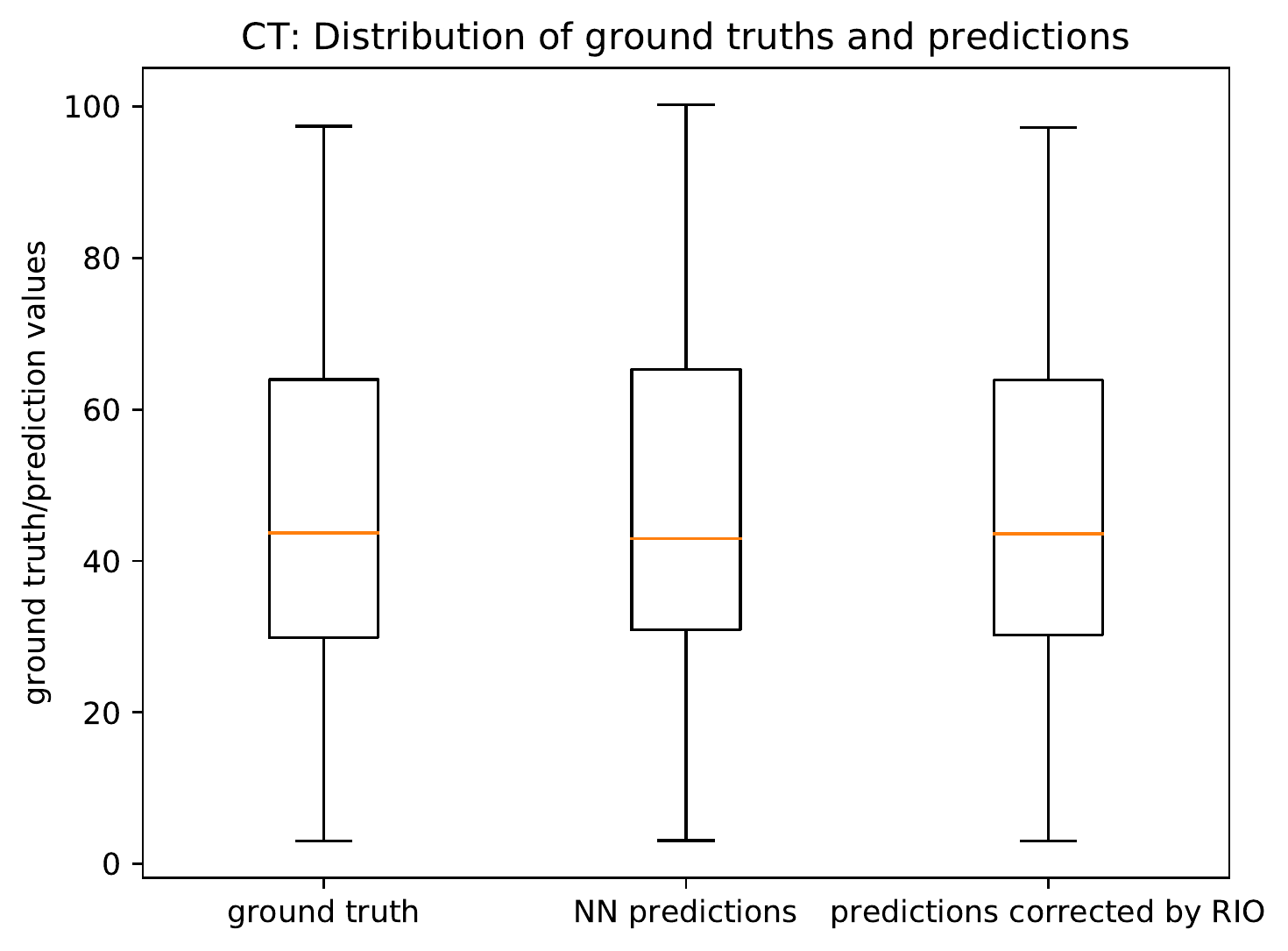}
	\includegraphics[width=0.32\linewidth]{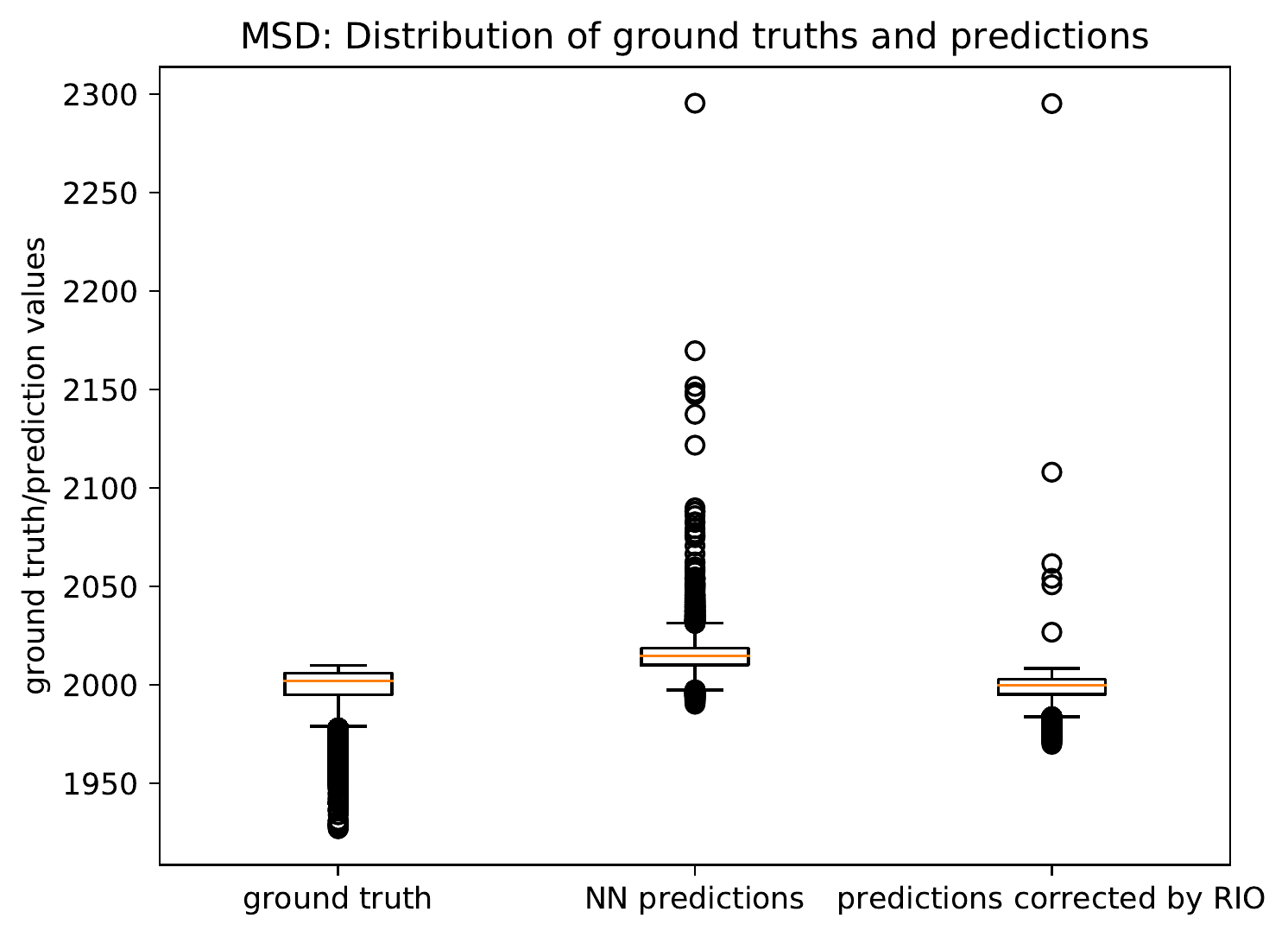}
	\caption{\textbf{Distribution of Ground Truths and NN/RIO-corrected predictions.} \label{fig:dist_pred}
		The box extends from the lower to upper quartile values of the data points, with a line at the median. The whiskers extend from the box to show the range of the data. Flier points are those past the end of the whiskers, indicating the outliers. 
	}
\end{figure}
\begin{figure}
	\centering
	\includegraphics[width=0.32\linewidth]{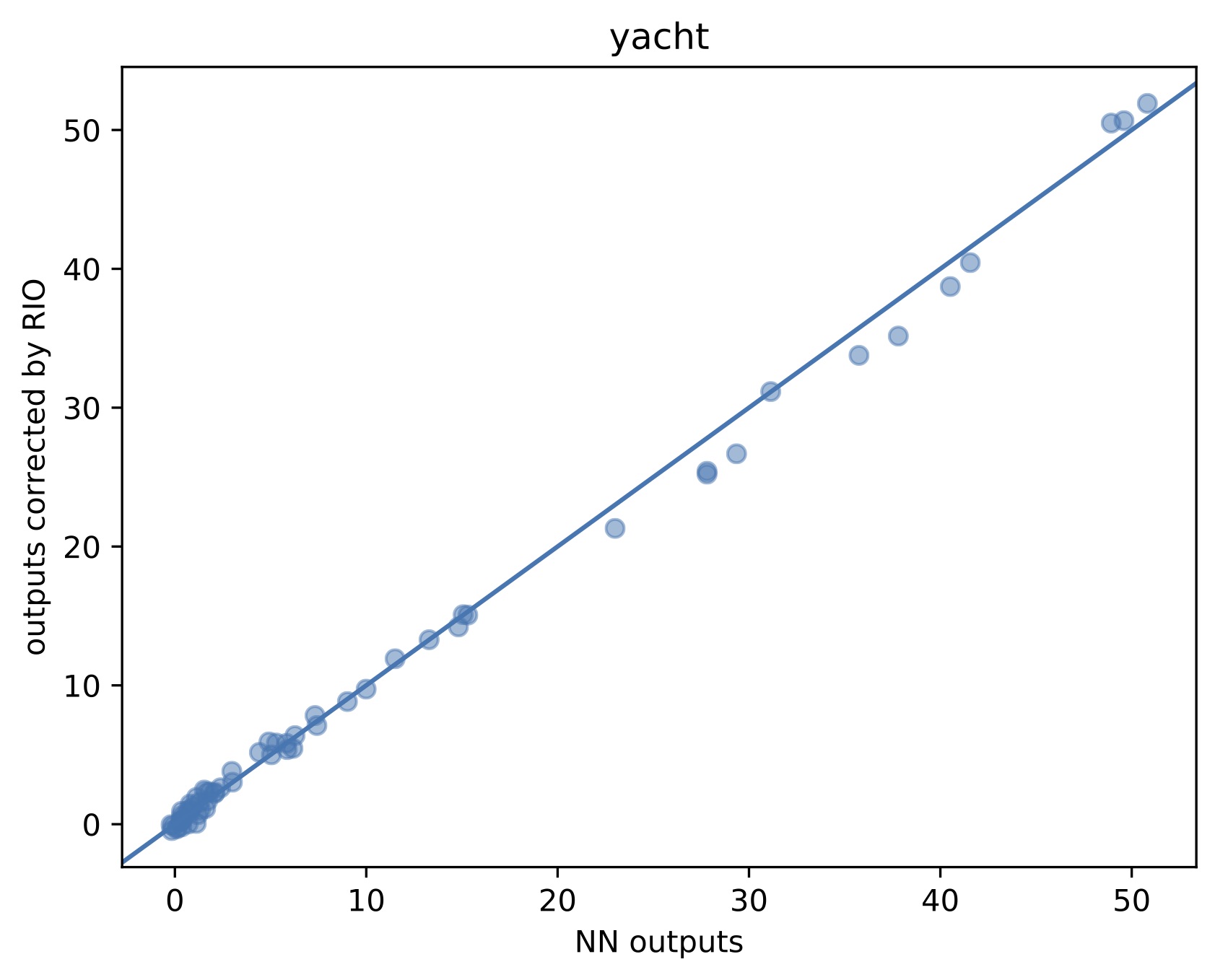}
	\includegraphics[width=0.32\linewidth]{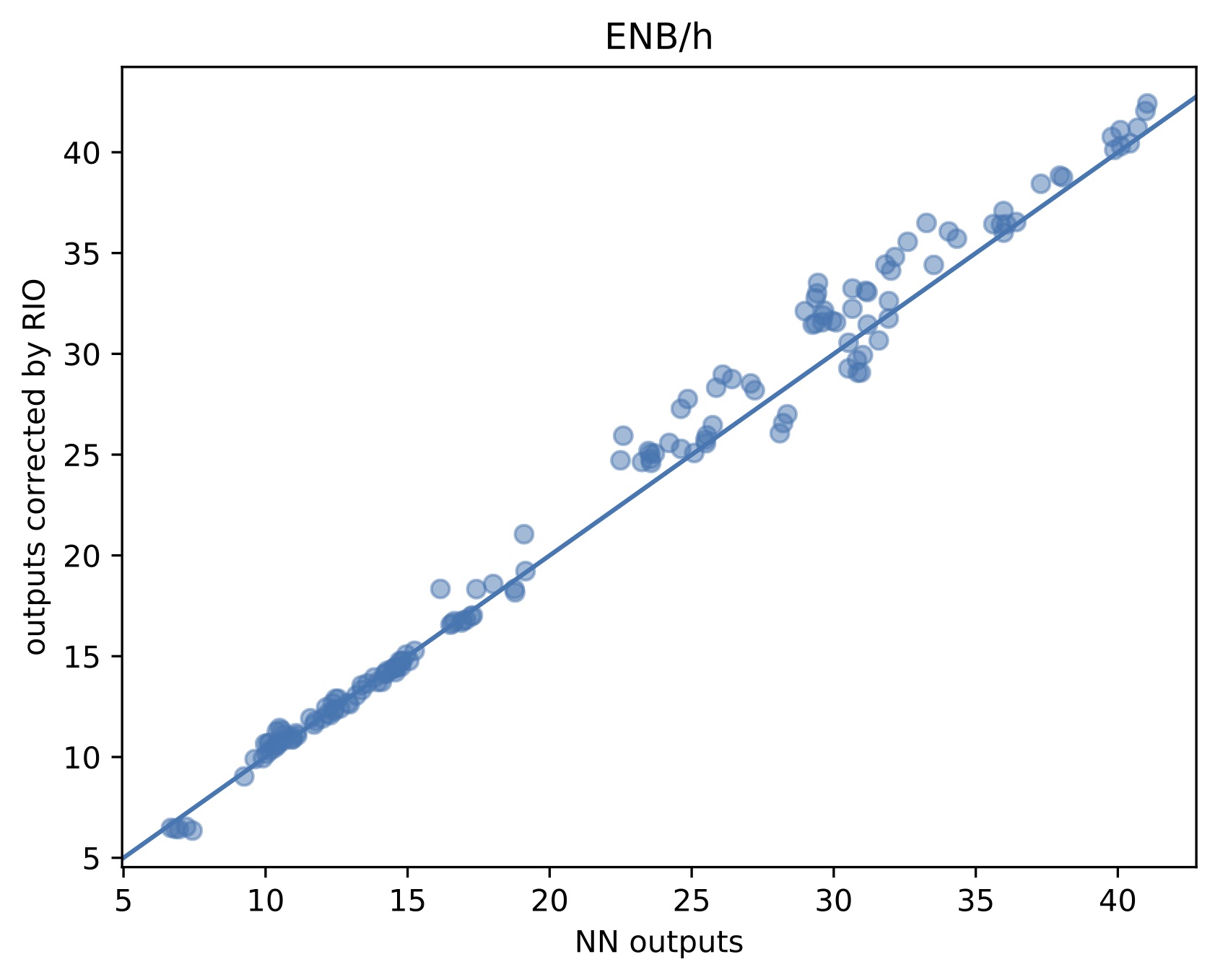}
	\includegraphics[width=0.32\linewidth]{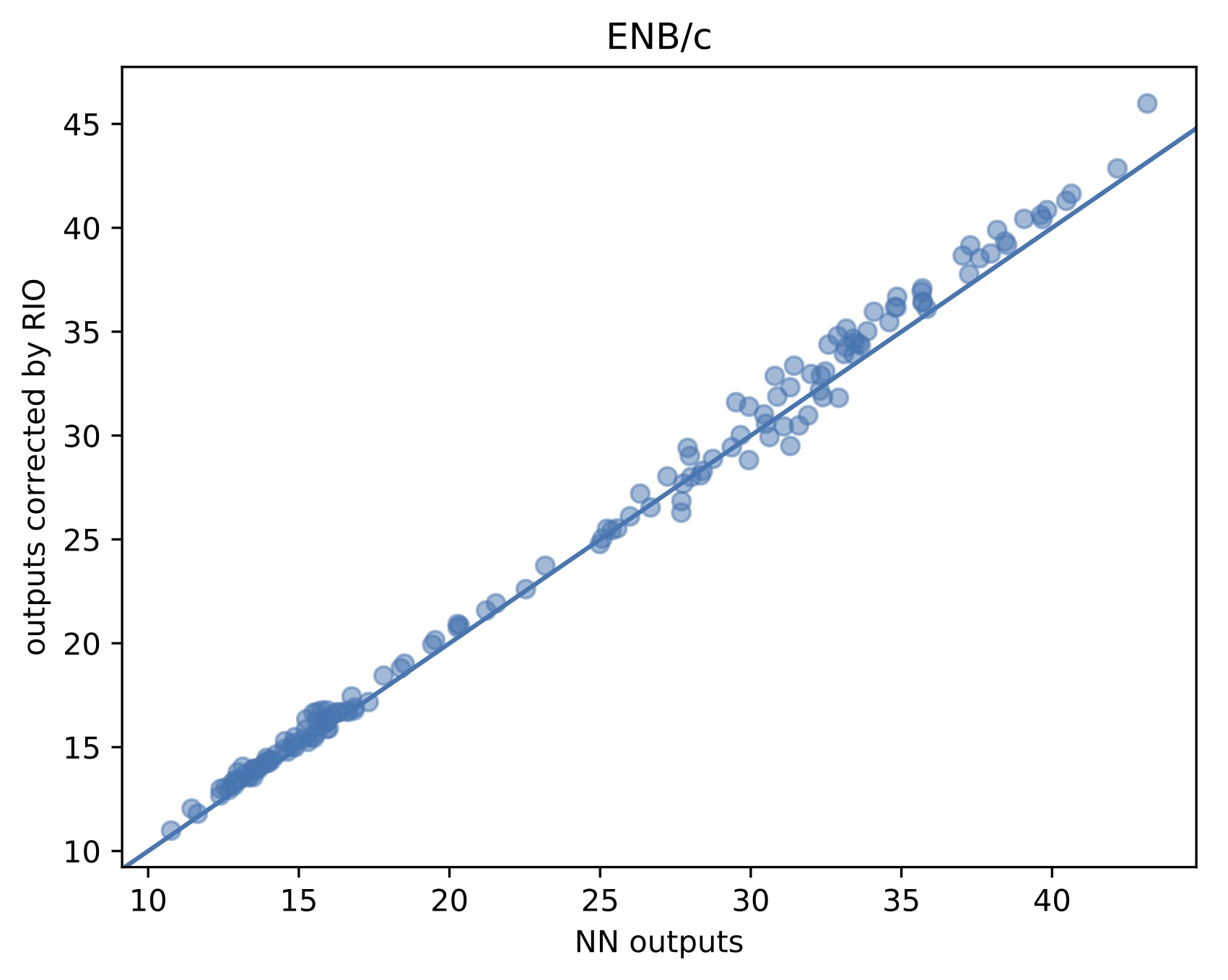}
	\includegraphics[width=0.32\linewidth]{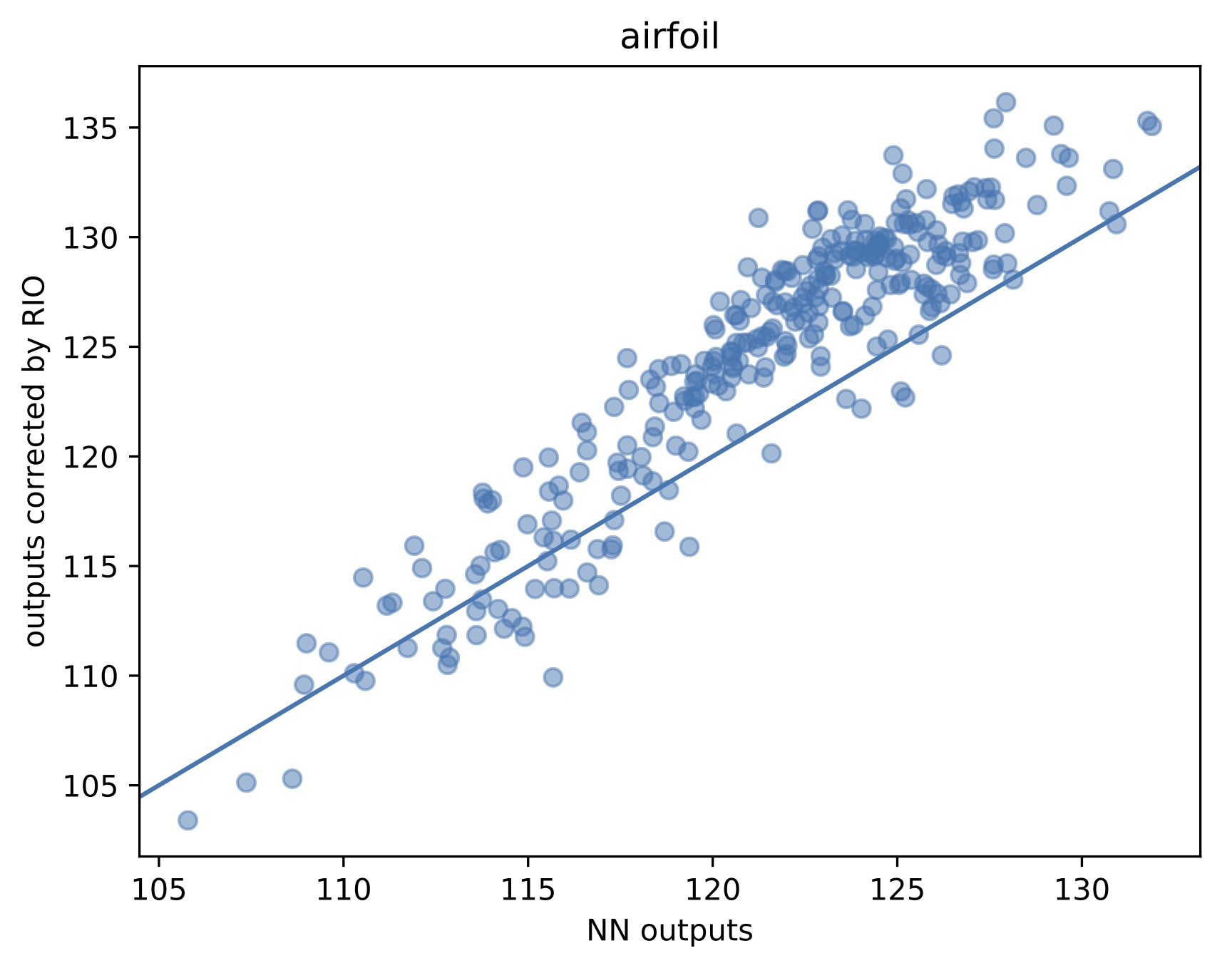}
	\includegraphics[width=0.32\linewidth]{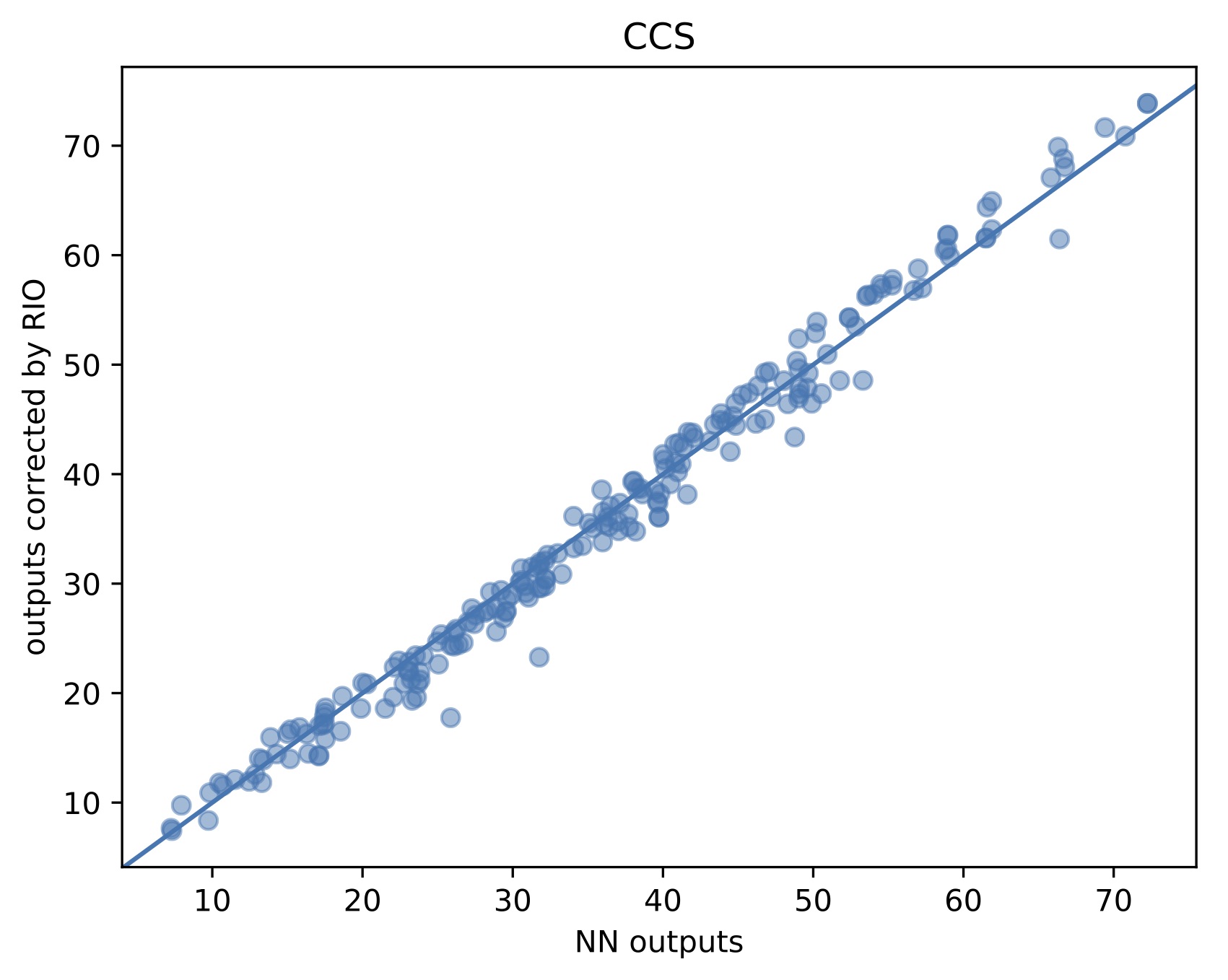}
	\includegraphics[width=0.32\linewidth]{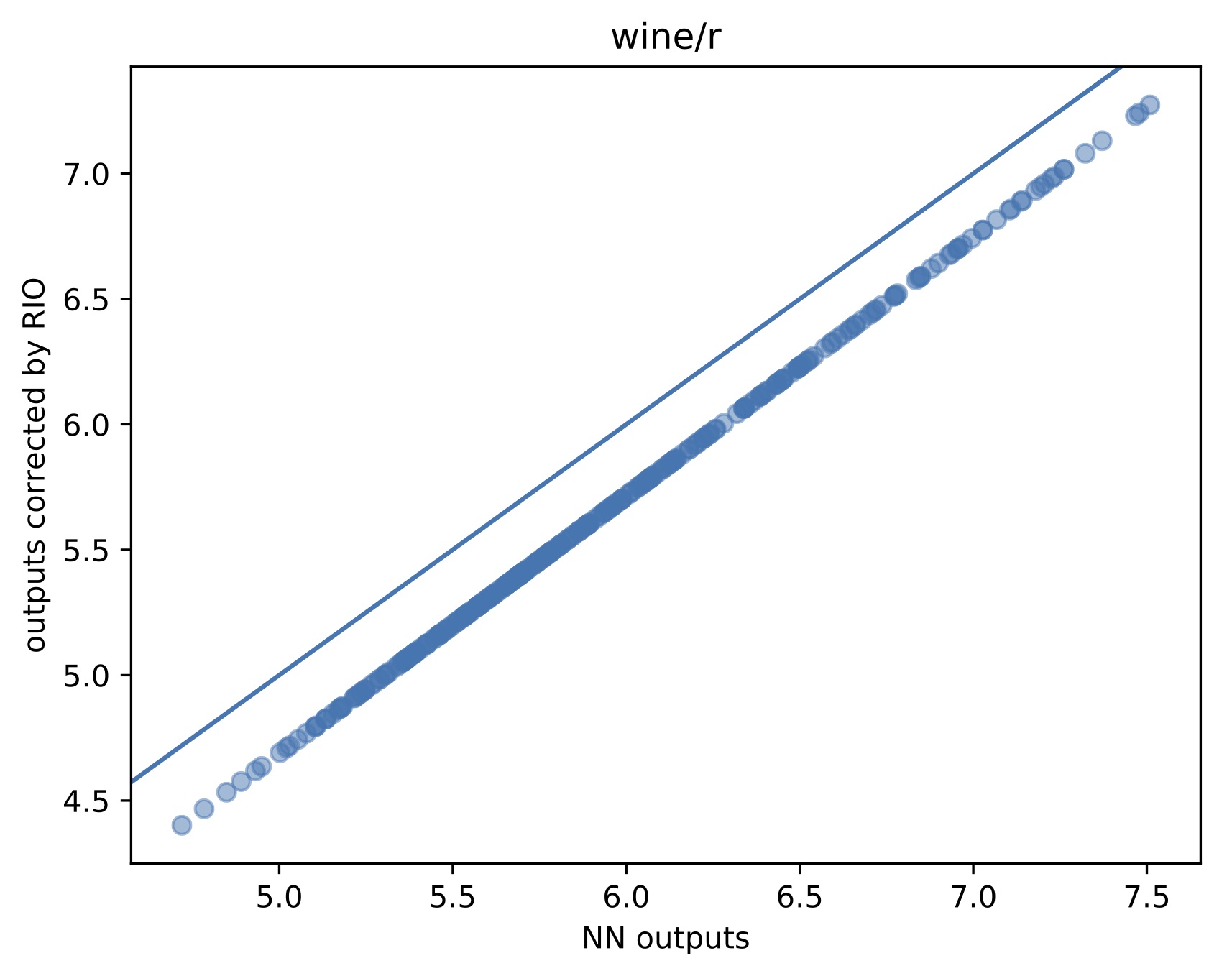}
	\includegraphics[width=0.32\linewidth]{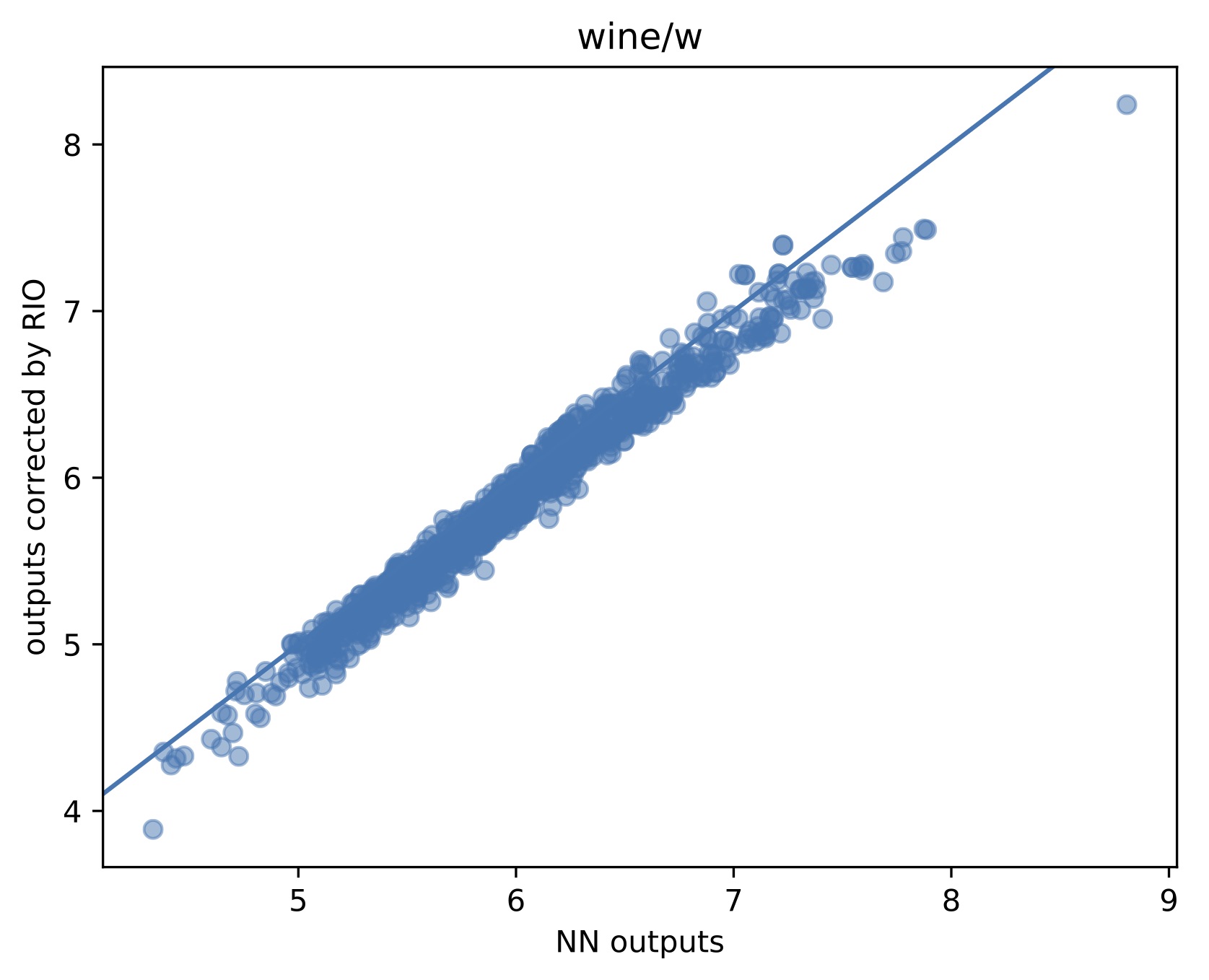}
	\includegraphics[width=0.32\linewidth]{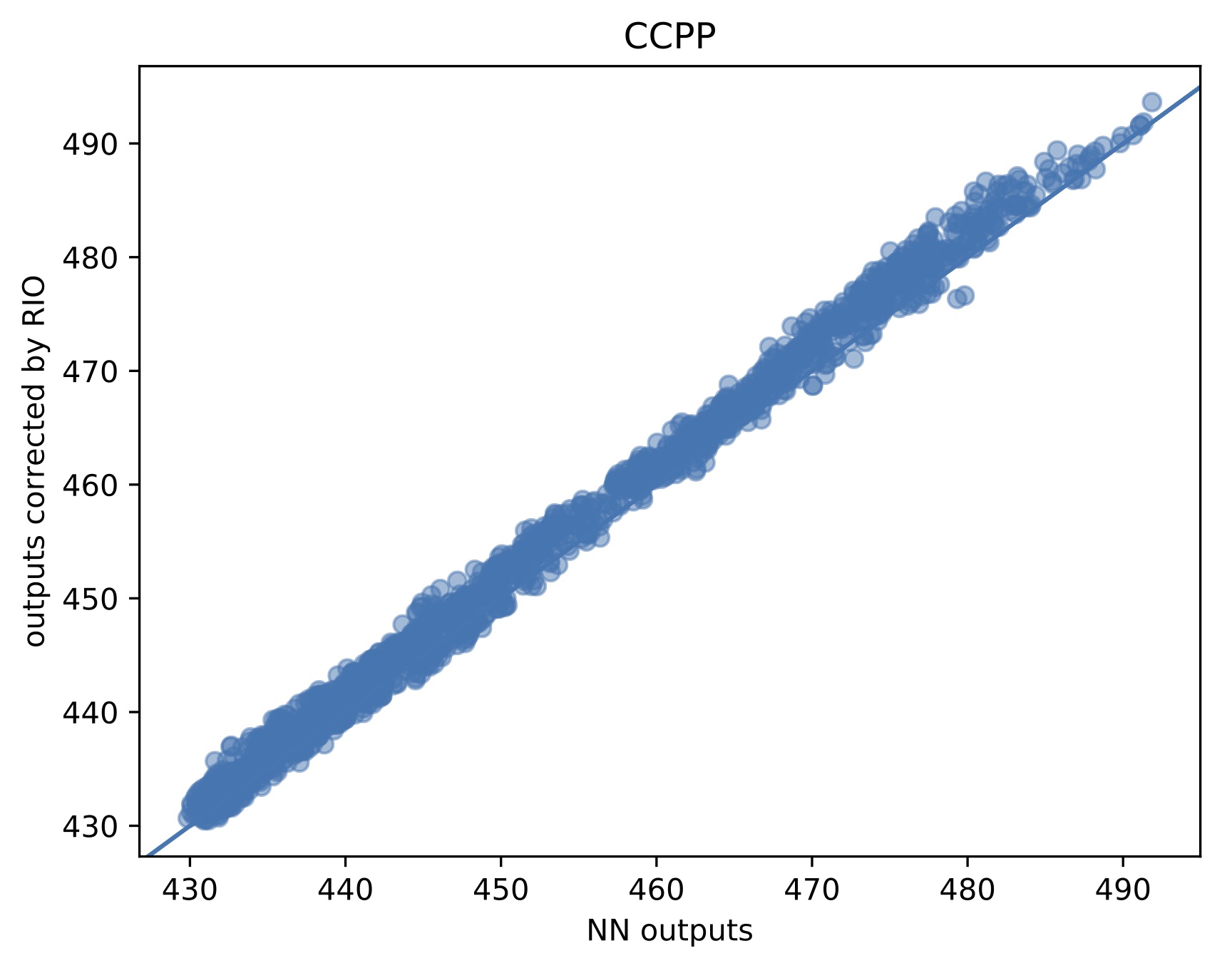}
	\includegraphics[width=0.32\linewidth]{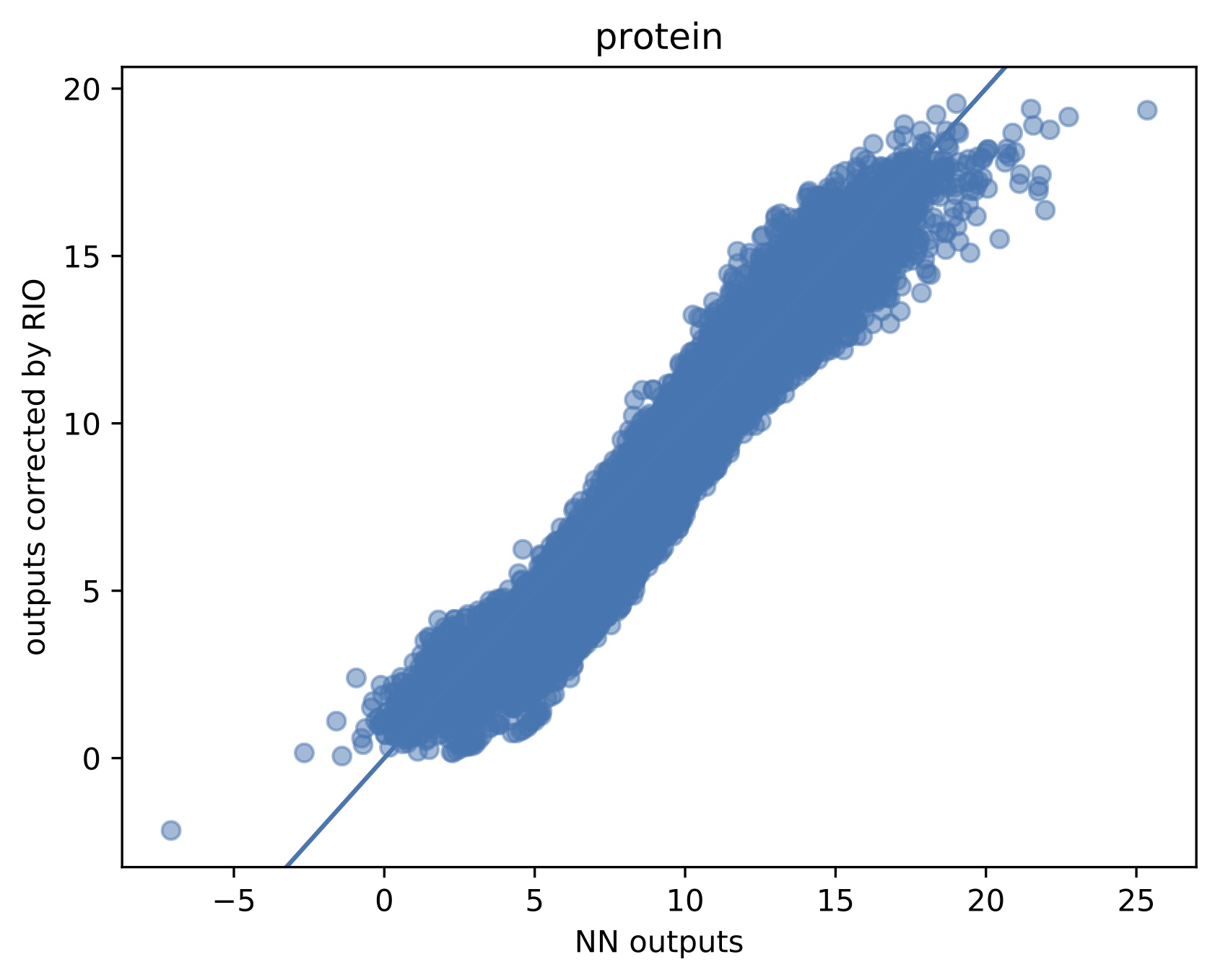}
	\includegraphics[width=0.32\linewidth]{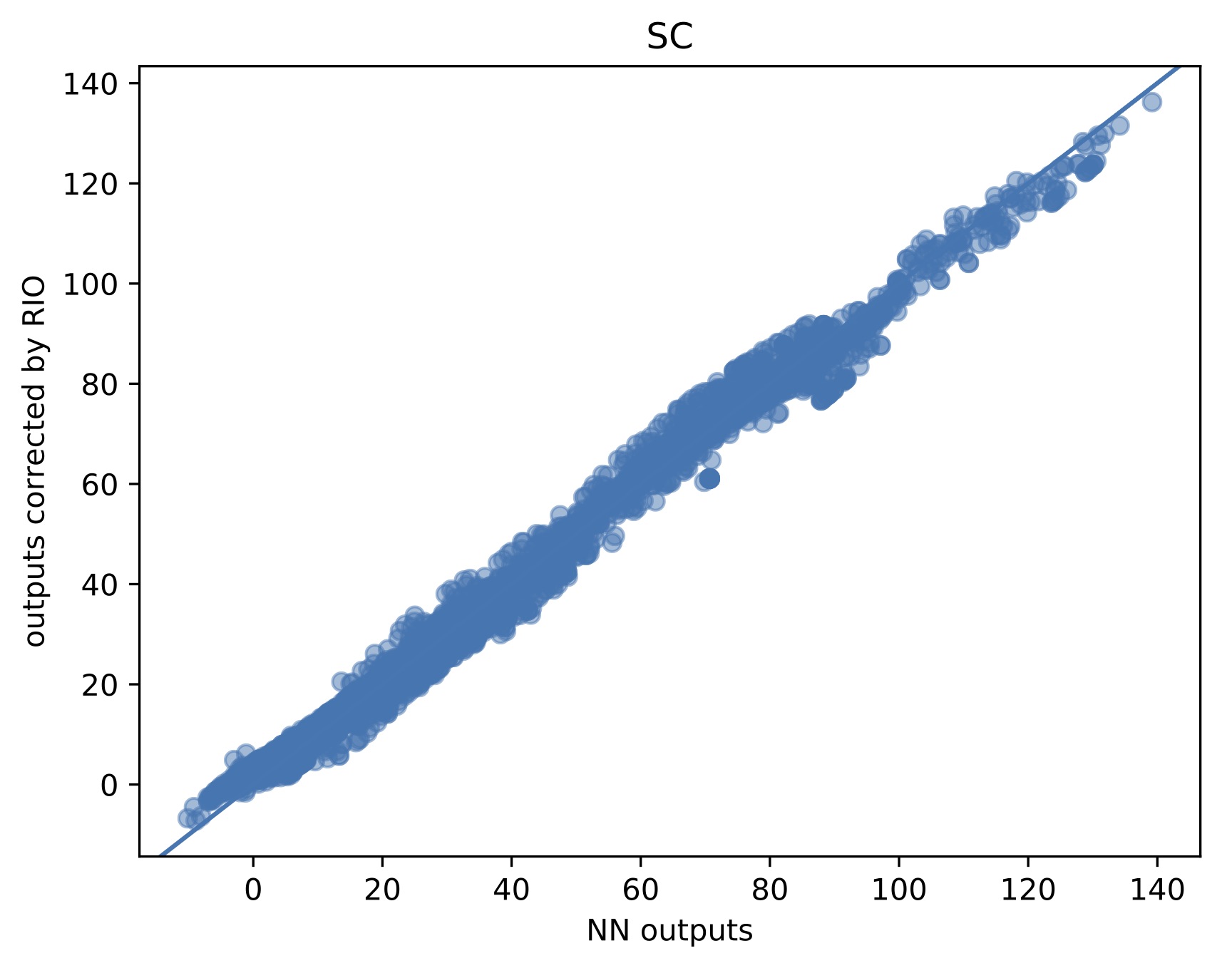}
	\includegraphics[width=0.32\linewidth]{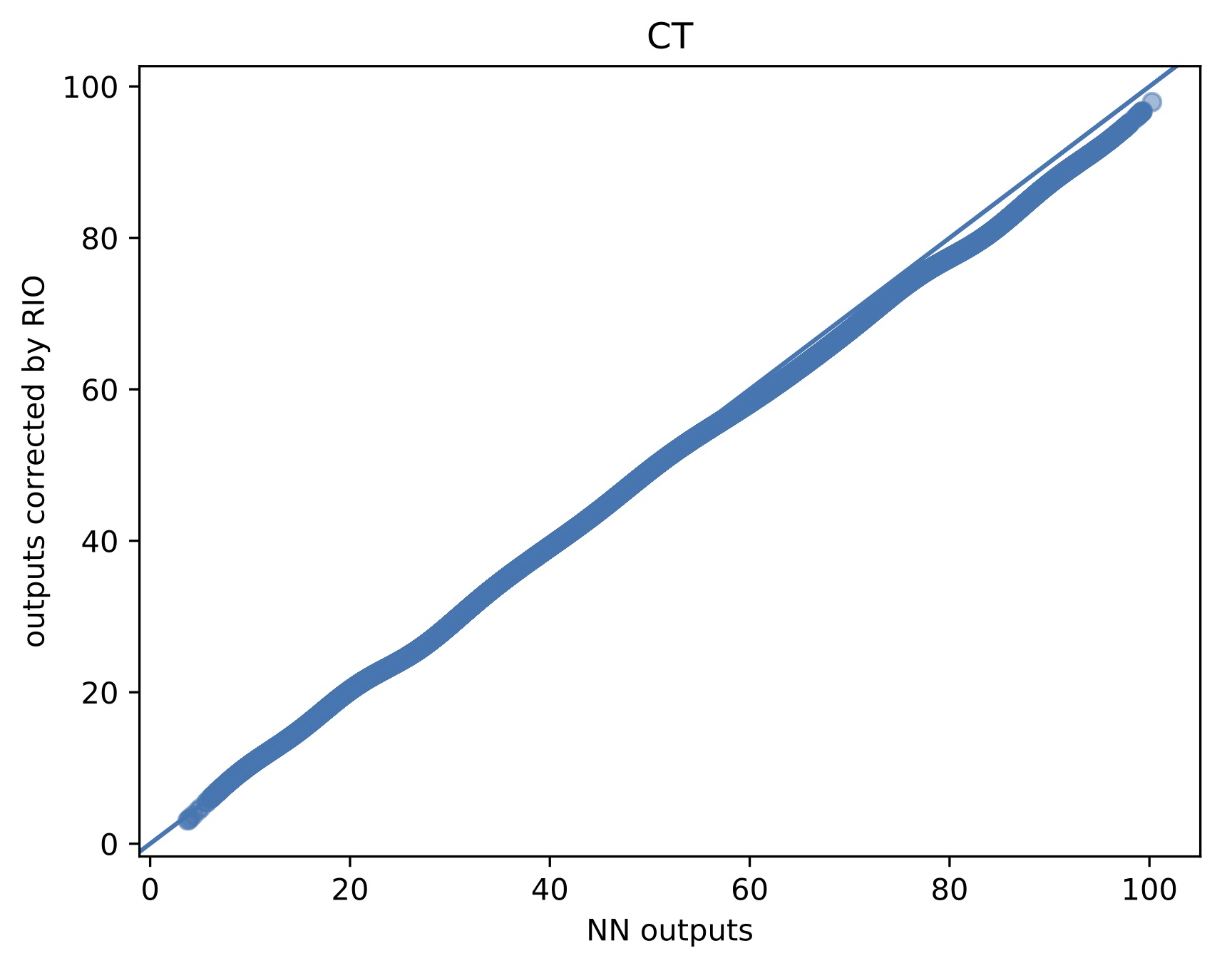}
	\includegraphics[width=0.32\linewidth]{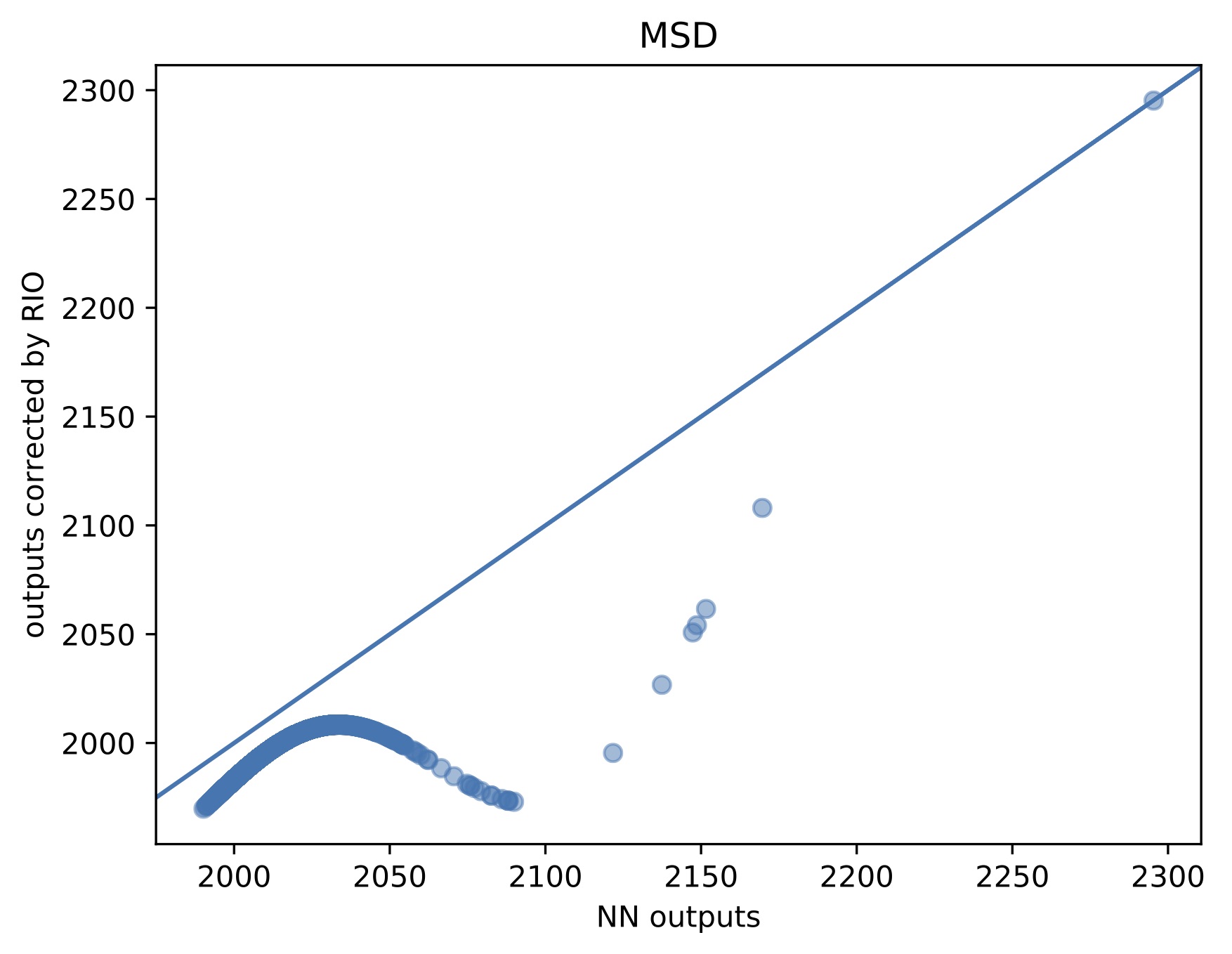}
	\caption{\textbf{Comparison between NN and RIO-corrected outputs.} \label{fig:comp_output}
		Each dot represents a data point. The horizontal axis denote the original NN predictions, and the vertical axis the corresponding predictions after RIO corrections. The solid line indicates where NN predictions are same as RIO-corrected predictions (i.e., no change in output). This figure shows that RIO exhibits diverse behavior in how it calibrates output.
	}
\end{figure}
\begin{figure}
	\centering
	\includegraphics[width=0.96\linewidth]{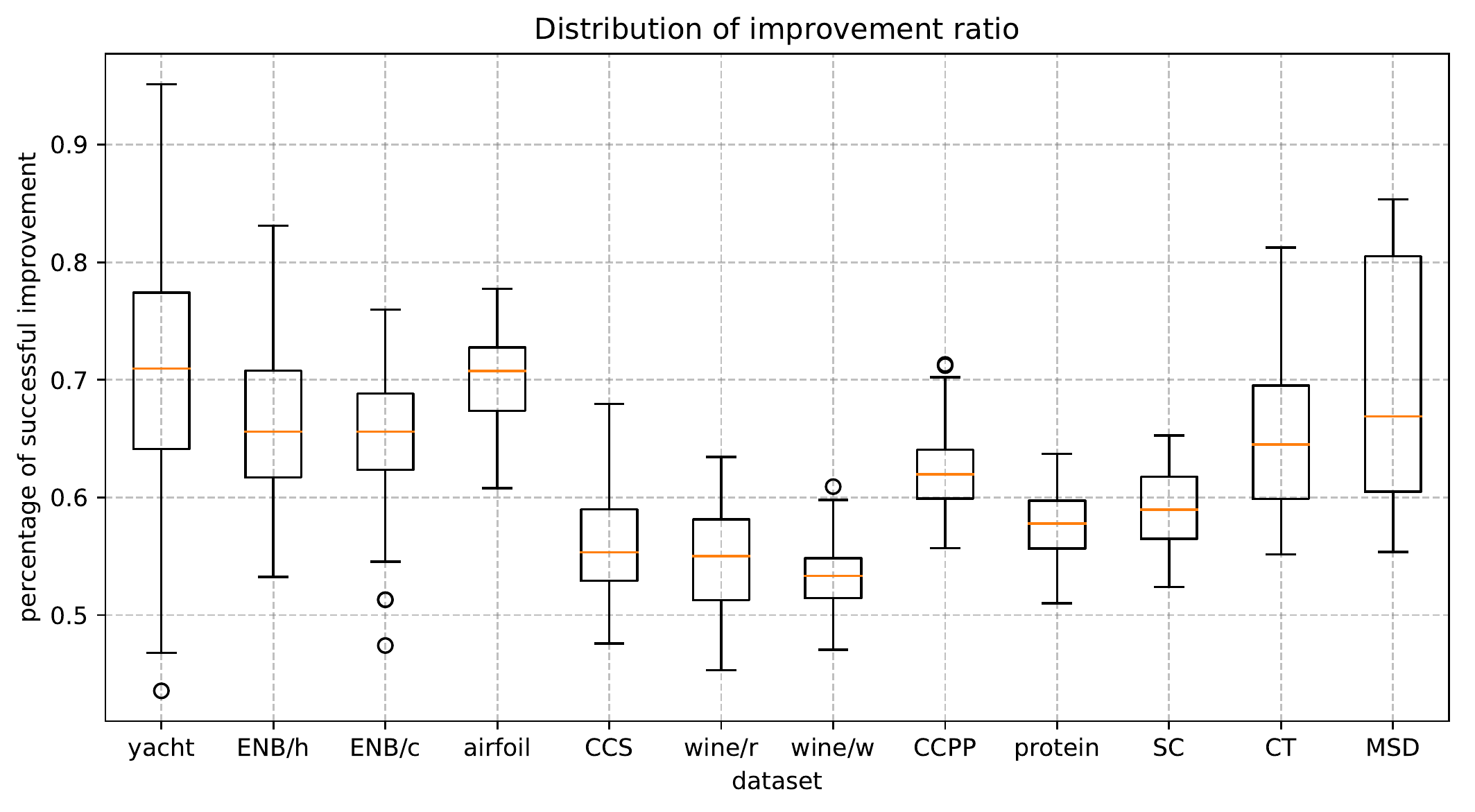}
	\caption{\textbf{Distribution of Impovement Ratio.} \label{fig:dist_IR}
		The box extends from the lower to upper quartile values of the data (each data point represents an independent experimental run), with a line at the median. The whiskers extend from the box to show the range of the data. Flier points are those past the end of the whiskers, indicating the outliers.
	}
\end{figure}

\begin{figure}
	\centering
	\includegraphics[width=0.32\linewidth]{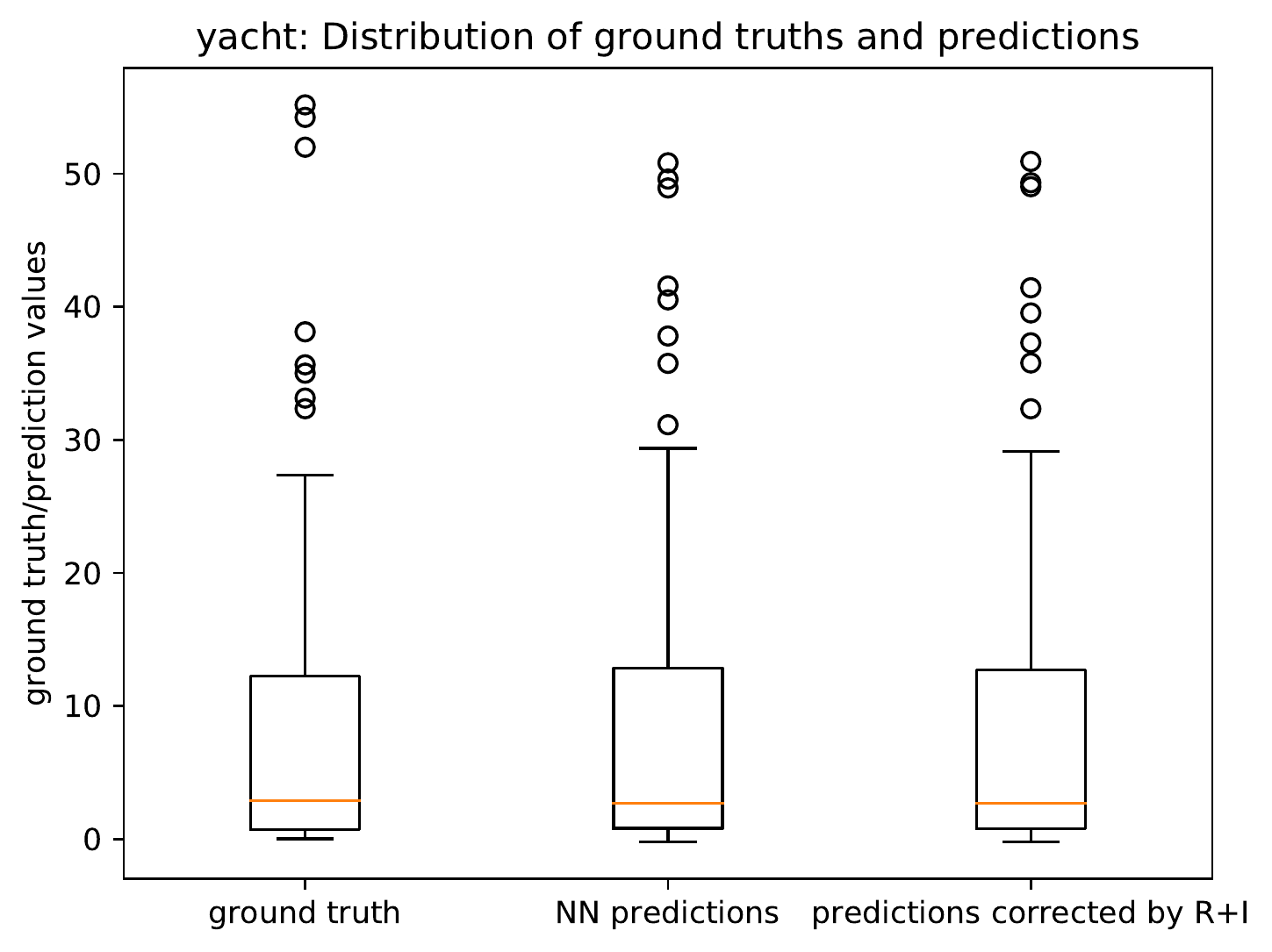}
	\includegraphics[width=0.32\linewidth]{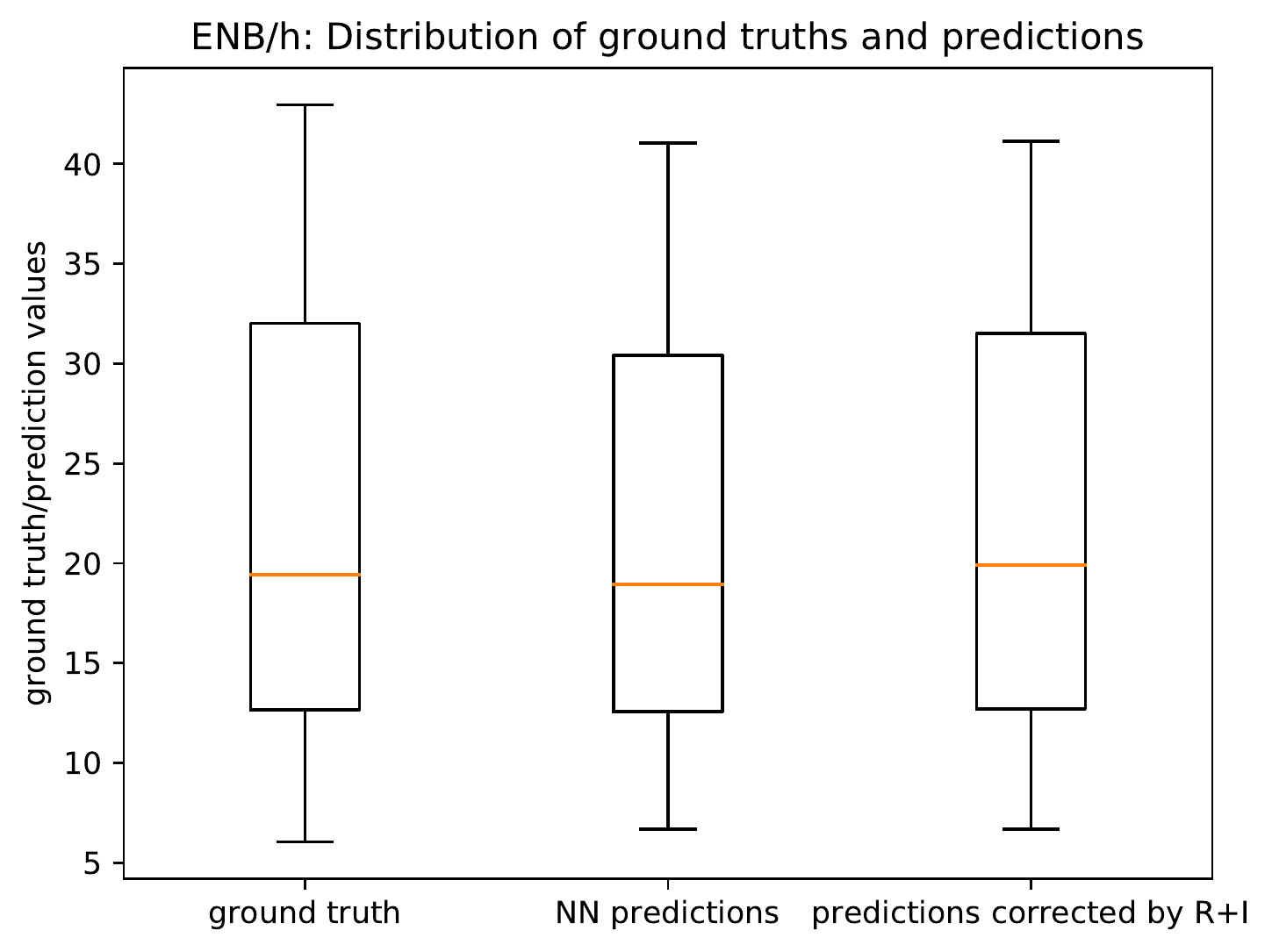}
	\includegraphics[width=0.32\linewidth]{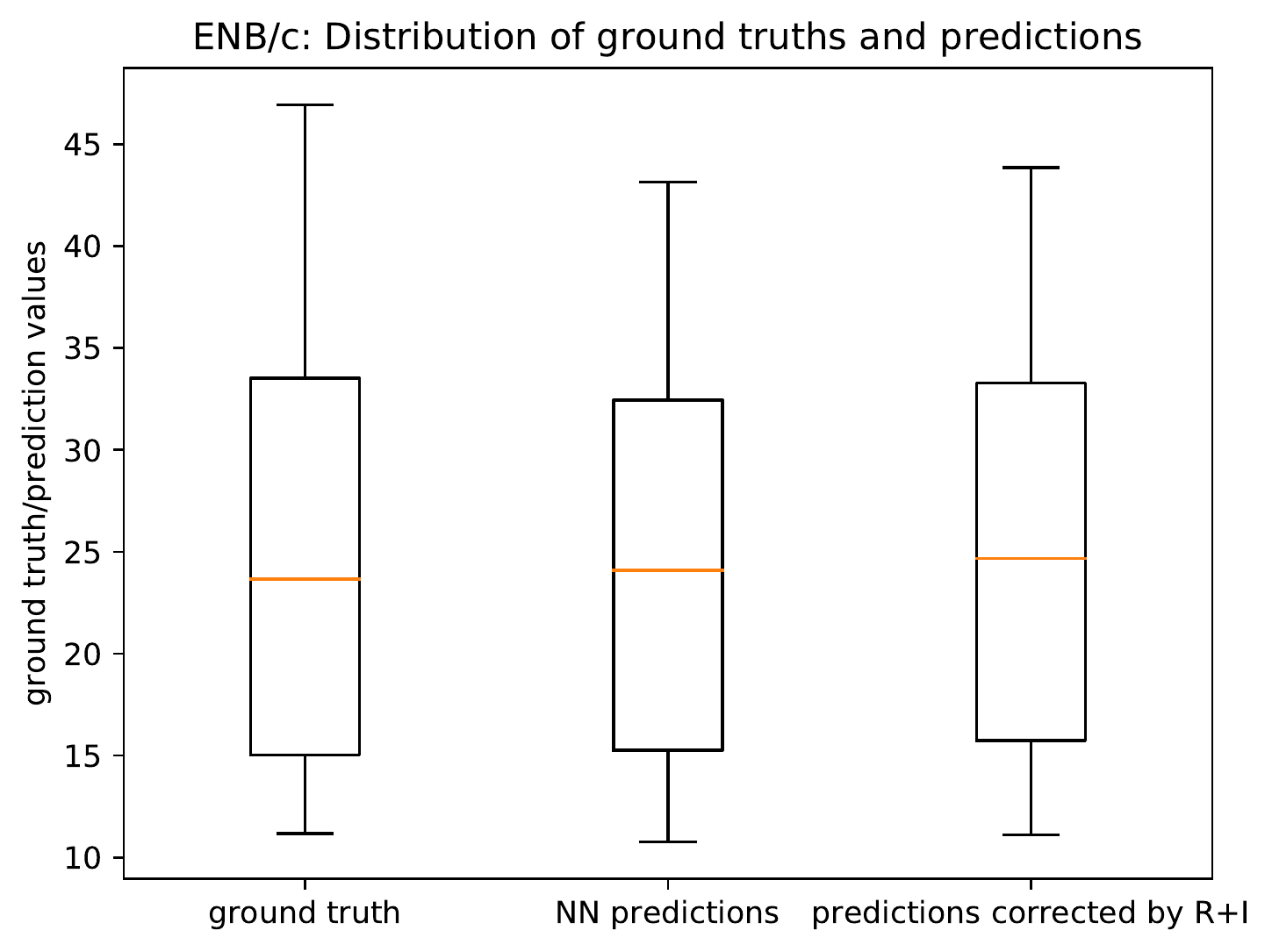}
	\includegraphics[width=0.32\linewidth]{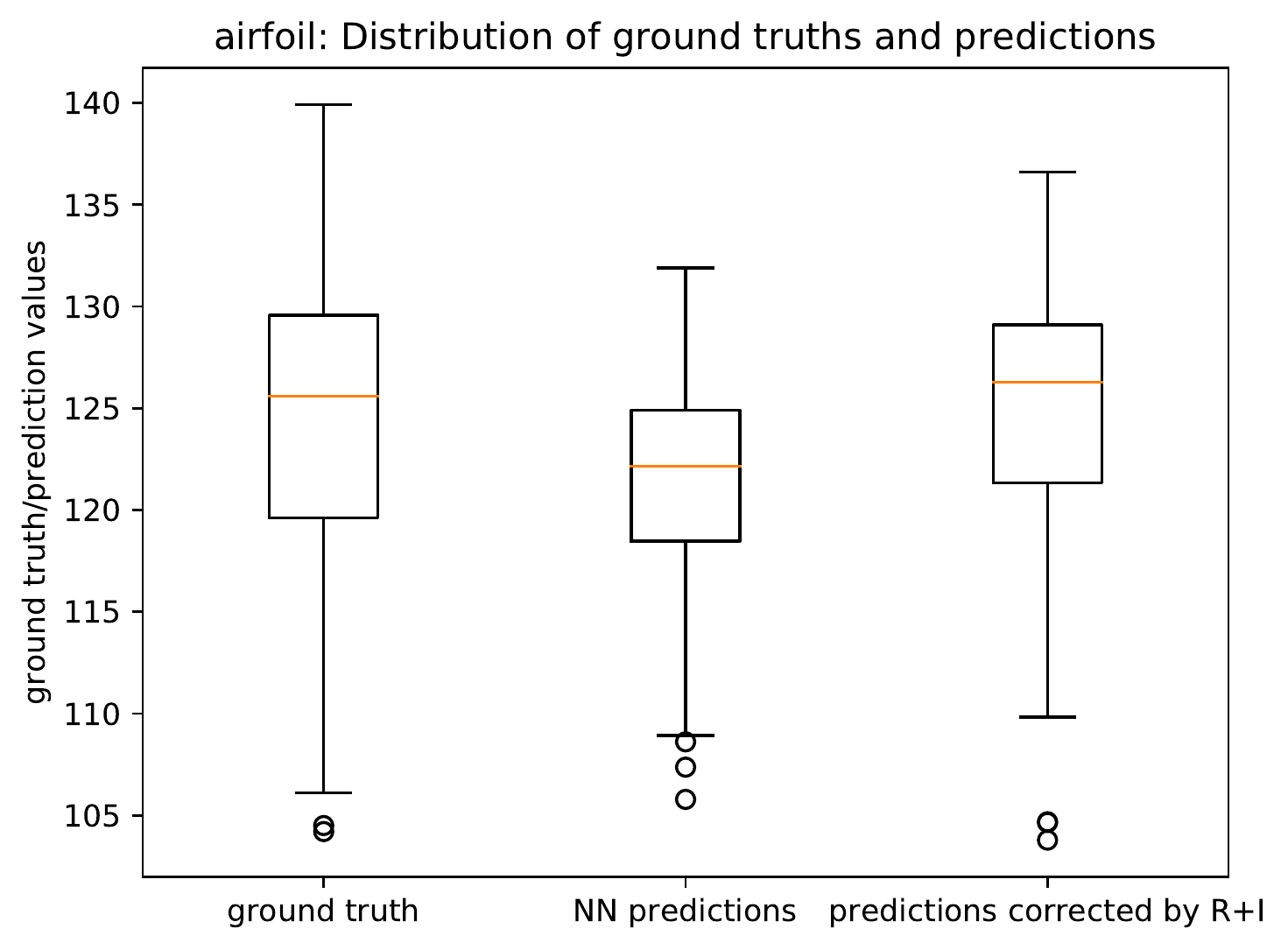}
	\includegraphics[width=0.32\linewidth]{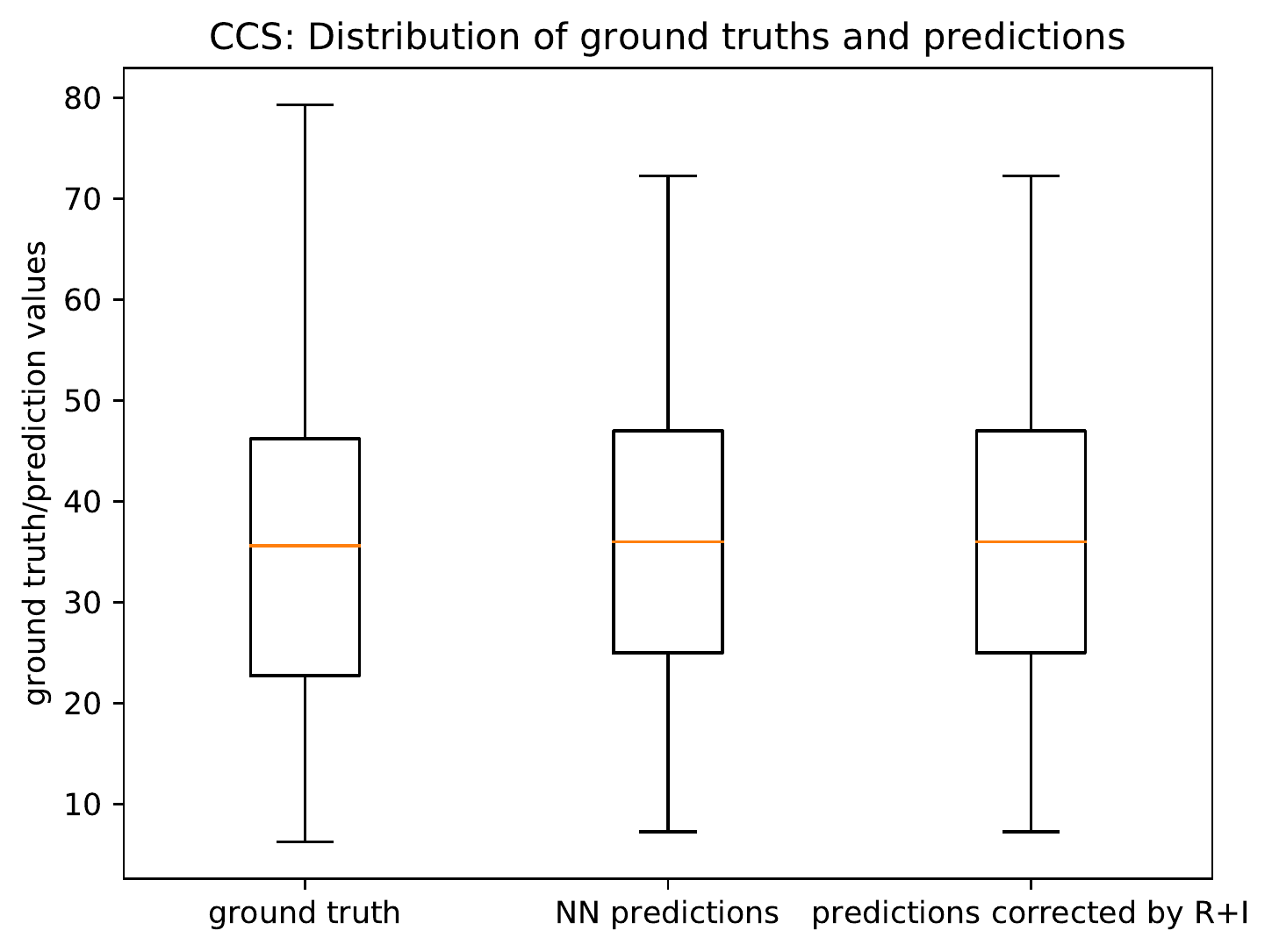}
	\includegraphics[width=0.32\linewidth]{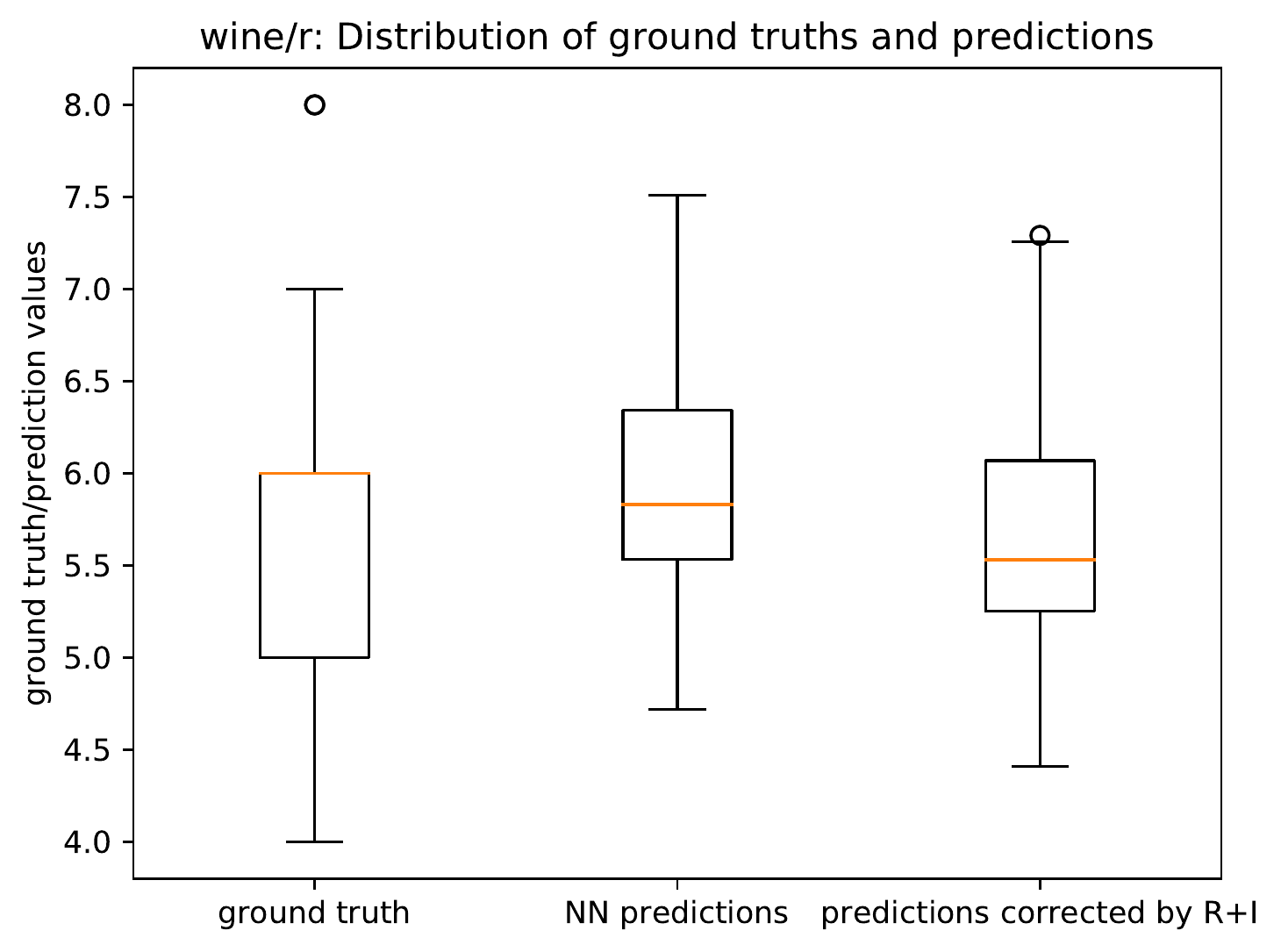}
	\includegraphics[width=0.32\linewidth]{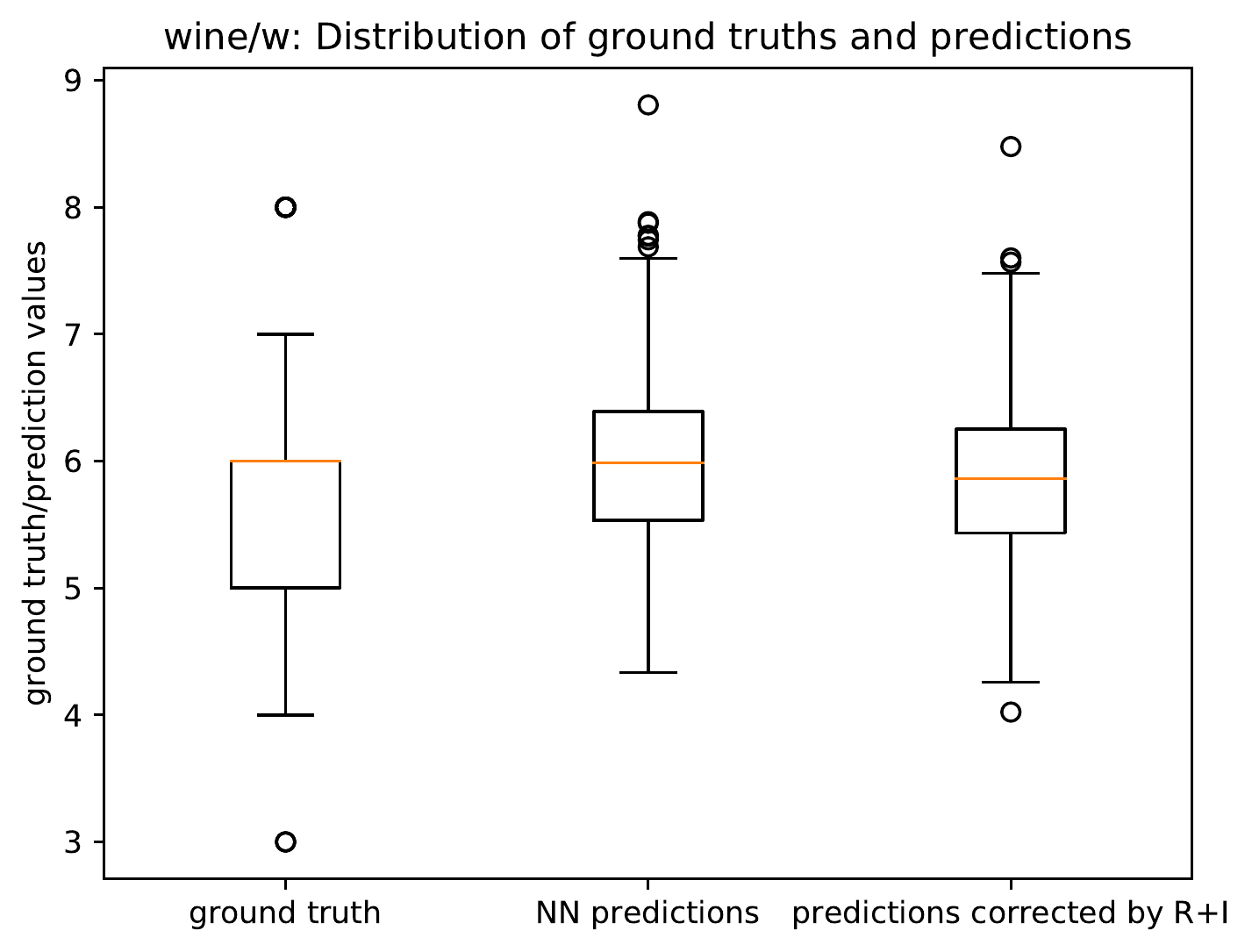}
	\includegraphics[width=0.32\linewidth]{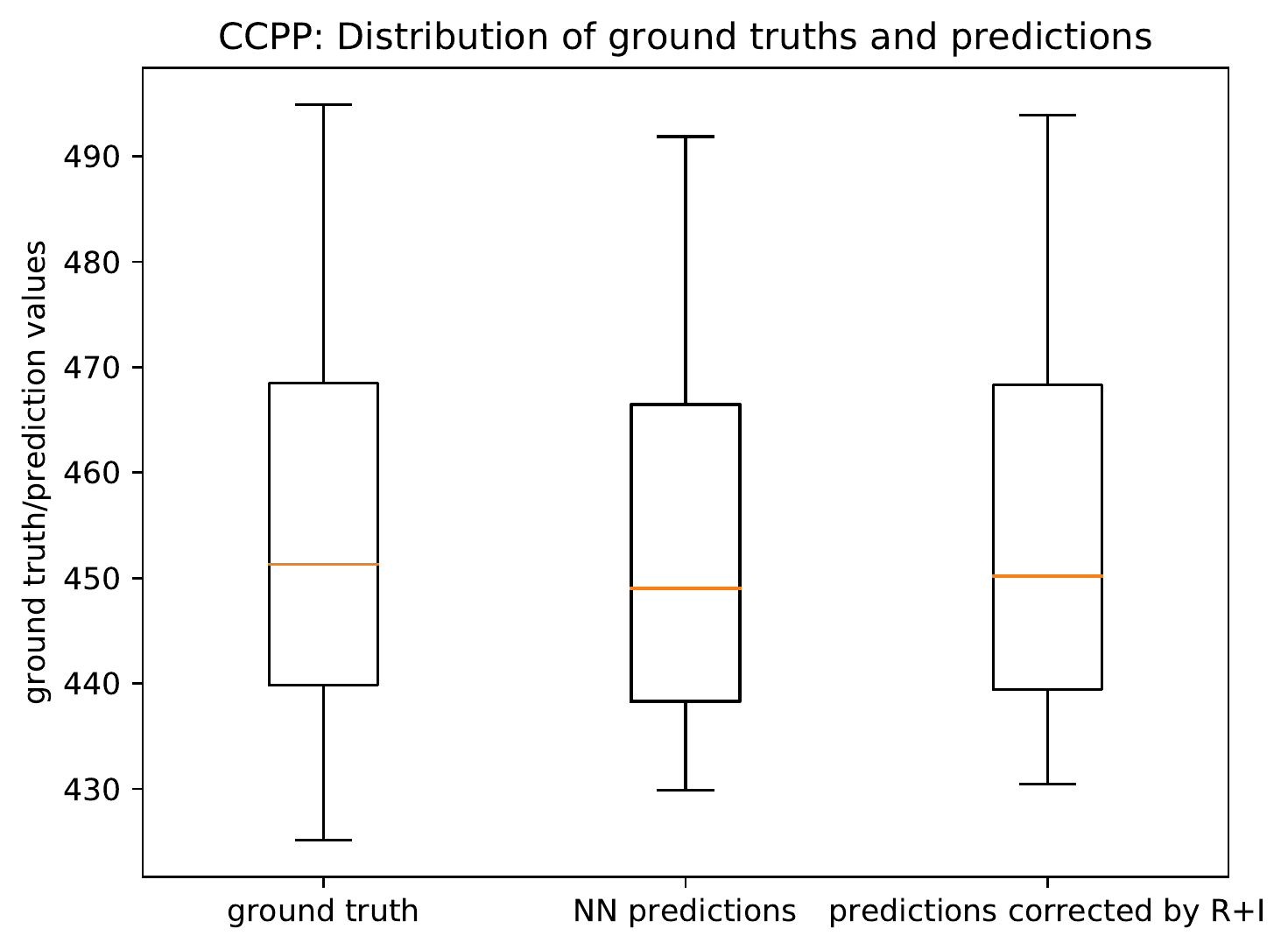}
	\includegraphics[width=0.32\linewidth]{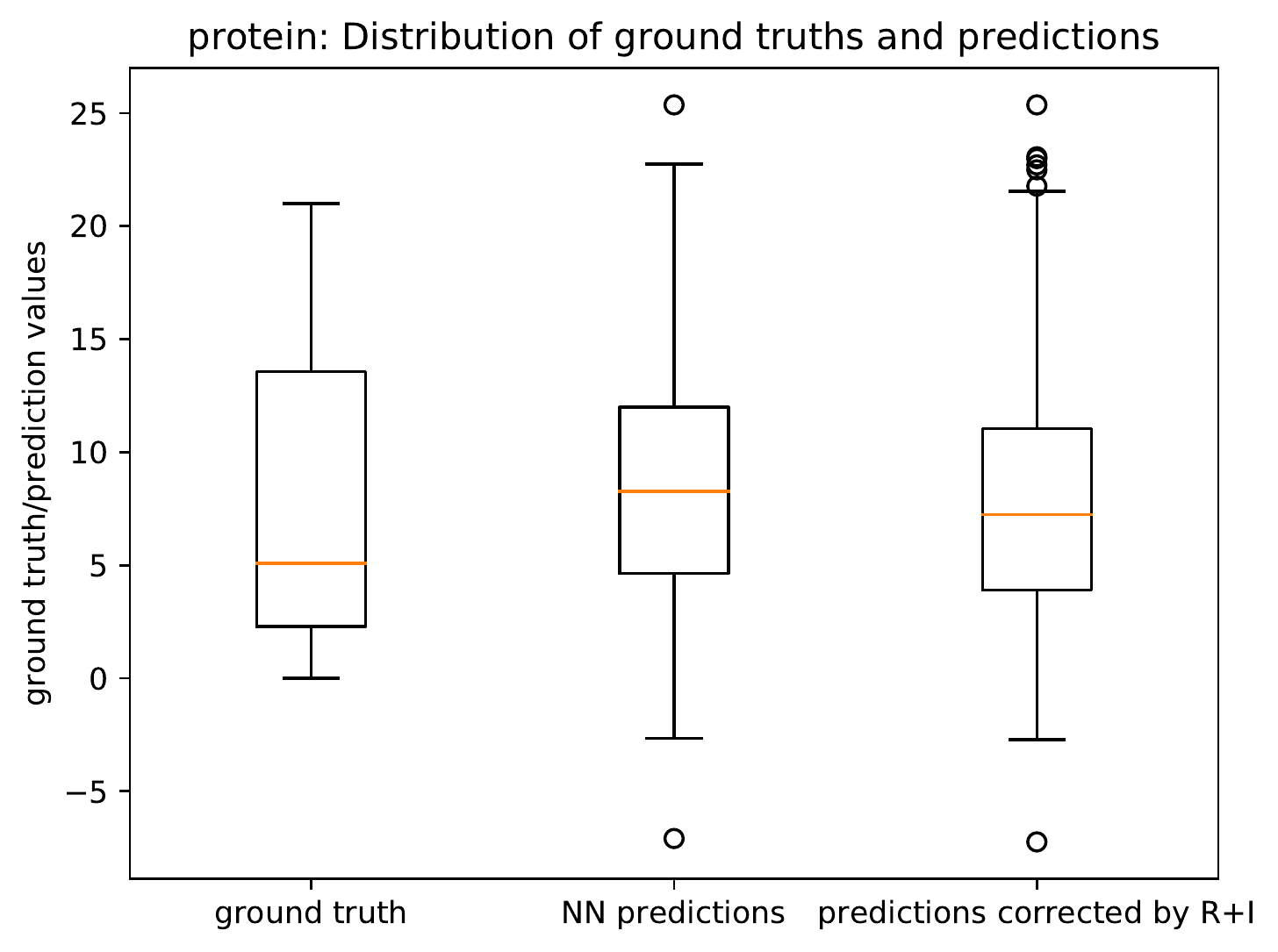}
	\includegraphics[width=0.32\linewidth]{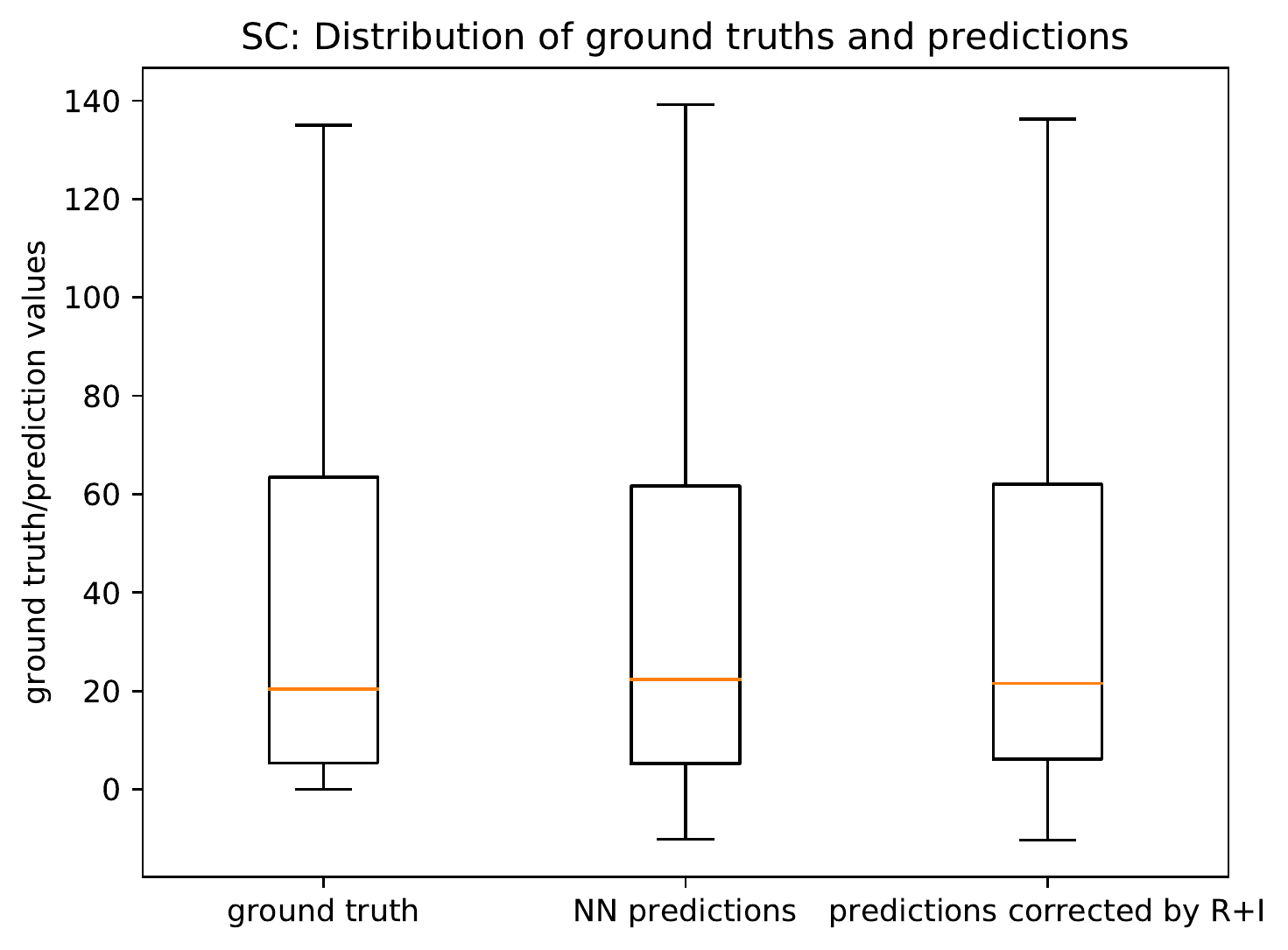}
	\includegraphics[width=0.32\linewidth]{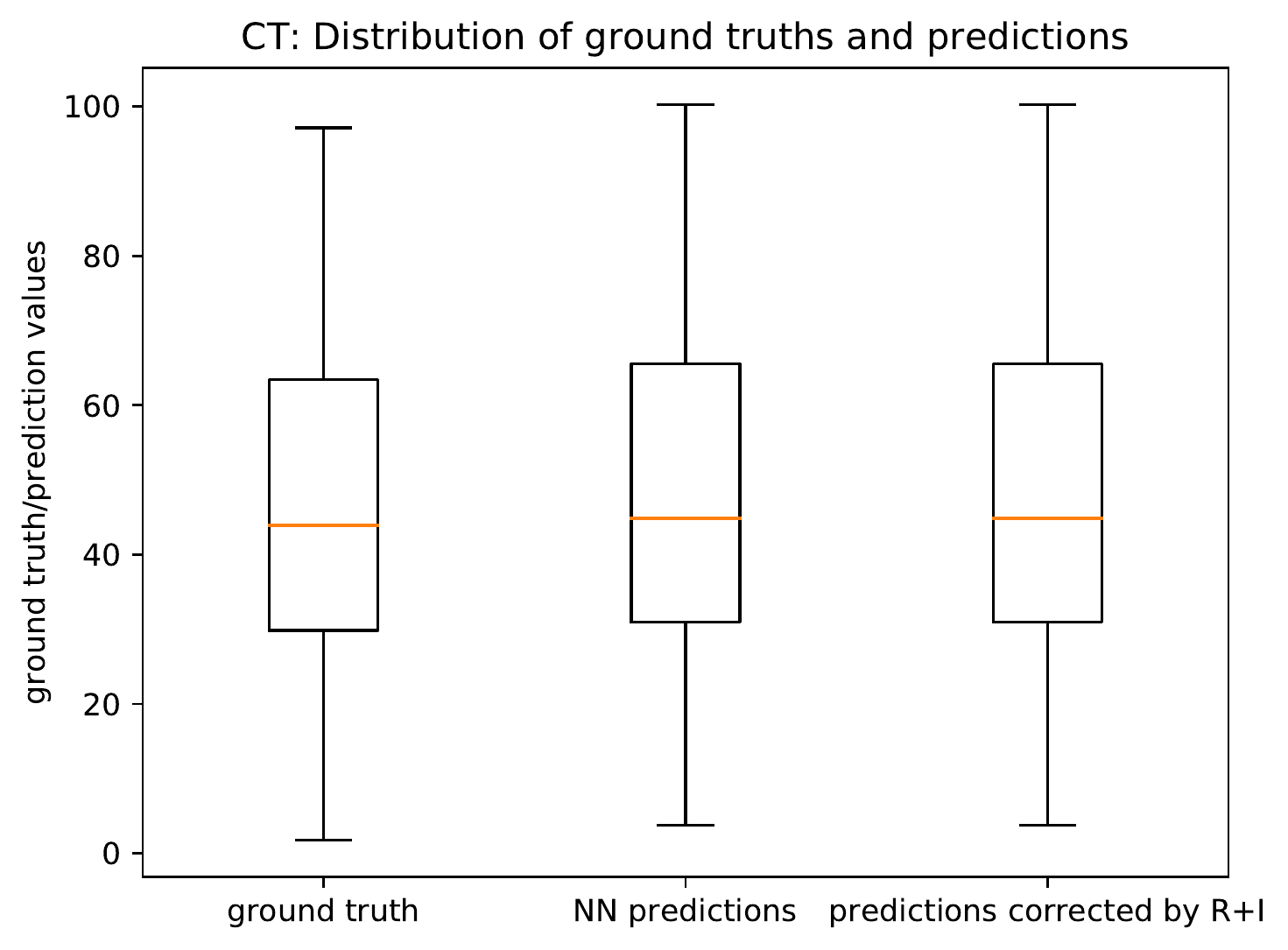}
	\includegraphics[width=0.32\linewidth]{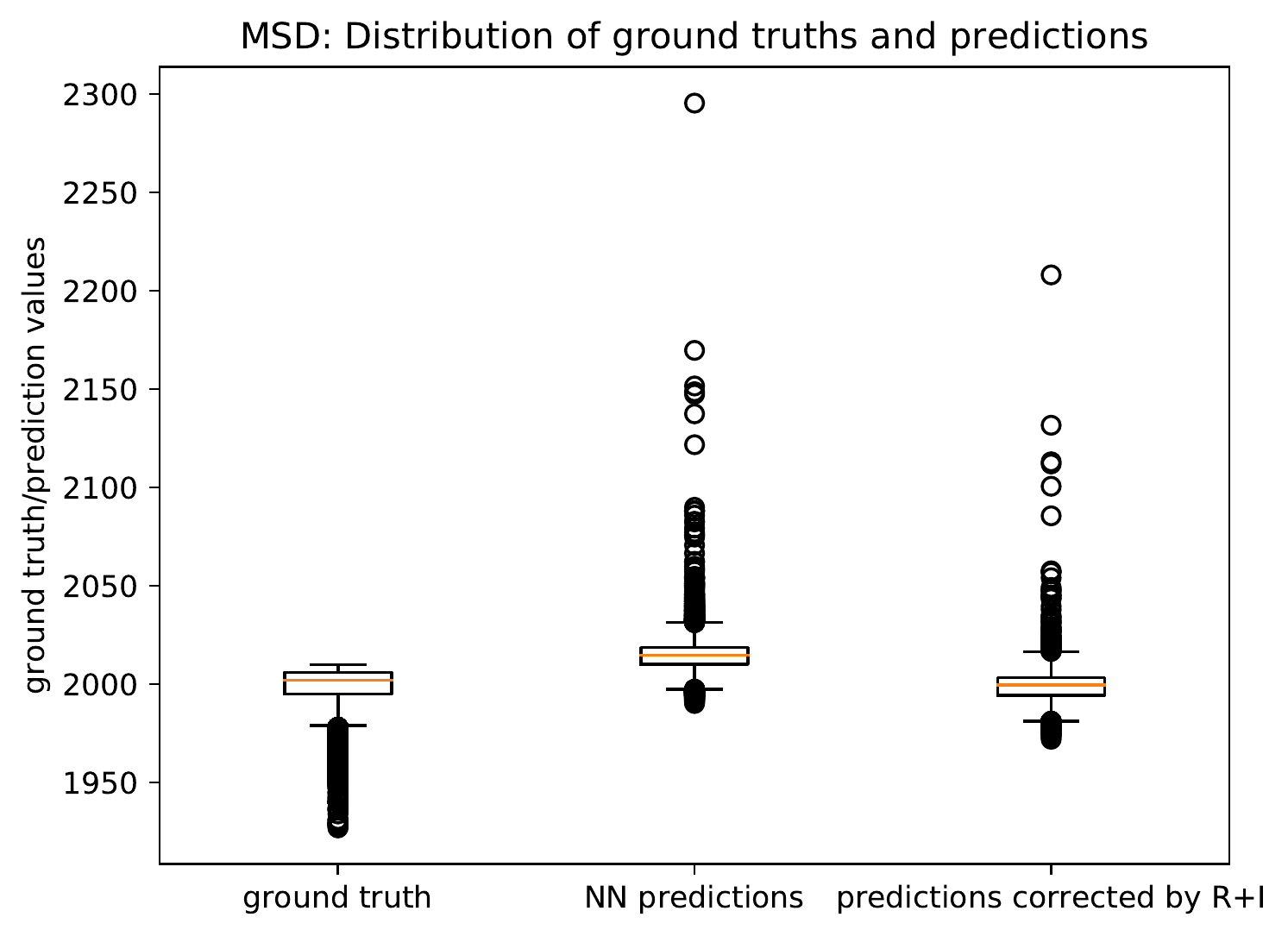}
	\caption{\textbf{Distribution of Ground Truths and NN/R+I-corrected predictions.} \label{fig:dist_pred_in}
		The box extends from the lower to upper quartile values of the data points, with a line at the median. The whiskers extend from the box to show the range of the data. Flier points are those past the end of the whiskers, indicating the outliers. 
	}
\end{figure}
\begin{figure}
	\centering
	\includegraphics[width=0.32\linewidth]{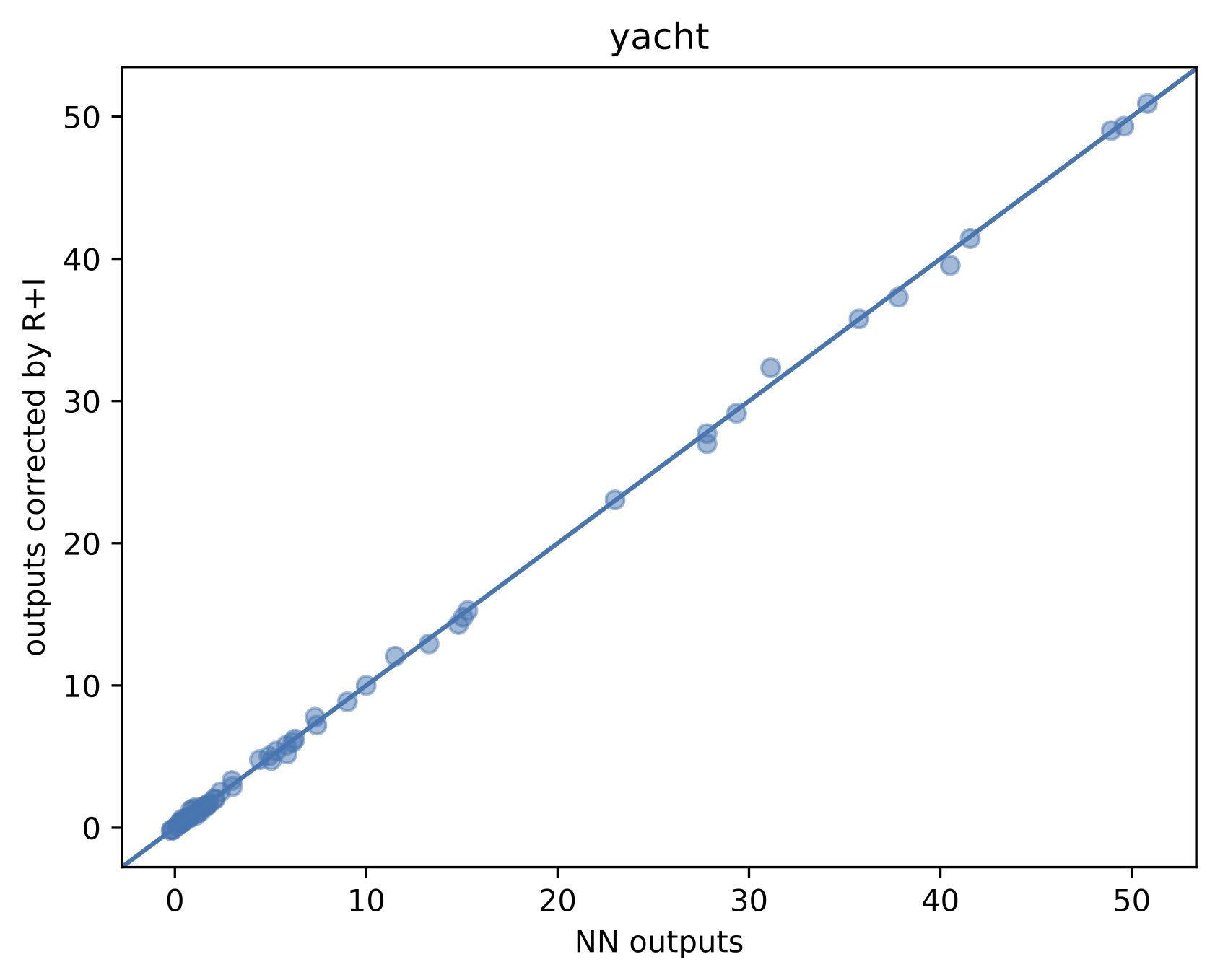}
	\includegraphics[width=0.32\linewidth]{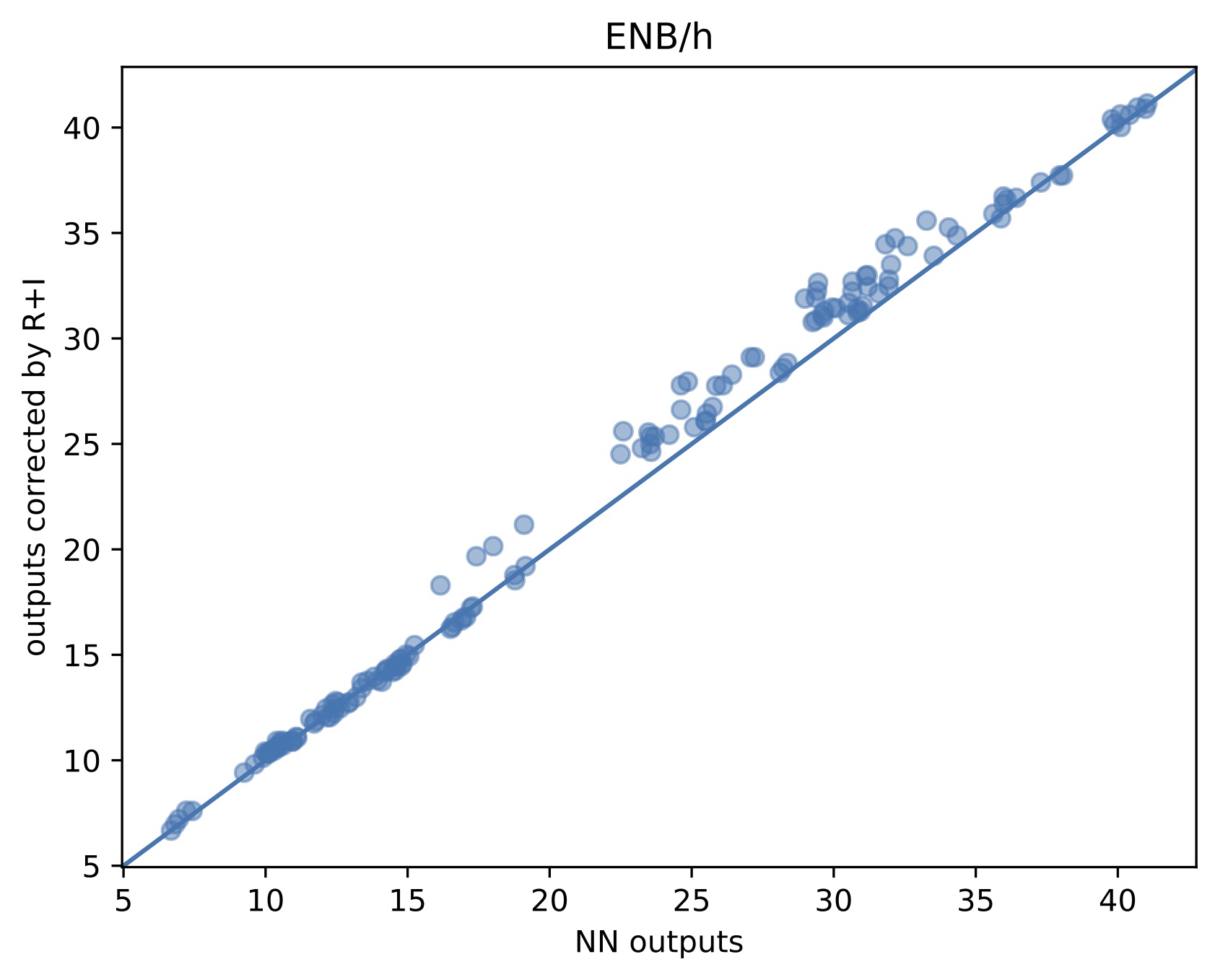}
	\includegraphics[width=0.32\linewidth]{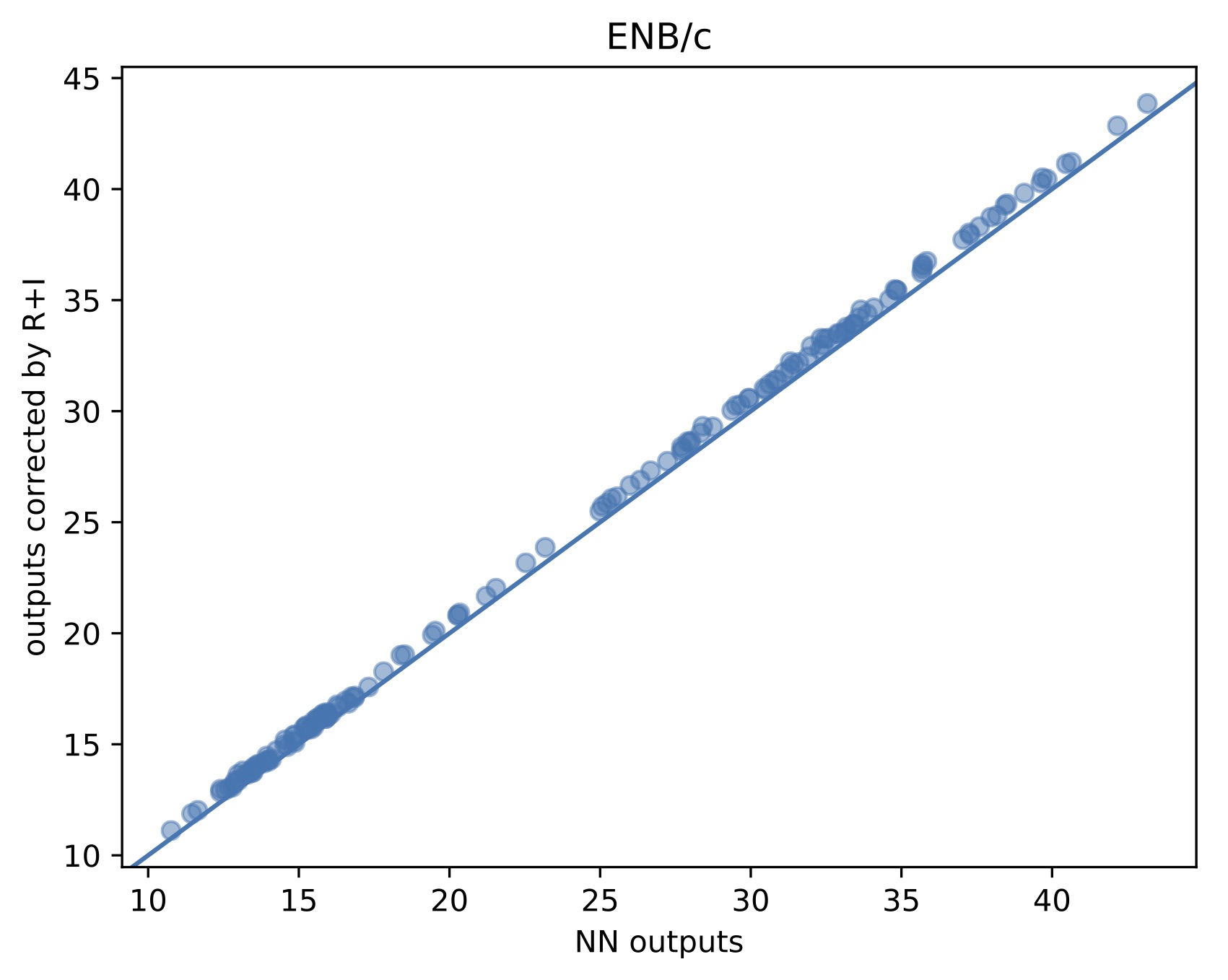}
	\includegraphics[width=0.32\linewidth]{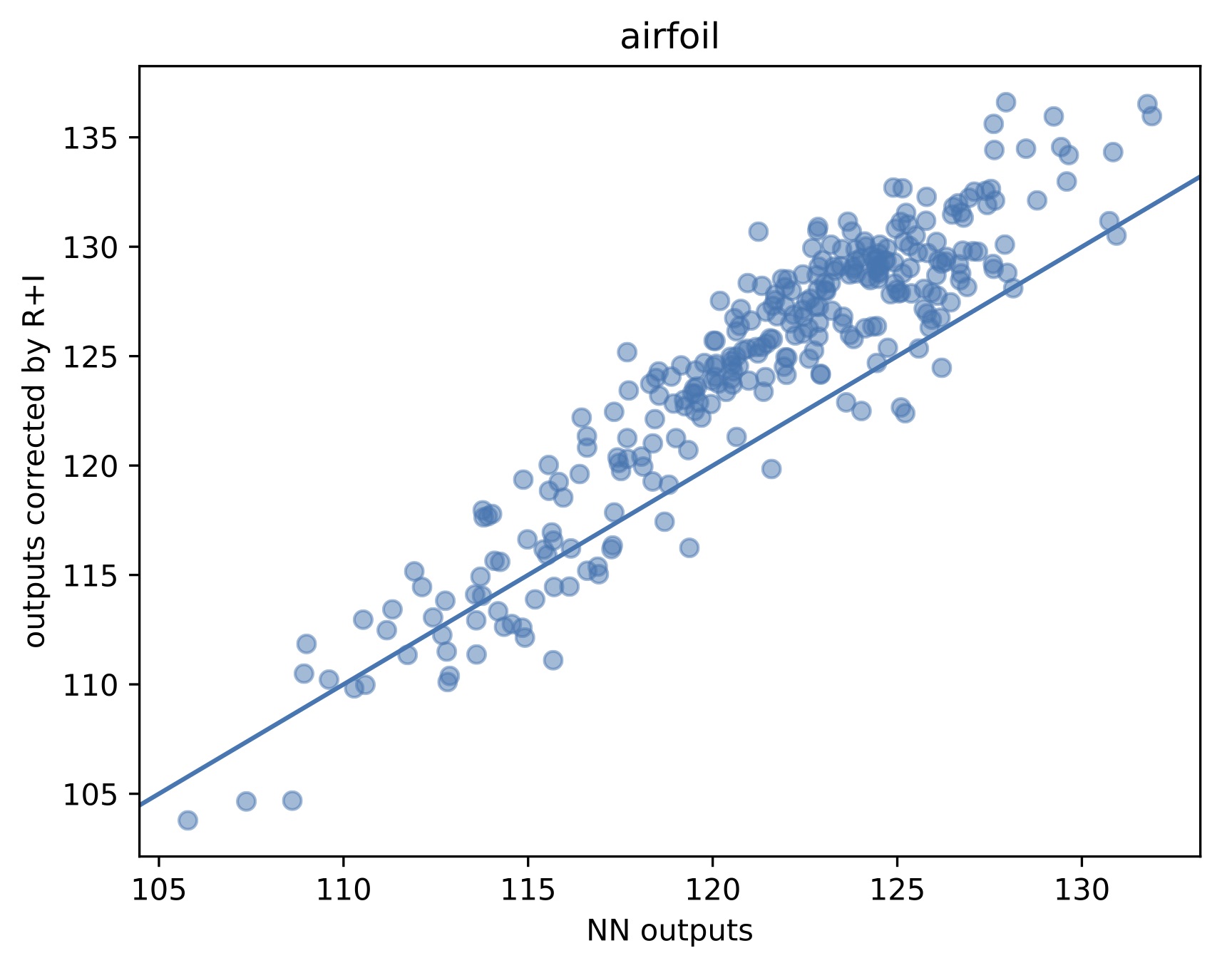}
	\includegraphics[width=0.32\linewidth]{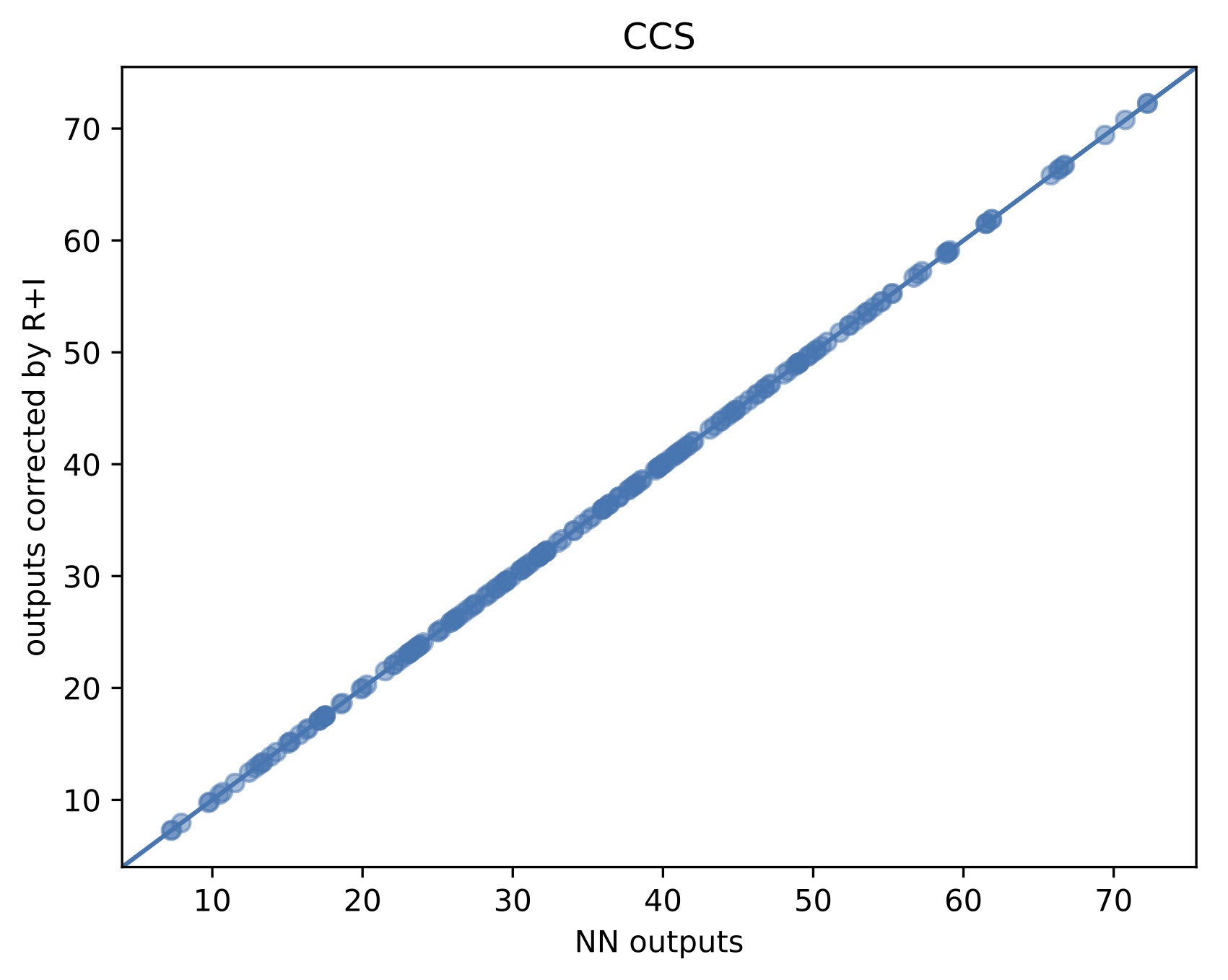}
	\includegraphics[width=0.32\linewidth]{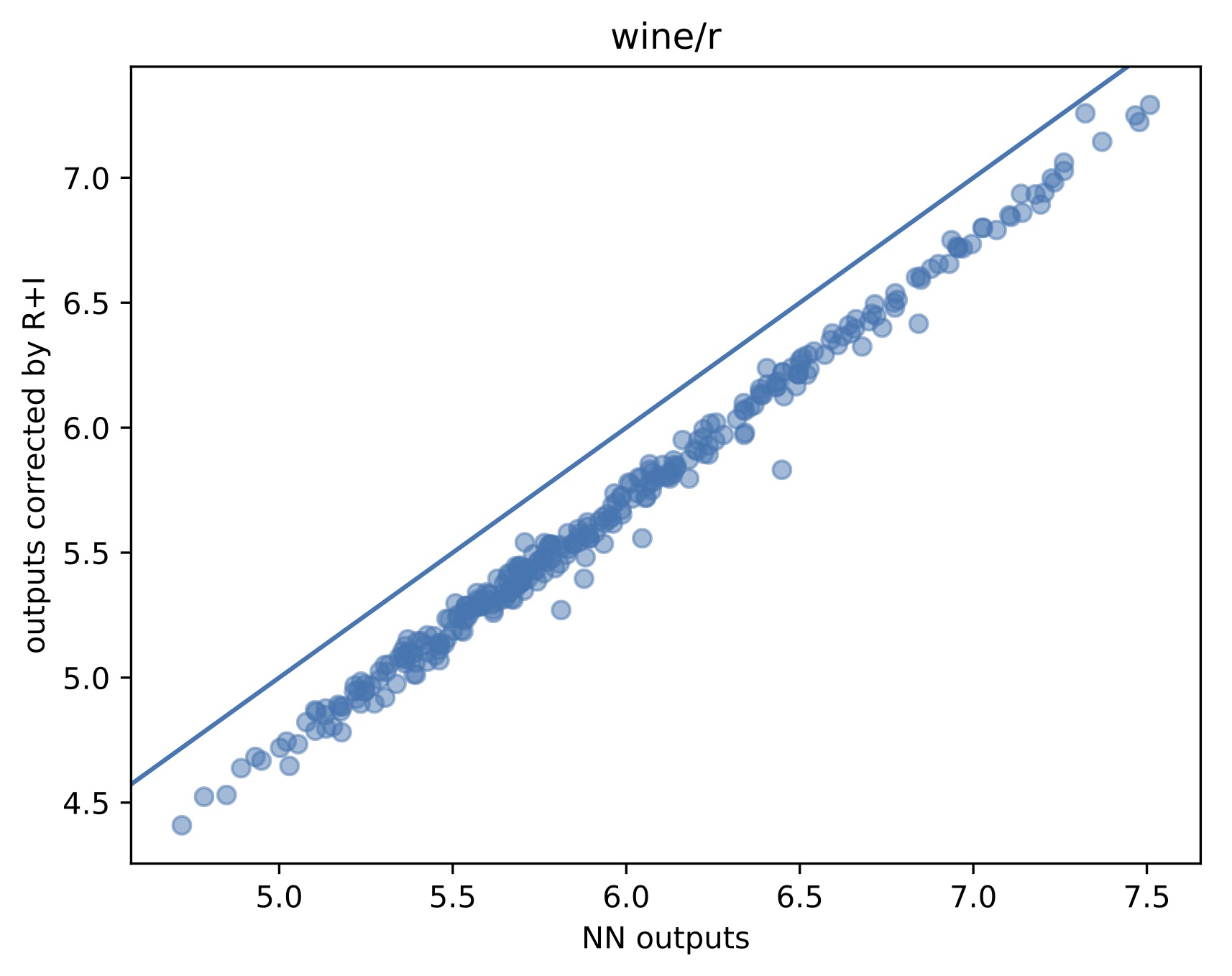}
	\includegraphics[width=0.32\linewidth]{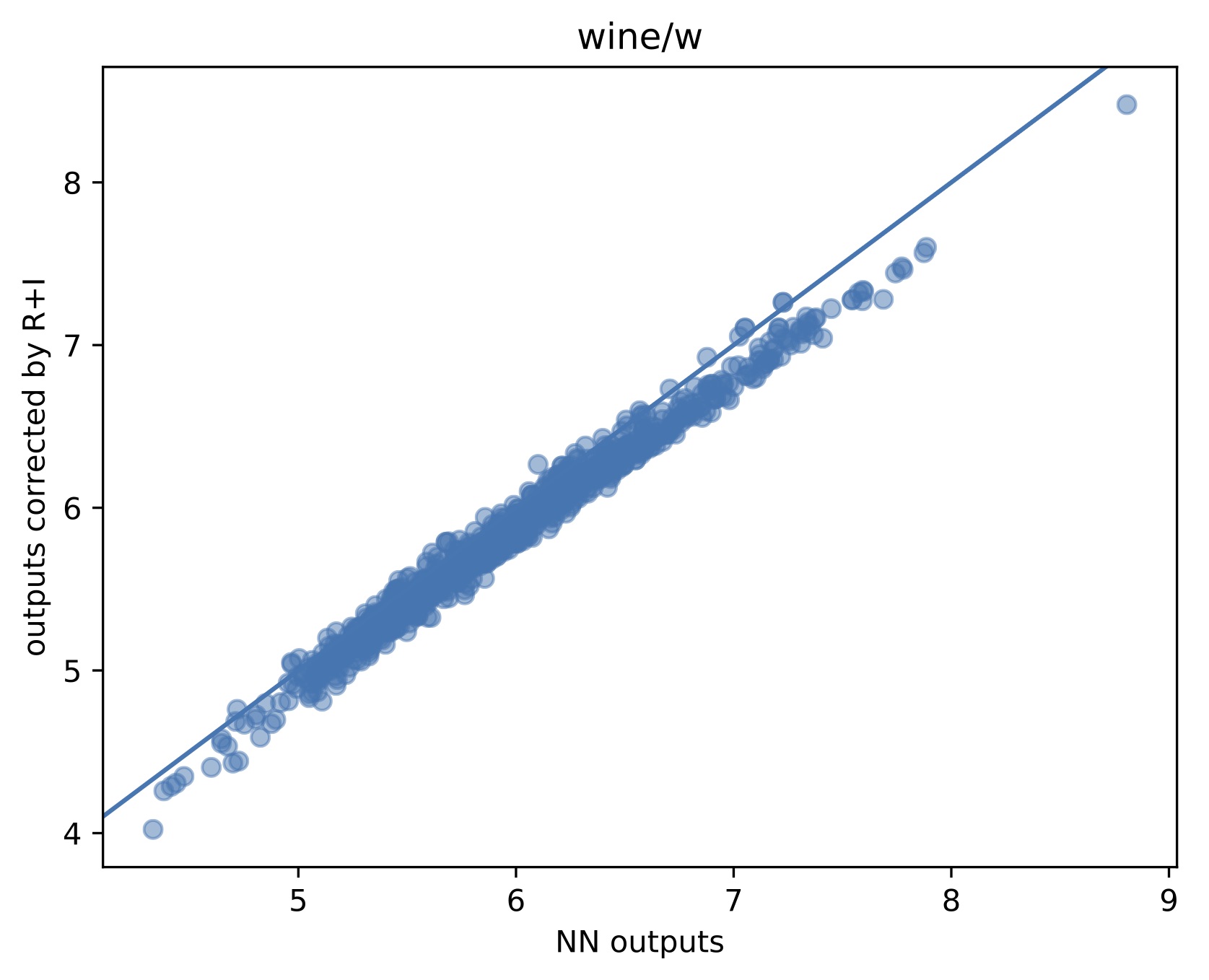}
	\includegraphics[width=0.32\linewidth]{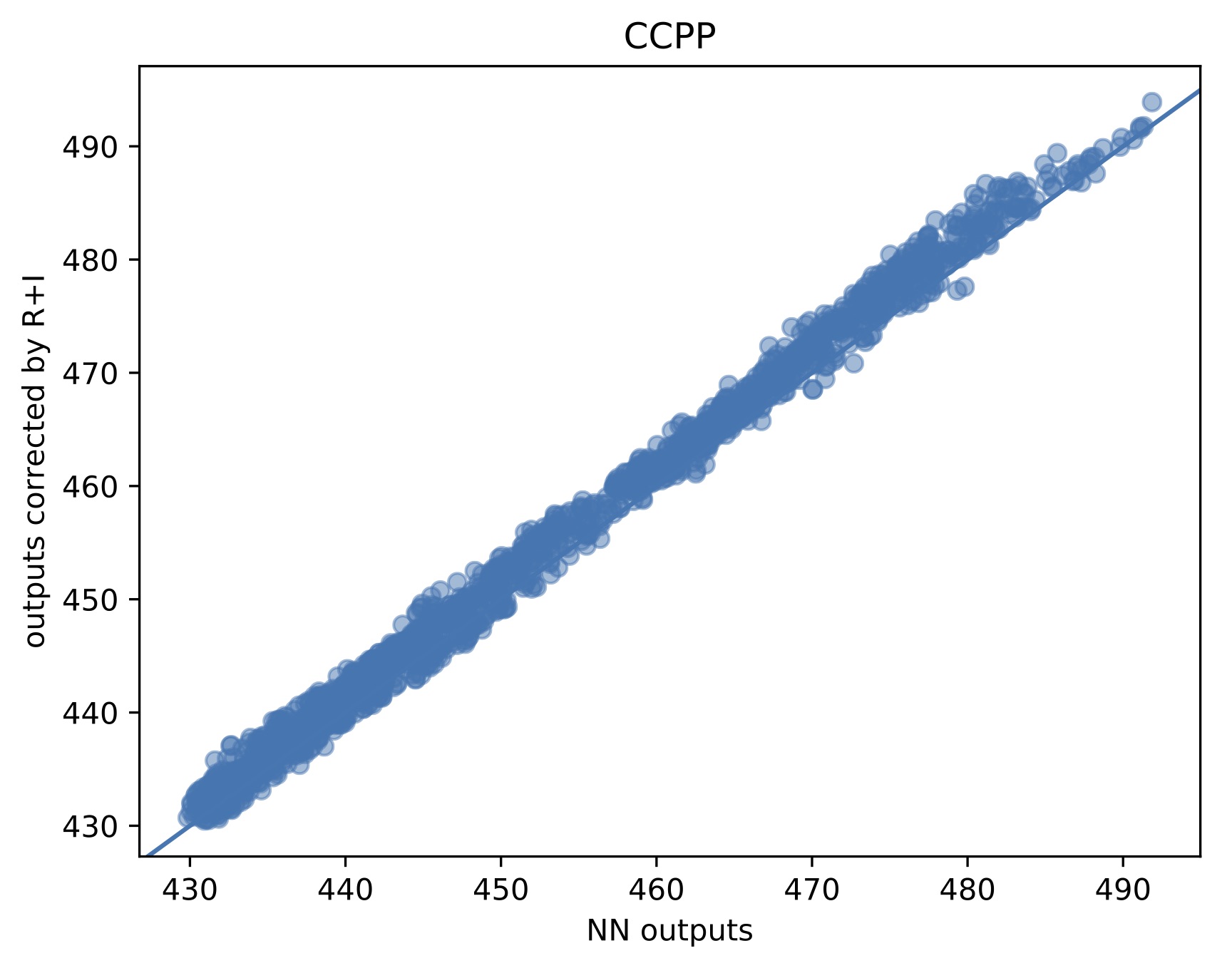}
	\includegraphics[width=0.32\linewidth]{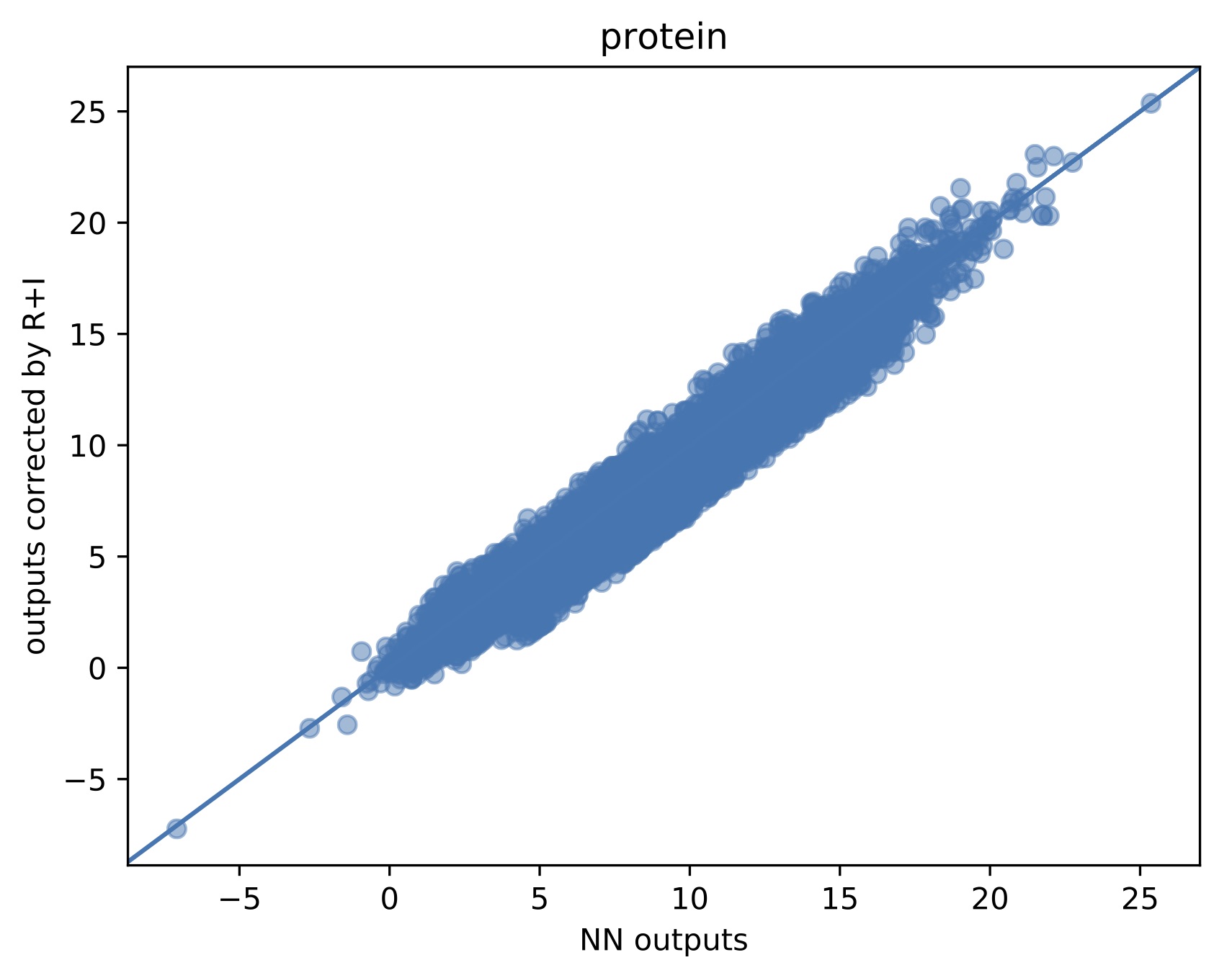}
	\includegraphics[width=0.32\linewidth]{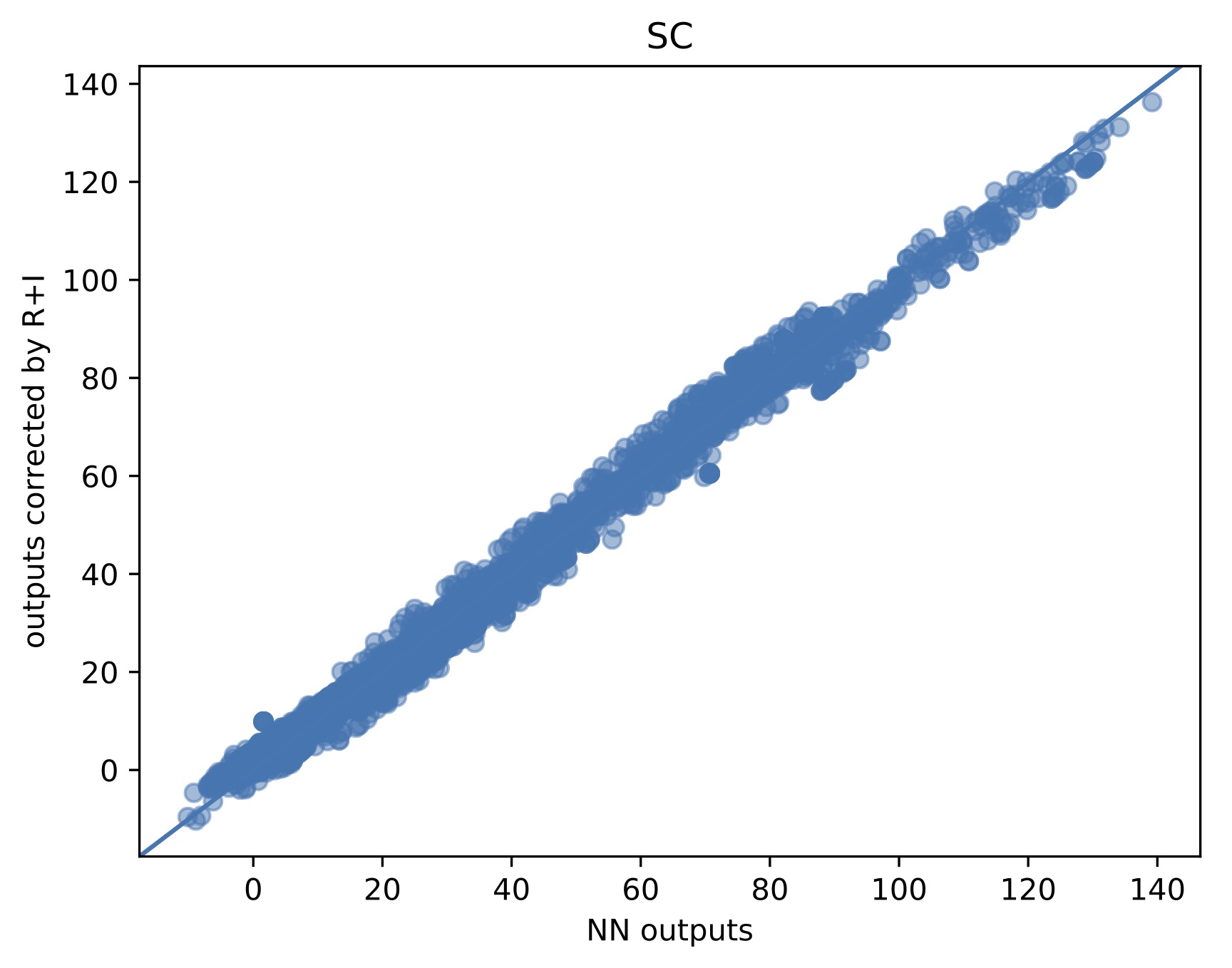}
	\includegraphics[width=0.32\linewidth]{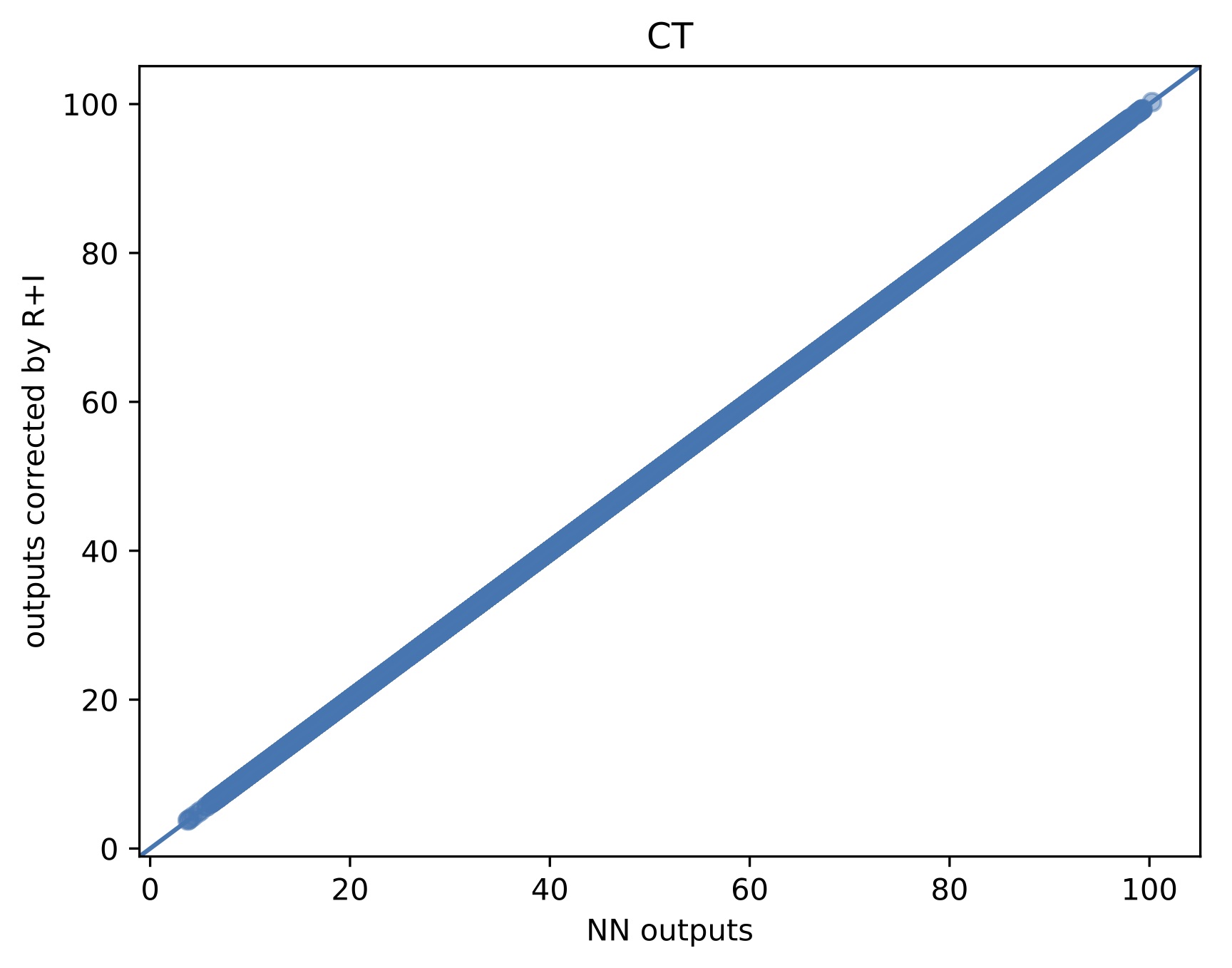}
	\includegraphics[width=0.32\linewidth]{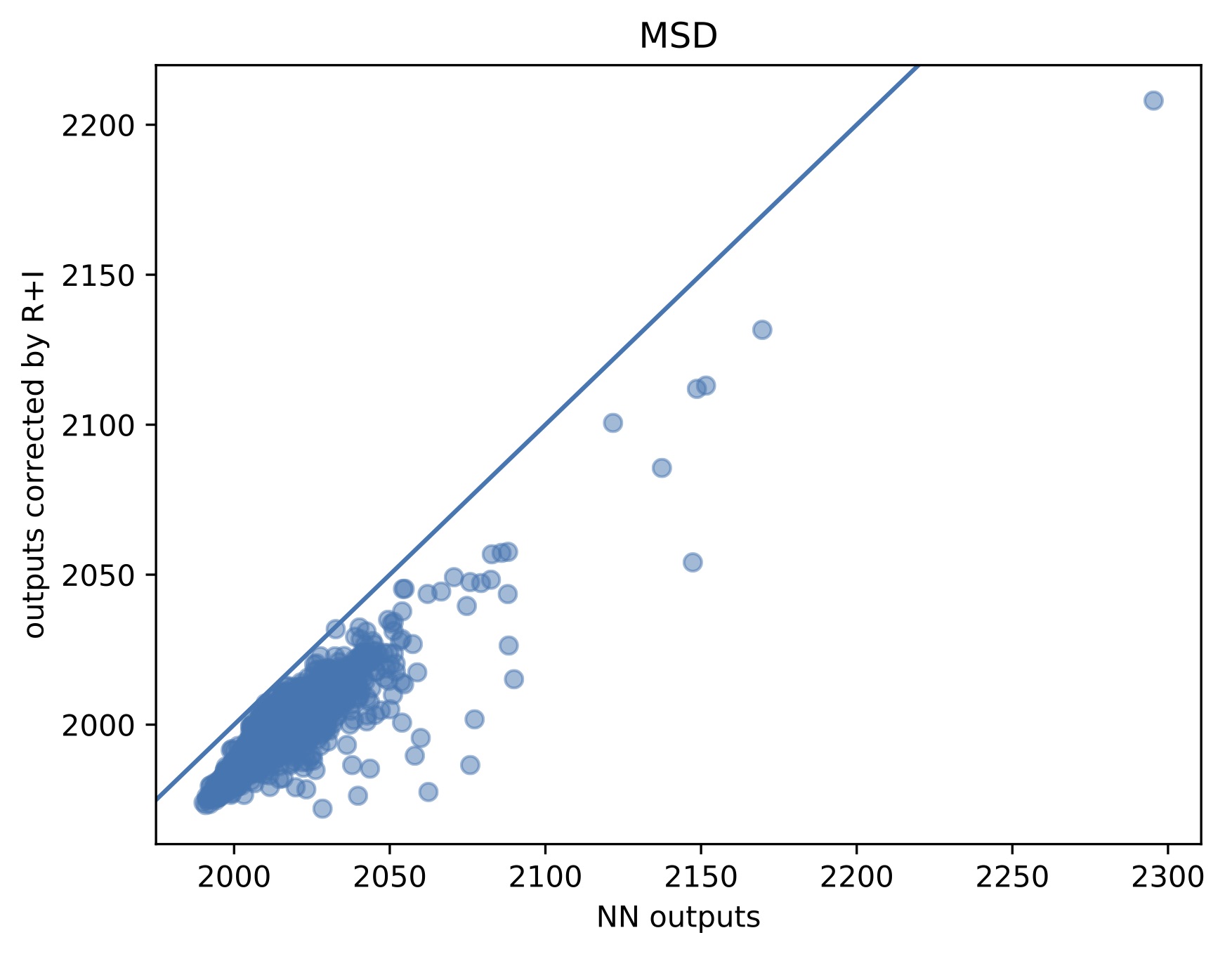}
	\caption{\textbf{Comparison between NN and R+I-corrected outputs.} \label{fig:comp_output_in}
		Each dot represents a data point. The horizontal axis denote the original NN predictions, and the vertical axis the corresponding predictions after R+I corrections. The solid line indicates where NN predictions are same as R+I-corrected predictions (i.e., no change in output).
	}
\end{figure}
\begin{figure}
	\centering
	\includegraphics[width=0.96\linewidth]{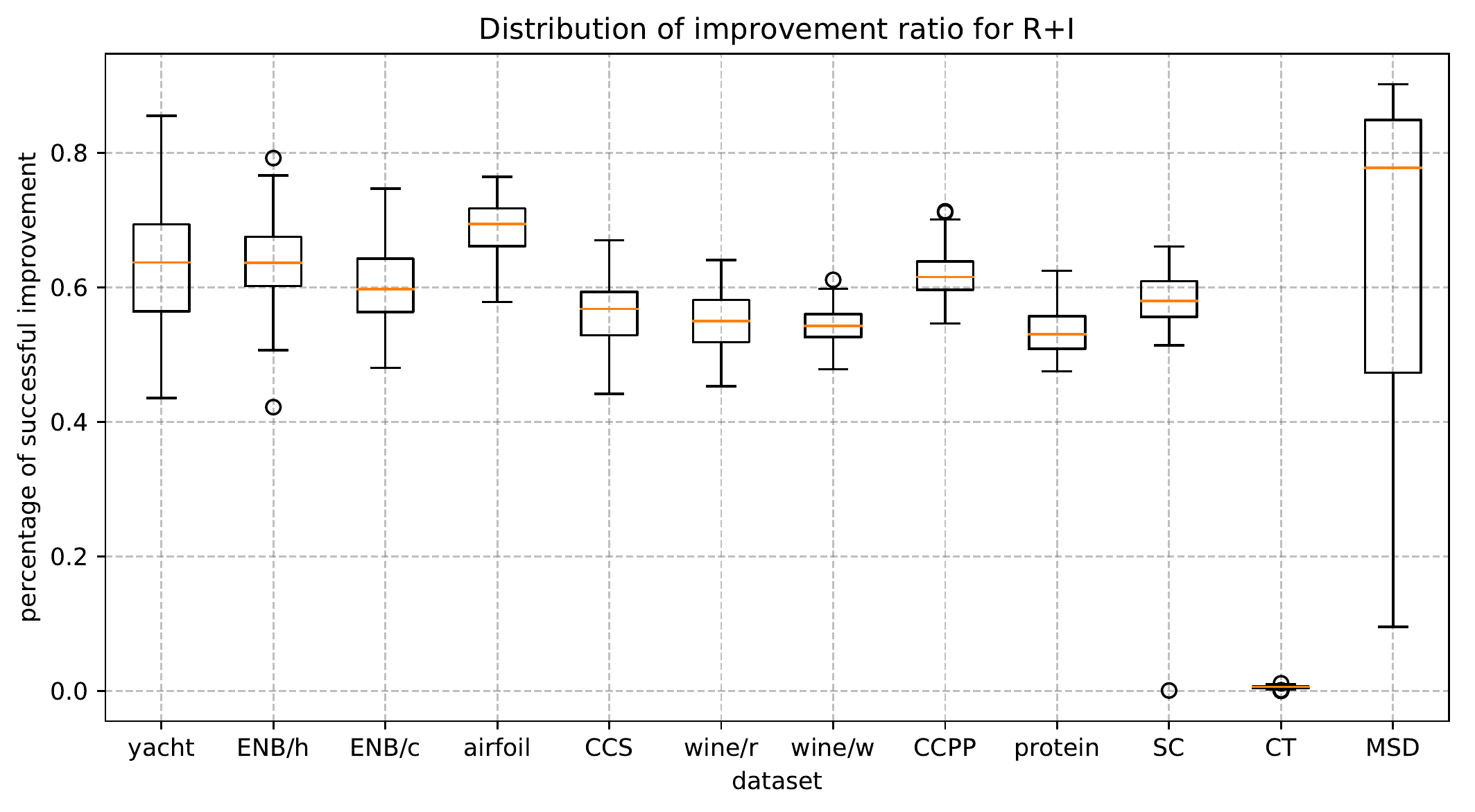}
	\caption{\textbf{Distribution of Impovement Ratio.} \label{fig:dist_IR_in}
		The box extends from the lower to upper quartile values of the data (each data point represents an independent experimental run), with a line at the median. The whiskers extend from the box to show the range of the data. Flier points are those past the end of the whiskers, indicating the outliers.
	}
\end{figure}

\begin{figure}
	\centering
	\includegraphics[width=0.32\linewidth]{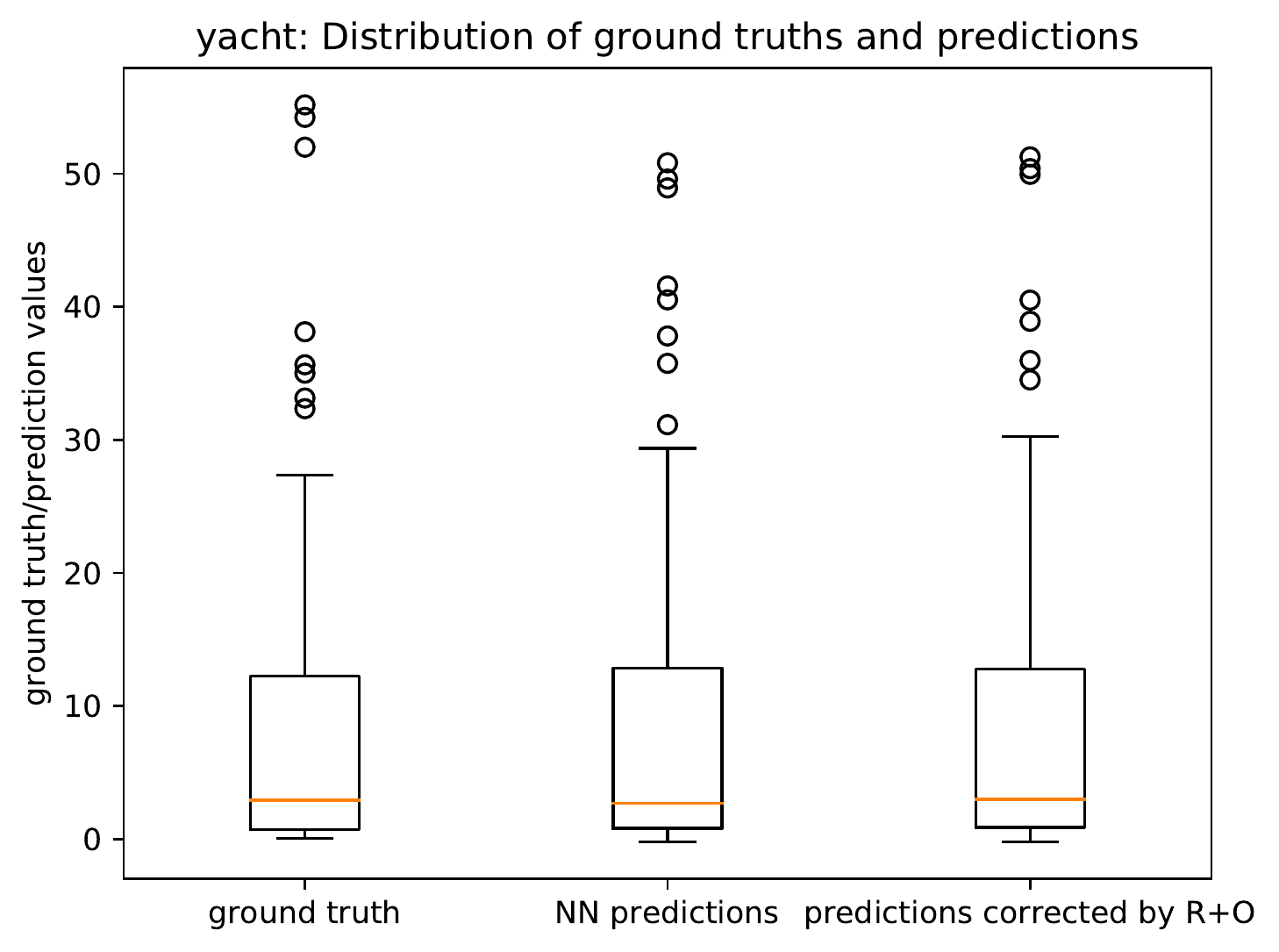}
	\includegraphics[width=0.32\linewidth]{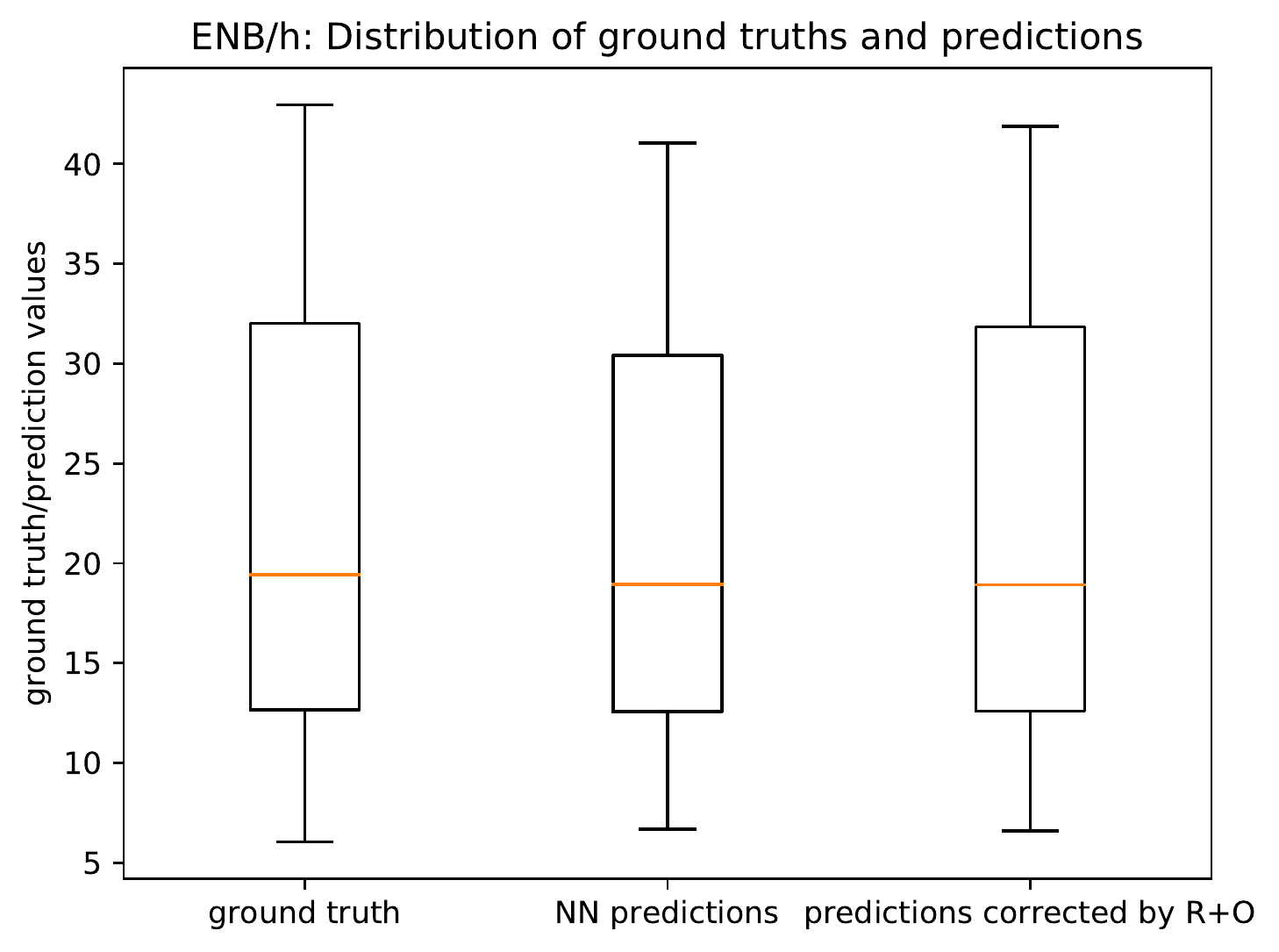}
	\includegraphics[width=0.32\linewidth]{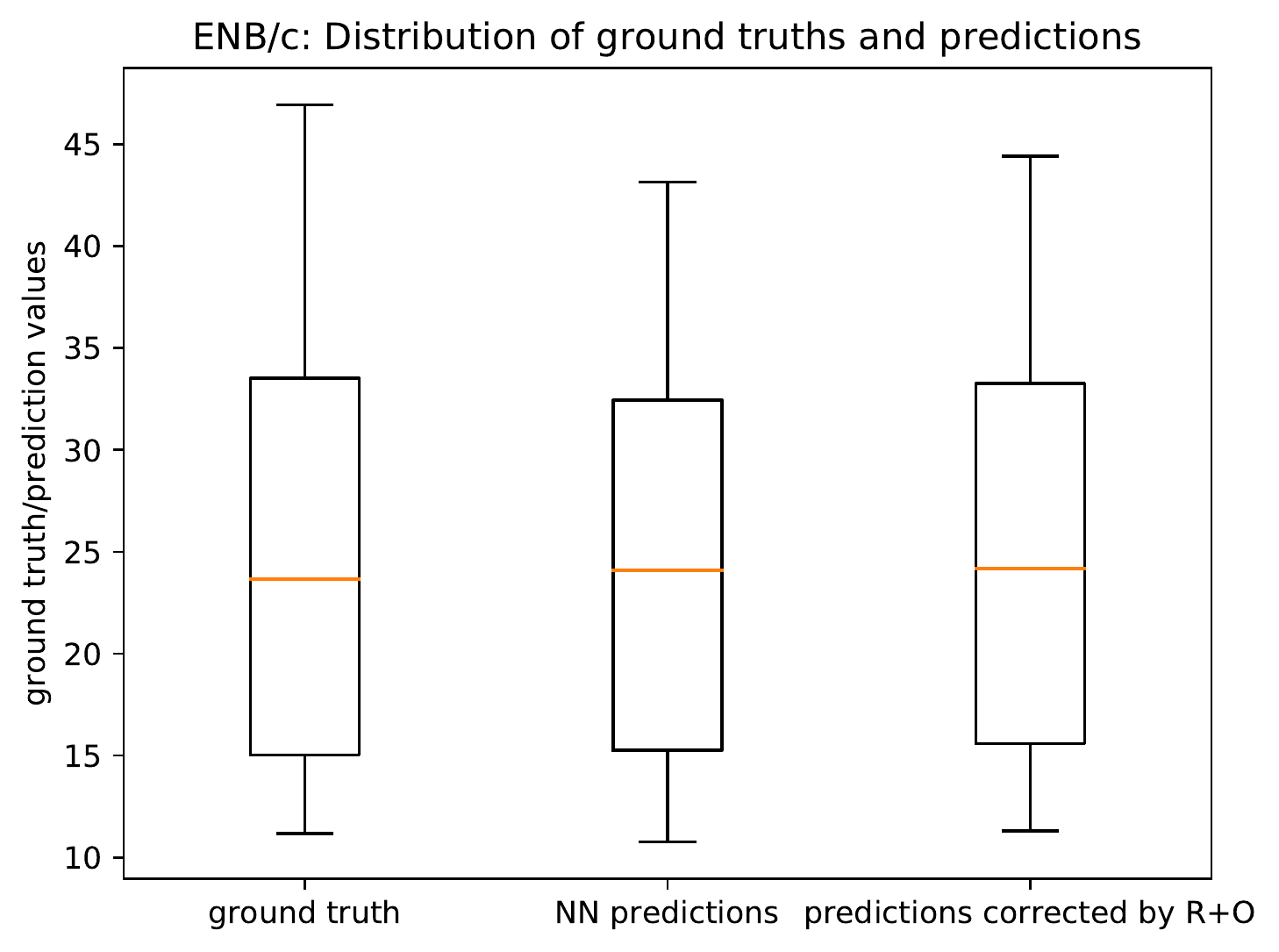}
	\includegraphics[width=0.32\linewidth]{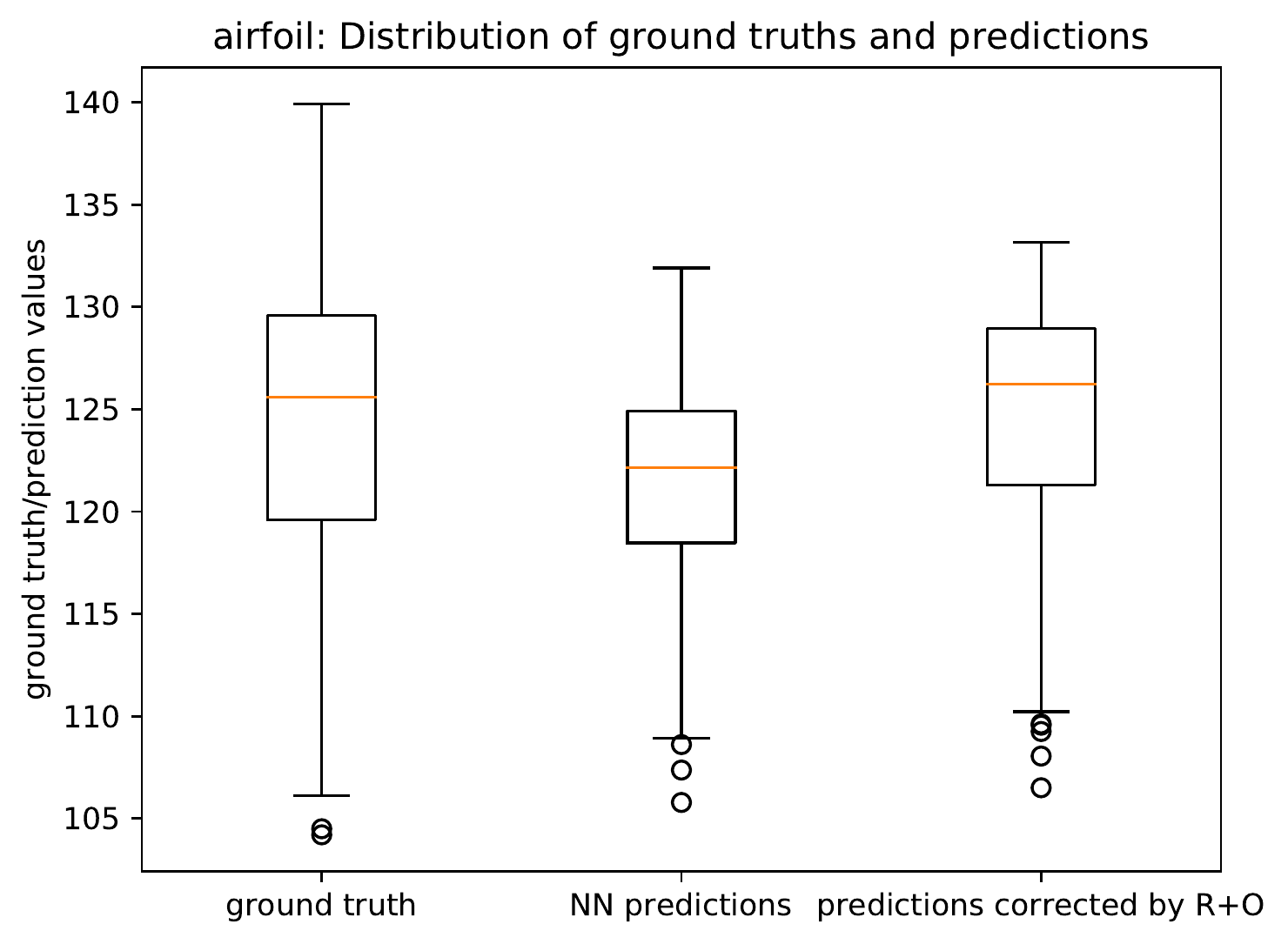}
	\includegraphics[width=0.32\linewidth]{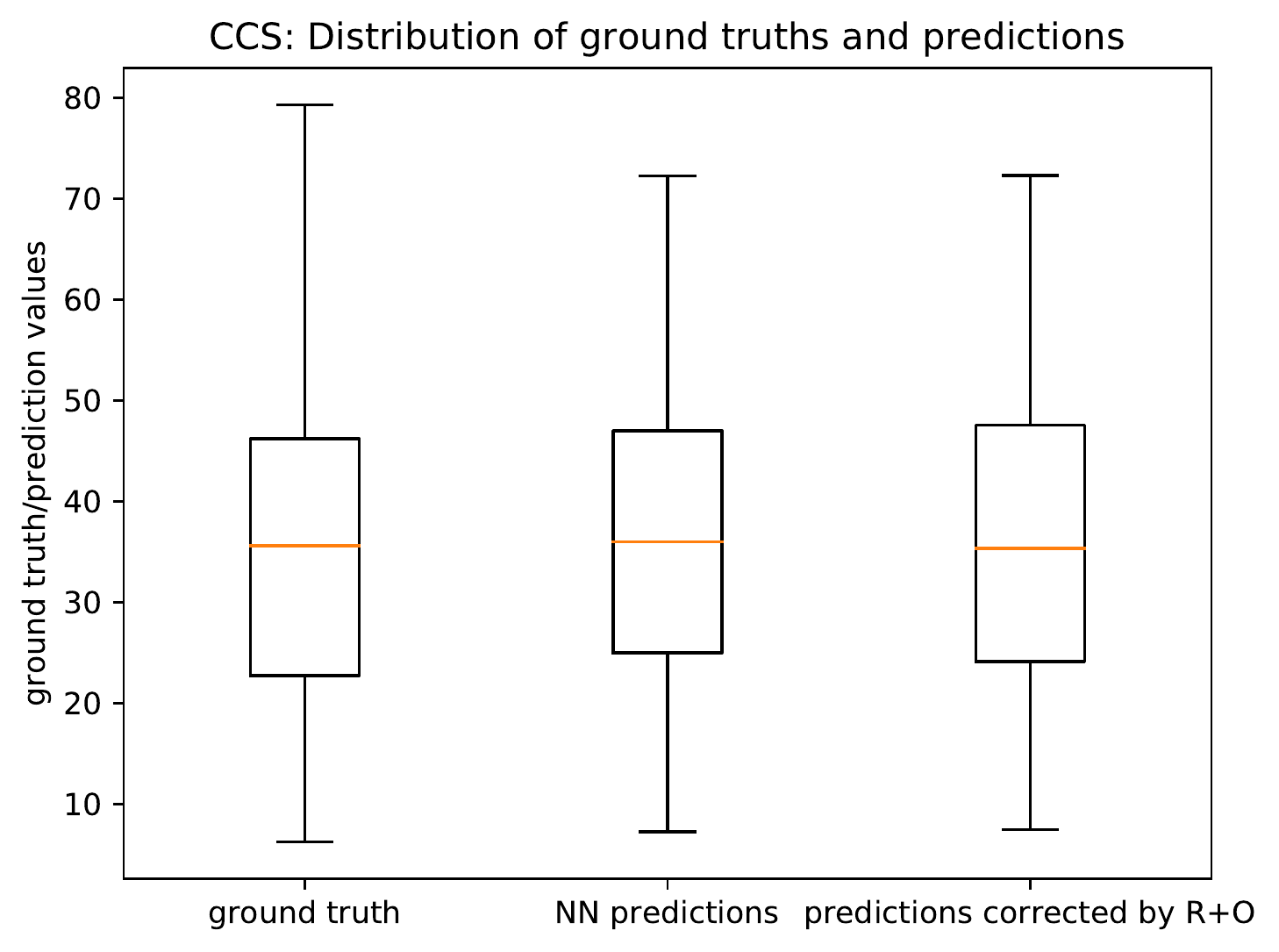}
	\includegraphics[width=0.32\linewidth]{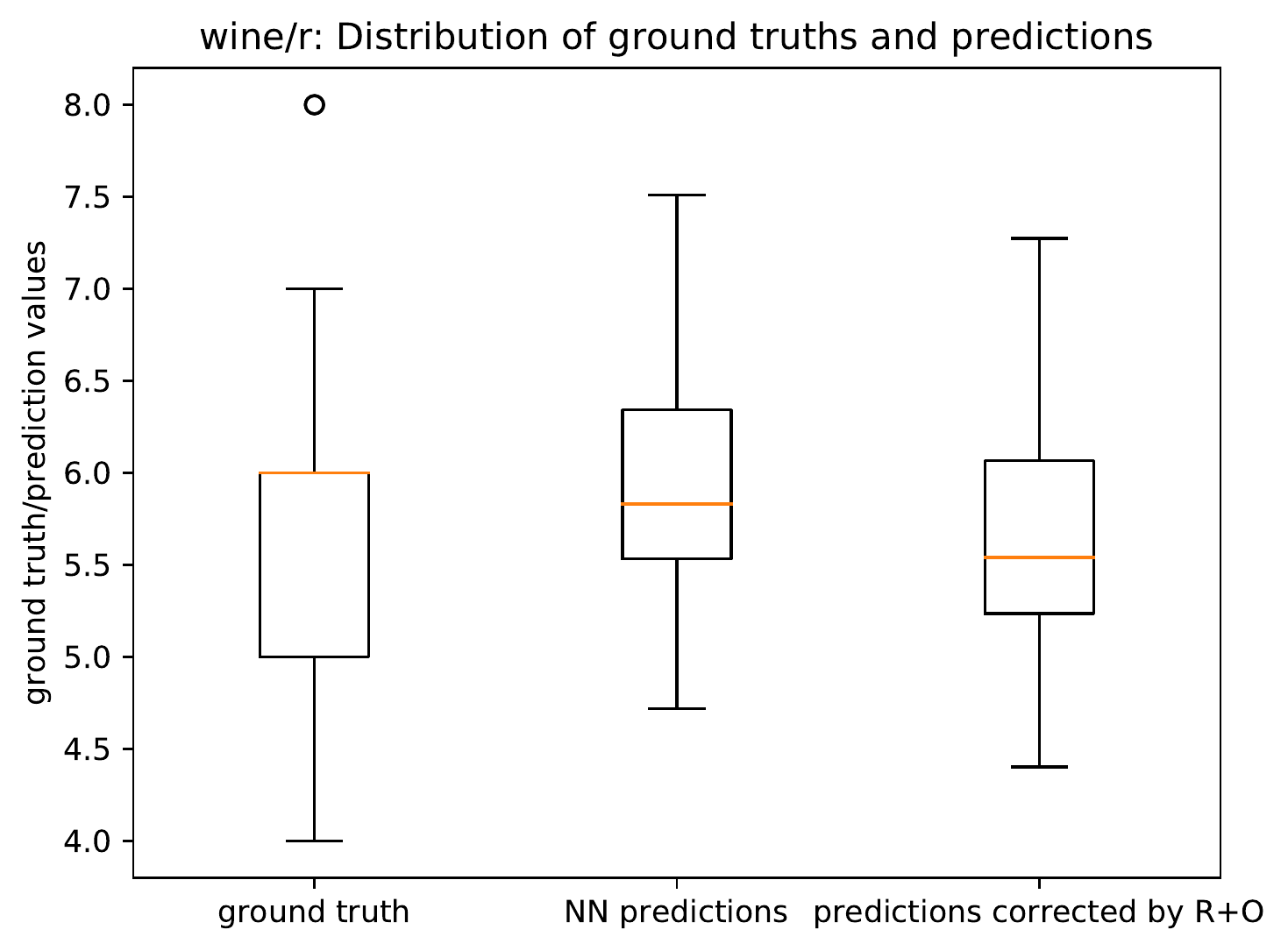}
	\includegraphics[width=0.32\linewidth]{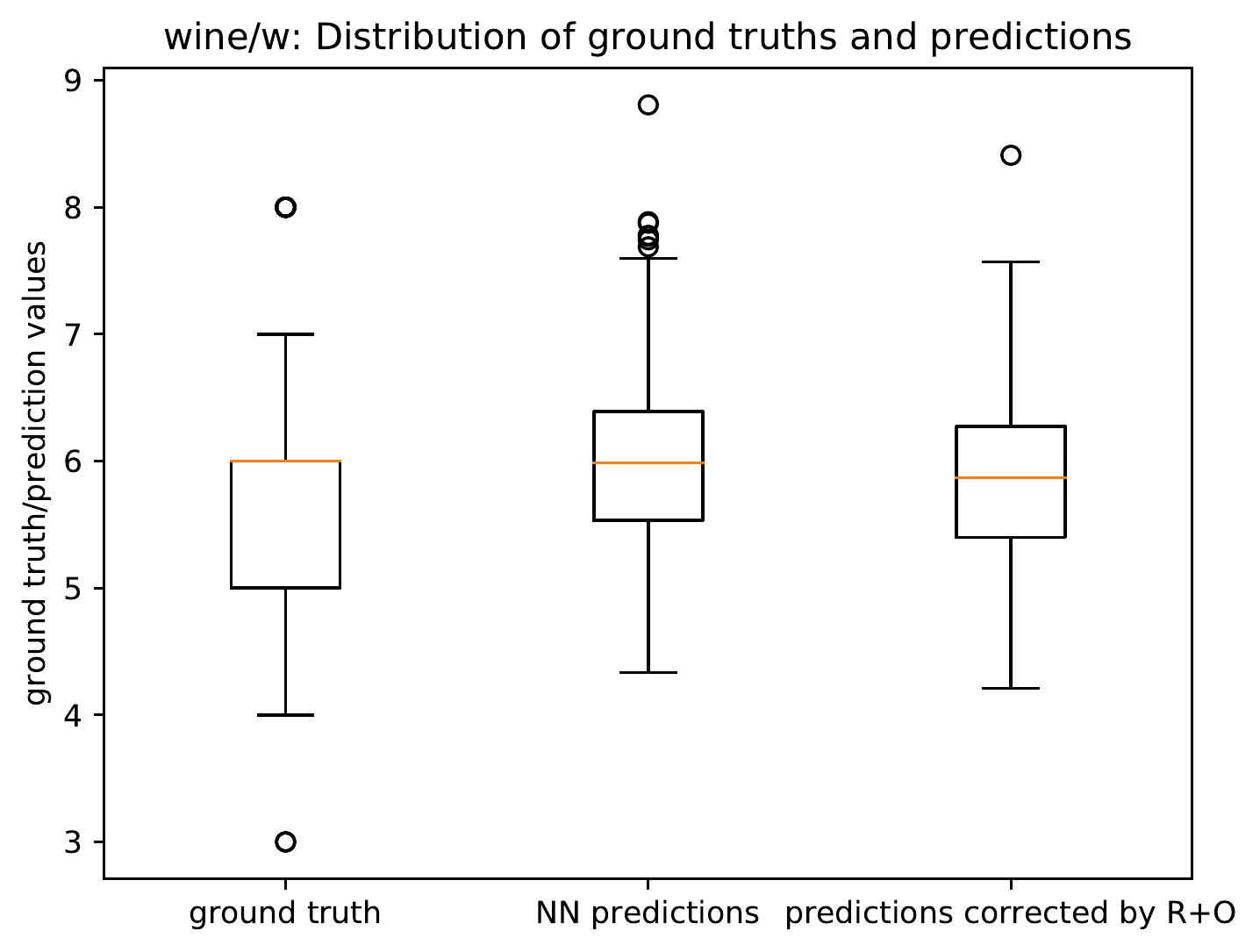}
	\includegraphics[width=0.32\linewidth]{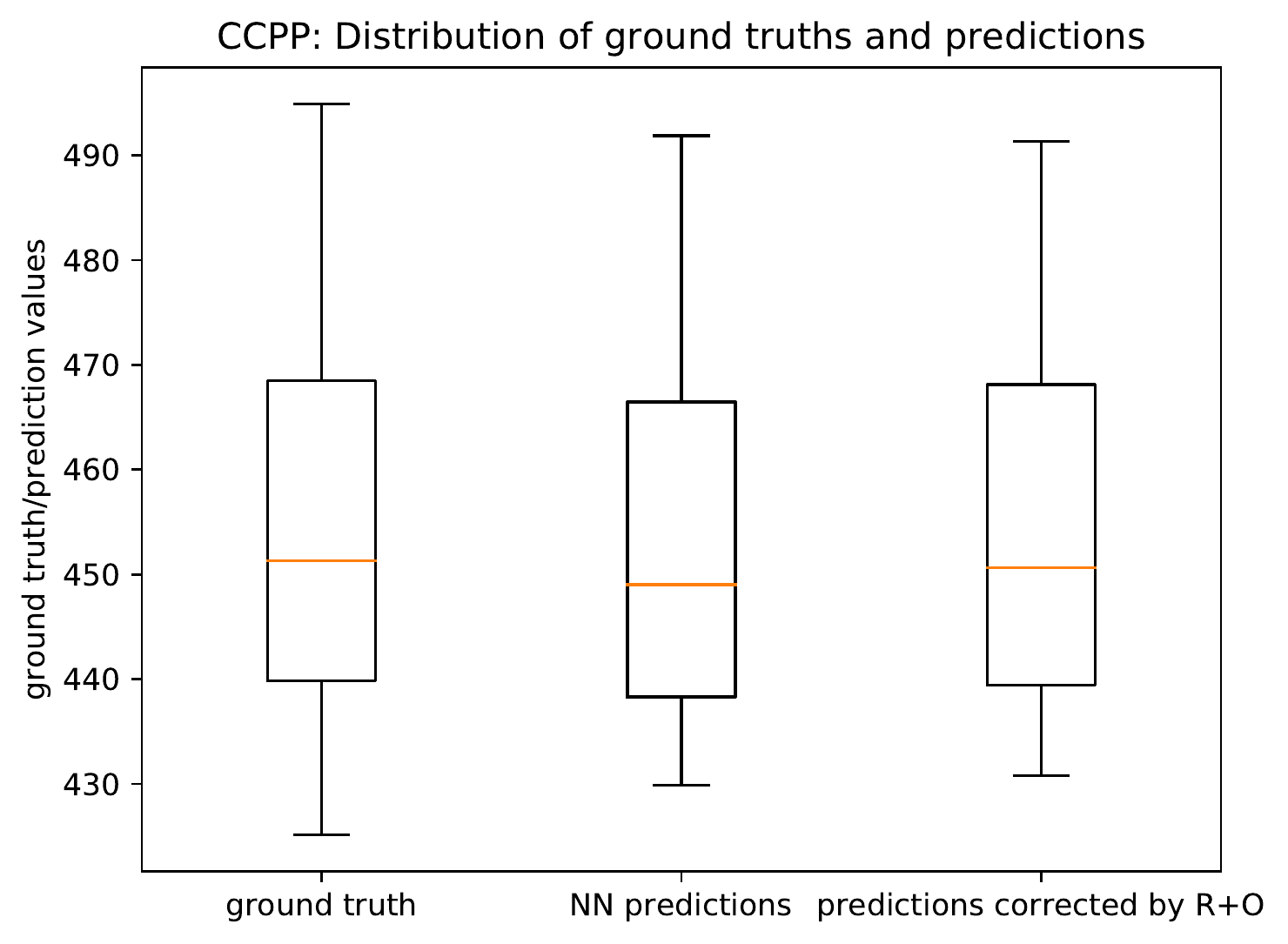}
	\includegraphics[width=0.32\linewidth]{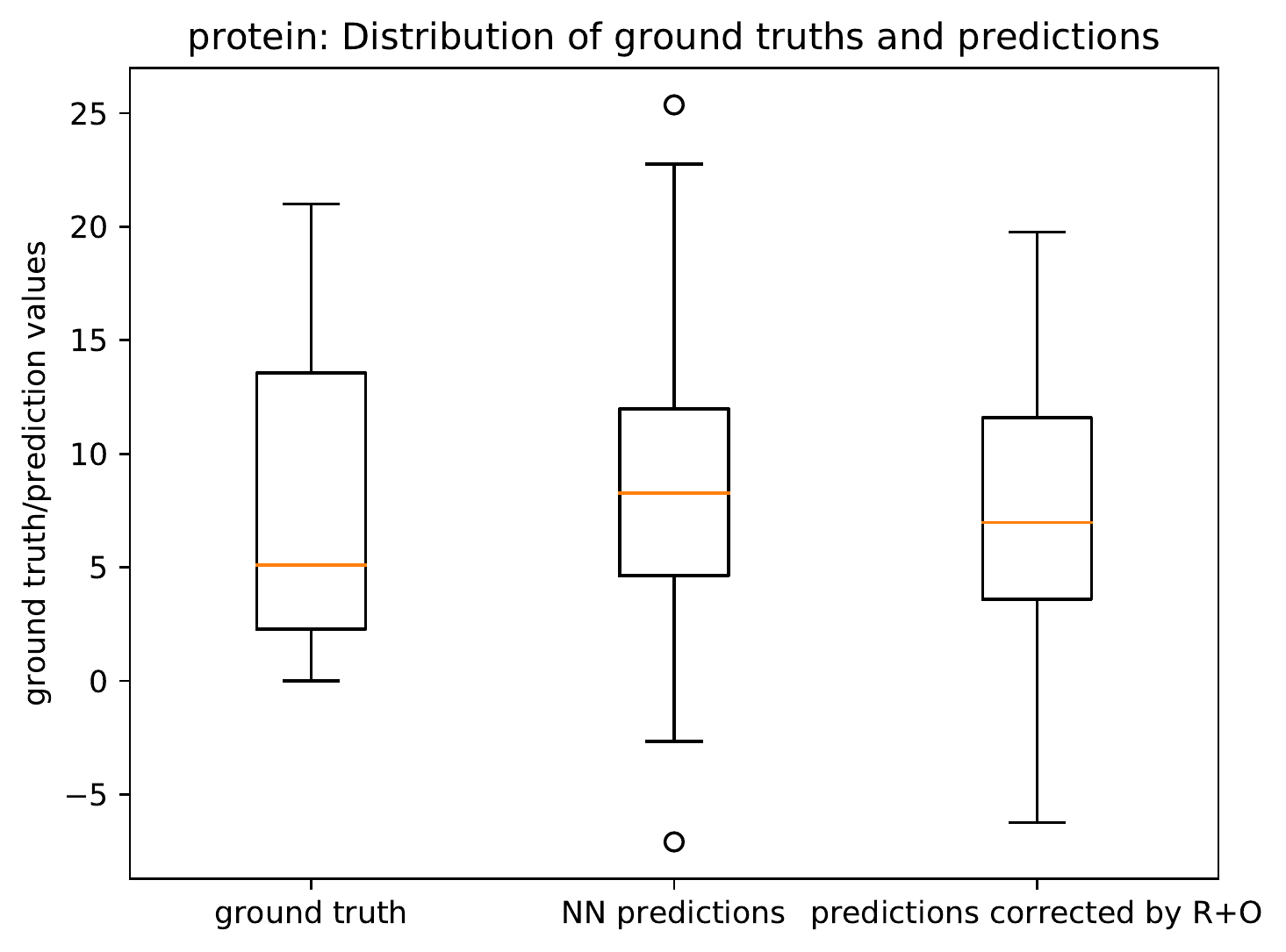}
	\includegraphics[width=0.32\linewidth]{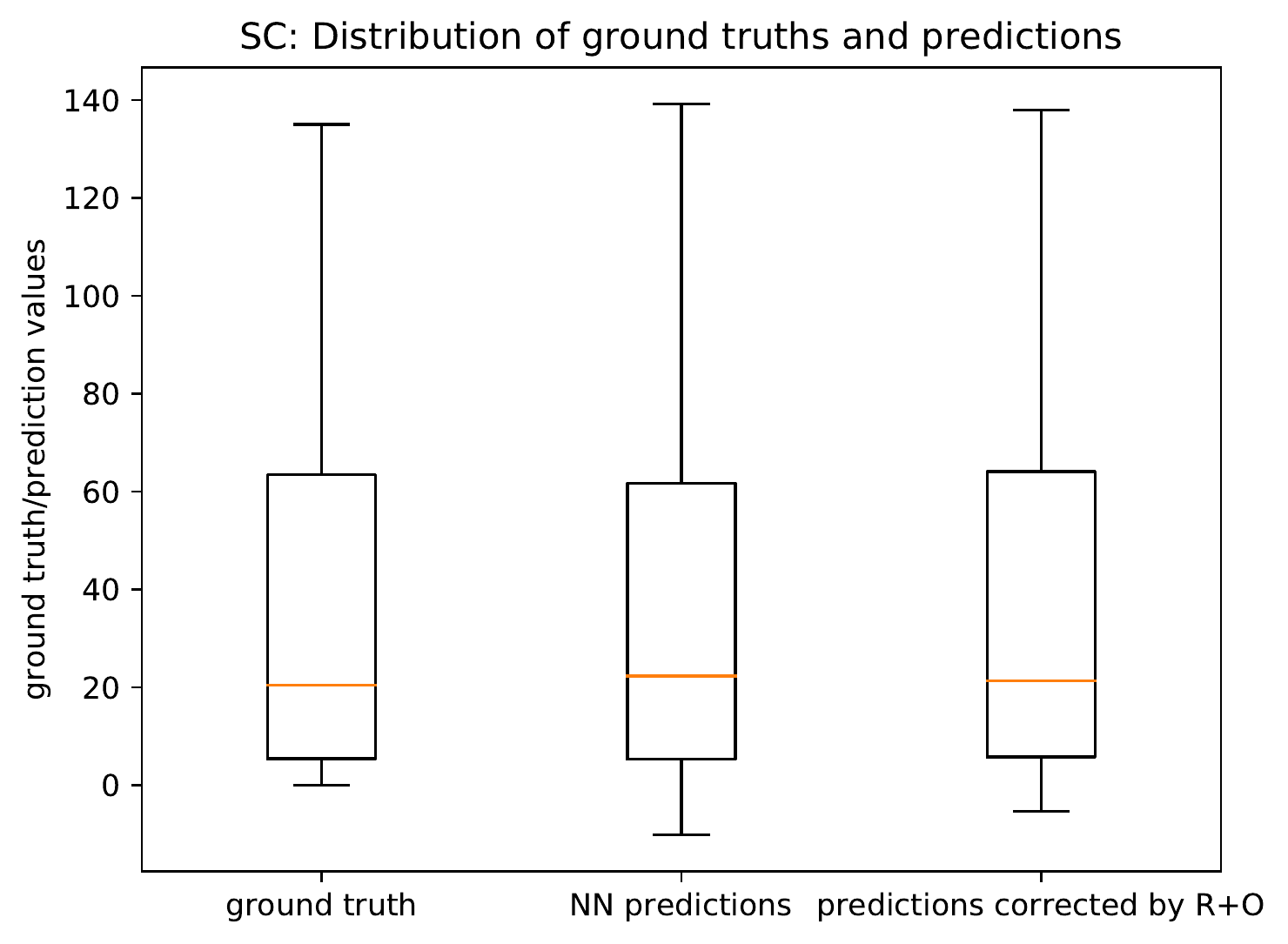}
	\includegraphics[width=0.32\linewidth]{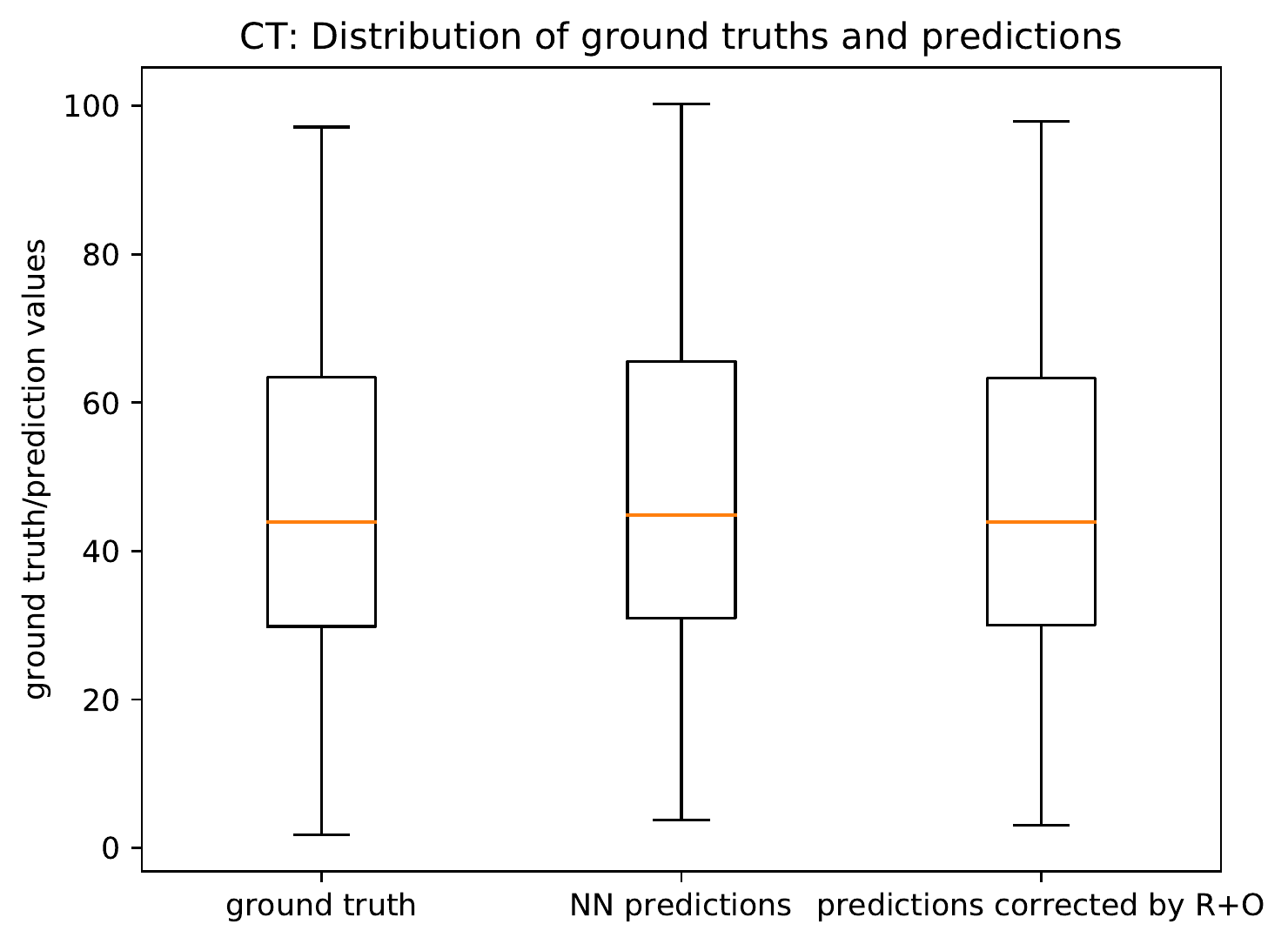}
	\includegraphics[width=0.32\linewidth]{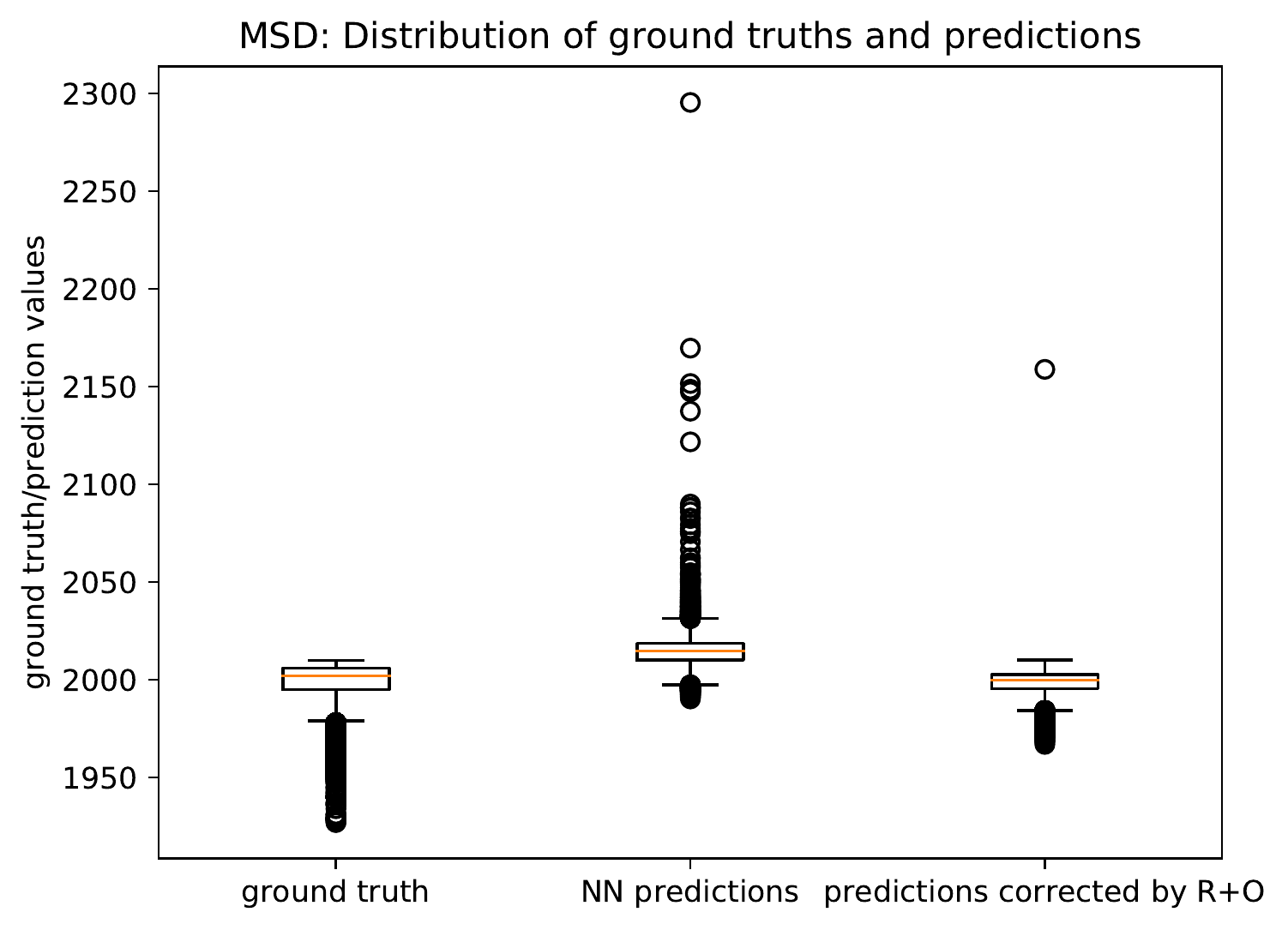}
	\caption{\textbf{Distribution of Ground Truths and NN/R+O-corrected predictions.} \label{fig:dist_pred_out}
		The box extends from the lower to upper quartile values of the data points, with a line at the median. The whiskers extend from the box to show the range of the data. Flier points are those past the end of the whiskers, indicating the outliers. 
	}
\end{figure}
\begin{figure}
	\centering
	\includegraphics[width=0.32\linewidth]{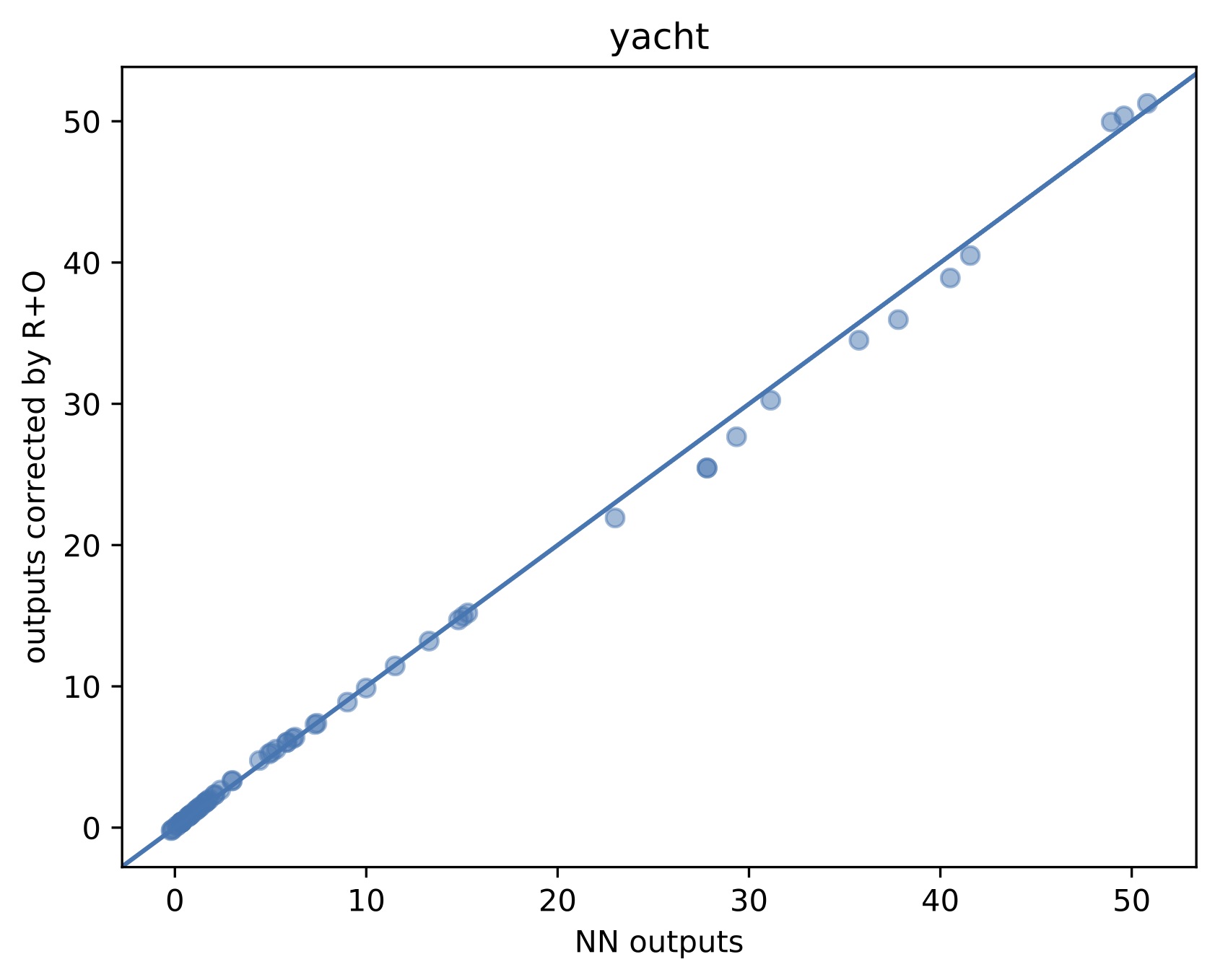}
	\includegraphics[width=0.32\linewidth]{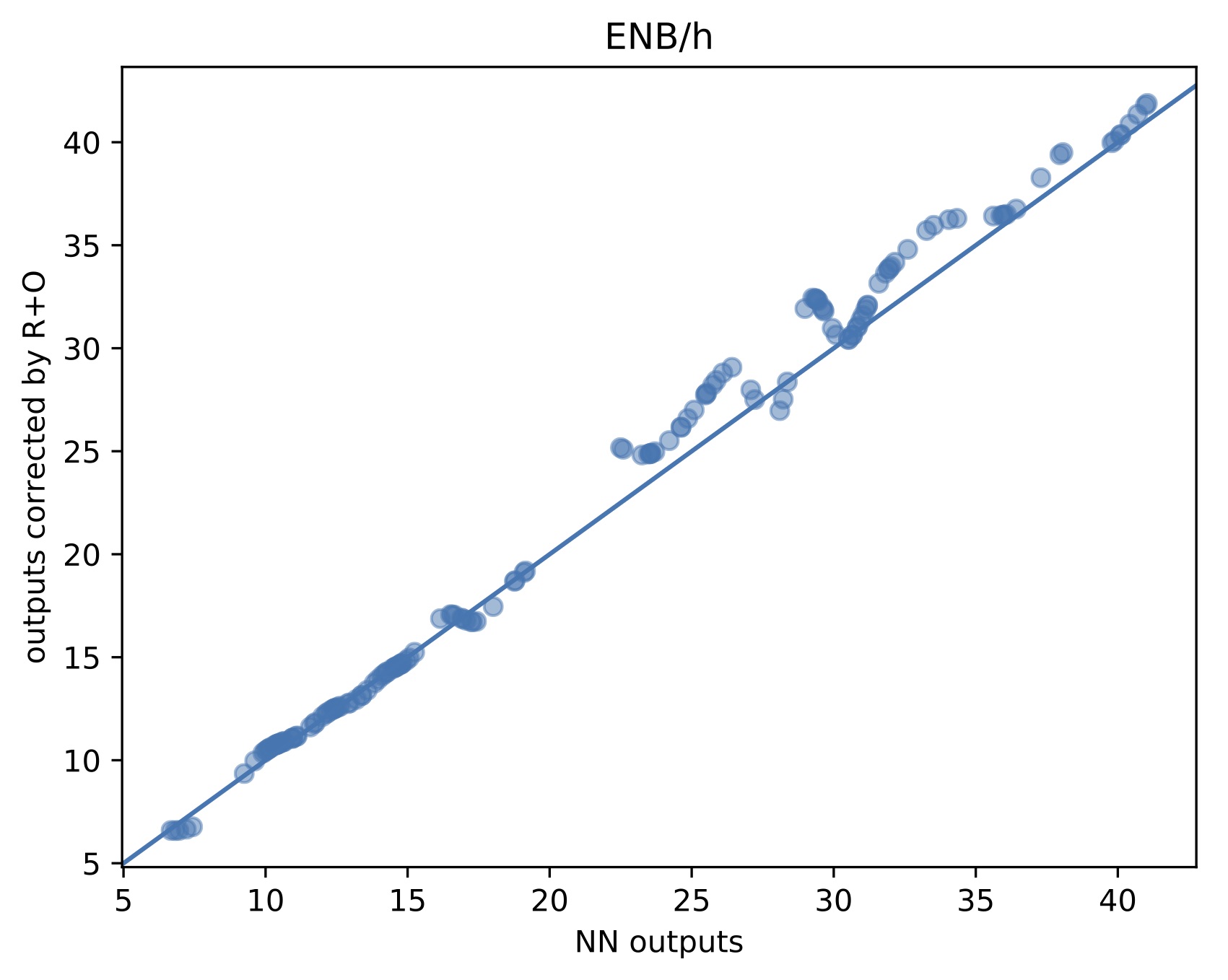}
	\includegraphics[width=0.32\linewidth]{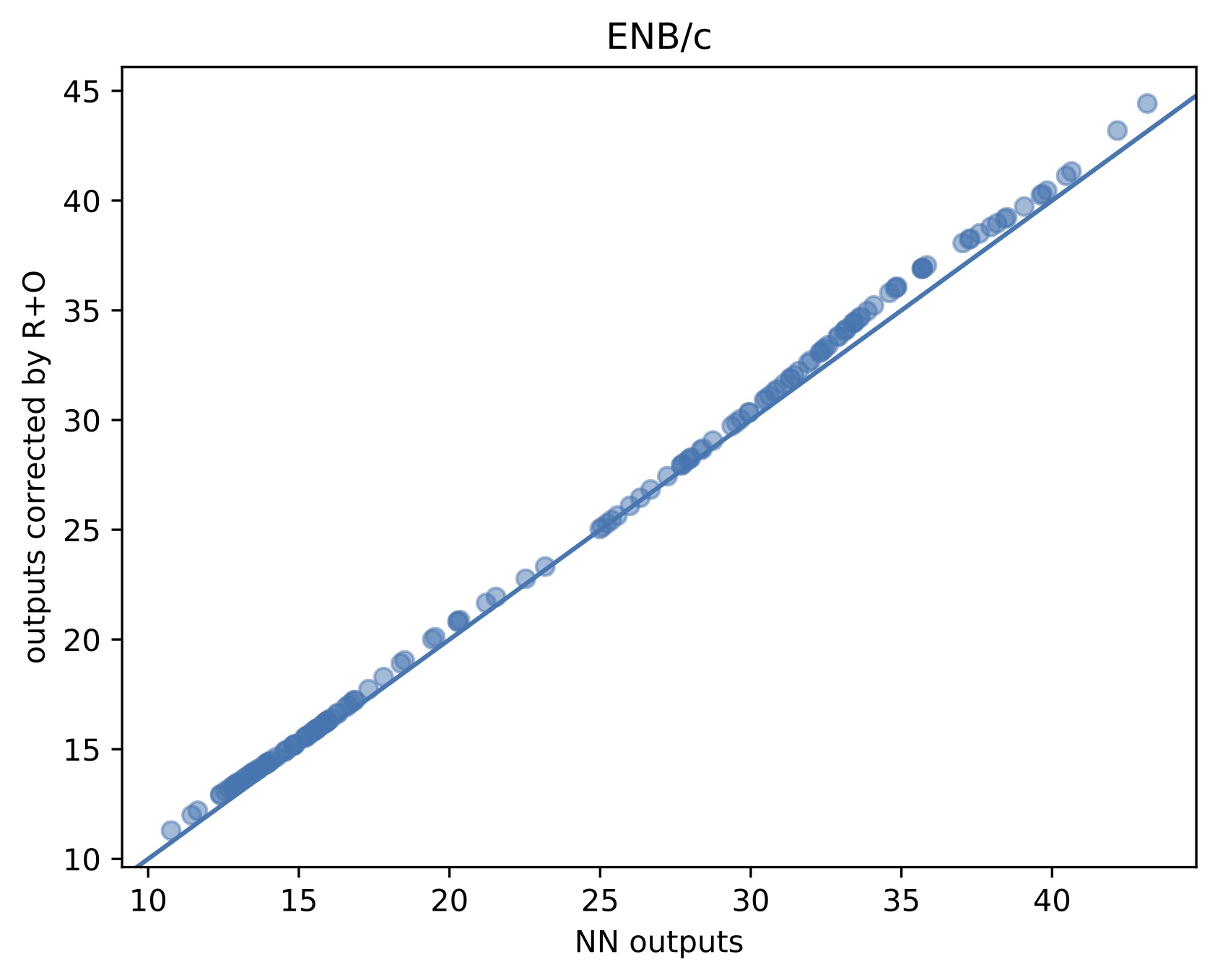}
	\includegraphics[width=0.32\linewidth]{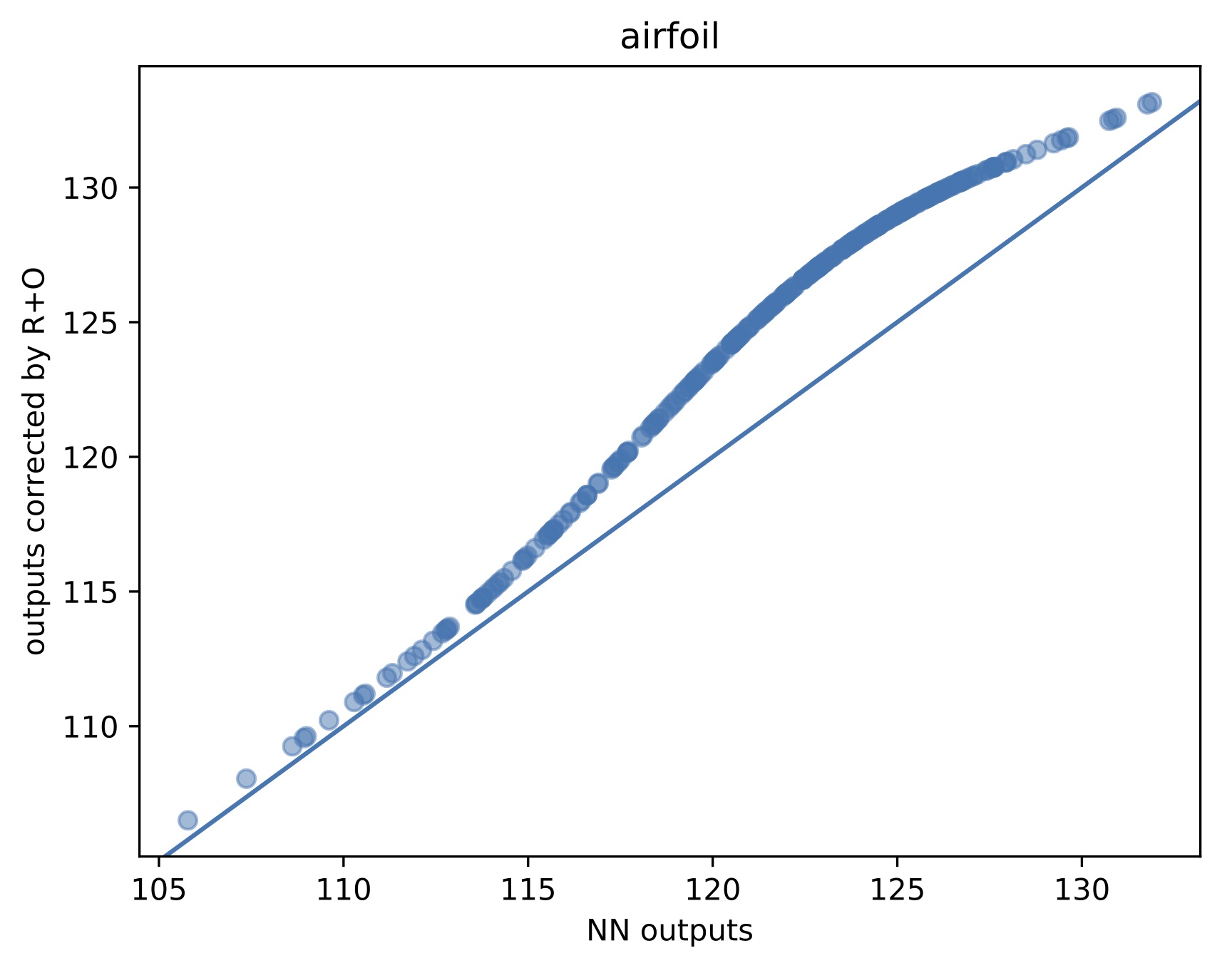}
	\includegraphics[width=0.32\linewidth]{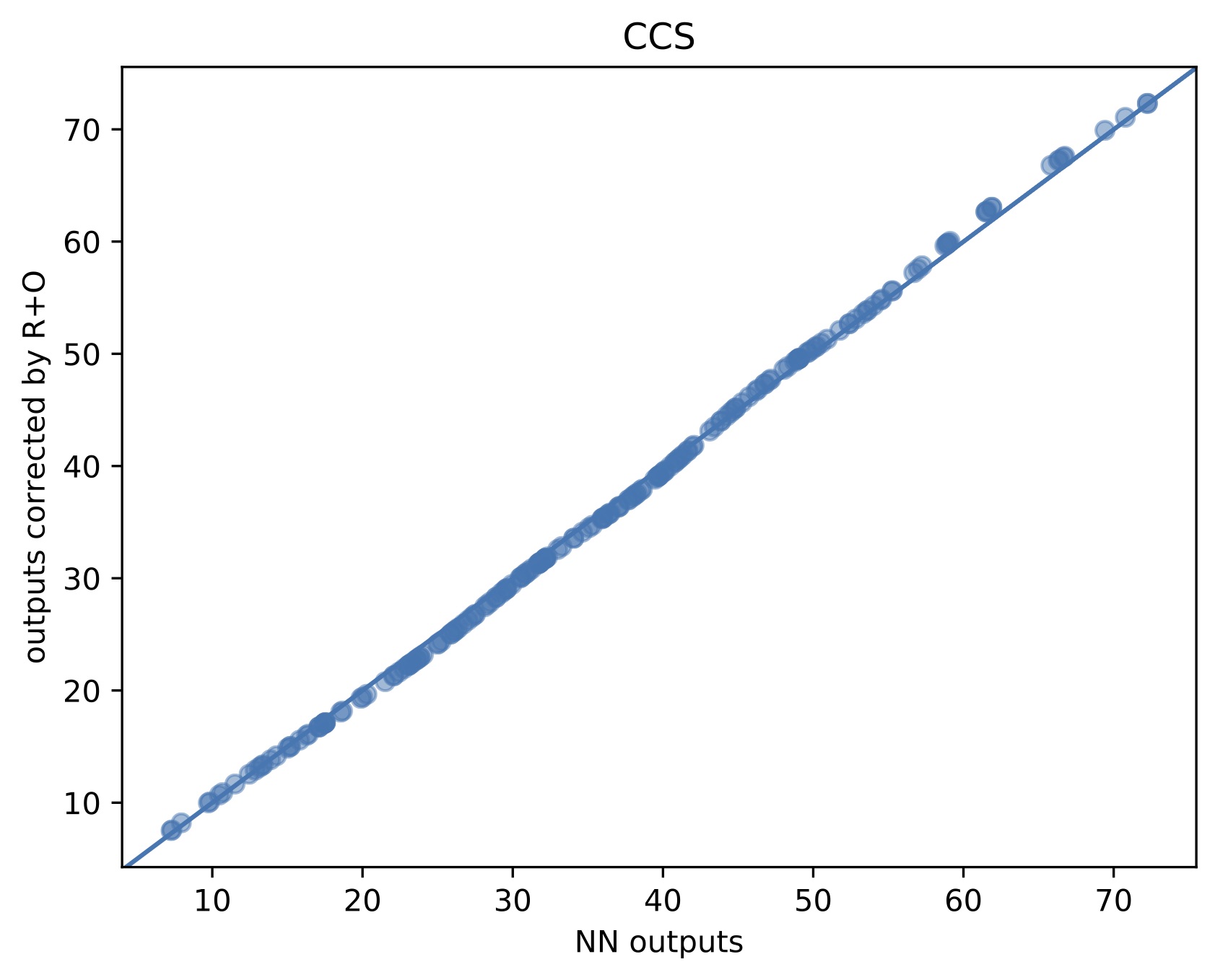}
	\includegraphics[width=0.32\linewidth]{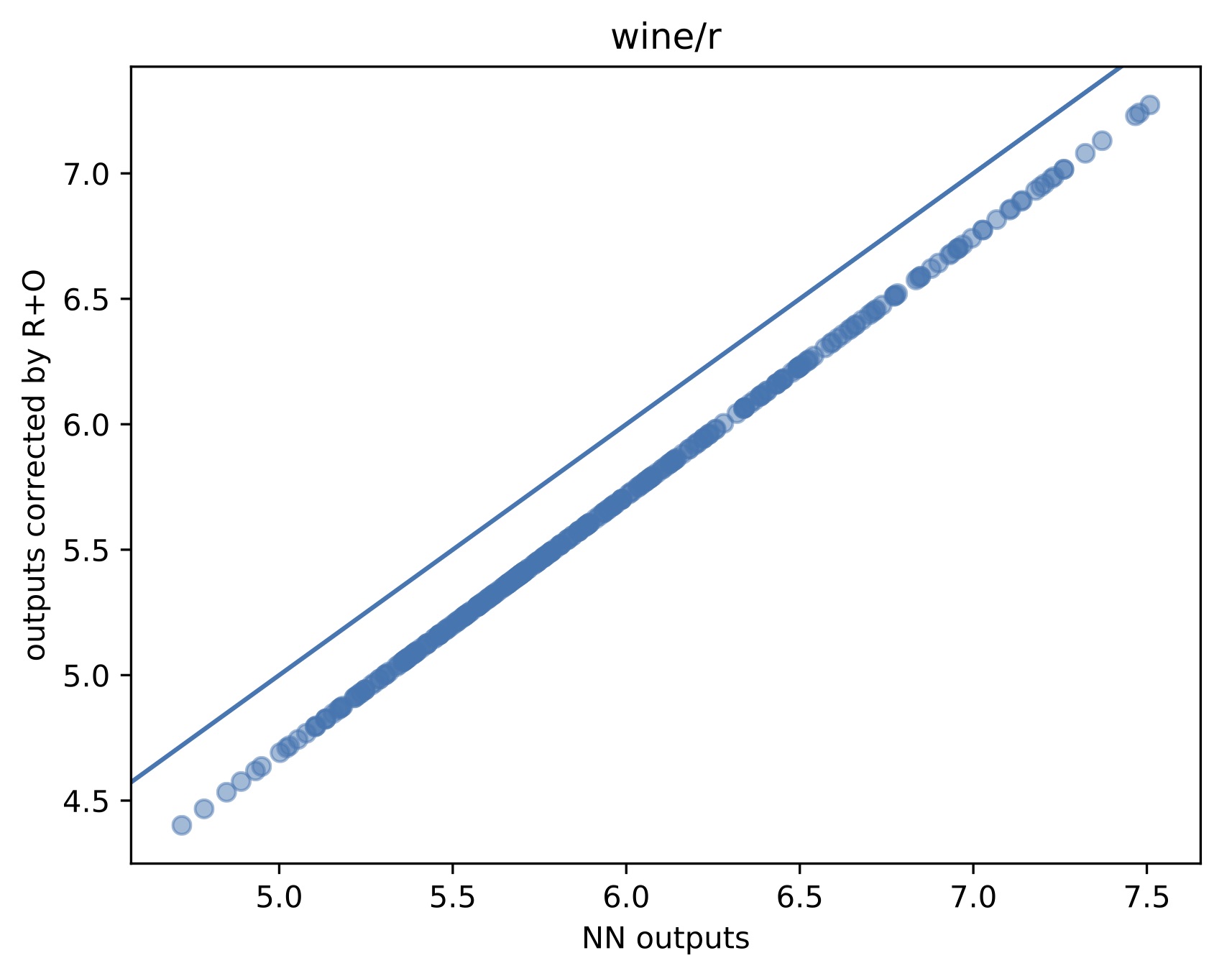}
	\includegraphics[width=0.32\linewidth]{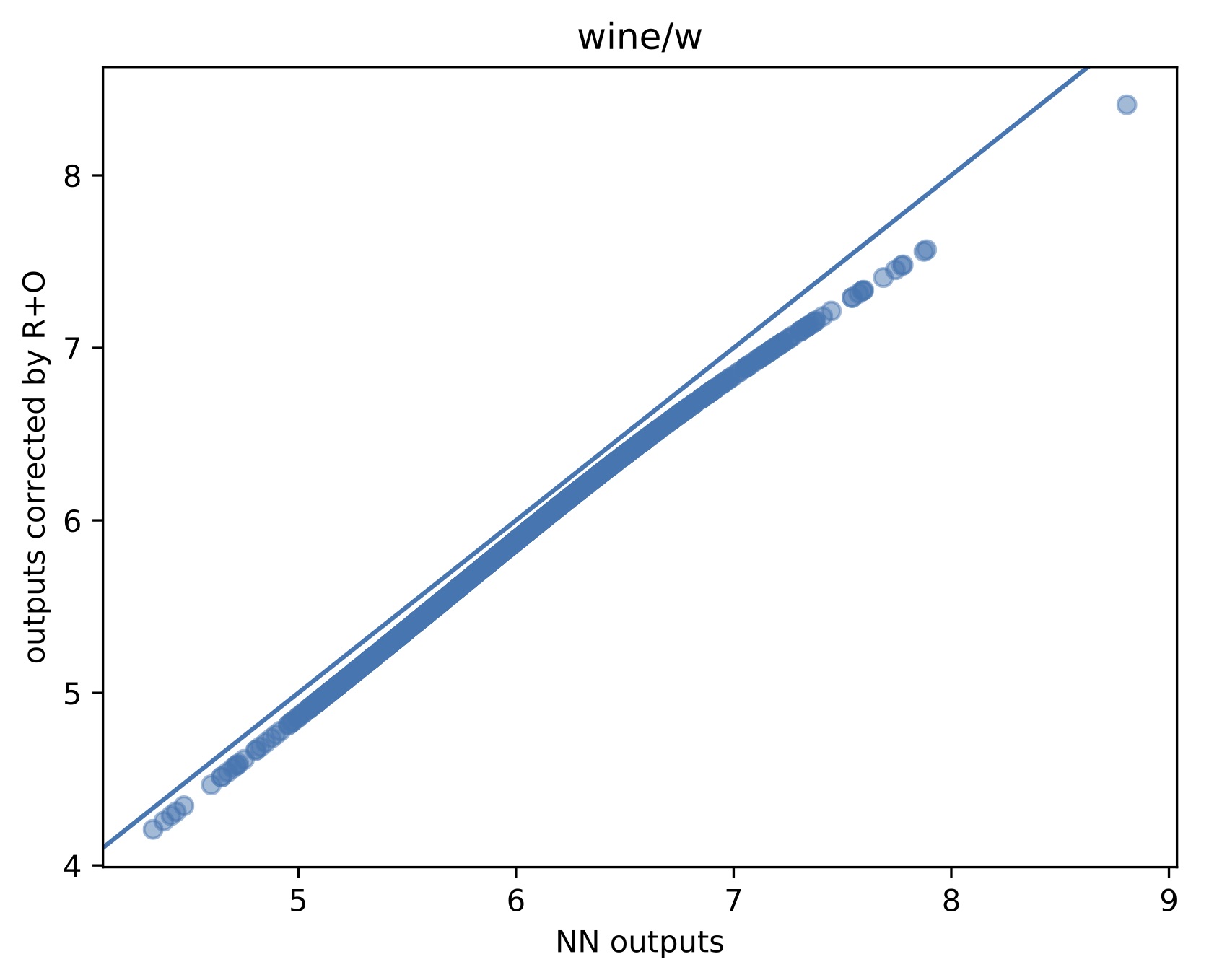}
	\includegraphics[width=0.32\linewidth]{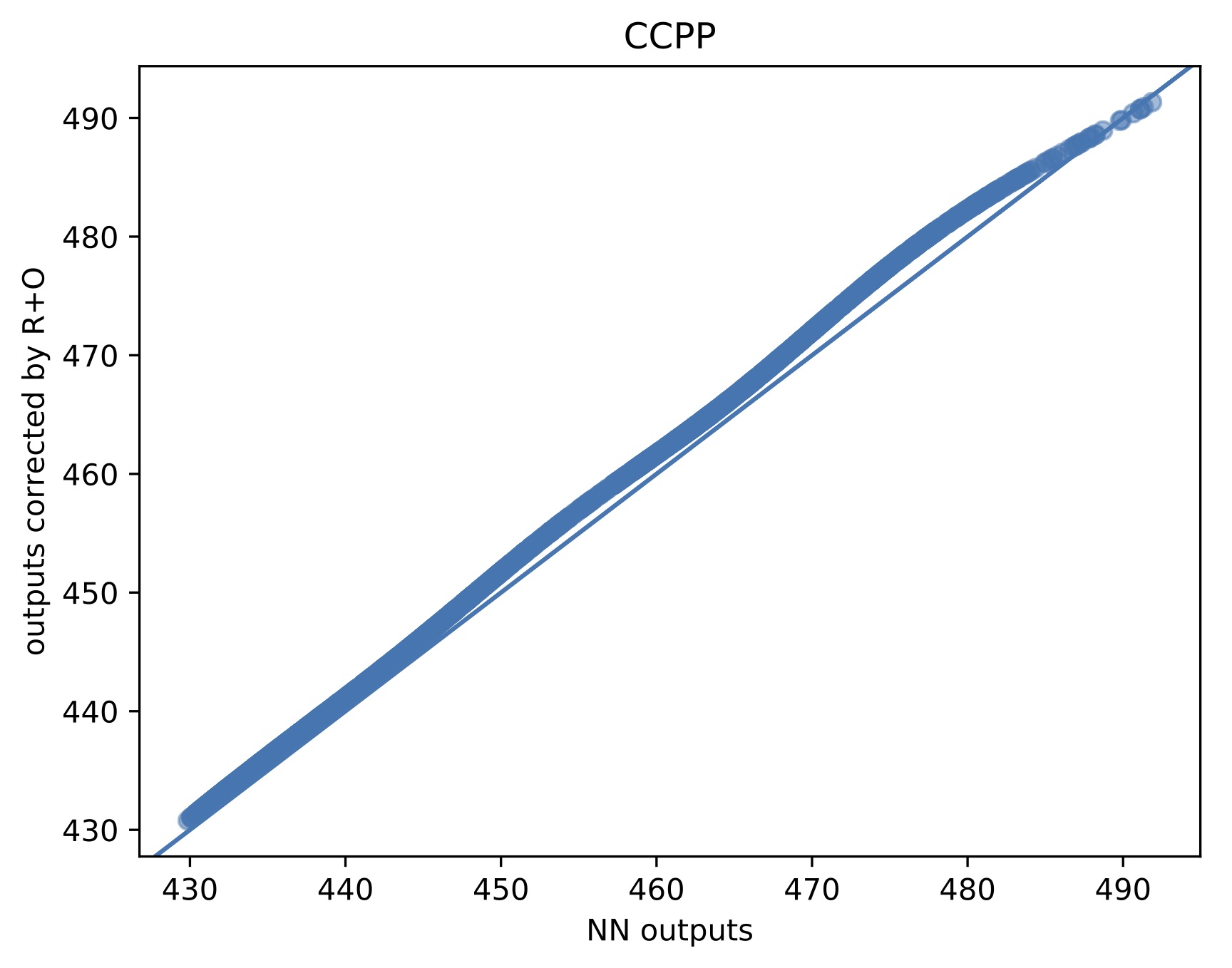}
	\includegraphics[width=0.32\linewidth]{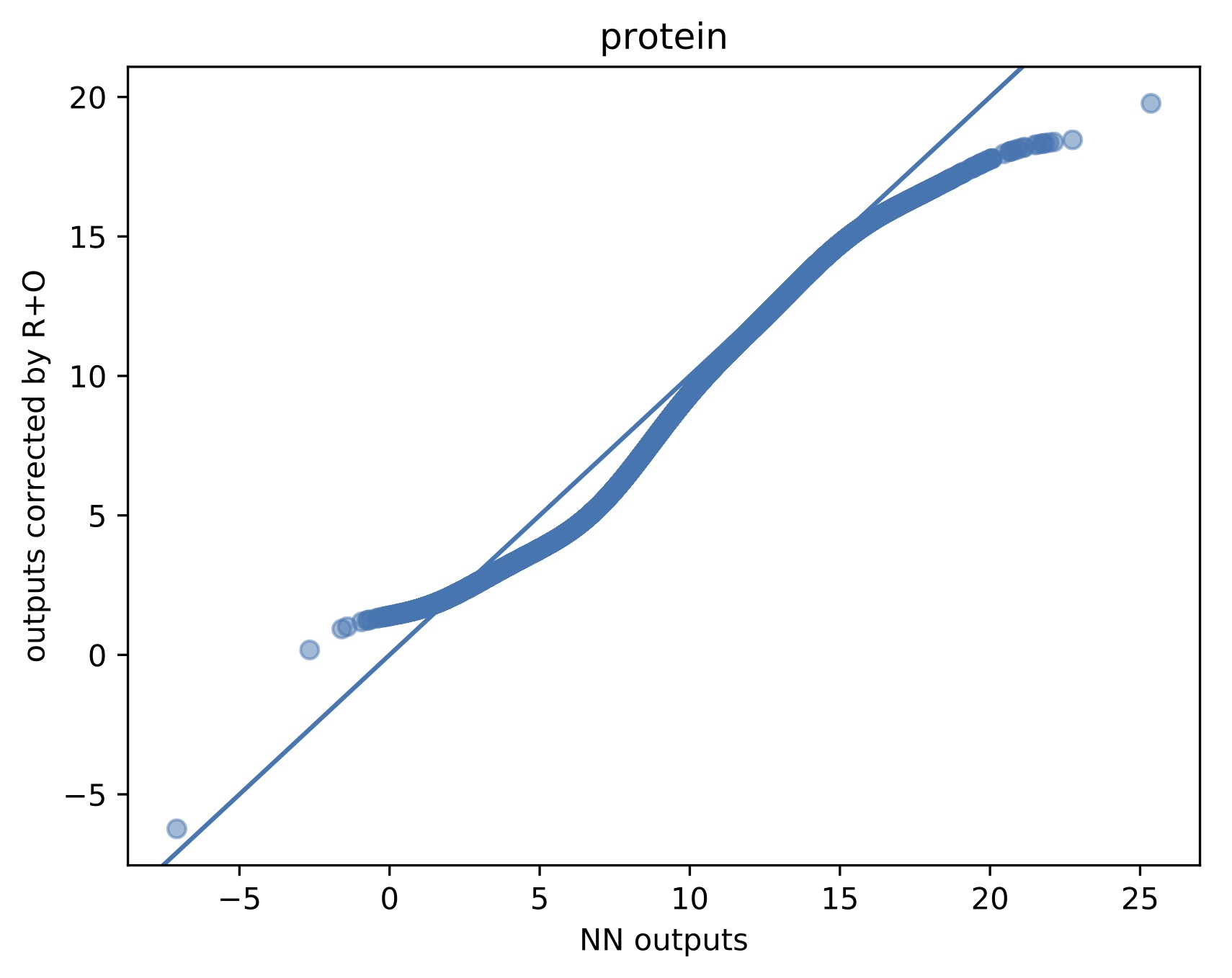}
	\includegraphics[width=0.32\linewidth]{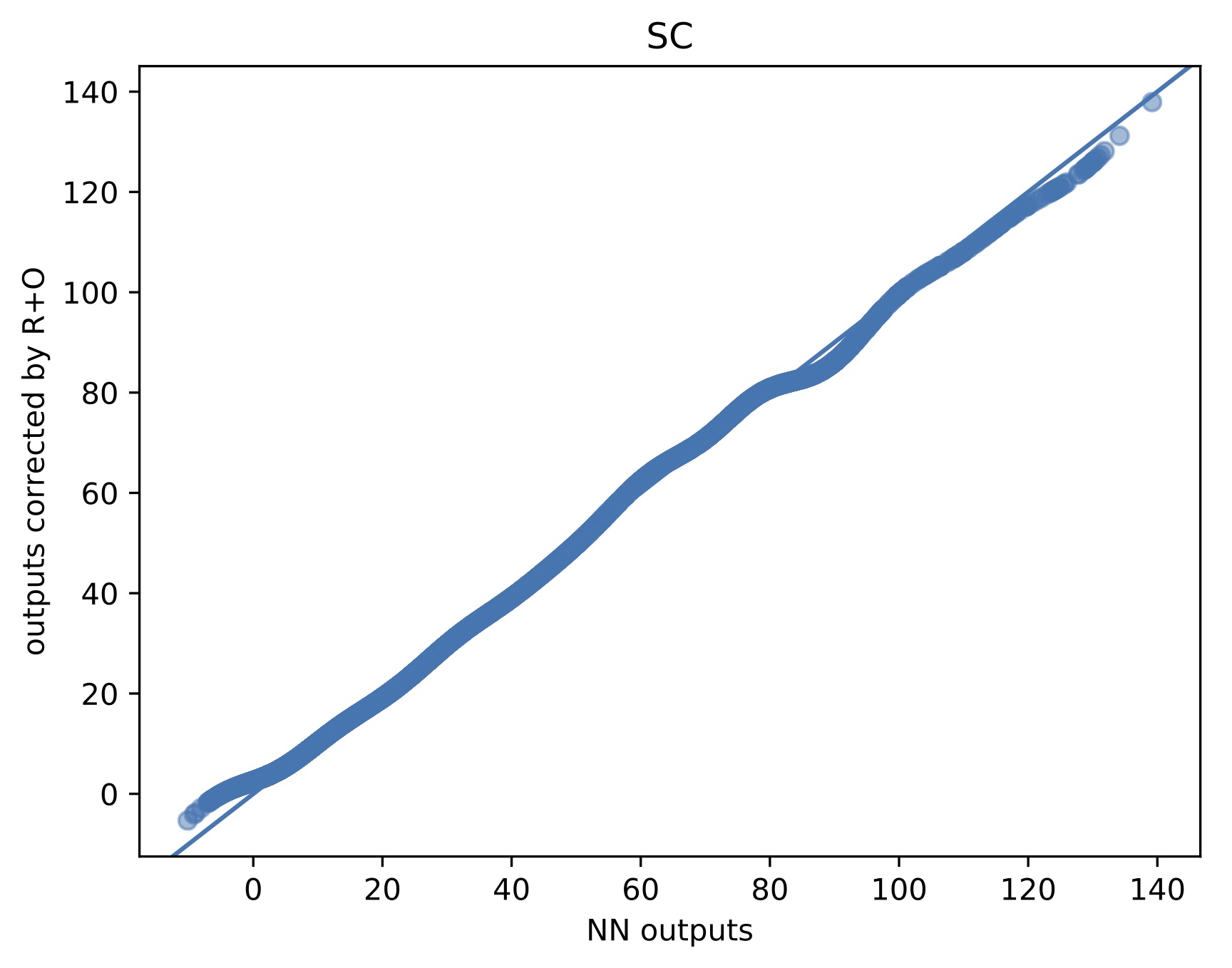}
	\includegraphics[width=0.32\linewidth]{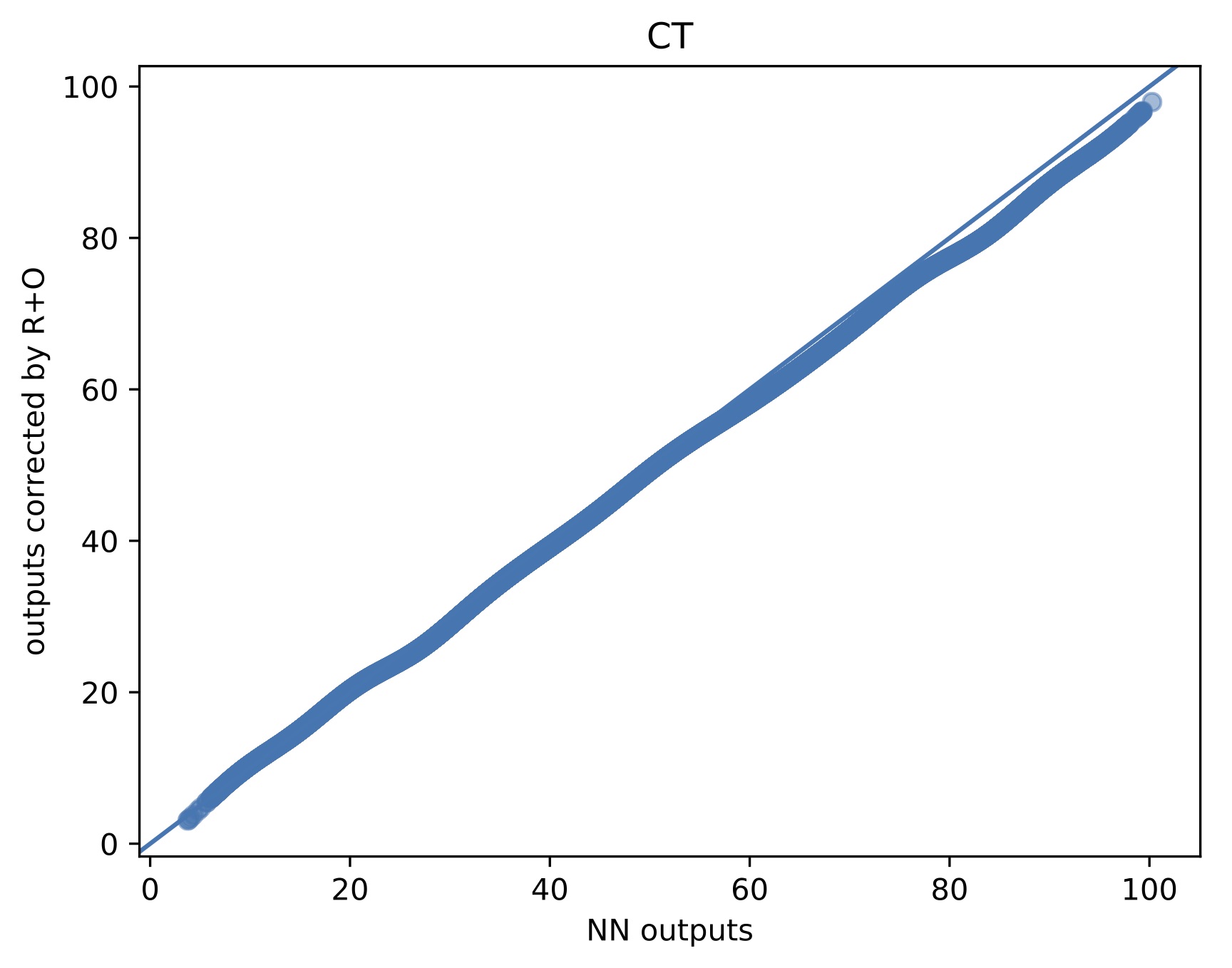}
	\includegraphics[width=0.32\linewidth]{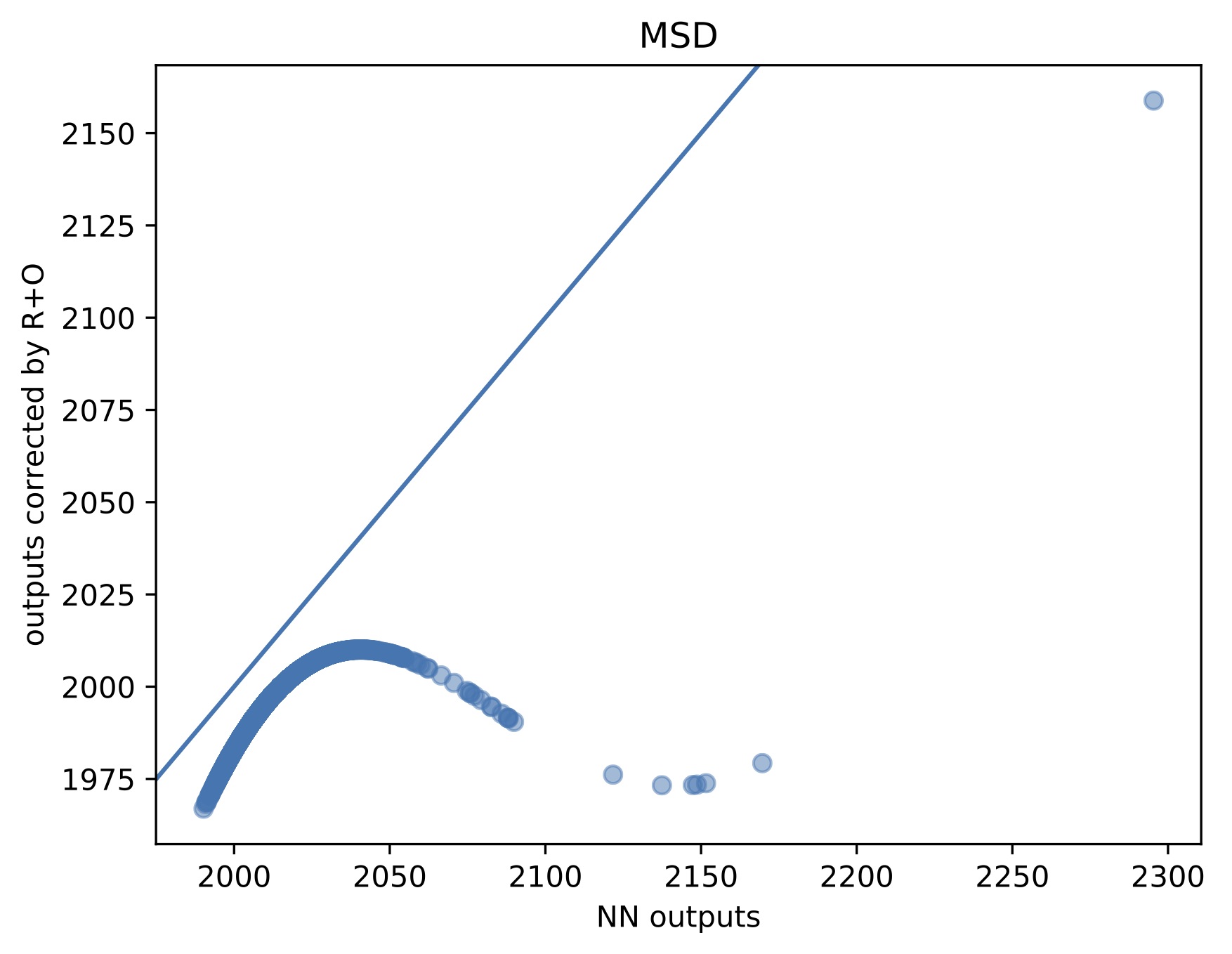}
	\caption{\textbf{Comparison between NN and R+O-corrected outputs.} \label{fig:comp_output_out}
		Each dot represents a data point. The horizontal axis denote the original NN predictions, and the vertical axis the corresponding predictions after R+O corrections. The solid line indicates where NN predictions are same as R+O-corrected predictions (i.e., no change in output).
	}
\end{figure}
\begin{figure}
	\centering
	\includegraphics[width=0.96\linewidth]{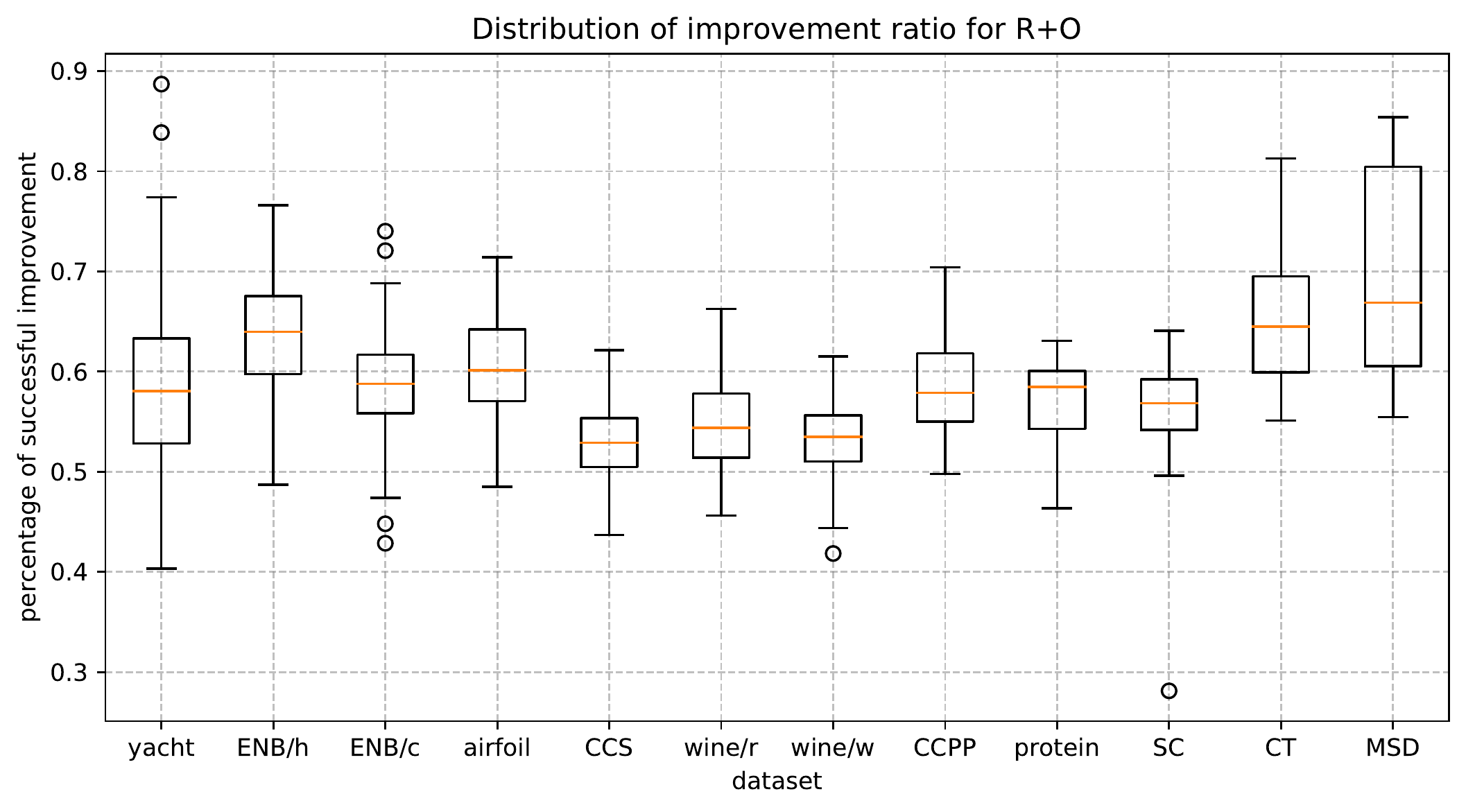}
	\caption{\textbf{Distribution of Impovement Ratio.} \label{fig:dist_IR_out}
		The box extends from the lower to upper quartile values of the data (each data point represents an independent experimental run), with a line at the median. The whiskers extend from the box to show the range of the data. Flier points are those past the end of the whiskers, indicating the outliers.
	}
\end{figure}

\subsection{Experimental Results with ARD}
The experimental results shown in the main paper is based on the setup that all RIO variants are using standard RBF kernel without Automatic Relevance Determination (ARD). To investigate the performance of RIO under more sophisticated setups, same experiments are run for all RIO variants with ARD mechanism turned on (all other experimental setups are identical with section~\ref{app:exp_setup}, except that the NN depth for MSD dataset is reduced from 4 hidden layers to 2, due to computation failure issue during Cholesky decomposition). Table~\ref{exp:ARD} shows the summarized experimental results. From the results, RIO still performs the best or equals the best method (based on statistical tests) in 9 out of 12 datasets for both RMSE and NLPD metrics. This clearly shows the effectiveness and robustness of RIO.
\begin{table}
	\tiny
	\setlength{\tabcolsep}{3.6pt}
	\centering
	\caption{\label{exp:ARD}Summary of experimental results (with ARD)}
	\begin{tabular}{cccccc|cccccc}
		\hline
		\hline
		Dataset                      & \multirow{2}{*}{Method} & RMSE                             & NLPD                             & Noise    & Time  & Dataset                      & \multirow{2}{*}{Method} & RMSE                             & NLPD                             & Noise    & Time  \\
		n $\times$ d                 &                             & mean$\pm$std                     & mean$\pm$std                     & Variance & (sec) & n $\times$ d                 &                             & mean$\pm$std                     & mean$\pm$std                     & Variance & (sec) \\
		\hline
		\multirow{2}{*}{yacht}   & NN                          & 2.35$\pm$1.16$\dagger\ddagger$   & -                                & -        & 3.41  & \multirow{2}{*}{ENB/h}   & NN                          & 1.01$\pm$0.38$\dagger\ddagger$   & -                                & -        & 8.47  \\
		& RIO                         & 1.22$\pm$0.44                    & 1.718$\pm$0.531                  & 0.46     & 7.73  &                              & RIO                         & \textbf{0.59$\pm$0.10}                    & \textbf{0.937$\pm$0.186}                  & 0.22     & 11.98 \\
		& R+I                         & 1.36$\pm$0.39$\dagger\ddagger$   & 1.893$\pm$0.482$\dagger\ddagger$ & 0.83     & 6.89  &                              & R+I                         & 0.61$\pm$0.12$\dagger\ddagger$   & 0.971$\pm$0.197$\dagger\ddagger$ & 0.25     & 11.07 \\
		308                          & R+O                         & 1.88$\pm$0.65$\dagger\ddagger$   & 2.208$\pm$0.470$\dagger\ddagger$ & 1.62     & 6.01  & 768                          & R+O                         & 0.77$\pm$0.32$\dagger\ddagger$   & 1.150$\pm$0.328$\dagger\ddagger$ & 0.49     & 10.51 \\
		$\times$                     & Y+O                         & 1.88$\pm$0.62$\dagger\ddagger$   & 2.227$\pm$0.439$\dagger\ddagger$ & 1.82     & 9.56  & $\times$                     & Y+O                         & 0.83$\pm$0.37$\dagger\ddagger$   & 1.226$\pm$0.345$\dagger\ddagger$ & 0.61     & 15.31 \\
		6                            & Y+IO                        & 1.09$\pm$0.30$\dagger\ddagger$   & 1.591$\pm$0.425$\dagger\ddagger$ & 0.84     & 9.85  & 8                            & Y+IO                        & 0.61$\pm$0.08$\dagger\ddagger$   & 0.974$\pm$0.164$\dagger\ddagger$ & 0.25     & 15.96 \\
		& SVGP                        & \textbf{0.91$\pm$0.21$\dagger\ddagger$}   & \textbf{1.359$\pm$0.208$\dagger\ddagger$} & 0.77     & 8.93  &                              & SVGP                        & 0.77$\pm$0.05$\dagger\ddagger$   & 1.156$\pm$0.049$\dagger\ddagger$ & 0.59     & 14.21 \\
		& NNGP                        & 12.40$\pm$1.45$\dagger\ddagger$  & 35.18$\pm$0.534$\dagger\ddagger$ & -        & 7347  &                              & NNGP                        & 4.97$\pm$0.29$\dagger\ddagger$   & 32.40$\pm$0.638$\dagger\ddagger$ & -        & 7374  \\
		& ANP                         & 7.59$\pm$3.20$\dagger\ddagger$   & 1.793$\pm$0.887$\dagger\ddagger$ & -        & 40.82 &                              & ANP                         & 4.08$\pm$2.27$\dagger\ddagger$   & 2.475$\pm$0.559$\dagger\ddagger$ & -        & 102.3 \\
		\hline
		\multirow{2}{*}{ENB/c}   & NN                          & 1.88$\pm$0.46$\dagger\ddagger$   & -                                & -        & 11.64 & \multirow{2}{*}{airfoil} & NN                          & 4.84$\pm$0.74$\dagger\ddagger$   & -                                & -        & 8.83  \\
		& RIO                         & \textbf{1.42$\pm$0.18}                    & \textbf{1.825$\pm$0.155}                  & 1.31     & 15.27 &                              & RIO                         & \textbf{2.49$\pm$0.15}                    & \textbf{2.349$\pm$0.048}                  & 6.56     & 17.92 \\
		& R+I                         & 1.45$\pm$0.17$\dagger\ddagger$   & 1.837$\pm$0.134$\ddagger$        & 1.45     & 14.02 &                              & R+I                         & 2.56$\pm$0.16$\dagger\ddagger$   & 2.376$\pm$0.054$\dagger\ddagger$ & 6.92     & 16.64 \\
		768                          & R+O                         & 1.77$\pm$0.46$\dagger\ddagger$   & 2.021$\pm$0.232$\dagger\ddagger$ & 2.34     & 9.73  & 1505                         & R+O                         & 4.14$\pm$0.27$\dagger\ddagger$   & 2.844$\pm$0.071$\dagger\ddagger$ & 16.31    & 10.97 \\
		$\times$                     & Y+O                         & 1.77$\pm$0.45$\dagger\ddagger$   & 2.020$\pm$0.231$\dagger\ddagger$ & 2.56     & 20.54 & $\times$                     & Y+O                         & 4.39$\pm$1.64$\dagger\ddagger$   & 2.889$\pm$0.196$\dagger\ddagger$ & 20.08    & 22.90  \\
		8                            & Y+IO                        & 1.46$\pm$0.22$\dagger\ddagger$   & 1.847$\pm$0.157$\dagger\ddagger$ & 1.42     & 21.65 & 5                            & Y+IO                        & 3.73$\pm$0.35$\dagger\ddagger$   & 2.746$\pm$0.127$\dagger\ddagger$ & 14.88    & 25.08 \\
		& SVGP                        & 1.73$\pm$0.25$\dagger\ddagger$   & 1.969$\pm$0.130$\dagger\ddagger$ & 3.12     & 18.73 &                              & SVGP                        & 3.36$\pm$0.23$\dagger\ddagger$   & 2.632$\pm$0.065$\dagger\ddagger$ & 11.11    & 21.80  \\
		& NNGP                        & 4.91$\pm$0.32$\dagger\ddagger$   & 30.14$\pm$0.886$\dagger\ddagger$ & -        & 7704  &                              & NNGP                        & 6.54$\pm$0.23$\dagger\ddagger$   & 33.60$\pm$0.420$\dagger\ddagger$ & -        & 3355  \\
		& ANP                         & 4.81$\pm$2.15$\dagger\ddagger$   & 2.698$\pm$0.548$\dagger\ddagger$ & -        & 64.11 &                              & ANP                         & 21.17$\pm$30.72$\dagger\ddagger$ & 5.399$\pm$6.316$\dagger\ddagger$ & -        & 231.7 \\
		\hline
		\multirow{2}{*}{CCS}     & NN                          & 6.29$\pm$0.54$\dagger\ddagger$   & -                                & -        & 6.53  & \multirow{2}{*}{wine/r}  & NN                          & 0.689$\pm$0.037$\dagger\ddagger$ & -                                & -        & 3.24  \\
		& RIO                         & \textbf{5.81$\pm$0.50}                    & \textbf{3.210$\pm$0.119}                  & 23.00       & 9.20   &                              & RIO                         & 0.674$\pm$0.033                  & 1.091$\pm$0.082                  & 0.28     & 7.55  \\
		1030                         & R+I                         & \textbf{5.80$\pm$0.49}                    & \textbf{3.206$\pm$0.118}                  & 23.19    & 8.80   & 1599                         & R+I                         & 0.670$\pm$0.033$\dagger\ddagger$ & 1.085$\pm$0.084$\dagger\ddagger$ & 0.28     & 9.52  \\
		$\times$                     & R+O                         & 6.21$\pm$0.53$\dagger\ddagger$   & 3.286$\pm$0.121$\dagger\ddagger$ & 27.02    & 3.77  & $\times$                     & R+O                         & 0.676$\pm$0.034$\dagger\ddagger$ & 1.093$\pm$0.083                  & 0.29     & 5.18  \\
		8                            & Y+O                         & 6.18$\pm$0.51$\dagger\ddagger$   & 3.278$\pm$0.114$\dagger\ddagger$ & 27.22    & 11.74 & 11                           & Y+O                         & 0.675$\pm$0.033                  & 1.088$\pm$0.081$\ddagger$        & 0.29     & 13.56 \\
		& Y+IO                        & 5.91$\pm$0.46$\dagger\ddagger$   & 3.228$\pm$0.108$\dagger\ddagger$ & 24.50     & 12.09 &                              & Y+IO                        & 0.672$\pm$0.033$\dagger\ddagger$ & 1.085$\pm$0.081$\dagger\ddagger$ & 0.29     & 14.23 \\
		& SVGP                        & 6.20$\pm$0.40$\dagger\ddagger$   & 3.233$\pm$0.060$\dagger\ddagger$ & 34.66    & 11.34 &                              & SVGP                        & \textbf{0.640$\pm$0.028$\dagger\ddagger$} & \textbf{0.969$\pm$0.043$\dagger\ddagger$} & 0.39     & 13.32 \\
		\hline
		\multirow{2}{*}{wine/w}  & NN                          & 0.725$\pm$0.026$\dagger\ddagger$ & -                                & -        & 7.01  & \multirow{2}{*}{CCPP}    & NN                          & 4.97$\pm$0.53$\dagger\ddagger$   & -                                & -        & 10.11 \\
		& RIO                         & 0.707$\pm$0.017                  & 1.093$\pm$0.035                  & 0.38     & 11.8  &                              & RIO                         & \textbf{3.99$\pm$0.13}                    & \textbf{2.796$\pm$0.026}                  & 15.82    & 26.27 \\
		4898                         & R+I                         & \textbf{0.702$\pm$0.018$\dagger\ddagger$} & 1.084$\pm$0.035$\dagger\ddagger$ & 0.38     & 14.76 & 9568                         & R+I                         & \textbf{3.99$\pm$0.13}                    & 2.797$\pm$0.025$\dagger\ddagger$ & 15.87    & 24.47 \\
		$\times$                     & R+O                         & 0.711$\pm$0.019$\dagger\ddagger$ & 1.098$\pm$0.037$\dagger\ddagger$ & 0.39     & 6.46  & $\times$                     & R+O                         & 4.33$\pm$0.13$\dagger\ddagger$   & 2.879$\pm$0.027$\dagger\ddagger$ & 18.51    & 9.99  \\
		11                           & Y+O                         & 0.710$\pm$0.019$\dagger\ddagger$ & 1.096$\pm$0.037$\dagger\ddagger$ & 0.39     & 19.77 & 4                            & Y+O                         & 8.94$\pm$44.78$\dagger\ddagger$  & 2.974$\pm$0.484$\dagger\ddagger$ & 2095     & 27.24 \\
		& Y+IO                        & 0.708$\pm$0.018$\dagger$         & 1.093$\pm$0.036                  & 0.39     & 20.15 &                              & Y+IO                        & 4.57$\pm$0.97$\dagger\ddagger$   & 2.968$\pm$0.255$\dagger\ddagger$ & 36.48    & 27.65 \\
		& SVGP                        & 0.714$\pm$0.017$\dagger\ddagger$ & \textbf{1.074$\pm$0.022$\dagger\ddagger$} & 0.50      & 19.42 &                              & SVGP                        & 8.80$\pm$44.83$\ddagger$         & 2.917$\pm$0.468$\dagger\ddagger$ & 1455     & 26.61 \\
		\hline
		\multirow{2}{*}{protein} & NN                          & 4.23$\pm$0.08$\dagger\ddagger$   & -                                & -        & 147.4 & \multirow{2}{*}{SC}      & NN                          & 12.41$\pm$0.84$\dagger\ddagger$  & -                                & -        & 73.11 \\
		& RIO                         & \textbf{4.08$\pm$0.05}                    & \textbf{2.826$\pm$0.013}                  & 15.75    & 149.6 &                              & RIO                         & \textbf{11.24$\pm$0.33}                   & \textbf{3.844$\pm$0.030}                  & 104.6    & 99.91 \\
		45730                        & R+I                         & 4.11$\pm$0.05$\dagger\ddagger$   & 2.834$\pm$0.013$\dagger\ddagger$ & 16.01    & 130.9 & 21263                        & R+I                         & 11.27$\pm$0.32$\dagger\ddagger$  & 3.847$\pm$0.029$\dagger\ddagger$ & 105.5    & 95.33 \\
		$\times$                     & R+O                         & 4.15$\pm$0.06$\dagger\ddagger$   & 2.843$\pm$0.015$\dagger\ddagger$ & 16.31    & 98.88 & $\times$                     & R+O                         & 11.68$\pm$0.42$\dagger\ddagger$  & 3.881$\pm$0.037$\dagger\ddagger$ & 113.9    & 47.61 \\
		9                            & Y+O                         & 4.15$\pm$0.06$\dagger\ddagger$   & 2.843$\pm$0.015$\dagger\ddagger$ & 16.31    & 120.4 & 81                           & Y+O                         & 11.68$\pm$0.42$\dagger\ddagger$  & 3.882$\pm$0.036$\dagger\ddagger$ & 114.2    & 72.08 \\
		& Y+IO                        & \textbf{4.08$\pm$0.05}                    & \textbf{2.826$\pm$0.013}                  & 15.75    & 148.6 &                              & Y+IO                        & 11.29$\pm$0.38$\dagger\ddagger$  & 3.848$\pm$0.034$\dagger\ddagger$ & 105.9    & 103.4 \\
		& SVGP                        & 4.64$\pm$0.04$\dagger\ddagger$   & 2.955$\pm$0.008$\dagger\ddagger$ & 22.15    & 128.1 &                              & SVGP                        & 14.12$\pm$0.27$\dagger\ddagger$  & 4.090$\pm$0.015$\dagger\ddagger$ & 217.8    & 95.78 \\
		\hline
		\multirow{2}{*}{CT}      & NN                          & 1.12$\pm$0.29$\dagger\ddagger$   & -                                & -        & 196.6 & \multirow{2}{*}{MSD}     & NN                          & 12.54$\pm$1.16$\dagger\ddagger$  & -                                & -        & 777.8 \\
		& RIO                         & \textbf{0.86$\pm$0.12}                    & \textbf{1.261$\pm$0.202}                  & 0.95     & 519.6 &                              & RIO                         & \textbf{9.79$\pm$0.22}                    & \textbf{3.698$\pm$0.023}                  & 94.90     & 3059  \\
		53500                        & R+I                         & 1.12$\pm$0.29$\dagger\ddagger$   & 1.505$\pm$0.260$\dagger\ddagger$ & 1.55     & 20.03 & 515345                       & R+I                         & 12.54$\pm$1.16$\dagger\ddagger$  & 3.943$\pm$0.092$\dagger\ddagger$ & 156.6    & 91.01 \\
		$\times$                     & R+O                         & \textbf{0.86$\pm$0.12}                    & \textbf{1.261$\pm$0.202}                  & 0.95     & 159.4 & $\times$                     & R+O                         & \textbf{9.82$\pm$0.19}                    & \textbf{3.701$\pm$0.020}                  & 96.11    & 2739  \\
		384                          & Y+O                         & 0.98$\pm$0.77$\ddagger$          & \textbf{1.303$\pm$0.336}                  & 2.02     & 180.6 & 90                           & Y+O                         & 209.8$\pm$596.2$\ddagger$        & 7.010$\pm$9.680$\ddagger$        & 236.4    & 1491  \\
		& Y+IO                        & 0.88$\pm$0.13$\dagger\ddagger$   & \textbf{1.256$\pm$0.145}                  & 0.55     & 594.5 &                              & Y+IO                        & 208.8$\pm$596.6$\ddagger$        & 7.055$\pm$9.664$\ddagger$        & 197.2    & 3347  \\
		& SVGP                        & 52.07$\pm$0.19$\dagger\ddagger$  & 5.372$\pm$0.004$\dagger\ddagger$ & 2712     & 29.69 &                              & SVGP                        & 10.97$\pm$0.0$\dagger\ddagger$   & 3.981$\pm$0.0$\dagger\ddagger$   & 199.7    & 2590 \\
		\hline
		\hline
	\end{tabular}	
	The symbols $\dagger$ and $\ddagger$ indicate that the difference between the marked entry and RIO is statistically significant at the 5\% significance level using paired $t$-test and Wilcoxon test, respectively. The best entries that are significantly better than all the others under at least one statistical test are marked in boldface (ties are allowed).
\end{table} 

\subsection{Experimental Results with 10,000 Maximum Optimizer Ierations}
The experimental results shown in the main paper is based on the setup that all RIO variants are using L-BFGS-B optimizer with a maximun number of iterations as 1,000. To investigate the performance of RIO under larger computational budget, same experiments are run for all RIO variants with maximum number of optimizer iterations as 10,000 (all other experimental setups are identical with section~\ref{app:exp_setup}). Table~\ref{exp:10000Iter} shows the summarized experimental results for 8 smallest datasets. According to the results, the rankings of the methods are very similar to those in Table~\ref{exp:original} (in the main paper), and RIO still performs best in all metrics.
\begin{table}
	\tiny
	\setlength{\tabcolsep}{3.6pt}
	\centering
	\caption{\label{exp:10000Iter}Summary of experimental results (with 10,000 maximum optimizer iterations)}
	\begin{tabular}{cccccc|cccccc}
		\hline
		\hline
		Dataset                 & \multirow{2}{*}{Method} & RMSE                             & NLPD                             & Noise    & Time  & Dataset                  & \multirow{2}{*}{Method} & RMSE                             & NLPD                             & Noise    & Time   \\
		n $\times$ d            &                         & mean$\pm$std                     & mean$\pm$std                     & Variance & (sec) & n $\times$ d             &                         & mean$\pm$std                     & mean$\pm$std                     & Variance & (sec)  \\
		\hline
		\multirow{2}{*}{yacht}  & NN                      & 2.20$\pm$0.93$\dagger\ddagger$   & -                                & -        & 3.33  & \multirow{2}{*}{ENB/h}   & NN                      & 0.94$\pm$0.37$\dagger\ddagger$   & -                                & -        & 10.14  \\
		& RIO                     & \textbf{1.40$\pm$0.50}                    & 1.883$\pm$0.568                  & 0.74     & 25.67 &                          & RIO                     & \textbf{0.64$\pm$0.26}                    & \textbf{0.968$\pm$0.273}                  & 0.31     & 45.04  \\
		& R+I                     & 1.93$\pm$0.65$\dagger\ddagger$   & 2.266$\pm$0.484$\dagger\ddagger$ & 2.19     & 5.59  &                          & R+I                     & 0.70$\pm$0.33$\dagger\ddagger$   & 1.043$\pm$0.317$\dagger\ddagger$ & 0.41     & 22.22  \\
		308                     & R+O                     & 1.78$\pm$0.57$\dagger\ddagger$   & 2.176$\pm$0.525$\dagger\ddagger$ & 1.39     & 6.76  & 768                      & R+O                     & 0.72$\pm$0.31$\dagger\ddagger$   & 1.084$\pm$0.309$\dagger\ddagger$ & 0.41     & 20.49  \\
		$\times$                & Y+O                     & 1.78$\pm$0.56$\dagger\ddagger$   & 2.204$\pm$0.509$\dagger\ddagger$ & 1.60      & 23.99 & $\times$                 & Y+O                     & 0.78$\pm$0.35$\dagger\ddagger$   & 1.163$\pm$0.328$\dagger\ddagger$ & 0.55     & 55.23  \\
		6                       & Y+IO                    & \textbf{1.40$\pm$0.44}                    & 1.919$\pm$0.567                  & 0.82     & 45.69 & 8                        & Y+IO                    & 0.66$\pm$0.26$\dagger\ddagger$   & 1.013$\pm$0.280$\dagger\ddagger$ & 0.34     & 82.82  \\
		& SVGP                    & 3.67$\pm$0.60$\dagger\ddagger$   & 2.689$\pm$0.111$\dagger\ddagger$ & 12.07    & 42.59 &                          & SVGP                    & 2.01$\pm$0.17$\dagger\ddagger$   & 2.145$\pm$0.071$\dagger\ddagger$ & 4.24     & 79.46  \\
		& NNGP                    & 12.40$\pm$1.45$\dagger\ddagger$  & 35.18$\pm$0.534$\dagger\ddagger$ & -        & 7347  &                          & NNGP                    & 4.97$\pm$0.29$\dagger\ddagger$   & 32.40$\pm$0.638$\dagger\ddagger$ & -        & 7374   \\
		& ANP                     & 7.59$\pm$3.20$\dagger\ddagger$   & \textbf{1.793$\pm$0.887$\dagger\ddagger$} & -        & 40.82 &                          & ANP                     & 4.08$\pm$2.27$\dagger\ddagger$   & 2.475$\pm$0.559$\dagger\ddagger$ & -        & 102.3  \\
		\hline
		\multirow{2}{*}{ENB/c}  & NN                      & 1.87$\pm$0.42$\dagger\ddagger$   & -                                & -        & 11.79 & \multirow{2}{*}{airfoil} & NN                      & 4.84$\pm$0.47$\dagger\ddagger$   & -                                & -        & 8.96   \\
		& RIO                     & \textbf{1.51$\pm$0.35}                    & \textbf{1.852$\pm$0.198}                  & 1.59     & 48.53 &                          & RIO                     & \textbf{3.06$\pm$0.20}                    & \textbf{2.551$\pm$0.058}                  & 9.44     & 104.0    \\
		& R+I                     & 1.70$\pm$0.41$\dagger\ddagger$   & 1.98$\pm$0.21$\dagger\ddagger$   & 2.17     & 10.96 &                          & R+I                     & 3.13$\pm$0.21$\dagger\ddagger$   & 2.573$\pm$0.059$\dagger\ddagger$ & 9.91     & 73.22  \\
		768                     & R+O                     & 1.75$\pm$0.41$\dagger\ddagger$   & 2.011$\pm$0.211$\dagger\ddagger$ & 2.21     & 10.85 & 1505                     & R+O                     & 4.16$\pm$0.27$\dagger\ddagger$   & 2.848$\pm$0.068$\dagger\ddagger$ & 16.58    & 10.92  \\
		$\times$                & Y+O                     & 1.75$\pm$0.41$\dagger\ddagger$   & 2.012$\pm$0.210$\dagger\ddagger$ & 2.27     & 39.52 & $\times$                 & Y+O                     & 4.21$\pm$0.27$\dagger\ddagger$   & 2.862$\pm$0.082$\dagger\ddagger$ & 17.89    & 38.69  \\
		8                       & Y+IO                    & 1.62$\pm$0.35$\dagger\ddagger$   & 1.936$\pm$0.197$\dagger\ddagger$ & 1.86     & 70.94 & 5                        & Y+IO                    & 3.19$\pm$0.30$\dagger\ddagger$   & 2.583$\pm$0.087$\dagger\ddagger$ & 10.24    & 119.95 \\
		& SVGP                    & 2.52$\pm$0.21$\dagger\ddagger$   & 2.363$\pm$0.072$\dagger\ddagger$ & 6.31     & 84.32 &                          & SVGP                    & 3.27$\pm$0.20$\dagger\ddagger$   & 2.608$\pm$0.056$\dagger\ddagger$ & 10.56    & 106.0    \\
		& NNGP                    & 4.91$\pm$0.32$\dagger\ddagger$   & 30.14$\pm$0.886$\dagger\ddagger$ & -        & 7704  &                          & NNGP                    & 6.54$\pm$0.23$\dagger\ddagger$   & 33.60$\pm$0.420$\dagger\ddagger$ & -        & 3355   \\
		& ANP                     & 4.81$\pm$2.15$\dagger\ddagger$   & 2.698$\pm$0.548$\dagger\ddagger$ & -        & 64.11 &                          & ANP                     & 21.17$\pm$30.72$\dagger\ddagger$ & 5.399$\pm$6.316$\dagger\ddagger$ & -        & 231.7  \\
		\hline
		\multirow{2}{*}{CCS}    & NN                      & 6.25$\pm$0.49$\dagger\ddagger$   & -                                & -        & 6.54  & \multirow{2}{*}{wine/r}  & NN                      & 0.688$\pm$0.039$\dagger\ddagger$ & -                                & -        & 3.26   \\
		& RIO                     & \textbf{5.96$\pm$0.47}                    & \textbf{3.230$\pm$0.108}                  & 25.37    & 14.31 &                          & RIO                     & 0.671$\pm$0.033                  & 1.088$\pm$0.087                  & 0.28     & 20.12  \\
		1030                    & R+I                     & 5.99$\pm$0.47$\dagger\ddagger$   & \textbf{3.235$\pm$0.107}                  & 26.04    & 4.91  & 1599                     & R+I                     & 0.668$\pm$0.033$\dagger\ddagger$ & 1.080$\pm$0.085$\dagger\ddagger$ & 0.28     & 12.61  \\
		$\times$                & R+O                     & 6.19$\pm$0.49$\dagger\ddagger$   & 3.280$\pm$0.112$\dagger\ddagger$ & 27.43    & 4.04  & $\times$                 & R+O                     & 0.675$\pm$0.033$\dagger\ddagger$ & 1.094$\pm$0.088$\dagger\ddagger$ & 0.29     & 4.96   \\
		8                       & Y+O                     & 6.18$\pm$0.48$\dagger\ddagger$   & 3.276$\pm$0.109$\dagger\ddagger$ & 27.65    & 17.55 & 11                       & Y+O                     & 0.674$\pm$0.033$\dagger\ddagger$ & 1.089$\pm$0.086$\dagger\ddagger$ & 0.29     & 17.01  \\
		& Y+IO                    & 6.03$\pm$0.47$\dagger\ddagger$   & 3.246$\pm$0.107$\dagger\ddagger$ & 26.24    & 41.89 &                          & Y+IO                    & 0.671$\pm$0.032                  & 1.087$\pm$0.086                  & 0.28     & 34.45  \\
		& SVGP                    & 6.62$\pm$0.37$\dagger\ddagger$   & 3.297$\pm$0.045$\dagger\ddagger$ & 41.15    & 73.66 &                          & SVGP                    & \textbf{0.642$\pm$0.028$\dagger\ddagger$} & \textbf{0.973$\pm$0.042$\dagger\ddagger$} & 0.39     & 70.53  \\
		\hline
		\multirow{2}{*}{wine/w} & NN                      & 0.723$\pm$0.027$\dagger\ddagger$ & -                                & -        & 8.51  & \multirow{2}{*}{CCPP}    & NN                      & 4.94$\pm$0.49$\dagger\ddagger$   & -                                & -        & 17.38  \\
		& RIO                     & 0.704$\pm$0.018                  & 1.088$\pm$0.034                  & 0.38     & 49.96 &                          & RIO                     & \textbf{4.03$\pm$0.13}                    & \textbf{2.808$\pm$0.025}                  & 16.21    & 151.3  \\
		4898                    & R+I                     & \textbf{0.700$\pm$0.017$\dagger\ddagger$} & 1.079$\pm$0.033$\dagger\ddagger$ & 0.38     & 26.43 & 9568                     & R+I                     & 4.04$\pm$0.13$\dagger\ddagger$   & 2.810$\pm$0.026$\dagger\ddagger$ & 16.28    & 116.0    \\
		$\times$                & R+O                     & 0.710$\pm$0.021$\dagger\ddagger$ & 1.095$\pm$0.037$\dagger\ddagger$ & 0.39     & 8.87  & $\times$                 & R+O                     & 4.33$\pm$0.14$\dagger\ddagger$   & 2.880$\pm$0.029$\dagger\ddagger$ & 18.56    & 19.02  \\
		11                      & Y+O                     & 0.710$\pm$0.020$\dagger\ddagger$ & 1.093$\pm$0.037$\dagger\ddagger$ & 0.39     & 29.97 & 4                        & Y+O                     & 13.40$\pm$63.01$\ddagger$        & 3.012$\pm$0.663$\dagger\ddagger$ & 4161     & 100.8  \\
		& Y+IO                    & 0.704$\pm$0.018$\ddagger$        & 1.088$\pm$0.034                  & 0.38     & 66.79 &                          & Y+IO                    & 4.71$\pm$1.51$\dagger\ddagger$   & 2.969$\pm$0.271$\dagger\ddagger$ & 33.58    & 267.1  \\
		& SVGP                    & 0.713$\pm$0.016$\dagger\ddagger$ & \textbf{1.076$\pm$0.022$\dagger\ddagger$} & 0.5      & 158.1 &                          & SVGP                    & 4.25$\pm$0.13$\dagger\ddagger$   & 2.859$\pm$0.028$\dagger\ddagger$ & 17.94    & 334.6  \\
		\hline
		\hline
	\end{tabular}	
	The symbols $\dagger$ and $\ddagger$ indicate that the difference between the marked entry and RIO is statistically significant at the 5\% significance level using paired $t$-test and Wilcoxon test, respectively. The best entries that are significantly better than all the others under at least one statistical test are marked in boldface (ties are allowed).
\end{table} 

\subsection{Results on Random Forests}
\label{app:RF}
In principle, RIO can be applied to any prediction model since it treats the pretrained model as a black box. To demonstrate this extensibility, RIO is applied to another classical prediction model --- Random Forests (RF) \citep{Ho1995,Breiman2001}. The same experimental setups as in section~\ref{app:exp_setup} are used. For RF, the number of estimators is set as 100 for all datasets. To avoid overfitting, the minimum number of samples required to be at a leaf node is 10 for all datasets, and the max depth of the tree is 7 for MSD and 5 for all other datasets. Table~\ref{exp:RF} shows the experimental results. From Table~\ref{exp:RF}, RIO performs the best or equals the best method (based on statistical tests) in 9 out of 12 datasets in terms of both RMSE and NLPD. In addition, RIO significantly improves the performance of original RF in 11 out of 12 datasets. These results demonstrates the robustness and broad applicability of RIO.
\begin{table}[]
	\tiny
	\setlength{\tabcolsep}{3.6pt}
	\centering
	\caption{\label{exp:RF}Summary of experimental results with Random Forests (RF)}
	\begin{tabular}{cccccc|cccccc}
		\hline
		\hline
		Dataset                  & \multirow{2}{*}{Method} & RMSE                                                       & NLPD                                                       & Noise    & Time   & Dataset                  & \multirow{2}{*}{Method} & RMSE                                                       & NLPD                                                       & Noise    & Time   \\
		n $\times$ d             &                         & mean$\pm$std                                               & mean$\pm$std                                               & Variance & (sec)  & n $\times$ d             &                         & mean$\pm$std                                               & mean$\pm$std                                               & Variance & (sec)  \\
		\multirow{2}{*}{yacht}   & RF                      & 1.95$\pm$0.59$\dagger\ddagger$                             & -                                                          & -        & 0.33   & \multirow{2}{*}{ENB/h}   & RF                      & 1.44$\pm$0.16$\dagger\ddagger$                             & -                                                          & -        & 0.35   \\
		& RIO                     & \textbf{0.80$\pm$0.18}                    & \textbf{1.177$\pm$0.187}               & 0.49     & 14.93  &                          & RIO                     & \textbf{0.79$\pm$0.13}                    & \textbf{1.150$\pm$0.138}                  & 0.61     & 16.66  \\
		& R+I                     & 1.62$\pm$0.67$\dagger\ddagger$                             & 1.833$\pm$0.453$\dagger\ddagger$                           & 1.91     & 12.41  &                          & R+I                     & 0.91$\pm$0.12$\dagger\ddagger$                             & 1.306$\pm$0.128$\dagger\ddagger$                           & 0.74     & 14.13  \\
		252                      & R+O                     & 1.29$\pm$0.33$\dagger\ddagger$                             & 1.658$\pm$0.267$\dagger\ddagger$                           & 1.31     & 12.00     & 768                      & R+O                     & 1.02$\pm$0.20$\dagger\ddagger$                             & 1.401$\pm$0.187$\dagger\ddagger$                           & 0.92     & 13.81  \\
		$\times$                 & Y+O                     & 1.47$\pm$0.33$\dagger\ddagger$                             & 1.841$\pm$0.243$\dagger\ddagger$                           & 2.29     & 16.43  & $\times$                 & Y+O                     & 1.43$\pm$0.17$\dagger\ddagger$                             & 1.792$\pm$0.146$\dagger\ddagger$                           & 1.88     & 17.57  \\
		6                        & Y+IO                    & 0.87$\pm$0.20$\dagger\ddagger$                             & 1.268$\pm$0.193$\dagger\ddagger$                           & 0.65     & 18.40   & 8                        & Y+IO                    & 0.88$\pm$0.11$\dagger\ddagger$                             & 1.280$\pm$0.111$\dagger\ddagger$                           & 0.73     & 20.51  \\
		& SVGP                    & 4.41$\pm$0.60$\dagger\ddagger$                             & 2.886$\pm$0.096$\dagger\ddagger$                           & 18.55    & 15.44  &                          & SVGP                    & 2.13$\pm$0.18$\dagger\ddagger$                             & 2.199$\pm$0.074$\dagger\ddagger$                           & 4.70      & 16.90   \\
		& NNGP                    & 12.40$\pm$1.45$\dagger\ddagger$                            & 35.18$\pm$0.534$\dagger\ddagger$                           & -        & 7347   &                          & NNGP                    & 4.97$\pm$0.29$\dagger\ddagger$                             & 32.40$\pm$0.638$\dagger\ddagger$                           & -        & 7374   \\
		& ANP                     & 7.59$\pm$3.20$\dagger\ddagger$                             & 1.793$\pm$0.887$\dagger\ddagger$                           & -        & 40.82  &                          & ANP                     & 4.08$\pm$2.27$\dagger\ddagger$                             & 2.475$\pm$0.559$\dagger\ddagger$                           & -        & 102.3  \\
		\hline
		\multirow{2}{*}{ENB/c}   & RF                      & 1.93$\pm$0.15$\dagger\ddagger$                             & -                                                          & -        & 0.31   & \multirow{2}{*}{airfoil} & RF                      & 3.68$\pm$0.17$\dagger\ddagger$                             & -                                                          & -        & 0.36   \\
		& RIO                     & \textbf{1.55$\pm$0.15}                    & \textbf{1.854$\pm$0.087}                  & 2.13     & 16.31  &                          & RIO                     & \textbf{2.90$\pm$0.17}                    & \textbf{2.487$\pm$0.063}                  & 7.37     & 19.80   \\
		& R+I                     & \textbf{1.53$\pm$0.12}                    & \textbf{1.845$\pm$0.072}                  & 2.08     & 12.95  &                          & R+I                     & 2.95$\pm$0.17$\dagger\ddagger$                             & 2.505$\pm$0.059$\dagger\ddagger$                           & 7.88     & 17.19  \\
		768                      & R+O                     & 1.85$\pm$0.14$\dagger\ddagger$                             & 2.037$\pm$0.089$\dagger\ddagger$                           & 2.93     & 10.62  & 1505                     & R+O                     & 3.54$\pm$0.17$\dagger\ddagger$                             & 2.693$\pm$0.059$\dagger\ddagger$                           & 10.50     & 8.26   \\
		$\times$                 & Y+O                     & 1.92$\pm$0.15$\dagger\ddagger$                             & 2.081$\pm$0.091$\dagger\ddagger$                           & 3.20      & 19.82  & $\times$                 & Y+O                     & 3.58$\pm$0.18$\dagger\ddagger$                             & 2.714$\pm$0.085$\dagger\ddagger$                           & 12.05    & 21.22  \\
		8                        & Y+IO                    & 1.85$\pm$0.14$\dagger\ddagger$                             & 2.039$\pm$0.087$\dagger\ddagger$                           & 2.99     & 21.50   & 5                        & Y+IO                    & 3.31$\pm$0.41$\dagger\ddagger$                             & 2.623$\pm$0.131$\dagger\ddagger$                           & 10.44    & 23.61  \\
		& SVGP                    & 2.63$\pm$0.23$\dagger\ddagger$                             & 2.403$\pm$0.079$\dagger\ddagger$                           & 6.81     & 18.63  &                          & SVGP                    & 3.59$\pm$0.20$\dagger\ddagger$                             & 2.698$\pm$0.054$\dagger\ddagger$                           & 12.66    & 20.52  \\
		& NNGP                    & 4.91$\pm$0.32$\dagger\ddagger$                             & 30.14$\pm$0.886$\dagger\ddagger$                           & -        & 7704   &                          & NNGP                    & 6.54$\pm$0.23$\dagger\ddagger$                             & 33.60$\pm$0.420$\dagger\ddagger$                           & -        & 3355   \\
		& ANP                     & 4.81$\pm$2.15$\dagger\ddagger$                             & 2.698$\pm$0.548$\dagger\ddagger$                           & -        & 64.11  &                          & ANP                     & 21.17$\pm$30.72$\dagger\ddagger$                           & 5.399$\pm$6.316$\dagger\ddagger$                           & -        & 231.7 \\
		\hline
		\multirow{2}{*}{CCS}     & RF                      & 7.17$\pm$0.43$\dagger\ddagger$                             & -                                                          & -        & 0.49   & \multirow{2}{*}{wine/r}  & RF                      & \textbf{0.625$\pm$0.028}$\dagger\ddagger$                           & -                                                          & -        & 0.75   \\
		& RIO                     & \textbf{5.84$\pm$0.34}                    & \textbf{3.180$\pm$0.068}                  & 24.84    & 17.72  &                          & RIO                     & 0.642$\pm$0.028                                            & 1.019$\pm$0.070                                            & 0.27     & 21.13  \\
		1030                     & R+I                     & 5.89$\pm$0.35$\dagger\ddagger$                             & 3.190$\pm$0.069$\dagger\ddagger$                           & 25.34    & 14.62  & 1599                     & R+I                     & \textbf{0.625$\pm$0.028}$\dagger\ddagger$ & \textbf{0.966$\pm$0.059}$\dagger\ddagger$ & 0.31     & 7.57   \\
		$\times$                 & R+O                     & 6.90$\pm$0.41$\dagger\ddagger$                             & 3.380$\pm$0.079$\dagger\ddagger$                           & 34.20     & 9.77   & $\times$                 & R+O                     & 0.627$\pm$0.028$\dagger\ddagger$                           & 0.974$\pm$0.061$\dagger\ddagger$                           & 0.30      & 9.26   \\
		8                        & Y+O                     & 7.03$\pm$0.78$\dagger\ddagger$                             & 3.401$\pm$0.154$\dagger\ddagger$                           & 35.75    & 18.72  & 11                       & Y+O                     & 0.628$\pm$0.028$\dagger\ddagger$                           & 0.975$\pm$0.062$\dagger\ddagger$                           & 0.30      & 20.74  \\
		& Y+IO                    & 5.88$\pm$0.33$\dagger\ddagger$                             & 3.192$\pm$0.068$\dagger\ddagger$                           & 25.78    & 22.01  &                          & Y+IO                    & 0.635$\pm$0.031$\dagger\ddagger$                           & 0.999$\pm$0.074$\dagger\ddagger$                           & 0.28     & 24.88  \\
		& SVGP                    & 6.88$\pm$0.40$\dagger\ddagger$                             & 3.337$\pm$0.048$\dagger\ddagger$                           & 44.62    & 18.09  &                          & SVGP                    & 0.642$\pm$0.028                                            & 0.974$\pm$0.042$\dagger\ddagger$                           & 0.40      & 20.41  \\
		\hline
		\multirow{2}{*}{wine/w}  & RF                      & 0.714$\pm$0.016$\dagger\ddagger$                           & -                                                          & -        & 1.17   & \multirow{2}{*}{CCPP}    & RF                      & 4.21$\pm$0.13$\dagger\ddagger$                             & -                                                          & -        & 1.47   \\
		& RIO                     & 0.702$\pm$0.015                                            & 1.068$\pm$0.025                                            & 0.43     & 36.60   &                          & RIO                     & \textbf{4.02$\pm$0.14}                    & \textbf{2.801$\pm$0.029}                  & 15.18    & 44.81  \\
		4898                     & R+I                     & \textbf{0.697$\pm$0.016}$\dagger\ddagger$ & \textbf{1.059$\pm$0.025}$\dagger\ddagger$ & 0.44     & 33.17  & 9568                     & R+I                     & 4.03$\pm$0.14$\dagger\ddagger$                             & 2.803$\pm$0.029$\dagger\ddagger$                           & 15.22    & 39.09  \\
		$\times$                 & R+O                     & 0.715$\pm$0.016$\dagger\ddagger$                           & 1.087$\pm$0.026$\dagger\ddagger$                           & 0.46     & 13.70   & $\times$                 & R+O                     & 4.17$\pm$0.13$\dagger\ddagger$                             & 2.839$\pm$0.028$\dagger\ddagger$                           & 16.19    & 17.67  \\
		11                       & Y+O                     & 0.715$\pm$0.016$\dagger\ddagger$                           & 1.087$\pm$0.026$\dagger\ddagger$                           & 0.46     & 34.16  & 4                        & Y+O                     & 13.26$\pm$63.05$\ddagger$                                  & 3.274$\pm$3.329$\dagger\ddagger$                           & 2095     & 44.63  \\
		& Y+IO                    & 0.710$\pm$0.017$\dagger\ddagger$                           & 1.080$\pm$0.028$\dagger\ddagger$                           & 0.45     & 41.52  &                          & Y+IO                    & 4.63$\pm$0.95$\dagger\ddagger$                             & 2.975$\pm$0.253$\dagger\ddagger$                           & 35.54    & 45.63  \\
		& SVGP                    & 0.719$\pm$0.018$\dagger\ddagger$                           & 1.081$\pm$0.022$\dagger\ddagger$                           & 0.50      & 36.62  &                          & SVGP                    & 4.36$\pm$0.13$\dagger\ddagger$                             & 2.887$\pm$0.026$\dagger\ddagger$                           & 18.94    & 48.83  \\
		\hline
		\multirow{2}{*}{protein} & RF                      & 5.05$\pm$0.03$\dagger\ddagger$                             & -                                                          & -        & 19.76  & \multirow{2}{*}{SC}      & RF                      & 15.26$\pm$0.25$\dagger\ddagger$                            & -                                                          & -        & 56.98  \\
		& RIO                     & \textbf{4.55$\pm$0.03}                    & \textbf{2.935$\pm$0.007}                  & 20.92    & 298.6  &                          & RIO                     & \textbf{14.18$\pm$0.75}                   & \textbf{4.072$\pm$0.051}                  & 196.1    & 170.6  \\
		45730                    & R+I                     & 4.57$\pm$0.04$\dagger\ddagger$                             & 2.939$\pm$0.008$\dagger\ddagger$                           & 21.06    & 278.1  & 21263                    & R+I                     & 14.38$\pm$0.76$\ddagger$                                   & 4.089$\pm$0.047$\dagger\ddagger$                           & 204.1    & 145.0    \\
		$\times$                 & R+O                     & 5.02$\pm$0.04$\dagger\ddagger$                             & 3.033$\pm$0.007$\dagger\ddagger$                           & 24.79    & 242.9  & $\times$                 & R+O                     & 14.98$\pm$0.40$\dagger\ddagger$                            & 4.126$\pm$0.034$\dagger\ddagger$                           & 213.9    & 72.33  \\
		9                        & Y+O                     & 5.04$\pm$0.04$\dagger\ddagger$                             & 3.037$\pm$0.007$\dagger\ddagger$                           & 25.02    & 206.4  & 81                       & Y+O                     & 14.80$\pm$0.30$\dagger\ddagger$                            & 4.121$\pm$0.059$\dagger\ddagger$                           & 204.8    & 167.6  \\
		& Y+IO                    & 4.56$\pm$0.03$\dagger\ddagger$                             & 2.937$\pm$0.007$\dagger\ddagger$                           & 21.00       & 300.2  &                          & Y+IO                    & \textbf{14.39$\pm$0.29}                   & \textbf{4.079$\pm$0.020}                  & 196.9    & 234.4  \\
		& SVGP                    & 4.68$\pm$0.04$\dagger\ddagger$                             & 2.963$\pm$0.008$\dagger\ddagger$                           & 22.54    & 281.1  &                          & SVGP                    & 14.66$\pm$0.25$\dagger\ddagger$                            & 4.135$\pm$0.013$\dagger\ddagger$                           & 239.2    & 221.3  \\
		\hline
		\multirow{2}{*}{CT}      & RF                      & 9.77$\pm$0.16$\dagger\ddagger$                             & -                                                          & -        & 302.0    & \multirow{2}{*}{MSD}     & RF                      & 9.93$\pm$0.01$\dagger\ddagger$                             & -                                                          & -        & 2112   \\
		& RIO                     & \textbf{9.32$\pm$0.49}                    & \textbf{3.651$\pm$0.053}                  & 86.67    & 1024   &                          & RIO                     & 9.90$\pm$0.01                    & 3.711$\pm$0.001                    & 97.59     & 2133   \\
		53500                    & R+I                     & 9.77$\pm$0.16$\dagger\ddagger$                             & 3.698$\pm$0.017$\dagger\ddagger$                           & 94.04    & 47.04  & 515345                   & R+I                     & 9.93$\pm$0.01$\dagger\ddagger$                             & 3.714$\pm$0.001$\dagger\ddagger$                             & 98.11     & 136.8   \\
		$\times$                 & R+O                     & \textbf{9.35$\pm$0.46}                    & \textbf{3.654$\pm$0.049}                  & 86.94    & 228.1  & $\times$                 & R+O                     & 9.92$\pm$0.01$\dagger\ddagger$                             & 3.714$\pm$0.001$\dagger\ddagger$                             & 97.78     & 1747   \\
		384                      & Y+O                     & 9.46$\pm$0.36$\dagger\ddagger$                             & 3.668$\pm$0.042$\dagger\ddagger$                           & 88.00       & 448.7 & 90                       & Y+O                     & 9.94$\pm$0.01$\dagger\ddagger$                             & 3.715$\pm$0.001$\dagger\ddagger$                             & 98.29     & 2198   \\
		& Y+IO                    & 9.58$\pm$0.66$\dagger\ddagger$                             & 3.676$\pm$0.094$\dagger\ddagger$                           & 91.09    & 1439   &                          & Y+IO                    & 9.94$\pm$0.01$\dagger\ddagger$                             & 3.715$\pm$0.001$\dagger\ddagger$                             & 96.73     & 5638   \\
		& SVGP                    & 52.07$\pm$0.17$\dagger\ddagger$                            & 5.372$\pm$0.003$\dagger\ddagger$                           & 2712     & 61.70   &                          & SVGP                    & \textbf{9.57$\pm$0.00}$\dagger\ddagger$                             & \textbf{3.677$\pm$0.000}$\dagger\ddagger$                             & 92.21     & 2276  \\
		\hline
		\hline
	\end{tabular}
	The symbols $\dagger$ and $\ddagger$ indicate that the difference between the marked entry and RIO is statistically significant at the 5\% significance level using paired $t$-test and Wilcoxon test, respectively. The best entries that are significantly better than all the others under at least one statistical test are marked in boldface (ties are allowed).
\end{table}

\end{document}